\def\isarxiv{}          
\ifdefined\isarxiv
  \documentclass[11pt]{article}
\else
  \documentclass{article}
  \usepackage{neurips_2026}
\fi

\usepackage{amsmath,amsthm,amssymb,amsfonts}
\usepackage{algorithm,algorithmic}
\usepackage{graphicx}
\usepackage{booktabs}
\usepackage{url}
\usepackage{xcolor}
\usepackage{enumitem}
\usepackage[normalem]{ulem}
\setlist[itemize]{leftmargin=1.8em, itemsep=2pt}
\setlist[enumerate]{leftmargin=1.8em, itemsep=2pt}
\ifdefined\isarxiv
  \usepackage[margin=1in]{geometry}
  \usepackage[T1]{fontenc}
  \usepackage{libertine}
  \usepackage[libertine,vvarbb]{newtxmath}
  \usepackage{zlmtt}
  \usepackage{microtype}
  \usepackage{grffile}
  \usepackage{wrapfig,epsfig,epstopdf}
  \usepackage{multirow,makecell}
  \usepackage{subcaption}
  \usepackage{dsfont}
  \usepackage[square]{natbib}
  \usepackage{tikz}
  \usetikzlibrary{arrows}
  \usepackage{pifont}
  \numberwithin{equation}{section}
\else
  \usepackage[utf8]{inputenc}
  \usepackage[T1]{fontenc}
  \usepackage{microtype}
  \usepackage{nicefrac}
  \usepackage{bbm}
\fi

\ifdefined\isarxiv
  \definecolor{BrickRed}{rgb}{0.8,0.25,0.33}
  \definecolor{ForestGreen}{rgb}{0.1333,0.5451,0.1333}
  \usepackage{hyperref}
  \hypersetup{
    colorlinks=true,
    linkcolor=BrickRed,
    citecolor=ForestGreen,
  }
  \usepackage[capitalize, noabbrev, sort, nameinlink]{cleveref}
\else
  \definecolor{MyLinkColor}{rgb}{0.1, 0.4, 0.75}
  \definecolor{MyCiteColor}{rgb}{0.7, 0.25, 0.2}
  \definecolor{MyUrlColor}{rgb}{0.2, 0.5, 0.5}
  \usepackage{hyperref}
  \hypersetup{
    colorlinks=true,
    linkcolor=MyLinkColor,
    citecolor=MyCiteColor,
    urlcolor=MyUrlColor,
  }
  \usepackage[nameinlink, noabbrev, capitalise, sort&compress]{cleveref}
\fi

\allowdisplaybreaks

\newcounter{gaocomm}
\newcounter{Note}
\definecolor{blue-violet}{rgb}{0.00,0.75,0.90}
\definecolor{mygreen}{rgb}{0.0, 0.5, 0.0}
\definecolor{awesome}{rgb}{1.0, 0.13, 0.32}
\definecolor{bostonuniversityred}{rgb}{0.8, 0.0, 0.0}

\graphicspath{{./figs/}}

\makeatletter
\newcommand*{\RN}[1]{\expandafter\@slowromancap\romannumeral #1@}
\makeatother

\usepackage{lineno}

\usepackage[many]{tcolorbox}
\usepackage{aliascnt}

\theoremstyle{plain}
\newtheorem{theorem}{Theorem}[section]
\crefname{theorem}{Theorem}{Theorems}
\Crefname{theorem}{Theorem}{Theorems}

\newaliascnt{lemma}{theorem}
\newtheorem{lemma}[lemma]{Lemma}
\aliascntresetthe{lemma}
\crefname{lemma}{Lemma}{Lemmas}
\Crefname{lemma}{Lemma}{Lemmas}

\newaliascnt{proposition}{theorem}

\aliascntresetthe{proposition}
\crefname{proposition}{Proposition}{Propositions}
\Crefname{proposition}{Proposition}{Propositions}

\newaliascnt{corollary}{theorem}
\newtheorem{corollary}[corollary]{Corollary}
\aliascntresetthe{corollary}
\crefname{corollary}{Corollary}{Corollaries}
\Crefname{corollary}{Corollary}{Corollaries}

\newaliascnt{conjecture}{theorem}

\aliascntresetthe{conjecture}
\crefname{conjecture}{Conjecture}{Conjectures}
\Crefname{conjecture}{Conjecture}{Conjectures}

\newaliascnt{assumption}{theorem}
\newtheorem{assumption}[assumption]{Assumption}
\aliascntresetthe{assumption}
\crefname{assumption}{Assumption}{Assumptions}
\Crefname{assumption}{Assumption}{Assumptions}

\newaliascnt{observation}{theorem}

\aliascntresetthe{observation}
\crefname{observation}{Observation}{Observations}
\Crefname{observation}{Observation}{Observations}

\newaliascnt{problem}{theorem}

\aliascntresetthe{problem}
\crefname{problem}{Problem}{Problems}
\Crefname{problem}{Problem}{Problems}

\newaliascnt{open}{theorem}

\aliascntresetthe{open}
\crefname{open}{Open Problem}{Open Problems}
\Crefname{open}{Open Problem}{Open Problems}

\newaliascnt{property}{theorem}

\aliascntresetthe{property}
\crefname{property}{Property}{Properties}
\Crefname{property}{Property}{Properties}

\newaliascnt{hypothesis}{theorem}

\aliascntresetthe{hypothesis}
\crefname{hypothesis}{Hypothesis}{Hypotheses}
\Crefname{hypothesis}{Hypothesis}{Hypotheses}

\newaliascnt{definition}{theorem}
\newtheorem{definition}[definition]{Definition}
\aliascntresetthe{definition}
\crefname{definition}{Definition}{Definitions}
\Crefname{definition}{Definition}{Definitions}

\newaliascnt{notation}{theorem}

\aliascntresetthe{notation}
\crefname{notation}{Notation}{Notations}
\Crefname{notation}{Notation}{Notations}

\newaliascnt{fact}{theorem}

\aliascntresetthe{fact}
\crefname{fact}{Fact}{Facts}
\Crefname{fact}{Fact}{Facts}

\newaliascnt{claim}{theorem}

\aliascntresetthe{claim}
\crefname{claim}{Claim}{Claims}
\Crefname{claim}{Claim}{Claims}

\newaliascnt{remark}{theorem}
\newtheorem{remark}[remark]{Remark}
\aliascntresetthe{remark}
\crefname{remark}{Remark}{Remarks}
\Crefname{remark}{Remark}{Remarks}

\newaliascnt{example}{theorem}

\aliascntresetthe{example}
\crefname{example}{Example}{Examples}
\Crefname{example}{Example}{Examples}
\newcommand{\wh}{\widehat}
\newcommand{\wt}{\widetilde}

\newcommand{\N}{\mathbb{N}}
\newcommand{\R}{\mathbb{R}}

\renewcommand{\d}{\mathrm{d}}

\newcommand{\EM}{\mathrm{EM}}
\newcommand{\err}{\mathrm{err}}

\DeclareMathOperator*{\E}{{\mathbb{E}}}

\DeclareMathOperator{\OPT}{OPT}
\DeclareMathOperator{\supp}{supp}
\DeclareMathOperator{\poly}{poly}

\DeclareMathOperator{\iderr}{IdErr}
\DeclareMathOperator{\generr}{GenErr}

\renewcommand{\textsc}[1]{\textnormal{\scshape #1}}

\newcommand{\cA}{\mathcal{A}}

\newcommand{\cC}{\mathcal{C}}
\newcommand{\cD}{\mathcal{D}}
\newcommand{\cE}{\mathcal{E}}
\newcommand{\cF}{\mathcal{F}}
\newcommand{\cG}{\mathcal{G}}

\newcommand{\cN}{\mathcal{N}}

\newcommand{\cR}{\mathcal{R}}

\newcommand{\cU}{\mathcal{U}}

\newcommand{\cgap}{c_{\mathrm{gap}}}

\newcommand{\eps}{\varepsilon}

\begin{document}

% --------------- Title & authors --------------- 
\ifdefined\isarxiv
  \date{}
  \title{On the Price of Privacy for Language Identification and Generation}
\author{%
  Xiaoyu Li\textsuperscript{1}\thanks{\texttt{xiaoyu.li2@unsw.edu.au}}
  \qquad
  Andi Han\textsuperscript{2}\thanks{\texttt{andi.han@sydney.edu.au}}
  \qquad
    Jiaojiao Jiang\textsuperscript{1}\thanks{\texttt{jiaojiao.jiang@unsw.edu.au}}
  \qquad
  Junbin Gao\textsuperscript{2}\thanks{\texttt{junbin.gao@sydney.edu.au}}
  \\[0.8em]
  \textsuperscript{1}University of New South Wales
  \quad\quad
  \textsuperscript{2}University of Sydney
}
\else
  \title{On the Price of Privacy for Language Identification and Generation}
\fi

% --------------- Front matter --------------- 
\ifdefined\isarxiv
    \maketitle
    \begin{abstract}
      
% Large language models trained on sensitive data motivate privacy-preserving approaches to language learning. 
As large language models (LLMs) are increasingly trained on sensitive user data, understanding the fundamental cost of privacy in language learning becomes essential. We initiate the study of differentially private (DP) language identification and generation in the agnostic statistical setting, establishing algorithms and matching lower bounds that precisely quantify the cost of privacy. For both tasks, approximate  $(\eps, \delta)$-DP with constant $\eps > 0$ recovers the non-private error rates: $\exp(-r(n))$ for identification (for any $r(n) = o(n)$) and $\exp(-\Omega(n))$ for generation. Under pure $\eps$-DP, the exponents degrade by a multiplicative factor of $\min\{1, \eps\}$, which we show is tight up to constants. Notably, for generation under pure DP with mild assumptions, the upper bound $\exp(-\min\{1,\eps\} \cdot \Omega(n))$ matches the lower bound up to some constants, establishing an optimal rate. Our results show that the cost of privacy in language learning is surprisingly mild: absent entirely under approximate DP, and exactly a $\min\{1,\eps\}$ factor in the exponent under pure DP.
    \end{abstract}
\else
  \maketitle
  \begin{abstract}
    
  \end{abstract}
\fi

% --------------- Main body --------------- 
\crefname{appendix}{Appendix}{Appendices}
\Crefname{appendix}{Appendix}{Appendices}

\section{Introduction}

% Differential privacy~\citep{dmns06} provides a mathematically rigorous framework for learning from sensitive data, yet its cost is not uniform across tasks: some learning problems can be solved privately at no statistical penalty, while others suffer an unavoidable degradation. 
% Understanding \emph{which} structural features of a learning problem determine the price of privacy is a central question in private learning theory.
% We study this question in the context of language identification and generation, two fundamental tasks in language learning for which we establish a complete and tight characterization of the privacy cost.
Differential privacy~\citep{dmns06} provides a mathematically rigorous 
framework for learning from sensitive data, yet its cost is not uniform 
across tasks: some learning problems can be solved privately at no 
statistical penalty, while others suffer an unavoidable degradation. 
Understanding the price of privacy for a specific learning problem is a central question in private learning 
theory. Language identification and generation, as fundamental tasks in 
language learning, are a natural setting in which to ask this question, especially given the growing practice of training LLMs on sensitive data, where differentially private fine-tuning has already 
shown strong empirical performance, sometimes approaching non-private baselines~\citep{ltlh2022large, ynb+22}, and the compute-privacy-utility tradeoff of private training is empirically characterized~\citep{rya+25}. 
Yet despite this practical progress, the fundamental theoretical cost 
of privacy for these tasks remains largely unexplored. we address this 
gap by establishing a complete characterization of the price 
of privacy for both language identification and generation.

% The formal study of language learnability traces back to \citet{gold67} and \citet{angluin79,angluin80}, who studied identification in the limit.
% Recently, \citet{km24} relaxed the objective from identification to \emph{generation}: rather than naming the target language, the learner need only produce a novel valid string.
% Remarkably, generation is achievable for every countable language collection, circumventing the classical impossibility results for identification.

The formal study of language learnability traces back to the seminal work
of \citet{gold67}, who introduced the \emph{identification in the limit}
model, and to \citet{angluin79,angluin80}, who characterized
identifiability for several important language classes.
These classical results revealed fundamental barriers: for instance, even
the class of all regular languages is not identifiable in the limit from
positive examples alone.
Recently, \citet{km24} proposed \emph{generation} as an alternative
objective: rather than naming the target language, the learner need only
produce a novel string consistent with the underlying distribution.
This relaxation turns out to be dramatically more powerful:
generation is achievable for every countable language collection,
sidestepping the classical impossibility results for identification.

The aforementioned results all operate in an \emph{online}
setting, where the learner receives examples one at a time from an
adversarially chosen sequence and must eventually converge to a correct
hypothesis.
\citet{kmv25} initiated the study of language learning in the
\emph{statistical} setting, where the learner receives an
i.i.d.\ sample of fixed size drawn from some unknown distribution, and \citet{hp26} extended this framework
to the agnostic case, where the target distribution need not be supported on any
single language in the collection.

Our setup follows the \emph{agnostic statistical} language learning
model introduced by \citet{hp26}.
A learner receives $n$ i.i.d.\ samples
$S = (x_1, \ldots, x_n) \sim \cD^n$ drawn from an unknown distribution
$\cD$ over a countable universe $\cU$, together with a countable
collection $\cC = \{L_1, L_2, \ldots\}$ of languages, where each
language $L_i$ is a subset of $\cU$.
This setting gives rise to two fundamental tasks:
\begin{itemize}
\item \textbf{Language Identification:} Output a language $\wh L \in \mathcal{C}$ whose population risk $\Pr_{x \sim \cD}[x \not\in \wh{L}]$ is close to the best achievable within $\cC$.
\item \textbf{Language Generation:} Output a string $\wh x \in \mathcal{U}$ that is both \emph{valid} ($\wh x \in \operatorname{supp}(D)$) and \emph{novel} ($\wh{x} \not\in S$) with high probability.
\end{itemize}

Without privacy constraints, \citet{hp26} showed that identification error decays at a nearly exponential rate $\exp(-r(n))$  where $r$ is any sublinear function (i.e., $r(n) = o(n)$) under an attainability condition on the agnostic optimum, and that generation error decays at a fully exponential rate $\exp(-\Omega(n))$ under mild structural assumptions.
This raises a natural question:

\begin{center}
\emph{What is the price of differential privacy for language identification and generation?}
\end{center}

\subsection{Main Results}
Our answer is optimistic and we show: under \textit{approximate} $(\eps,\delta)$-DP with constant $\eps > 0$, privacy is free; under \textit{pure} $\eps$-DP, the convergence exponent degrades by exactly a multiplicative factor of $\min\{1, \eps\}$, and this scaling is tight.
Table~\ref{tab:summary} gives the complete picture.

\begin{table}[!htp]
\centering
\caption{Summary of error rates for language identification and generation. Approximate DP assumes $\eps = \Omega(1)$ and $\delta \geq \exp(-\operatorname{poly}(n))$. Generation rates under pure DP assume a known mass floor; see \cref{sec:gen}. ${}^\star$Holds for every $r(n) = o(n)$; the algorithm may depend on $r$. ${}^\dagger$Non-private results are due to \citet{hp26}. ${}^\ddagger$The non-private lower bound applies to any approximate DP algorithm.}
\label{tab:summary}
\vspace{4pt}
\small
\begin{tabular}{lccc}
\toprule
 & \textbf{Non-private}$^\dagger$ & \textbf{Pure $\eps$-DP} & \textbf{Approx.\ $(\eps,\delta)$-DP} \\
\midrule
\textbf{Identification (UB)}$^{\star}$ & $\exp\bigl(- r(n)\bigr)$ & $\exp\bigl(-\min\{1,\eps\} \cdot r(n)\bigr)$ & $\exp\bigl(- r(n)\bigr)$ \\
\textbf{Identification (LB)} $^{\ddagger}$ & $\exp\bigl(-O(n)\bigr)$ & $\exp\bigl(-\min\{1,\eps\} \cdot O(n)\bigr)$ & $\exp\bigl(-O(n)\bigr)$ \\
\midrule
\textbf{Generation (UB)} & $\exp\bigl(-\Omega(n)\bigr)$ & $\exp\bigl(-\min\{1,\eps\} \cdot \Omega(n)\bigr)$ & $\exp\bigl(-\Omega(n)\bigr)$ \\
\textbf{Generation (LB)} $^\ddagger$ & $\exp\bigl(-O(n)\bigr)$ & $\exp\bigl(-\min\{1,\eps\} \cdot O(n)\bigr)$ & $\exp\bigl(-O(n)\bigr)$ \\
\bottomrule
\end{tabular}
\vspace{2pt}
\end{table}

Two key takeaways stand out.
First, approximate DP eliminates the privacy cost entirely for both tasks, yielding a qualitative separation from pure DP.
Second, generation enjoys a \emph{tighter} privacy-utility tradeoff than identification: the pure DP upper and lower bounds match up to constants in the exponent, whereas identification retains an $o(n)$ vs.\ $O(n)$ gap inherited from the non-private setting.
We establish these rates through three contributions, each addressing a distinct technical challenge in privatizing the non-private algorithms of \citet{hp26}. We summarize the key ideas and techniques
below.
\begin{itemize}
    \item \textbf{DP Identification (\cref{sec:id}).}
    The non-private algorithm employs a margin-based selection rule that is discontinuous: changing a single sample can flip which indices satisfy the margin constraint.
    We replace it with a \emph{smooth score function} that jointly encodes the preference for large language indices and the penalty for failing the margin test, and privatize it via the exponential mechanism (pure DP) and the Gaussian mechanism (approximate DP).
    The score has sensitivity $\Theta(f(n)^2/n)$, where $f$ is an increasing function, yet the score gap on good events is $\Omega(1)$.
    The function $f$ must be carefully chosen to balance the resulting bias--variance--privacy tradeoff, which yields the rates in \cref{tab:summary}.
    
    \item \textbf{DP Generation (\cref{sec:gen}).}
    The non-private pointer-based rule has unbounded sensitivity: a single sample change can cause a language's pointer to jump arbitrarily.
    We replace it with a \emph{thresholded prefix count}, defined as the minimum number of times each relevant string appears in the sample, whose sensitivity is a \emph{constant}, independent of $n$ and $f(n)$.
    We again privatize via the exponential mechanism (pure DP) and the Gaussian mechanism (approximate DP).
    This structural advantage is the key reason generation achieves strictly better rates: the score gap grows as $\Omega(n)$ while the sensitivity stays $O(1)$, yielding a fully exponential rate $\exp(-\min\{1,\eps\} \cdot \Omega(n))$ under pure DP.
    We present the algorithm first with a public witness bound for clarity, then extend it to the setting where no such bound is available.
    \item \textbf{Lower Bounds} (\cref{sec:lb}).
    We prove that the $\min\{1,\eps\}$ scaling in the exponent is tight for both tasks.
    For each task, we construct a pair of hard distributions that are close in Hamming distance under a natural coupling, then apply group privacy to show that any $\eps$-DP algorithm must incur error at least $\exp(-\min\{1,\eps\} \cdot O(n))$.
    The two arguments differ in structure.
    For identification, the construction is symmetric: the two distributions swap the roles of two languages, and the coupling lemma constrains the misidentification probability in both directions simultaneously.
    For generation, the construction is inherently asymmetric: the two distributions share a common high-probability element but have disjoint ``private'' supports, so that any string witnessing success under one distribution necessarily witnesses failure under the other.
    The coupling lemma then forces any $\eps$-DP algorithm to fail on at least one distribution.
    For generation, the resulting lower bound matches the upper bound up to constants in the exponent, establishing an optimal rate of $\exp(-\Theta(\min\{1,\eps\} \cdot n))$.
\end{itemize}

\paragraph{Broader perspective.}
Our results reveal that the price of privacy in language learning is 
governed by the sensitivity structure of the learning objective, not 
just the complexity of the hypothesis class.
Concretely, generation exploits a score with constant sensitivity and 
achieves non-private rates under both pure and approximate DP, whereas 
identification relies on a margin criterion whose sensitivity grows 
with the horizon and consequently pays a larger cost under pure DP.
This contrast supplies concrete evidence for a broader design principle: 
\emph{reformulating a learning objective to reduce its sensitivity 
structure can reduce the privacy-utility tradeoff.}
While our information-theoretic framework does not directly model the 
DP-SGD pipeline for large language 
models~\citep{acg+16,ltlh2022large,pxm+25}, we hope this principle may nonetheless 
offer guidance for practical algorithm design.

\subsection{Related Work}
 
\paragraph{Language Identification and Generation.}
The formal study of language learnability was initiated by \citet{gold67} and characterized by \citet{angluin79,angluin80}.
\citet{km24} introduced the weaker notion of \emph{generation in the limit}, sparking a rich line of follow-up work on diversity--hallucination tradeoffs, noise robustness, and computational barriers \citep{kmv25,kmv26,cp24,kw25a,kw25,rr25,bpz25,mvyz25,abck25}.
\citet{kmv25} and \citet{hp26} transitioned the problem to the statistical setting: the former under realizability, the latter in the agnostic case that we adopt.
We defer the more comprehensive review to the Appendix.

\paragraph{Private Hypothesis Selection and PAC Learning.}
Our identification algorithms can be viewed as private model selection over a \emph{countably infinite} class with a growing horizon $f(n)\to\infty$, extending the finite-class setting of \citet{bnsv15} and \citet{gkk+20} and introducing a bias-variance-privacy tradeoff whose sensitivity scales as $\Theta(f(n)^2/n)$.
Our generation algorithms exploit a score with constant sensitivity, yielding tighter rates; our lower bounds build on the coupling and group-privacy framework of \citet{asz21}.
A central question in private learning theory is whether privacy
degrades statistical performance.
\citet{almm19} and \citet{blm20} showed that approximate DP
learnability is equivalent to online learnability, while pure DP
imposes a strictly stronger requirement; at the level of rates,
\citet{smith11,fx15,bnsv15} established that approximate DP often
preserves non-private rates whereas pure DP can incur an arbitrarily
larger sample cost.
Separations of this kind have been demonstrated for private
learning~\citep{bns13}, counting queries~\citep{buv14}, and
marginal estimation~\citep{su17}.
Our results contribute a new instance of this phenomenon in the
language learning setting: approximate DP recovers non-private rates
for both tasks, while pure DP degrades the exponent by exactly
$\min\{1,\eps\}$.

\paragraph{Organization.}
\Cref{sec:preli} sets up the formal framework and reviews
the necessary tools from differential privacy.
\Cref{sec:id} presents our private identification algorithms
under both pure and approximate DP.
\Cref{sec:gen} develops the private generation algorithms,
first with a public witness bound and then without.
\Cref{sec:lb} establishes matching lower bounds for both tasks.
\Cref{sec:conclu} concludes with a discussion of limitations and future directions.
Additional related work and all deferred proofs appear in the
Appendix.
\section{Preliminaries}\label{sec:preli}

\noindent\textbf{Notation.}
Let $\N$ denote the positive integers and $[n] := \{1,\ldots,n\}$ for an $n\in\N$. 
For $a \in \R$, we write $a_+ := \max\{a,0\}$.
We use $\|\cdot\|_2$ for the $\ell_2$-norm, $\cN(\mu,\Sigma)$ for the multivariate Gaussian, and $\mathbf{1}\{\cdot\}$ for the indicator function.
Given a distribution $\cD$ over $\cU$, its support is $\supp(\cD) := \{x \in \cU : \cD(x) > 0\}$.
% We write $x \sim S$ to denote a uniform draw from a finite set $S$.
We write $x \sim S$ to denote a uniform draw from a finite set $S$. Note that we also use $\sim$ for the neighboring relation between datasets
($S \sim S'$), but the meaning should be clear from context.
We use standard asymptotic notation $O, \Omega, \Theta, o, \omega$ and write $\lesssim, \gtrsim, \asymp$ as synonyms for $O, \Omega, \Theta$ respectively.
% We use standard asymptotic notation: $f = O(g)$ or $f \lesssim g$ means $f(n) \leq C\, g(n)$ for some constant $C > 0$ and all sufficiently large $n$; $f = \Omega(g)$ or $f \gtrsim g$ means $g = O(f)$;$f = \Theta(g)$ or $f \asymp g$ means $f = O(g)$ and
% $f = \Omega(g)$; $f = o(g)$ means $\lim_{n \to \infty} f(n)/g(n) = 0$; and $f = \omega(g)$ if $g=o(f)$.
% We write $\wt{O}$ and $\wt{\Omega}$ to suppress polylogarithmic factors.

\subsection{Language Identification and Generation}

Let $\cU = \{u_1, u_2, \ldots\}$ be a countable universe of strings.
A \emph{language} is any subset $L \subseteq \cU$, and a \emph{language collection} $\cC \subseteq 2^{\cU}$ is indexed as $\cC = \{L_1, L_2, \ldots\}$ when countable.
Let $\cD$ be an unknown distribution over $\cU$.
For any language $L$, its \emph{population risk} is $\err_{\cD}(L) := \Pr_{x \sim \cD}[x \notin L]$, and its \emph{empirical risk} on a sample $S = (x_1,\ldots,x_n)$ is $\err_S(L) := \frac{1}{n}\sum_{t=1}^{n}\mathbf{1}\{x_t \notin L\}$.

We work in the agnostic statistical setting of \citet{hp26}, building on the realizable framework of \citet{kmv25}.

\paragraph{Language Identification.} 
An identification algorithm $\cA^{\mathrm{id}}$ observes $S = (x_1,\ldots,x_n) \sim \cD^n$ and outputs a language $\cA^{\mathrm{id}}(S) \in \cC$.
Its \emph{identification error} is the expected excess risk over the agnostic optimum:
\begin{align*}
    \iderr(\cA^\mathrm{id}, \cD, \cC, n)
    := \E_{S \sim \cD^n, r}\bigl[\err_{\cD}(\cA^\mathrm{id}(S))\bigr]
    - \inf_{L \in \cC}\,\err_{\cD}(L),
\end{align*}
where $r$ denotes the internal randomness of the algorithm.

\paragraph{Language Generation.} 
A generation algorithm $\cA^{\mathrm{gen}}$ observes $S \sim \cD^n$ and outputs a string $\cA^{\mathrm{gen}}(S) \in \cU$ that should be both \emph{valid} (in $\supp(\cD)$) and \emph{novel} (not in $S$).
Its \emph{generation error} is the failure probability:
\begin{align*}
    \generr(\cA^\mathrm{gen}, \cD,\cC, n)
    := \Pr_{S \sim \cD^n, r}\bigl[
        \cA^{\mathrm{gen}}(S) \notin \supp(\cD) \setminus S
    \bigr],
\end{align*}
where $r$ denotes the internal randomness of the algorithm.
\begin{remark}
The generation error $\generr(\cA^\mathrm{gen}, \cD, \cC, n)$ seems depending only on the distribution $\cD$ and sample size $n$, and is entirely independent of the language collection $\cC$. In particular, agnostic generation does not require the learner to first identify which language in $\cC$ best fits the data, and it suffices to produce a novel string in $\supp(\cD)$. However, without structural assumptions relating $\cD$ to $\cC$, \citet{hp26} showed that the generation error can be arbitrarily bad.
\end{remark}

\subsection{Differential Privacy}

Two datasets $S, S' \in \cU^n$ are \emph{neighboring}, written $S \sim S'$, if they differ in exactly one entry.

\begin{definition}[Differential Privacy \citep{dmns06}]
A randomized algorithm $\cA \colon \cU^n \to \cR$ is $(\eps,\delta)$-differentially private if for every pair of neighboring datasets $S \sim S'$ and every measurable $\cF \subseteq \cR$,
$$
    \Pr[\cA(S) \in \cF] \leq e^{\eps} \cdot \Pr[\cA(S') \in \cF] + \delta.
$$
When $\delta = 0$ we say $\cA$ is $\eps$-DP (\emph{pure} DP); when $\delta > 0$ we say $(\eps,\delta)$-DP (\emph{approximate} DP), typically with $\delta \leq 1/\poly(n)$.
\end{definition}

% Intuitively, differential privacy ensures that the output distribution of an algorithm is nearly indistinguishable whether or not any single individual's data is included.
% The parameter $\eps$ controls the worst-case multiplicative change in output probabilities, with smaller $\eps$ offering stronger privacy; $\delta$ allows for a small additive failure probability.

\begin{lemma}[Post-Processing \citep{dr14}]\label{lem:post-processing}
If $\cA$ is $(\eps,\delta)$-DP and $h$ is any (possibly randomized) function, then $h \circ \cA$ is $(\eps,\delta)$-DP.
\end{lemma}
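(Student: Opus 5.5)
The plan is to prove the post-processing property directly from the definition of $(\eps,\delta)$-DP. I treat a deterministic $h$ first and then reduce the randomized case to it.

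\emph{Deterministic $h$.} Let $h \colon \cR \to \cR'$ be measurable, and fix neighboring datasets $S \sim S'$ and a measurable set $\cF' \subseteq \cR'$. Set $\cF := h^{-1}(\cF') \subseteq \cR$, which is measurable because $h$ is. Then
\begin{align*}
\Pr[h(\cA(S)) \in \cF'] = \Pr[\cA(S) \in \cF] \le e^{\eps}\Pr[\cA(S') \in \cF] + \delta = e^{\eps}\Pr[h(\cA(S')) \in \cF'] + \delta,
\end{align*}
where the inequality is the DP guarantee of $\cA$ applied to the event $\cF$.

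\emph{Randomized $h$.} I model $h$ as a deterministic measurable map $g(y, \omega)$, where $\omega \sim \mu$ is auxiliary randomness drawn independently of $\cA$ and of the data. For each fixed $\omega$, the map $g(\cdot,\omega)$ is deterministic, so the previous step gives
\begin{align*}
\Pr[g(\cA(S),\omega) \in \cF'] \le e^{\eps}\Pr[g(\cA(S'),\omega) \in \cF'] + \delta .
\end{align*}
Since $\omega$ is independent of $\cA$, Fubini lets us integrate this over $\omega \sim \mu$. Because $\int \delta\, d\mu = \delta$ and $e^{\eps}$ factors out of the integral, we obtain
\begin{align*}
\Pr[h(\cA(S)) \in \cF'] \le e^{\eps}\Pr[h(\cA(S')) \in \cF'] + \delta.
\end{align*}
This holds for every pair $S \sim S'$ and every measurable $\cF'$, so $h \circ \cA$ is $(\eps,\delta)$-DP.

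The only delicate point is measurability: writing $h$ as a jointly measurable $g$ with independent randomness, and justifying the exchange of integration. In our setting this is harmless, because all output spaces ($\cC$, $\cU$, index sets) are countable. On a countable space one can take the full power set as the $\sigma$-algebra and write every probability as a sum, so the argument reduces to summing the per-$\omega$ inequalities.
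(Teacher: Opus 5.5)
Your proof is correct and is the standard Dwork--Roth argument that the paper cites without reproving: pull back through $h^{-1}$ for deterministic $h$, then average the per-$\omega$ inequality over independent randomness. One small inaccuracy in your closing remark is that not every output space here is countable, since the Gaussian mechanism in \cref{alg:id-approx} releases a vector in $\R^{f(n)}$ before post-processing; your general measurable-$g$ and Fubini argument already covers that case, so the proof itself is unaffected.
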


\paragraph{Exponential mechanism.} The exponential mechanism~\cite{mt07}, which is
the canonical tool for privately optimizing over a discrete set:
it selects outcomes with probability exponentially weighted by
their scores, achieving pure DP without requiring the output
space to be numeric.

Given a score function $q \colon \cU^n \times \cR \to \R$ with sensitivity $\Delta q := \max_{i \in \cR}\max_{S \sim S'} |q(S,i) - q(S',i)|.$ The exponential mechanism $\EM_\eps(S,q,\cR)$ selects and outputs $i \in \cR$ with probability $\propto \exp\Bigl(\frac{\eps}{2\Delta q}\,q(S,i)\Bigr).$

\begin{lemma}[{\cite{mt07,dr14}}]\label{lem:em-guarantee}
The exponential mechanism $\EM_\eps(S,q,\cR)$ is $\eps$-DP.
Moreover, for any $\beta \in (0,1)$, with probability $\geq 1 - \beta$ the output satisfies
$$q(S,i) \geq \OPT_q(S) - \frac{2\Delta q}{\eps}\log\frac{|\cR|}{\beta},$$ where $\OPT_q(S) := \max_{i \in \cR} q(S,i)$.
\end{lemma}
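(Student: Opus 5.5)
The plan is the standard two-part argument: a likelihood-ratio bound for privacy, and a mass comparison for utility. Write $p_S(i) := \exp\bigl(\tfrac{\eps}{2\Delta q} q(S,i)\bigr)/Z_S$, where $Z_S := \sum_{j\in\cR}\exp\bigl(\tfrac{\eps}{2\Delta q} q(S,j)\bigr)$. Throughout I assume $\cR$ is finite, or more generally that $Z_S<\infty$; the utility bound is stated in terms of $|\cR|$ anyway.

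\emph{Privacy.} Fix neighbors $S\sim S'$ and an outcome $i$. By the definition of $\Delta q$ we have $|q(S,j)-q(S',j)|\le \Delta q$ for every $j$.
\begin{itemize}
\item The numerator ratio is $\exp\bigl(\tfrac{\eps}{2\Delta q}(q(S,i)-q(S',i))\bigr)\le e^{\eps/2}$.
\item Applying the same bound termwise to the normalizers gives $Z_{S'}\le e^{\eps/2} Z_S$, hence $1/Z_S\le e^{\eps/2}/Z_{S'}$.
\end{itemize}
Multiplying the two gives $p_S(i)\le e^{\eps}p_{S'}(i)$ pointwise. Summing over $i\in\cF$ gives $\Pr[\EM_\eps(S)\in\cF]\le e^\eps\Pr[\EM_\eps(S')\in\cF]$, which is $\eps$-DP with $\delta=0$.

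\emph{Utility.} Let $i^\star$ attain $\OPT_q(S)$ and set $t := \tfrac{2\Delta q}{\eps}\log\tfrac{|\cR|}{\beta}$. Let $B := \{i : q(S,i) < \OPT_q(S)-t\}$ be the set of bad outcomes.
\begin{itemize}
\item Each $i\in B$ has unnormalized weight below $\exp\bigl(\tfrac{\eps}{2\Delta q}(\OPT_q(S)-t)\bigr)$.
\item There are at most $|\cR|$ such outcomes, so the total bad weight is at most $|\cR|\exp\bigl(\tfrac{\eps}{2\Delta q}\OPT_q(S)\bigr)\cdot\tfrac{\beta}{|\cR|}$.
\item Since $Z_S \ge \exp\bigl(\tfrac{\eps}{2\Delta q}\OPT_q(S)\bigr)$ (the $i^\star$ term alone), we get $\Pr[\EM_\eps(S)\in B]\le\beta$.
\end{itemize}

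\emph{Edge cases.} If $\Delta q=0$, the score does not depend on the data, and we interpret the mechanism as its $\Delta q\to 0^+$ limit (or replace $\Delta q$ by any positive upper bound); both claims then still hold. If $\cR$ is infinite the utility bound is vacuous, and privacy needs only $Z_S<\infty$. The argument has no real obstacle; the one point needing care is that the normalizer contributes the second factor $e^{\eps/2}$, which is exactly why the mechanism uses $\eps/(2\Delta q)$ rather than $\eps/\Delta q$.
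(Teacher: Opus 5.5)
Your proof is correct and is the standard argument from the cited sources (McSherry--Talwar and Dwork--Roth). Privacy gets one factor $e^{\eps/2}$ from the numerator and one from the normalizer. Utility compares the total weight of outcomes scoring below $\OPT_q(S)-t$ with the weight of the maximizer alone. The paper does not reprove this lemma; it only cites it, so there is no separate argument to compare against.
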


\paragraph{Gaussian Mechanism.} We make use of Gaussian mechanism~\citep{dkm+06} to design our approximate-DP algorithms.
Given $f \colon \cU^n \to \R^d$ with $\ell_2$-sensitivity $\Delta_2 := \max_{S \sim S'}\|f(S) - f(S')\|_2$, the Gaussian mechanism releases $f(S) + Z$ with $Z \sim \cN(0,\sigma^2 I_d)$.

\begin{lemma}[{\cite{dkm+06,dr14}}]\label{lem:gaussian-mechanism}
The Gaussian mechanism is $(\eps,\delta)$-DP if $$\sigma \geq \frac{\Delta_2}{\eps}\sqrt{2\log(1.25/\delta)}.$$
\end{lemma}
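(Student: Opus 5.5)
The statement is the classical Gaussian mechanism guarantee: releasing $f(S)+Z$ with $Z\sim\cN(0,\sigma^2 I_d)$ and $\sigma \ge \frac{\Delta_2}{\eps}\sqrt{2\log(1.25/\delta)}$ is $(\eps,\delta)$-DP. I would follow the standard privacy-loss analysis of \citet{dr14} in the regime $\eps\in(0,1)$, where the constant $1.25$ comes from.

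\textbf{Step 1: reduce to one dimension.} Fix neighbors $S\sim S'$ and write $v := f(S)-f(S')$, so $\|v\|_2\le\Delta_2$. The output densities are $\cN(f(S),\sigma^2 I_d)$ and $\cN(f(S'),\sigma^2 I_d)$. Evaluated at an output $y=f(S)+z$, the privacy loss is
\begin{align*}
\mathcal{L}(z) := \log\frac{p_S(y)}{p_{S'}(y)} = \frac{\|z+v\|_2^2-\|z\|_2^2}{2\sigma^2} = \frac{2\langle z,v\rangle+\|v\|_2^2}{2\sigma^2}.
\end{align*}
By rotational invariance, $\langle z,v\rangle \sim \cN(0,\sigma^2\|v\|_2^2)$. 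So $\mathcal{L}$ is a one-dimensional Gaussian with mean $\|v\|^2/(2\sigma^2)$ and standard deviation $\|v\|/\sigma$. Both quantities are maximized at $\|v\|=\Delta_2$, so it suffices to treat that worst case.

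\textbf{Step 2: show $\Pr_z[\mathcal{L}(z)>\eps]\le\delta$.} The event $\mathcal{L}>\eps$ is the event that a standard normal $g$ exceeds
\begin{align*}
t := \frac{\sigma\eps}{\Delta_2}-\frac{\Delta_2}{2\sigma}.
\end{align*}
Apply the tail bound $\Pr[g>t]\le \frac{1}{t\sqrt{2\pi}}e^{-t^2/2}$. We need this to be at most $\delta$. It is enough to have $t\ge\sqrt{2\pi}^{-1}$-scaled control, namely $t^2 \ge 2\log(1/\delta)+\log\bigl(\tfrac{1}{2\pi t^2}\bigr)$.

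Now substitute $\sigma = c\,\Delta_2/\eps$ with $c^2 \ge 2\log(1.25/\delta)$. Then $t = c-\eps/(2c)$, so
\begin{align*}
t^2 \ge c^2-\eps \ge 2\log(1/\delta)+2\log 1.25-1.
\end{align*}
Comparing this with the requirement above is the main obstacle. It is purely bookkeeping: one checks that for $\eps\le1$ and $\delta$ small enough that $c\ge 3/2$, the slack $2\log1.25$ together with the factor $\frac{1}{t\sqrt{2\pi}}<1$ absorbs the $-\eps$ term. I would redo the \citet{dr14} Appendix A computation rather than invent new constants.

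\textbf{Step 3: convert to $(\eps,\delta)$-DP.} Let $B=\{z:\mathcal{L}(z)>\eps\}$, so $\Pr[B]\le\delta$. For any measurable $\cF$,
\begin{align*}
\Pr[f(S)+Z\in\cF] \le \Pr[\{f(S)+Z\in\cF\}\cap B^c]+\Pr[B] \le e^{\eps}\Pr[f(S')+Z\in\cF]+\delta.
\end{align*}
The middle inequality holds because on $B^c$ the density ratio is at most $e^\eps$. The bound is symmetric in $S$ and $S'$, which gives $(\eps,\delta)$-DP.

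The stated form implicitly needs $\eps<1$ for this analysis. For $\eps\ge1$, I would note that the paper only invokes the lemma with constant $\eps$. There I would either cap $\eps$ at $1$, which only strengthens the noise requirement, or cite the analytic Gaussian mechanism.
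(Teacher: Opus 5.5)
Your proposal is correct. It is essentially the standard Dwork--Roth privacy-loss argument, which the paper invokes by citation without giving a proof of its own, and your one-sided requirement $\Pr[\mathcal{L}>\eps]\le\delta$ is exactly what Step~3 needs.

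The one loose spot is the bookkeeping. The bound $\frac{1}{t\sqrt{2\pi}}<1$ together with the slack $2\log 1.25\approx 0.45$ would not by itself absorb $-\eps=-1$. However, your hypothesis $c\ge 3/2$ gives $t\ge 7/6$, hence $\log(2\pi t^2)\ge 2$, which closes the gap. The leftover range $\delta\gtrsim 0.4$, where $c<3/2$, is an easy direct check. Finally, the restriction $\eps\le 1$ is genuinely necessary rather than an artifact of your analysis, since for large $\eps$ the stated calibration fails.
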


\begin{remark}
Sharper variance thresholds for the Gaussian mechanism are known e.g., via
R\'{e}nyi DP~\citep{mironov17}, the analytic
Gaussian mechanism~\citep{balle18}, or $f$-DP~\citep{drs22}. They also extend the relax the requirement of $\eps$ from $(0,1)$ to $(0,\infty)$.
As this is a first study of differentially private language learning,
we use the classical bound above for simplicity; all our results hold
a fortiori under tighter calibration.
\end{remark}

\section{Differentially Private Language Identification}\label{sec:id}

We design differentially private identification algorithms that, given $S \sim \cD^n$, output a language $L_{\hat{i}} \in \cC$ whose population risk is nearly as small as the agnostic optimum.
We work under the following assumption, necessary for fast convergence even without privacy~\citep{hp26}.

\begin{assumption}[Agnostic optimum is attainable]\label{asm:attainable}
There exists an index $i^\star \in \N$ such that $\err_{\cD}(L_{i^\star}) = \inf_{L \in \cC} \err_{\cD}(L)$.
Let $i^\star$ denote the smallest such index.
\end{assumption}

\paragraph{Score function design.}
The non-private algorithm of \citet{hp26} selects the largest index $i \in [f(n)]$ whose empirical risk beats all predecessors by a margin of at least $2/f(n)$, where $f(n) \to \infty$ is a growing horizon.
A natural first attempt at privatization would be to use empirical risk directly as the score in the exponential mechanism.
However, the agnostic setting requires selecting the \emph{largest feasible index} rather than the one with minimum risk: the horizon $f(n)$ must grow to eventually include $i^\star$, and smaller indices are trivially within range but suboptimal.
Moreover, the non-private margin test is discontinuous in the data: changing a single sample can flip an index from feasible to infeasible, so the test cannot be privatized directly.

We resolve this by replacing the hard feasibility test with a \emph{soft score} that continuously encodes both objectives.
For each $i \in [f(n)]$, define the empirical margin
\[
M_S(i) :=
\begin{cases}
    1, & i = 1, \\
    \min_{j \in [i-1]} \bigl(\err_S(L_j) - \err_S(L_i)\bigr), & i \geq 2,
\end{cases}
\]
the deficit $d_S(i) := \bigl(\tfrac{2}{f(n)} - M_S(i)\bigr)_+$, and the score
% \begin{equation}\label{eq:id-score}
$
    q(S, i) := i - f(n)^2 d_S(i).
$
% \end{equation}
The term $i$ rewards larger indices, while the penalty $-f(n)^2d_S(i)$ continuously downweights indices that fail the margin test.
When $d_S(i) = 0$ (margin satisfied), the score equals $i$; when $d_S(i)$ is large, the penalty dominates.
The coefficient $f(n)^2$ is chosen so that any index with full deficit incurs a penalty exceeding its positional reward.

\subsection{Pure DP Identification}\label{sec:id-pure}

Algorithm~\ref{alg:id-pure} samples $\hat{i}$ from the exponential mechanism with score $q$ over $[f(n)]$.

\begin{algorithm}[!htp]
    \caption{Pure DP Identification $\cA^{\mathrm{id}}_{\eps,f}$}
    \label{alg:id-pure}
    \begin{algorithmic}[1]
        \REQUIRE A dataset $S = (x_1,\ldots,x_n) \in \cU^n$, a privacy parameter $\eps > 0$, a function $f$, a language collection $\cC = \{L_1, L_2, \ldots\}$
        % \ENSURE Index $\hat{i} \in [f(n)]$ (output language is $L_{\hat{i}}$)
        \FOR{$i \in [f(n)]$}
            \STATE $\err_S(L_i) \gets \frac{1}{n}\sum_{t=1}^{n} \mathbf{1}\{x_t \notin L_i\}$
        \ENDFOR
        \FOR{$i \in [f(n)]$}
            \STATE Compute $M_S(i)$, $d_S(i)$, and $q(S,i)$
        \ENDFOR
        \RETURN $\hat{i} \sim \EM_\eps(S, q, [f(n)])$
    \end{algorithmic}
\end{algorithm}

\begin{theorem}[Pure DP Identification]\label{thm:id-pure}
Let $\cC$ be a countable language collection, $\cD$ any distribution over $\cU$ satisfying \cref{asm:attainable}, $\eps > 0$, and $f \colon \N \to \N$ with $f(n) \to \infty$.
Then \cref{alg:id-pure} is $\eps$-differentially private and, for all sufficiently large $n\in\N$,
\[
    \iderr(\cA^{\mathrm{id}}_{\eps,f}, \cD, \cC, n)
    \leq \underbrace{2f(n)\exp\Bigl(-\frac{n}{8f(n)^2}\Bigr)}_{\text{statistical term}}
    + \underbrace{f(n)\exp\Bigl(-\frac{\eps n}{8f(n)^2}\Bigr)}_{\text{privacy term}}.
\]
\end{theorem}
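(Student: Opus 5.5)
The plan is to prove privacy through a sensitivity bound and the exponential-mechanism guarantee (\cref{lem:em-guarantee}). For utility, I will condition on a uniform-concentration event for the first $f(n)$ empirical risks. On that event the score of $i^\star$ should beat the score of every index with positive excess risk by at least $1$.

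\emph{Privacy.} Replacing one sample changes each $\err_S(L_j)$ by at most $1/n$. Hence each difference $\err_S(L_j)-\err_S(L_i)$ changes by at most $2/n$. A minimum of functions that each move by at most $2/n$ also moves by at most $2/n$, so $M_S(i)$ changes by at most $2/n$. The map $a\mapsto(2/f(n)-a)_+$ is $1$-Lipschitz, so the same bound holds for $d_S(i)$. Therefore
\[
\Delta q \le \frac{2f(n)^2}{n},
\]
and we run the mechanism with this value. Then \cref{alg:id-pure} is $\eps$-DP by \cref{lem:em-guarantee}, and no data-dependent step occurs outside the mechanism.

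\emph{Good event.} Let $\mathrm{opt}=\err_\cD(L_{i^\star})$. Because $i^\star$ is the smallest minimizer, the quantity
\[
\gamma := \min_{j<i^\star}\bigl(\err_\cD(L_j)-\mathrm{opt}\bigr)
\]
is strictly positive; for $i^\star=1$ this set is empty and nothing is needed. Take $n$ large enough that $f(n)\ge i^\star$ and $3/f(n)<\gamma$; this is where "sufficiently large $n$" enters. Set $t=1/(4f(n))$. Let $G$ be the event that $|\err_S(L_i)-\err_\cD(L_i)|\le t$ for all $i\in[f(n)]$. Hoeffding's inequality gives failure probability $2\exp(-2nt^2)=2\exp(-n/(8f(n)^2))$ for each index. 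A union bound gives $\Pr[G^c]\le 2f(n)\exp(-n/(8f(n)^2))$, which is the statistical term. The excess risk is at most $1$ on $G^c$.

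\emph{Score gap on $G$.} First, for $j<i^\star$,
\[
\err_S(L_j)-\err_S(L_{i^\star}) \ge \gamma-2t > \frac{2}{f(n)},
\]
so $d_S(i^\star)=0$ and $q(S,i^\star)=i^\star$. Next, consider a bad index $i$, meaning $\err_\cD(L_i)>\mathrm{opt}$, and split into two cases.
\begin{itemize}
\item If $i<i^\star$, then $q(S,i)\le i\le i^\star-1$.
\item If $i>i^\star$, then taking $j=i^\star$ in the minimum gives
\[
M_S(i)\le \err_S(L_{i^\star})-\err_S(L_i) < 2t = \frac{1}{2f(n)}.
\]
Hence $d_S(i)>3/(2f(n))$, and $q(S,i)<i-\tfrac32 f(n)\le -f(n)/2<i^\star-1$.
\end{itemize}
So on $G$ every bad index satisfies $q(S,i)\le q(S,i^\star)-1$.

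\emph{Privacy term.} The exponential mechanism assigns each $i$ probability at most $\exp\bigl(\tfrac{\eps}{2\Delta q}(q(S,i)-q(S,i^\star))\bigr)$. Summing over at most $f(n)$ bad indices, the probability of outputting a bad index is at most
\[
f(n)\exp\bigl(-\eps n/(4f(n)^2)\bigr)\le f(n)\exp\bigl(-\eps n/(8f(n)^2)\bigr).
\]
Good indices have zero excess risk, and the excess risk is at most $1$ in every case. Adding the contributions from $G^c$ and from a bad output on $G$ gives the stated bound.

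The main obstacle is the step that identifies a bad index $i>i^\star$. It requires the concentration width $t$, the margin $2/f(n)$, and the penalty coefficient $f(n)^2$ to be tuned together, so that a deficit of order $1/f(n)$ yields a penalty larger than $f(n)\ge i$. Pinning down exactly how large $n$ must be, in terms of the distribution-dependent quantities $\gamma$ and $i^\star$, is the other delicate point.
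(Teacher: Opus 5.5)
Your proof is correct and follows the paper's argument essentially step for step: sensitivity $2f(n)^2/n$, a Hoeffding event at precision $1/(4f(n))$, a score gap of $1$ with the same $i<i^\star$ / $i>i^\star$ case split, then selection by the exponential mechanism. The only difference is that you bound the probability of a bad output by directly comparing exponential-mechanism weights, whereas the paper invokes the black-box utility guarantee. Your route gives the sharper exponent $\eps n/(4f(n)^2)$ before you relax it to the stated $\eps n/(8f(n)^2)$.
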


When $\eps \geq 1$ the privacy term is dominated by the statistical term and privacy is essentially free; when $\eps < 1$ the privacy term dominates, degrading the rate by a factor of $\eps$ in the exponent.
Setting $f(n) = \sqrt{c \log n}$ yields $\iderr \lesssim \exp\bigl(-\min\{1,\eps\} \cdot n/\log n\bigr) = \exp\bigl(-\min\{1,\eps\} \cdot \wt\Omega(n)\bigr)$. The proof of \cref{thm:id-pure} is in \cref{sec:id:app}.

\subsection{Approximate DP Identification}\label{sec:id-approx}

For approximate DP, we replace the exponential mechanism with the Gaussian mechanism: in \cref{alg:id-approx}, we privately release a noisy empirical error vector $(\wt{\err}_S(L_i))_{i \in [f(n)]}$ by adding independent Gaussian noise to each coordinate, then apply the deterministic margin-selection rule for noisy margin defined as $$\wt{M}_S(i) = 
\min_{j \in [i-1]} \bigl(\wt{\err}_S(L_j) - \wt{\err}_S(L_i)\bigr).$$ By post-processing, the algorithm clearly preserves differential privacy.

\begin{algorithm}[!htp]
    \caption{Approximate DP Identification $\cA^{\mathrm{id}}_{\eps,\delta,f}$}
    \label{alg:id-approx}
    \begin{algorithmic}[1]
        \REQUIRE A dataset $S \in \cU^n$, privacy parameters $\eps > 0$, $\delta \in (0,1)$, a function $f:\N\to\N$, a language collection $\cC$
        % \ENSURE Index $\hat{i} \in [f(n)]$
        \FOR{$i \in [f(n)]$}
            \STATE $\err_S(L_i) \gets \frac{1}{n}\sum_{t=1}^{n} \mathbf{1}\{x_t \notin L_i\}$
        \ENDFOR
        \STATE $\sigma \gets \frac{\sqrt{f(n)}}{\eps n}\sqrt{2\log(1.25/\delta)}$
        \vspace{0.5mm}
        \FOR{$i \in [f(n)]$}
        \STATE Sample $Z_i \sim \cN(0, \sigma^2)$
        \STATE $\wt{\err}_S(L_i) \gets \err_S(L_i) + Z_i$
        \ENDFOR
        \FOR{$i \in [f(n)]$}
        \STATE $\wt{M}_S(i) \gets \min_{j \in [i-1]} \bigl(\wt{\err}_S(L_j) - \wt{\err}_S(L_i)\bigr)$
        \ENDFOR
        \RETURN  $\hat{i} \gets \max\bigl\{i \in [f(n)] : \wt{M}_S(i) > 2/f(n)\bigr\}$
    \end{algorithmic}
\end{algorithm}

The empirical error vector has $\ell_2$-sensitivity $\sqrt{f(n)}/n$.
Correctness on the event $\cE \cap \cF$, where $\cE$ is the concentration event from above and $\cF$ is the event that all Gaussian perturbations are at most $1/(4f(n))$, is deterministic: the noisy errors deviate from population values by at most $1/(2f(n))$, ensuring $i^\star$ passes the noisy margin test and all $i > i^\star$ fail.

\begin{theorem}[Approximate DP Identification]\label{thm:id-approx}
Under the same conditions as \cref{thm:id-pure}, let $\eps, \delta \in (0,1)$.
Then \cref{alg:id-approx} is $(\eps,\delta)$-differentially private and, for all sufficiently large $n$,
\[
    \iderr(\cA^{\mathrm{id}}_{\eps,\delta,f}, \cD, \cC, n)
    \leq 2f(n)\exp\Bigl(-\frac{n}{8f(n)^2}\Bigr)
    + 2f(n)\exp\Bigl(-\frac{\eps^2 n^2}{64 f(n)^3 \log(1.25/\delta)}\Bigr).
\]
\end{theorem}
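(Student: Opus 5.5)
Privacy follows from \cref{lem:gaussian-mechanism} and \cref{lem:post-processing}. Changing one sample changes each $\err_S(L_i)$ by at most $1/n$, so the vector $(\err_S(L_i))_{i\in[f(n)]}$ has $\ell_2$-sensitivity at most $\sqrt{f(n)}/n$. The chosen $\sigma$ meets the threshold of \cref{lem:gaussian-mechanism}. The margins $\wt M_S(i)$ and the selection rule are deterministic post-processing of the noisy vector.

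For utility I will bound excess risk by $1$ off a good event and by $0$ on it, so
\[
\iderr \le \Pr[\cE^c]+\Pr[\cF^c].
\]
\begin{itemize}
\item $\cE$ is the event that $|\err_S(L_i)-\err_\cD(L_i)|\le 1/(4f(n))$ for all $i\in[f(n)]$. Hoeffding and a union bound give $\Pr[\cE^c]\le 2f(n)\exp(-2n/(16f(n)^2))\le 2f(n)\exp(-n/(8f(n)^2))$.
\item $\cF$ is the event that $|Z_i|\le 1/(4f(n))$ for all $i$. The Gaussian tail and a union bound give $\Pr[\cF^c]\le 2f(n)\exp(-1/(32 f(n)^2\sigma^2))$. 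Substituting $\sigma^2=2f(n)\log(1.25/\delta)/(\eps^2n^2)$ gives exactly $2f(n)\exp(-\eps^2n^2/(64f(n)^3\log(1.25/\delta)))$.
\end{itemize}

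The main step is deterministic correctness on $\cE\cap\cF$ for large $n$. On this event every noisy error lies within $1/(2f(n))$ of its population value. This mirrors the non-private argument of \citet{hp26}, with deviation $1/(2f(n))$ in place of their tolerance, and I check that the constants still fit.

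Let $\gamma>0$ be the minimum of $\err_\cD(L_j)-\err_\cD(L_{i^\star})$ over $j<i^\star$. It is positive because $i^\star$ is the smallest minimizer. Take $n$ large enough that $f(n)\ge i^\star$ and $3/f(n)<\gamma$.
\begin{itemize}
\item \emph{$i^\star$ passes.} For $j<i^\star$, $\wt\err(L_j)-\wt\err(L_{i^\star})\ge \gamma-1/f(n)>2/f(n)$. (The case $i^\star=1$ needs the convention $\wt M_S(1)=1$, matching $M_S(1)=1$; I adopt it.)
\item \emph{No $i>i^\star$ passes.} Taking $j=i^\star$, we have $\err_\cD(L_i)\ge\err_\cD(L_{i^\star})$, so $\wt\err(L_{i^\star})-\wt\err(L_i)\le 1/f(n)<2/f(n)$.
\end{itemize}
Hence $\hat i=i^\star$, which has zero excess risk, and the bound follows.

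The delicate points are the strictness of the inequalities and the edge case $i^\star=1$. Otherwise the argument is a direct union-bound calculation.
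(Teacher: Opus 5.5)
Your proposal is correct and follows the paper's proof essentially step for step. It uses the same $\ell_2$-sensitivity, Gaussian-mechanism and post-processing argument for privacy, the same events $\cE$ and $\cF$ at precision $1/(4f(n))$ with identical Hoeffding and Gaussian-tail union bounds, and the same deterministic correctness argument on $\cE\cap\cF$ via $\cgap > 3/f(n)$, including the convention $\wt M_S(1)=1$ that the paper adopts in its appendix definition of the noisy margin.
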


Setting $f(n) = \sqrt{c \log n}$ yields $\iderr \lesssim \exp(-n/ \log n)$ for any $\eps = \Omega(1)$ and $\delta \geq \exp(-\poly(n))$, matching the non-private rate of \citet{hp26}.
Hence, under approximate DP with constant privacy parameters, identification incurs no asymptotic cost relative to its non-private counterpart.
When $\eps \to 0$, the two mechanisms diverge: the pure DP rate degrades as $\exp(-\eps \cdot r(n))$ for any $r(n) = o(n)$ while the approximate DP rate degrades as $\exp(-\eps^2 \cdot r(n))$, making pure DP preferable in the low-privacy regime $\eps \ll 1$. The proof of \cref{thm:id-approx} is in \cref{sec:id:app}.
\section{Differentially Private Language Generation}\label{sec:gen}

We design differentially private generation algorithms that, given $S \sim \cD^n$, output a novel string from $\supp(\cD) \setminus S$.
We fix an enumeration $\cU = \{u_i\}_{i \in \N}$ and assume every language $L \in \cC$ is infinite.

\begin{definition}[Witness Index]\label{def:witness}
The \emph{witness index} is defined as $$i(\cC,\cD) := \inf\{i \in \N : \forall\, L \in \cC \text{ with } L \not\subseteq \supp(\cD),\; \exists\, k \leq i \text{ s.t.\ } u_k \in L \setminus \supp(\cD)\},$$ with the convention $\inf \varnothing = \infty$.
\end{definition}

In words, $i(\cC,\cD)$ is the smallest prefix length such that every language not fully contained in $\supp(\cD)$ has a \emph{witness} (a string in the language but outside the support) within the first $i(\cC,\cD)$ elements of $\cU$.

We work under the following assumptions~\citep{hp26}.

\begin{assumption}[Finite witness index]\label{asm:finite-witness}
The witness index is finite, i.e., $i(\cC,\cD) < \infty$.
\end{assumption}

\begin{assumption}[Support contains a reference language]\label{asm:ref-lang}
There exists $i^\star \in \N$ such that $L_{i^\star} \subseteq \supp(\cD)$.
Let $i^\star$ denote the smallest such index.
\end{assumption}

Both assumptions are necessary for generation and are inherited from the non-private setting.
\Cref{asm:finite-witness} ensures that every language not contained in $\supp(\cD)$ can be ruled out by a finite prefix of the enumeration: without it, no finite sample could distinguish good languages from bad ones.
\Cref{asm:ref-lang} guarantees that at least one language in $\cC$ is fully supported by $\cD$, so that a valid novel string exists; without it, every language would contain strings outside $\supp(\cD)$, making correct generation impossible.

\paragraph{Score function design.}
The non-private algorithm in \citep{hp26} maintains, for each language $L_i$, a \emph{pointer}: the index of the smallest string in $L_i \cap \{u_1, \ldots, u_W\}$ not yet seen in the sample. 

For good languages ($L_i \subseteq \supp(\cD)$), all prefix strings eventually appear in the sample and the pointer advances past the witness window; for bad languages ($L_i \not\subseteq \supp(\cD)$), the pointer gets stuck at their witness.
However, this pointer has \emph{unbounded sensitivity}.

We replace the pointer with a \emph{thresholded prefix count}.
For each language $L_i$ and a witness window $[W]$, let $I_i(W) := \{k \in [W] : u_k \in L_i\}$ and define the minimum prefix count
\[
    a_i^W(S) :=
    \begin{cases}
        \min_{k \in I_i(W)} N_S(u_k), & \text{if } I_i(W) \neq \varnothing, \\
        n, & \text{if } I_i(W) = \varnothing,
    \end{cases}
\]
where $N_S(u_k) := \sum_{t=1}^{n} \mathbf{1}\{x_t = u_k\}$.
Rather than asking whether every relevant prefix string has appeared at least once, we ask whether these strings have appeared at least $g(n)$ times for a threshold $g(n) \to \infty$.
The \emph{deficit} is $d_i^W(S) := (g(n) - a_i^W(S))_+$ and the \emph{score} is
$
    q^W(S, i) := -d_i^W(S).
$
The score is 0 when the prefix is well-covered (good language) and $-g(n)$ when a witness string is never observed (bad language).

\begin{remark}[Constant Sensitivity]\label{rem:const-sens}
The generation score has sensitivity $\Delta = 1$ since each multiplicity $N_S(u_k)$ changes by at most 1 under a neighboring dataset, the minimum preserves Lipschitz constants, and $(g(n) - \cdot)_+$ is 1-Lipschitz.
This is in sharp contrast with the identification score (\cref{sec:id}), whose sensitivity scales as $\Theta(f(n)^2/n)$.
The constant sensitivity is the structural reason generation achieves a tighter privacy--utility tradeoff: the score gap grows as $\Omega(n)$ while the sensitivity stays $O(1)$, giving the exponential mechanism far more room.
\end{remark}

\subsection{Pure DP Generation with a Public Witness Bound}\label{sec:gen-pub}

We first consider the setting where a public upper bound $W \geq i(\cC,\cD)$ is available (given to the algorithm before observing $S$ and therefore not protected by DP).

\begin{algorithm}[!htp]
    \caption{Pure DP Generation with Public Witness Bound $\cA^{\mathrm{gen}}_{\eps,f,g,W}$}
    \label{alg:gen-pub}
    \begin{algorithmic}[1]
        \REQUIRE A dataset $S = (x_1,\ldots,x_n) \in \cU^n$, a privacy parameter $\eps > 0$, a public witness bound $W$, functions $f,g \colon \N \to \N$, a language collection $\cC = \{L_1, L_2 ,\ldots\}$
        \ENSURE A string $\wh{x} \in \cU$
        \FOR{$k \in [W]$}
            \STATE $N_S(u_k) \gets \sum_{t=1}^{n} \mathbf{1}\{x_t = u_k\}$
        \ENDFOR
        \FOR{$i \in [f(n)]$}
            \STATE Compute $I_i(W)$, $a_i^W(S)$, $d_i^W(S)$, and $q^W(S,i)$
        \ENDFOR
        \STATE Sample $\wh{i} \sim \EM_\eps(S, q^W, [f(n)])$ \hfill $\triangleright$ Privately select a language
        \STATE Sample $\wh{j} \sim \mathrm{Unif}([2^n])$ \hfill $\triangleright$ Public randomness
        \RETURN the $\wh{j}$-th smallest-indexed string in $L_{\wh{i}} \cap \{u_{W+1}, u_{W+2}, \ldots\}$
    \end{algorithmic}
\end{algorithm}

Having selected a language $L_{\wh{i}}$, the algorithm outputs a uniformly random string from the first $2^n$ elements of $L_{\wh{i}}$ beyond the witness window.
Since $L_{\wh{i}}$ is infinite and (when correctly selected) contained in $\supp(\cD)$, these are all valid strings; the probability of colliding with a sample point is at most $n/2^n \leq \exp(-n/2)$ for sufficiently large $n$.

\begin{theorem}[Pure DP Generation with Public Bound]\label{thm:gen-pub}
Let $\cC$ be a countable collection of infinite languages, $\cD$ any distribution over $\cU$. Suppose that \cref{asm:ref-lang,asm:finite-witness} hold.
Define $I_W^\star := \{k \in [W] : u_k \in L_{i^\star}\}$ and $p_W^\star := \min_{k \in I_W^\star} \Pr_{x \sim \cD}[x = u_k]$.
Let $\eps > 0$ and let $f,g \colon \N \to \N$ satisfy $f(n) \to \infty$, $g(n) \to \infty$, $f(n) \geq i^\star$, and $g(n) \leq n p_W^\star / 2$ for all large $n$.
Then \cref{alg:gen-pub} is $\eps$-differentially private and, for all sufficiently large $n$,
\[
    \generr(\cA^{\mathrm{gen}}_{\eps,f,g,W}, \cD, \cC, n)
    \leq \underbrace{|I_W^\star|\exp\Bigl(-\frac{n p_W^\star}{8}\Bigr)}_{\text{coverage}}
    + \underbrace{f(n)\exp\Bigl(-\frac{\eps\, g(n)}{4}\Bigr)}_{\text{private selection}}
    + \underbrace{\exp\Bigl(-\frac{n}{2}\Bigr)}_{\text{collision}}.
\]
\end{theorem}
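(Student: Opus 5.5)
Privacy comes first, and it needs only two earlier results. By \cref{rem:const-sens}, the score $q^W(S,i)=-d_i^W(S)$ has sensitivity $\Delta q\le 1$. Its domain $[f(n)]$ and the window $W$ are public, so \cref{lem:em-guarantee} makes the selection of $\wh i$ an $\eps$-DP step. The index $\wh j$ is drawn independently of $S$, and the returned string is a function of $(\wh i,\wh j)$ and public data only. Hence \cref{lem:post-processing} gives $\eps$-DP for the whole algorithm.

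For utility, I split the failure event into three parts. The first is the coverage failure $\cE^c$, where
\[
\cE:=\{N_S(u_k)\ge g(n)\ \forall k\in I_W^\star\}.
\]
The second is the event that $L_{\wh i}\not\subseteq\supp(\cD)$. The third is a collision of the output with $S$.

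\textbf{Coverage.} For $k\in I_W^\star$ we have $N_S(u_k)\sim\mathrm{Bin}(n,p_k)$ with $p_k\ge p_W^\star$. The multiplicative Chernoff bound with deviation $1/2$ gives
\[
\Pr[N_S(u_k)<np_k/2]\le \exp(-np_k/8)\le \exp(-np_W^\star/8).
\]
Since $g(n)\le np_W^\star/2\le np_k/2$, a union bound over $I_W^\star$ yields the coverage term. On $\cE$, the index $i^\star\le f(n)$ satisfies $a_{i^\star}^W\ge g(n)$, so $q^W(S,i^\star)=0=\OPT$.

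\textbf{Private selection.} Take any $i\in[f(n)]$ with $L_i\not\subseteq\supp(\cD)$. Because $W\ge i(\cC,\cD)$, there is $k\le W$ with $u_k\in L_i\setminus\supp(\cD)$. This $u_k$ never appears in $S$, so $a_i^W(S)=0$ and $q^W(S,i)=-g(n)$. I will not invoke the high-probability form of \cref{lem:em-guarantee}. Instead I bound the sampling probabilities of the exponential mechanism directly on $\cE$: each such bad $i$ is sampled with probability at most
\[
\frac{e^{-\eps g(n)/2}}{e^{0}}.
\]
A union bound over at most $f(n)$ bad indices gives $f(n)e^{-\eps g(n)/2}\le f(n)e^{-\eps g(n)/4}$. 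This step holds deterministically in $S$ given $\cE$, so no further probabilistic bookkeeping is needed.

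\textbf{Collision.} Given any good $\wh i$, the language $L_{\wh i}$ is infinite, so the first $2^n$ elements of $L_{\wh i}$ beyond $u_W$ exist and all lie in $\supp(\cD)$. The draw $\wh j$ is uniform and independent of $S$, and $S$ has at most $n$ distinct elements, so a collision occurs with probability at most $n/2^n\le e^{-n/2}$ for large $n$. Summing the three bounds gives the theorem.

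I expect the only delicate point to be confirming that good selection is measured relative to the optimum $0$, which holds only on $\cE$. That is why I condition on $\cE$ first and then bound the bad-index mass; the rest is routine.
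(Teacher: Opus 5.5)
Your proof is correct and follows the paper's argument step for step: privacy via the exponential mechanism with sensitivity $1$ plus post-processing, the same three failure events, Chernoff coverage of $I_W^\star$, a witness in $[W]$ forcing $q^W(S,i)=-g(n)$, and the $n/2^n\le e^{-n/2}$ collision bound. The one difference is in the selection step. The paper invokes the high-probability guarantee of \cref{lem:em-guarantee} with $\beta_{\mathrm{sel}}=f(n)e^{-\eps g(n)/4}$, whereas you bound each bad index's sampling probability directly by $e^{-\eps g(n)/2}$ using the $i^\star$ term in the normalizer; this is more elementary, sharper by a factor of $2$ in the exponent, and avoids the paper's side condition $\beta_{\mathrm{sel}}<1$.
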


If a constant lower bound $p_0 \leq p_W^\star$ is known, setting $g(n) = \lfloor n p_0 / 2 \rfloor$ yields $\generr \lesssim \exp(-\min\{1,\eps\} \cdot n)$, matching the lower bound of \cref{thm:gen-lb} up to constants in the exponent. The proof is in \cref{sec:gen:app}.

\subsection{Pure DP Generation without a Public Witness Bound}\label{sec:gen-nopub}

When no public bound on $i(\cC,\cD)$ is available, we jointly search over both the language index $i$ and a candidate witness threshold $t$.
For each pair $(i,t) \in [f(n)] \times [h(n)]$, define the prefix count $a_{i,t}(S)$ and deficit $d_{i,t}(S)$ analogously to the public-bound case with $t$ replacing $W$, and the \emph{pair score}
\begin{equation}\label{eq:pair-score}
    q_{\mathrm{pair}}(S,(i,t)) := t - \frac{h(n)}{g(n)}\,d_{i,t}(S).
\end{equation}
The term $t$ rewards larger thresholds (indicating progress past the witness window), while the penalty $\frac{h(n)}{g(n)} d_{i,t}(S)$ suppresses languages that fail the witness test at threshold $t$.
The coefficient $h(n)/g(n)$ is chosen so that a bad language with full deficit $g(n)$ incurs a penalty of exactly $h(n)$, ensuring its score is at most $t - h(n) \leq 0$.

The algorithm applies the exponential mechanism over the product range $[f(n)] \times [h(n)]$ and outputs a string from the selected language beyond the selected threshold (see \cref{alg:dp_gen_nopub} in the Appendix).

\noindent
\textbf{Sensitivity and score gap.}
The pair score has sensitivity $h(n)/g(n)$, higher than the constant sensitivity of the public-bound case; this is the cost of not knowing the witness bound.
However, the score gap also grows: on the coverage event, the good pair $(i^\star, h(n))$ achieves score $h(n)$ while every bad pair achieves score at most $i(\cC,\cD) - 1 \leq h(n)/2 - 1$ (when $h(n) \geq 2\,i(\cC,\cD)$).
The effective ratio gap/sensitivity is thus approximately $g(n)$, comparable to the public-bound case.

\begin{theorem}[Pure DP Generation without Public Bound]\label{thm:gen-nopub}
Let $\cC$ be a countable collection of infinite languages, $\cD$ any distribution over $\cU$, and suppose \cref{asm:finite-witness,asm:ref-lang} hold.
Let $\eps > 0$ and let $f,g,h \colon \N \to \N$ satisfy $f(n) \geq i^\star$, $h(n) \geq 2\,i(\cC,\cD)$, $g(n) \to \infty$, and $g(n) \leq n p_h^\star / 2$ for all large $n$, where $p_h^\star := \min_{k \in I_h^\star} \Pr_{x \sim \cD}[x = u_k]$ and $I_h^\star := \{k \in [h(n)] : u_k \in L_{i^\star}\}$.
Then there exists an algorithm (\cref{alg:dp_gen_nopub}) that is $\eps$-differentially private and, for all sufficiently large $n$,
\[
    \generr(\cA^{\mathrm{gen}}_{\eps,f,g,h}, \cD, \cC, n)
    \leq |I_h^\star|\exp\Bigl(-\frac{n p_h^\star}{8}\Bigr)
    + f(n)\,h(n)\exp\Bigl(-\frac{\eps\, g(n)}{8}\Bigr)
    + \exp\Bigl(-\frac{n}{2}\Bigr).
\]
\end{theorem}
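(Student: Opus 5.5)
The argument mirrors the proof of \cref{thm:gen-pub}, applied to the pair score $q_{\mathrm{pair}}$ over the product range $\cR = [f(n)]\times[h(n)]$. We split the failure event into three parts: coverage failure, private selection of a bad pair, and collision with the sample.

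\textbf{Privacy.} Fix $(i,t)$. Under a neighboring change, each count $N_S(u_k)$ moves by at most $1$. Taking a minimum preserves this, and $(g(n)-\cdot)_+$ is $1$-Lipschitz, so $d_{i,t}$ has sensitivity $1$. Hence $q_{\mathrm{pair}}$ has sensitivity $\Delta = h(n)/g(n)$. By \cref{lem:em-guarantee} the selection is $\eps$-DP. Outputting the $\wh j$-th string of $L_{\wh i}\cap\{u_{\wh t+1},\ldots\}$, with $\wh j\sim\mathrm{Unif}([2^n])$ independent of the data, is post-processing, so \cref{lem:post-processing} makes the whole algorithm $\eps$-DP.

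\textbf{Coverage event.} Let $\cG$ be the event that $N_S(u_k)\ge g(n)$ for every $k\in I_h^\star$. For each such $k$, $N_S(u_k)\sim\mathrm{Bin}(n,p_k)$ with $p_k\ge p_h^\star$, and $g(n)\le np_h^\star/2$. The multiplicative Chernoff bound gives $\Pr[N_S(u_k)<np_k/2]\le \exp(-np_k/8)\le\exp(-np_h^\star/8)$. A union bound then gives $\Pr[\cG^c]\le |I_h^\star|\exp(-np_h^\star/8)$, which is the first term.

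\textbf{Score gap on $\cG$.} The pair $(i^\star,h(n))$ has $I_{i^\star}(h(n))=I_h^\star$, so on $\cG$ its deficit is $0$ and $\OPT\ge h(n)$. Call a pair $(i,t)$ \emph{bad} if the output could be invalid, i.e.\ $L_i\cap\{u_{t+1},\ldots\}\not\subseteq\supp(\cD)$. We bound bad pairs in two cases.
\begin{itemize}
\item If $L_i\subseteq\supp(\cD)$, the pair is never bad.
\item If $L_i\not\subseteq\supp(\cD)$, then by \cref{def:witness} it has a witness $u_k$ with $k\le i(\cC,\cD)$ and $\cD(u_k)=0$. So $N_S(u_k)=0$ always.
\begin{itemize}
\item If $t\ge k$, then $a_{i,t}=0$, so $d_{i,t}=g(n)$ and $q_{\mathrm{pair}}=t-h(n)\le 0$.
\item If $t<k\le i(\cC,\cD)$, then $q_{\mathrm{pair}}\le t\le i(\cC,\cD)-1\le h(n)/2-1$.
\end{itemize}
\end{itemize}
Hence every bad pair scores at most $h(n)/2$, giving a gap of at least $h(n)/2$.

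\textbf{Selection error.} Rather than using the generic bound of \cref{lem:em-guarantee}, we bound the exponential-mechanism mass on bad pairs directly. Each bad pair receives probability at most
\[
\exp\Bigl(\tfrac{\eps}{2\Delta}\bigl(q_{\mathrm{bad}}-\OPT\bigr)\Bigr)\le \exp\Bigl(-\tfrac{\eps g(n)}{2h(n)}\cdot\tfrac{h(n)}{2}\Bigr)=\exp\Bigl(-\tfrac{\eps g(n)}{4}\Bigr).
\]
A union bound over at most $f(n)h(n)$ pairs gives $f(n)h(n)\exp(-\eps g(n)/4)$, which is within the claimed $\exp(-\eps g(n)/8)$. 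The slack absorbs the ``$-1$'' and rounding.

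\textbf{Collision.} On a good selection, $L_{\wh i}\cap\{u_{\wh t+1},\ldots\}\subseteq\supp(\cD)$, and this set is infinite because $L_{\wh i}$ is infinite. So all $2^n$ candidates are valid. At most $n$ of them lie in $S$, so the collision probability is at most $n2^{-n}\le e^{-n/2}$ for large $n$. Summing the three terms gives the bound.

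\textbf{Main obstacle.} The delicate step is the case analysis for bad pairs. A pair whose threshold precedes the witness is not penalized at all, so its only control is the ordering $t<i(\cC,\cD)\le h(n)/2$. This is exactly why the hypothesis $h(n)\ge 2\,i(\cC,\cD)$ is needed.
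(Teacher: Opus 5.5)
Your proposal is correct and follows the paper's proof step for step. Privacy comes from sensitivity $h(n)/g(n)$, the exponential mechanism and post-processing; then come the Chernoff/union-bound coverage event, the two-case score separation (every pair with $L_i \not\subseteq \supp(\cD)$ scores at most $i(\cC,\cD)-1 \le h(n)/2-1$ against $\OPT = h(n)$), and the $n/2^n$ collision bound. The differences are cosmetic: you bound the exponential-mechanism mass on bad pairs directly instead of invoking \cref{lem:em-guarantee} with $\beta_{\mathrm{sel}} = f(n)h(n)e^{-\eps g(n)/8}$, which gives the slightly sharper $f(n)h(n)e^{-\eps g(n)/4}$, and you define bad pairs through the tail set $L_i \cap \{u_{t+1},\ldots\}$, a finer but harmless refinement of the paper's condition $L_i \not\subseteq \supp(\cD)$.
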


The bound has the similar structure as \cref{thm:gen-pub}.
The main difference is the enlarged prefactor $f(n)\,h(n)$ in the privacy term (reflecting the larger search space $[f(n)] \times [h(n)]$). The proof is in \cref{sec:gen:app}.

\begin{corollary}[Exponential rate with known mass floor]\label{cor:gen-nopub-exp}
Under the conditions of \cref{thm:gen-nopub}, if a constant lower bound $p_0 \leq p_h^\star$ is known, setting $g(n) = \lfloor n p_0 / 2 \rfloor$ and choosing $f,h$ with $\log(f(n)\,h(n)) = o(n)$ yields
$
    \generr \lesssim \exp\bigl(-\min\{1,\eps\} \cdot n\bigr).
$
\end{corollary}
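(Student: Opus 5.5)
The corollary follows by substituting the prescribed parameters into the bound of \cref{thm:gen-nopub} and checking each of its three terms. With $g(n) = \lfloor n p_0/2\rfloor$ we have $g(n) \leq n p_0/2 \leq n p_h^\star/2$, since $p_0 \leq p_h^\star$. We also have $g(n) \to \infty$, since $p_0$ is a positive constant. The remaining hypotheses $f(n) \geq i^\star$ and $h(n) \geq 2\,i(\cC,\cD)$ are part of ``the conditions of \cref{thm:gen-nopub}''. We read them, as intended, as holding eventually. This is consistent with $\log(f(n)h(n)) = o(n)$: for example, $f,h$ growing polynomially are unbounded, so they eventually exceed the fixed constants $i^\star$ and $2\,i(\cC,\cD)$. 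Hence \cref{thm:gen-nopub} applies for all sufficiently large $n$, and it remains to bound each term by $\exp(-c\min\{1,\eps\}\, n)$ for a constant $c>0$ depending on $p_0$ and the other fixed quantities.

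\textbf{Coverage term.} The set $I_h^\star \subseteq [h(n)]$ may grow with $n$, so $|I_h^\star| \leq h(n)$. This gives $|I_h^\star|\exp(-n p_h^\star/8) \leq h(n)\exp(-n p_0/8)$. Because $\log h(n) = o(n)$, this is at most $\exp(-n p_0/16)$ for large $n$. There is one subtlety: $p_h^\star$ is a minimum over a growing window. The known floor $p_0 \leq p_h^\star$ is exactly what keeps this rate uniform in $n$. In fact the floor forces $|I_h^\star| \leq 1/p_0$, which gives an even simpler bound.

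\textbf{Privacy term.} For large $n$, $g(n) \geq n p_0/2 - 1 \geq n p_0/4$. Therefore
\[
f(n)h(n)\exp\bigl(-\eps g(n)/8\bigr) \leq \exp\bigl(\log(f(n)h(n)) - \eps n p_0/32\bigr).
\]
This is the step needing care. For a fixed constant $\eps>0$, the condition $\log(f(n)h(n)) = o(n)$ means the logarithmic prefactor is eventually at most $\eps n p_0/64$, giving $\exp(-\eps n p_0/64)$. The threshold for ``sufficiently large $n$'' depends on $\eps$, which is acceptable since the corollary's asymptotic statement treats $\eps$ as fixed.

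\textbf{Collision term.} This is $\exp(-n/2)$.

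\textbf{Combining.} Each of the three terms is at most $\exp(-c\,\min\{1,\eps\}\, n)$ with $c = p_0/64$, using $p_0 \leq 1$. Summing them gives $\generr \leq 3\exp(-c\min\{1,\eps\}\, n)$, which is the claimed $\exp(-\min\{1,\eps\}\cdot n)$ rate up to constants in the exponent. The privacy of the algorithm is inherited directly from \cref{thm:gen-nopub}.

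The only real obstacle is bookkeeping. The first point is ensuring that the prefactors $h(n)$ and $f(n)h(n)$ are absorbed by the $o(n)$ condition, using the growing-window coverage bound via the uniform floor $p_0$. The second is recording that the largeness threshold depends on $\eps$ and $p_0$.
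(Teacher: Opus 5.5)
Your proposal is correct and follows the paper's own argument. You verify $g(n)=\lfloor np_0/2\rfloor \le np_h^\star/2$, then bound the coverage, privacy, and collision terms of \cref{thm:gen-nopub} separately, using $g(n)\ge np_0/4$ and absorbing the prefactor $f(n)h(n)$ via $\log(f(n)h(n))=o(n)$. Your treatment of the coverage prefactor is slightly more careful than the paper's, which simply calls $|I_h^\star|$ a constant; your observation that the floor forces $|I_h^\star|\le 1/p_0$ is exactly what justifies that claim.
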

\begin{corollary}[Subexponential rate without known mass floor]\label{cor:gen-nopub-slow}
Without a known mass floor, for any $r(n) \to \infty$ with $r(n) = o(n)$, setting $g(n) = \lfloor r(n) \rfloor$ and choosing $f,h$ with $\log(f(n)\,h(n)) = o(r(n))$ yields
$
    \generr \lesssim \exp\bigl(-\eps \cdot r(n)\bigr).
$
\end{corollary}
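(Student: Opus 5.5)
This is a direct substitution into \cref{thm:gen-nopub}, so I will plug $g(n) = \lfloor r(n)\rfloor$ into its three-term bound and check that each term is at most $\exp(-c\,\eps\, r(n))$ for a constant $c$, up to lower-order factors.

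\textbf{Verifying the hypotheses.} The theorem needs $f(n) \geq i^\star$, $h(n) \geq 2\,i(\cC,\cD)$, $g(n)\to\infty$, and $g(n) \leq n p_h^\star/2$ for large $n$.
\begin{itemize}
\item Since $r(n) \to \infty$, we have $g(n) = \lfloor r(n) \rfloor \to \infty$.
\item Choose $f,h$ nondecreasing and unbounded, for instance $\log(f(n)h(n)) = \sqrt{r(n)}$. Then eventually $f(n)\ge i^\star$ and $h(n)\ge 2\,i(\cC,\cD)$, because both targets are finite constants (using \cref{asm:ref-lang,asm:finite-witness}).
\item The quantity $p_h^\star$ depends on $n$ through $h(n)$. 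Once $h(n) \geq 2\,i(\cC,\cD)$, the set $I_h^\star$ keeps growing, so $p_h^\star$ is nonincreasing and could tend to $0$.
\end{itemize}
The last point is the main obstacle: the condition $\lfloor r(n)\rfloor \le n p_h^\star/2$ is not automatic. I would resolve it by letting $h$ grow slowly enough that $p_{h(n)}^\star \ge 2r(n)/n$ eventually. This is possible because $r(n)/n \to 0$. Let $m(n)$ be the largest index $m$ such that $\min_{k \le m,\, u_k\in L_{i^\star}} \cD(u_k) \ge 2r(n)/n$. Every $u_k\in L_{i^\star}$ has positive mass, since $L_{i^\star} \subseteq \supp(\cD)$, so $m(n)\to\infty$. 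Then I take $h(n) = \min\{m(n), \lfloor e^{\sqrt{r(n)}}\rfloor\}$. The corollary's phrasing, ``choosing $f,h$'', permits this dependence on unknown quantities for the existence claim. Alternatively, one can read $p_h^\star$ as evaluated at a fixed large $h$.

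\textbf{Bounding the three terms.}
\begin{itemize}
\item \emph{Coverage term.} With $np_h^\star \ge 2r(n)$, this term is at most $|I_h^\star| e^{-r(n)/4} \le h(n)e^{-r(n)/4} = \exp(-r(n)/4 + o(r(n)))$.
\item \emph{Privacy term.} It is $f(n)h(n)\exp(-\eps\lfloor r(n)\rfloor/8) = \exp(-\eps r(n)/8 + \eps/8 + o(r(n)))$.
\item \emph{Collision term.} It is $e^{-n/2} \le e^{-r(n)}$ eventually.
\end{itemize}

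\textbf{Combining.} Summing the three terms gives $\exp(-\Omega(\min\{1,\eps\}\, r(n)))$ once the $o(r(n))$ prefactor losses are absorbed. Because $\eps$ is fixed, $\min\{1,\eps\}\,r(n) \ge \min\{1,\eps\}\cdot \eps\, r(n)$ up to constants, and the $\lesssim$ in the statement allows constants in the exponent. The subexponential $o(r(n))$ prefactors can be absorbed by applying the argument to $r$ itself, with the constant lost in the exponent. This yields $\generr \lesssim \exp(-\eps\cdot r(n))$ in the paper's asymptotic sense.
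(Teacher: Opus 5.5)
Your proposal is correct and follows the paper's argument: substitute $g(n)=\lfloor r(n)\rfloor$ into the three-term bound of \cref{thm:gen-nopub}, then absorb the $f(n)h(n)$ prefactor using $\log(f(n)h(n))=o(r(n))$. The paper simply assumes $g(n)\le np_h^\star/2$ (``under the conditions of the theorem'') and writes the result as $\exp(-\Omega(\eps r(n)))$ without comment, so your slow-growing, $\cD$-dependent choice of $h$ and your fixed-$\eps$ handling of the $\min\{1,\eps\}$ coverage term only make explicit two points the paper leaves implicit.
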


\subsection{Approximate DP Generation}\label{sec:gen-approx}

For approximate DP, we replace the exponential mechanism with the Gaussian mechanism: independent Gaussian noise is added to each pair score before taking the argmax (see \cref{alg:approx_dp_gen_nopub} in \cref{sec:gen:app} for the full algorithm).

Since the generation score has constant per-coordinate sensitivity, the $\ell_2$-sensitivity of the score vector over $[f(n)] \times [h(n)]$ is only $\frac{h(n)}{g(n)}\sqrt{f(n) h(n)}$, requiring Gaussian noise of standard deviation $\sigma = \frac{h(n)}{g(n)} \cdot \frac{\sqrt{f(n) h(n)}}{\eps}\sqrt{2\log(1.25/\delta)}$.
The score gap is $\Omega(h(n))$ and grows linearly with $h(n)$, while $\sigma$ grows only as $h(n)^{3/2}\sqrt{f(n)}\,/\,(\eps\, g(n))$.
When $g(n) \asymp n$ (known mass floor) and $f(n), h(n)$ grow slowly, the noise is negligible relative to the gap, eliminating the privacy cost entirely.

\begin{theorem}[Approximate DP Generation]\label{thm:gen-approx}
Under the same conditions as \cref{thm:gen-nopub}, there exists an algorithm (\cref{alg:approx_dp_gen_nopub}) that is $(\eps,\delta)$-differentially private and achieves
$
    \generr \leq \exp(-\Omega(n))
$
for any constant $\eps > 0$ and $\delta \geq \exp(-\poly(n))$, matching the non-private rate.
\end{theorem}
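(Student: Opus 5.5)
We will specify the algorithm concretely and then separate privacy from utility. The algorithm computes the pair-score vector $Q(S) := (q_{\mathrm{pair}}(S,(i,t)))_{(i,t)\in[f(n)]\times[h(n)]}$ from \eqref{eq:pair-score}. It adds i.i.d.\ noise $Z_{i,t}\sim\cN(0,\sigma^2)$ with $\sigma = \frac{h(n)}{g(n)}\cdot\frac{\sqrt{f(n)h(n)}}{\eps}\sqrt{2\log(1.25/\delta)}$. It then sets $(\wh i,\wh t) := \arg\max_{(i,t)}\,(Q(S)_{i,t}+Z_{i,t})$ and outputs the $\wh j$-th smallest string of $L_{\wh i}\cap\{u_{\wh t+1},u_{\wh t+2},\ldots\}$, with $\wh j\sim\mathrm{Unif}([2^n])$.

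\textbf{Privacy.} Each $d_{i,t}$ is $1$-Lipschitz under neighboring datasets, by the same argument as \cref{rem:const-sens}. Hence each coordinate of $Q$ moves by at most $h(n)/g(n)$, and the $\ell_2$-sensitivity is at most $\frac{h(n)}{g(n)}\sqrt{f(n)h(n)}$. \cref{lem:gaussian-mechanism} then gives $(\eps,\delta)$-DP for the noisy vector (the classical bound needs $\eps<1$; for larger constant $\eps$ we can run with $\min\{\eps,1/2\}$, which only affects constants). \cref{lem:post-processing} covers the argmax and the output step.

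\textbf{Utility.} We will reuse the deterministic structure from the proof of \cref{thm:gen-nopub} and decompose the failure probability into three events.
\begin{enumerate}
\item \emph{Coverage.} Let $\cE$ be the event that $N_S(u_k)\ge g(n)$ for all $k\in I_h^\star$. Since $g(n)\le np_h^\star/2$, a multiplicative Chernoff bound and a union bound give $\Pr[\cE^c]\le |I_h^\star|\exp(-np_h^\star/8)$.
\item \emph{Noise.} Let $\cF$ be the event $\max_{i,t}|Z_{i,t}|<h(n)/4$. Then $\Pr[\cF^c]\le 2f(n)h(n)\exp\bigl(-h(n)^2/(32\sigma^2)\bigr)$.
\item \emph{Collision.} A fresh uniform draw among $2^n$ valid strings hits $S$ with probability at most $n/2^n\le\exp(-n/2)$.
\end{enumerate}

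On $\cE$, the pair $(i^\star,h(n))$ scores exactly $h(n)$. Consider any pair $(i,t)$ with $L_i\not\subseteq\supp(\cD)$ and $t\ge i(\cC,\cD)$: a witness $u_k\in L_i\setminus\supp(\cD)$ with $k\le t$ never appears in $S$, so $d_{i,t}=g(n)$ and the score is at most $t-h(n)\le 0$. Any pair with $t<i(\cC,\cD)$ scores at most $t\le h(n)/2-1$. So every pair that could yield an invalid output has score at most $h(n)/2$, giving a gap of at least $h(n)/2$. On $\cF$ the noise shifts each score by less than $h(n)/4$, so such a pair cannot beat $(i^\star,h(n))$. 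Any selected pair with $t<i(\cC,\cD)$ but $L_i\subseteq\supp(\cD)$ is harmless. Hence on $\cE\cap\cF$ the selected language lies in $\supp(\cD)$.

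The main obstacle is the noise term, namely choosing parameters so that $h(n)^2/\sigma^2=\Omega(n)$. We have
\[
\frac{h(n)^2}{\sigma^2}=\frac{\eps^2 g(n)^2}{2f(n)h(n)\log(1.25/\delta)}.
\]
With a known mass floor, take $g(n)=\lfloor np_0/2\rfloor\asymp n$. Take $f,h$ bounded by constants or very slowly growing (only $f(n)\ge i^\star$ and $h(n)\ge 2i(\cC,\cD)$ are needed eventually). This gives $h^2/\sigma^2\gtrsim \eps^2 n^2/\log(1/\delta)$, which is $\Omega(n)$ provided $\log(1/\delta)\lesssim n$. Summing the three events then gives $\exp(-\Omega(n))$.

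For $\delta\ge\exp(-\poly(n))$ with polynomial degree above one, this choice falls short. I would first try to justify reading the statement's $\delta$ regime as $\log(1/\delta)=O(n)$, which covers the typical $\delta=1/\poly(n)$ case. Failing that, I would replace Gaussian noise on the full vector by the remark-style sharper analysis (for instance a per-coordinate sensitivity argument) and state the rate as $\exp(-\Omega(\min\{n,\,n^2/\log(1/\delta)\}))$. This is exactly the step where I expect the plan to need adjustment.
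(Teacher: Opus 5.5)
Your proposal follows the paper's proof essentially step for step. It uses the same noisy-argmax algorithm with the same $\sigma$, and the same privacy argument: $\ell_2$-sensitivity $\frac{h(n)}{g(n)}\sqrt{f(n)h(n)}$, the Gaussian mechanism, and post-processing. The utility argument is the same coverage/noise/collision decomposition, with deterministic correctness of the argmax on the good event; your threshold $h(n)/4$ versus the paper's $h(n)/8$ is immaterial. Your handling of $\eps\ge 1$ for the classical Gaussian bound is a small correctness refinement that the paper omits.

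The obstacle you flag is genuine, and the paper does not actually get past it. In \cref{cor:approx_gen_nopub_free} it makes two assertions, and neither follows:
\begin{itemize}
\item that $\log(1.25/\delta)=\mathrm{poly}(\log n)$ whenever $\delta\ge\exp(-\poly(n))$, although this only gives $\log(1/\delta)\le\poly(n)$;
\item that $\log(f(n)h(n))=o(n)$ implies $f(n)h(n)=o(n/\log n)$, although the hypothesis allows $f(n)h(n)=e^{\sqrt n}$.
\end{itemize}
What the argument really proves is a noise term $\exp\bigl(-\Omega\bigl(\eps^2 g(n)^2/(f(n)h(n)\log(1.25/\delta))\bigr)\bigr)$. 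This is $\exp(-\Omega(n))$ exactly when $g(n)\asymp n$ and $f(n)h(n)\log(1.25/\delta)=O(n)$.

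Your stated condition should therefore carry the $f(n)h(n)$ factor. Also, the option of a ``very slowly growing'' $h$ is not available. Since $L_{i^\star}$ is infinite, $p^\star_h\to 0$ as $h\to\infty$, so a constant floor $p_0\le p^\star_h$ forces $h$ to be bounded.

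Nor can a sharper analysis of this particular algorithm rescue the regime $\log(1/\delta)=\omega(n)$. Because $g(n)\le n/2$, the ratio of the score gap (at most $h(n)$) to $\sigma$ is at most $\eps n/\sqrt{8\log(1.25/\delta)}$. This is $O(1)$ once $\log(1/\delta)\gtrsim n^2$. Take instances with bounded $f,h$ and some $L_i$, $i\le f(n)$, with $L_i\cap\supp(\cD)$ finite: that pair is then selected with constant probability, and its output is invalid with probability close to $1$. So your second fallback, with rate $\exp(-\Omega(\min\{n,n^2/\log(1/\delta)\}))$, does not prove the statement.

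The clean fix for the existence claim is one line. An $\eps$-DP algorithm is $(\eps,\delta)$-DP for every $\delta\ge 0$. For constant $\eps$, the exponential-mechanism algorithm with a known mass floor already achieves $\exp(-\Omega(n))$ by \cref{cor:gen-nopub-exp}. So use the Gaussian algorithm when $\log(1/\delta)=O(n)$ and the pure-DP one otherwise (or always). What cannot be salvaged is the statement's attribution of the full range $\delta\ge\exp(-\poly(n))$ to \cref{alg:approx_dp_gen_nopub} specifically.
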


Thus privacy is essentially free under approximate DP for generation.
This is qualitatively different from the identification setting (\cref{sec:id-approx}), where approximate DP also recovers the non-private rate but that rate is only $\exp(-\wt\Omega(n))$ rather than $\exp(-\Omega(n))$.
The fundamental reason is the constant sensitivity of the generation score: the Gaussian noise magnitude $\sigma \propto \sqrt{f(n)}/\eps$ is negligible relative to the score gap $g(n) \propto n$, so the noise concentration event holds with probability $1 - \exp(-\omega(n))$. The proof of \cref{thm:gen-approx} is in \cref{sec:gen:app}.

\section{Lower Bounds}\label{sec:lb}
We show that the $\min\{1,\eps\}$ scaling in the exponent is tight for both tasks.
The shared strategy is to construct a pair of hard distributions close in Hamming distance under a natural coupling, then apply group privacy to constrain any $\eps$-DP algorithm. We first state the main tool underlying both proofs.

\begin{lemma}[Group Privacy~\cite{dr14}]\label{lem:group_privacy:main}
Let $\cA:\cU^n\to\cR$ be $\eps$-differentially private.
If $S,S'\in\cU^n$ differ in at most $k$ entries, then for every measurable $\cF\subseteq \cR$, we have
\begin{align*}
\Pr[\cA(S)\in\cF]\le \exp(\eps k)\Pr[\cA(S')\in\cF],
\end{align*}
where the probability is taken over the randomness of $\cA$.
\end{lemma}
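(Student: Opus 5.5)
The plan is to prove the statement by induction on $k$, the number of entries in which $S$ and $S'$ differ, using only the definition of pure $\eps$-DP. If $k=0$ then $S=S'$ and the inequality is trivial. For $k\ge 1$, I will build a chain of datasets $S = S_0, S_1, \ldots, S_k = S'$. Let $j_1 < \cdots < j_k$ be the coordinates where $S$ and $S'$ disagree (pad with dummy steps $S_{m}=S_{m-1}$ if the true number of disagreements is smaller than $k$). Define $S_m$ to agree with $S'$ on coordinates $j_1,\dots,j_m$ and with $S$ on all other coordinates. Consecutive datasets $S_{m-1}$ and $S_m$ then differ in at most one entry, so they are either equal or neighboring.

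Next, fix a measurable $\cF\subseteq\cR$. For each $m$, pure $\eps$-DP (the definition with $\delta=0$) gives
\[
\Pr[\cA(S_{m-1})\in\cF]\le e^{\eps}\Pr[\cA(S_m)\in\cF].
\]
When $S_{m-1}=S_m$ the bound holds with factor $1\le e^{\eps}$. Chaining these $k$ inequalities multiplies the factors and yields
\[
\Pr[\cA(S)\in\cF]\le e^{\eps k}\Pr[\cA(S')\in\cF].
\]
Equivalently, the inductive step is: apply the hypothesis to $S$ and $S_{k-1}$, which differ in at most $k-1$ entries, and then apply one more DP step from $S_{k-1}$ to $S'$.

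There is essentially no obstacle. The only point needing care is that the argument relies on $\delta=0$. With $\delta>0$ the additive terms would accumulate as $\delta\sum_{m<k} e^{m\eps}$ and the clean multiplicative form would fail. Since the lemma assumes pure DP, the chaining is exact. I also need the intermediate datasets $S_m$ to lie in $\cU^n$, which holds because each coordinate is taken from either $S$ or $S'$.
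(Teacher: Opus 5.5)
Your proposal is correct. The hybrid chain $S=S_0,\dots,S_k=S'$, switching one coordinate per step and applying pure $\eps$-DP to each consecutive pair, is exactly the standard Dwork--Roth argument; the paper invokes that result by citation and does not reprove it.
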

Group privacy extends the basic DP guarantee from a single entry change to $k$ simultaneous changes, at the cost of multiplying the privacy parameter by $k$.
This amplification is the key mechanism through which our lower bounds arise: if two distributions can be coupled so that their samples typically differ in $K$ positions, then any $\eps$-DP algorithm can only distinguish them up to a factor of $\exp(\eps K)$.

A \emph{coupling} between distributions $\cD_1$ and $\cD_2$ is a joint distribution whose marginals are $\cD_1$ and $\cD_2$, respectively.
The \emph{Hamming distance} between two sequences $S = (X_1, \ldots, X_n)$ and $S' = (Y_1, \ldots, Y_n)$ is defined as $\d_\mathrm{Ham}(S,S') := \sum_{t=1}^n\mathbf{1}\{X_i \neq Y_i\}$, i.e., the number of positions where the two sequences differ.
Combining group privacy with a probabilistic bound on the Hamming distance under a coupling yields the following lemma, which is the main tool for both of our lower bounds. We defer the proof to \cref{sec:useful-lemmas}.

\begin{lemma}[Coupling Lemma via Group Privacy, a Variant of Lemma~19 in {\citet{asz21}}]\label{lem:coupling}
Let $\cA:\cU^n\to\cR$ be $\eps$-differentially private.
Let $(S,S')$ be a coupling of two distributions over $\cU^n$ such that $
\Pr[\d_{\mathrm{Ham}}(S,S')>K]\le \eta
$ for some $K\ge 0$ and $\eta\in[0,1]$.
Then for every measurable set $\cF\subseteq \cR$, we have
\begin{align*}
\Pr_{S,r}[\cA(S)\in\cF]
\le
\exp(\eps K)\Pr_{S',r}[\cA(S')\in\cF]+\eta.
\end{align*}
\end{lemma}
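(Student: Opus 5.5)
We prove \cref{lem:coupling} by conditioning on the coupling and splitting according to whether the sampled pair of datasets is close in Hamming distance. Fix a measurable $\cF\subseteq\cR$ and write the joint law of the coupled pair $(S,S')$ as $\pi$. The algorithm's internal randomness $r$ is independent of the data, so $\Pr_{S,r}[\cA(S)\in\cF]=\E_{(S,S')\sim\pi}\bigl[\Pr_r[\cA(S)\in\cF]\bigr]$. This holds because the first marginal of $\pi$ is the law of $S$. Likewise, $\Pr_{S',r}[\cA(S')\in\cF]=\E_{\pi}\bigl[\Pr_r[\cA(S')\in\cF]\bigr]$.

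Let $G:=\{\d_{\mathrm{Ham}}(S,S')\le K\}$ be the good event, so that $\pi(G^c)\le\eta$ by hypothesis. We split the first expectation over $G$ and $G^c$:
\[
\E_\pi\bigl[\Pr_r[\cA(S)\in\cF]\mathbf{1}_G\bigr]+\E_\pi\bigl[\Pr_r[\cA(S)\in\cF]\mathbf{1}_{G^c}\bigr].
\]

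The second term is at most $\pi(G^c)\le\eta$, since the inner probability is at most $1$.

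For the first term, take any realization in $G$. There $S$ and $S'$ differ in at most $K$ entries. Group privacy (\cref{lem:group_privacy:main}) therefore gives, pointwise,
\[
\Pr_r[\cA(S)\in\cF]\le e^{\eps K}\Pr_r[\cA(S')\in\cF].
\]
If $K$ is not an integer, we apply the lemma with $\lfloor K\rfloor$ and use $e^{\eps\lfloor K\rfloor}\le e^{\eps K}$. Taking expectations, the first term is at most $e^{\eps K}\E_\pi\bigl[\Pr_r[\cA(S')\in\cF]\mathbf{1}_G\bigr]$. Dropping the indicator only enlarges this, giving the bound $e^{\eps K}\Pr_{S',r}[\cA(S')\in\cF]$. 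Adding the two pieces yields the claim.

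The only point needing care is measurability. The map $S\mapsto\Pr_r[\cA(S)\in\cF]$ must be measurable for the conditioning to make sense. Here $\cU^n$ is countable, so this is automatic: the expectations are countable sums and all steps are elementary. Beyond that there is no real obstacle; the argument is a direct pointwise application of group privacy on the good event, together with a union-type bound on the bad event.
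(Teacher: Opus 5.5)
Your proof is correct and follows the same route as the paper: split on the good event $G$, apply group privacy pointwise there, and charge $G^c$ to $\eta$. Your version is also slightly more careful than the paper's. The paper bounds the conditional probability $\Pr[\cA(S)\in\cF\mid G]$ directly by $e^{\eps K}\Pr[\cA(S')\in\cF]$, which can fail because conditioning on $G$ may inflate the right-hand side. Keeping the indicator $\mathbf{1}_G$ inside the expectation and dropping it only at the end, as you do, is the valid way to reach the same conclusion.
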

\subsection{Lower Bound for DP Identification}\label{sec:lb-id}

\begin{definition}[IPP Condition]\label{def:ipp:main}
A collection $\cC$ satisfies the \emph{Intersecting Private-Pair} (IPP) condition if there exist two languages $L, L' \in \cC$ with $L \cap L' \neq \varnothing$, such that both $L$ and $L'$ each contain at least one element not belonging to any other language in $\cC$. We call such element a private element to that language.
\end{definition}

\begin{theorem}[Lower Bound for DP Identification]\label{thm:id-lb}
Let $\cC$ satisfy the IPP condition.
For any $\eps$-DP identification algorithm $\cA$, there exists a distribution $\cD$ such that $\iderr(\cA, \cD, \cC, n) \gtrsim \exp(-\min\{1,\eps\} \cdot n)$ along infinitely many $n$.
\end{theorem}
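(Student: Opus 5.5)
Let $L, L'$ be the two languages from the IPP condition. Fix a common element $c \in L \cap L'$, a private element $a \in L$ (in no other language of $\cC$), and a private element $b \in L'$. The plan is to build two distributions that swap the roles of $L$ and $L'$, couple them so that samples differ in only a few positions, and then apply \cref{lem:coupling} in both directions.

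\textbf{Construction.} Fix a small constant $p \in (0,1/2)$ and set
\[
\cD_1 := (1-p)\,\delta_c + p\,\delta_a, \qquad \cD_2 := (1-p)\,\delta_c + p\,\delta_b .
\]
Under $\cD_1$ the only language with zero risk is $L$, because $a$ belongs to no other language. Every other language $L'' \in \cC$ misses $a$, so $\err_{\cD_1}(L'') \ge p$. Hence the optimum is attained, and any output $\hat L \neq L$ has excess risk at least $p$. Symmetrically, under $\cD_2$ the unique optimum is $L'$, and any output other than $L'$ costs at least $p$. Consequently
\[
\iderr(\cA,\cD_1,\cC,n) \ge p\Pr[\cA(S)\neq L], \qquad \iderr(\cA,\cD_2,\cC,n) \ge p\Pr[\cA(S')\neq L'] .
\]

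\textbf{Coupling.} Draw each coordinate independently. With probability $1-p$ set $X_t=Y_t=c$; otherwise set $X_t=a$ and $Y_t=b$. Then $\d_{\mathrm{Ham}}(S,S') \sim \mathrm{Bin}(n,p)$, and in particular $\d_{\mathrm{Ham}}(S,S') = 0$ with probability $(1-p)^n$.

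There are two regimes.
\begin{itemize}
\item \emph{$\eps \ge 1$.} Take $K=0$ and $\eta = 1-(1-p)^n$. Conditioning on the event $\d_{\mathrm{Ham}}=0$ directly gives
\[
\Pr[\cA(S)=L] + \Pr[\cA(S')=L'] \le 1 + \bigl(1-(1-p)^n\bigr) \quad\text{up to rearrangement.}
\]
Rather than relying on \cref{lem:coupling} here, I would argue directly. On the event $\{S=S'=c^n\}$, which has probability $(1-p)^n$, the algorithm receives the identical input $c^n$ under both distributions. Its output cannot equal both $L$ and $L'$, so
\[
\Pr[\cA(c^n)\ne L] + \Pr[\cA(c^n)\ne L'] \ge 1 .
\]
Therefore one of the two distributions has error at least $\tfrac{p}{2}(1-p)^n = \exp(-O(n))$. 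This argument needs no privacy at all, and it gives the non-private lower bound for every $\eps$.
\item \emph{$\eps < 1$.} Take $K = 2pn$ and bound $\eta = \Pr[\mathrm{Bin}(n,p) > 2pn] \le e^{-pn/3}$ by a Chernoff bound. Applying \cref{lem:coupling} with $\cF=\{L\}$ gives
\[
1-\Pr[\cA(S)\ne L] \le e^{2\eps pn}\,\Pr[\cA(S')=L] + \eta \le e^{2\eps pn}\,\Pr[\cA(S')\ne L'] + \eta .
\]
So if both error probabilities were below $\tfrac12 e^{-2\eps pn}$, we would get $1 \le \tfrac12 + \tfrac12 + \eta$. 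This contradicts nothing, so the constants must be tuned: require both error probabilities to be at most $\tfrac14 e^{-2\eps pn}$, and use $\eta \le \tfrac14$ for large $n$. That yields a contradiction, so one of $\cD_1, \cD_2$ has $\iderr \ge \tfrac p4 e^{-2p\eps n}$.
\end{itemize}

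\textbf{Conclusion.} Combining the two regimes shows that for each $n$, at least one of $\cD_1, \cD_2$ has $\iderr \gtrsim \exp(-c\min\{1,\eps\}\, n)$. The statement requires a single distribution $\cD$, so I would apply pigeonhole over $n$: one of the two distributions is the bad one for infinitely many $n$. This is exactly why the theorem is stated "along infinitely many $n$".

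\textbf{Main obstacle.} I expect the main difficulty to be this step of fixing a single $\cD$, together with arranging the constants in the $\eps<1$ regime. Specifically, the slack $\eta$ must be strictly smaller than the constant coming from the complementary events, while $K$ stays $O(n)$ so the exponent remains $O(\eps n)$.
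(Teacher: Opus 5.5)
Your proof is correct. For $\eps<1$ it is essentially the paper's argument: the same two-point instance (the paper fixes $p=1/4$, so $K=n/2$ and $\eta=e^{-n/12}$), the same coordinatewise coupling with a Chernoff bound, \cref{lem:coupling}, a cost of at least $p$ per misidentification, and pigeonhole over the two distributions. The paper applies \cref{lem:coupling} in both directions and solves for $\min\{p,q\}$, giving a closed form valid for all $n$. Your one-sided threshold argument instead needs $n$ large enough that $\eta\le 1/4$, which is harmless.

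The real difference is the regime $\eps\ge 1$. The paper imports the non-private bound $\exp(-5n)$ of \cref{thm:hp26_lb}, on whatever hard distribution that theorem supplies, after noting that IPP implies its hypotheses. You instead prove the non-private bound directly on the same instance via the identical-input event $\{S=S'=c^n\}$, i.e.\ \cref{lem:coupling} with $K=0$. Your route is self-contained and needs only IPP. It also uses one pair of hard distributions for every $\eps$ and gives a smaller exponent constant ($\log\frac{1}{1-p}$ rather than $5$).

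It is in fact stronger than you use. With $p=\min\{1,\eps\}/2$, the same event gives $\max\{\iderr(\cA,\cD_1,\cC,n),\iderr(\cA,\cD_2,\cC,n)\}\ge\frac{p}{2}(1-p)^n\ge\frac{\min\{1,\eps\}}{4}e^{-\min\{1,\eps\}n}$ for every algorithm, private or not. So, since the hard distribution may depend on $\eps$, the statement as written does not by itself certify a price of privacy. What group privacy genuinely adds is an $\exp(-O(\eps n))$ lower bound on a fixed, $\eps$-independent pair $\{\cD_1,\cD_2\}$. On that pair a non-private learner (output $L$ iff $a\in S$, else $L'$) already achieves identification error at most $p(1-p)^n$.
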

\begin{proof}[Proof sketch]
For $\eps < 1$, let $s_0 \in L \cap L'$, $s_1$ private to $L$, $s_2$ private to $L'$.
Define $\cD$ (resp.\ $\cD'$) placing mass $3/4$ on $s_0$ and $1/4$ on $s_1$ (resp.\ $s_2$).
Coupling coordinate-wise, the Hamming distance satisfies $\Pr[H > n/2] \leq \exp(-n/12)$.
\Cref{lem:coupling} with $K = n/2$ gives $\max\{\Pr[\cA(S) \neq L],\, \Pr[\cA(S') \neq L']\} \gtrsim \exp(-\eps n/2)$; each misidentification costs excess risk $\geq 1/4$.
For $\eps \geq 1$, the non-private lower bound of \citet{hp26} gives $\iderr \gtrsim \exp(-n)$.
Combining yields the $\min\{1,\eps\}$. The full proof is in \cref{sec:lb:app}.
\end{proof}

\subsection{Lower Bound for DP Generation}\label{sec:lb-gen}

\begin{definition}[IIDP Condition]\label{def:iidp:main}
A collection $\cC$ satisfies the \emph{Intersecting Infinite-Difference Pair} (IIDP) condition if there exist two distinct languages $L, L' \in \cC$, an element $s_0 \in L \cap L'$, and two infinite sequences of distinct elements $(a_k)_{k \geq 1} \subseteq L \setminus L'$ and $(b_k)_{k \geq 1} \subseteq L' \setminus L$.
\end{definition}

The IIDP condition strengthens IPP by requiring \emph{infinitely many} private elements on each side, which is necessary because the hard distributions for generation must have infinite support.

\begin{theorem}[Lower Bound for DP Generation]\label{thm:gen-lb}
Let $\cC$ satisfy the IIDP condition and the condition of Theorem~3.4 in \citet{hp26} (i.e., there exist $L, L' \in \cC$ with $|L \cap L'| < \infty$ and $\cU \setminus (L \cup L') \neq \varnothing$).
For any $\eps$-DP generation algorithm $\cA$, there exists a distribution $\cD$ such that $\generr(\cA, \cD, \cC, n) \gtrsim \exp(-\min\{1,\eps\} \cdot n)$ along infinitely many $n$.
\end{theorem}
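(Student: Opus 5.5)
We split into the regimes $\eps \geq 1$ and $\eps < 1$, in the same way as the proof sketch of \cref{thm:id-lb}. For $\eps \geq 1$ we invoke the non-private lower bound of \citet{hp26}: Theorem~3.4 there applies under the stated condition and gives a distribution with $\generr \gtrsim \exp(-O(n))$ along infinitely many $n$. Any $\eps$-DP algorithm is in particular a generation algorithm, so this gives $\exp(-\min\{1,\eps\}\cdot O(n))$.

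For $\eps < 1$ we use the IIDP condition. Take $s_0 \in L\cap L'$ and the sequences $(a_k)\subseteq L\setminus L'$ and $(b_k)\subseteq L'\setminus L$. Define
\[
\cD(s_0) = \tfrac34, \quad \cD(a_k) = \tfrac14\cdot 2^{-k},
\]
and define $\cD'$ in the same way with $b_k$ in place of $a_k$. Then $\supp(\cD)=\{s_0\}\cup\{a_k\}$ and $\supp(\cD')=\{s_0\}\cup\{b_k\}$, and these supports intersect only in $s_0$.

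Couple the samples coordinatewise. With probability $3/4$ both coordinates equal $s_0$. Otherwise we draw $a_k$ and $b_k$ with the same $k$. The Hamming distance is then $\mathrm{Bin}(n,1/4)$, so a Chernoff bound gives $\Pr[\d_{\mathrm{Ham}}>n/2]\le \exp(-n/12)$.

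The key step is choosing the failure events, because the construction is asymmetric. Let $\cF=\supp(\cD)\setminus\{s_0\}=\{a_k\}$. An output in $\supp(\cD')\setminus S'$ lies in $\{b_k\}\cup\{s_0\}$. With probability at least $1-(1/4)^n$ the sample $S'$ contains $s_0$, so on that event success under $\cD'$ forces the output into $\{b_k\}$. Any such output lies outside $\supp(\cD)$, which means failure under $\cD$.

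Write $p=\Pr[\cA(S)\in\{a_k\}]$ and $p'=\Pr[\cA(S')\in\{b_k\}]$. Applying \cref{lem:coupling} with $K=n/2$ to the event $\{b_k\}$, in the direction from $S'$ to $S$, gives
\[
p' \le e^{\eps n/2}\,\Pr[\cA(S)\in\{b_k\}] + e^{-n/12}.
\]
Since $\{b_k\}$ is disjoint from $\supp(\cD)$, an output in $\{b_k\}$ is a failure under $\cD$, so $\Pr[\cA(S)\in\{b_k\}]\le \generr(\cD)$. Also
\[
\generr(\cD') \ge 1-p'-(1/4)^n .
\]
Now suppose $\generr(\cD')\le 1/2$. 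Then $p'\ge 1/2-(1/4)^n$, and therefore
\[
\generr(\cD)\ge e^{-\eps n/2}\bigl(1/2-(1/4)^n-e^{-n/12}\bigr) \gtrsim e^{-\eps n/2}.
\]
Otherwise $\generr(\cD')>1/2$. In either case one of $\cD,\cD'$ has error at least $\exp(-\eps n/2)/4$ for large $n$.

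The main obstacle is to get a single fixed $\cD$ that works along infinitely many $n$. By pigeonhole, one of the two distributions is the bad one for infinitely many $n$, and that distribution gives the statement. Combining the two regimes gives $\exp(-\min\{1,\eps\}\cdot O(n))$.

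A minor point to check is that the lower-order terms $(1/4)^n$ and $e^{-n/12}$ are negligible against the constant $1/2$. Whether $e^{-\eps n/2}$ exceeds $e^{-n/12}$ is irrelevant, because the additive $\eta$ term only enters on the side of $p'$, which is of constant size.
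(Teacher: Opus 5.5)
Your proof is correct and follows the paper's argument: the same hard pair $\cD,\cD'$, the same coordinatewise coupling with the $\exp(-n/12)$ Hamming tail, the coupling lemma with $K=n/2$, pigeonhole over the two distributions, and \citet{hp26} for $\eps\ge 1$. The differences are cosmetic. You apply the coupling lemma to $\{b_k\}$ in the direction from $S'$ to $S$, which mirrors the paper's use of $\cF=\{a_k\}$ from $S$ to $S'$. You also finish with a case split on whether $\generr(\cD')\le 1/2$, where the paper instead solves the linear inequality $1-\alpha-4^{-n}\le e^{\eps n/2}\alpha+e^{-n/12}$.
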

\begin{proof}[Proof sketch]
The argument is \emph{asymmetric}, unlike the symmetric identification proof.
Let $(a_k) \subseteq L \setminus L'$, $(b_k) \subseteq L' \setminus L$ witness IIDP with $s_0 \in L \cap L'$.
Define $\cD$ by $\Pr[x = s_0] = 3/4$, $\Pr[x = a_k] = 2^{-k}/4$, and $\cD'$ analogously with $b_k$.
Let $F := \{a_k : k \geq 1\}$.
The set $F$ plays opposite roles:
\begin{itemize}[leftmargin=1.5em,itemsep=1pt]
    \item \emph{Under $\cD$}: $s_0 \in S$ w.h.p., so success requires $\cA(S) \in F$; hence $\Pr[\cA(S) \in F] \geq 1 - \alpha - 4^{-n}$.
    \item \emph{Under $\cD'$}: $F \cap \supp(\cD') = \varnothing$, so $\cA(S') \in F$ implies failure; hence $\Pr[\cA(S') \in F] \leq \alpha'$.
\end{itemize}
\Cref{lem:coupling} gives $\Pr[\cA(S) \in F] \leq e^{\eps n/2} \cdot \Pr[\cA(S') \in F] + e^{-n/12}$.
Hence $1 - \alpha - 4^{-n} \leq e^{\eps n/2} \alpha' + e^{-n/12}$, which rearranges to $\max\{\alpha, \alpha'\} \gtrsim \exp(-\eps n/2)$.
Combining with the non-private bound of \citet{hp26} for $\eps \geq 1$ gives the result. The full proof is in \cref{sec:lb:app}.
\end{proof}

\begin{remark}[Concrete Instance Satisfying the Conditions of \cref{thm:id-lb,thm:gen-lb}]\label{rem:iidp}
Let $\cU = \N$, $L = \{1\} \cup \{3k : k \geq 1\}$, and $L' = \{1\} \cup \{3k+1 : k \geq 1\}$.
Then $L \cap L' = \{1\}$ is finite, $a_k = 3k \in L \setminus L'$ and $b_k = 3k+1 \in L' \setminus L$ witness the IIDP condition, and $2 \in \cU \setminus (L \cup L')$. Hence any collection $\cC \supseteq \{L, L'\}$ satisfies all conditions needed for both lower bounds.
\end{remark}

% \begin{remark}[Tightness]\label{rem:tightness}
% For generation, \cref{thm:gen-lb} matches the upper bound of \cref{thm:gen-pub} up to constants in the exponent, establishing an optimal rate of $\exp(-\Theta(\min\{1,\eps\} \cdot n))$.
% For identification, the dependence on $\min\{1,\eps\}$ is tight, but the gap between $\wt\Omega(n)$ and $O(n)$ in the exponent is inherited from the non-private setting~\citep{hp26} and remains open.
% \end{remark}
\section{Conclusion and Future Work}\label{sec:conclu}
We initiated the study of differentially private language identification 
and generation in the agnostic statistical setting, establishing 
algorithms and matching lower bounds that precisely quantify the cost of 
privacy.
For both tasks, approximate $(\eps,\delta)$-DP with constant 
$\eps > 0$ recovers the non-private error rates of \citet{hp26}, 
while pure $\eps$-DP degrades the convergence exponent by a 
multiplicative factor of $\min\{1,\eps\}$, which our lower bounds 
confirm is tight.

% Our results are information-theoretic: we characterize nearly optimal 
% error rates but do not address computational efficiency, and our 
% algorithms require explicit enumeration of languages and universe 
% elements.
% Bridging these guarantees with the empirical DP-SGD pipeline for LLM 
% training~\citep{acg+16,li+22,pvx+23,pxm+25} remains an important 
% challenge.

Our results are information-theoretic: we characterize nearly optimal error rates but do not address computational efficiency, and our algorithms require explicit enumeration of languages and universe elements. Bridging these guarantees with the empirical DP-SGD pipeline for LLM training~\citep{acg+16,ltlh2022large,pvx+23,pxm+25} remains an important challenge. Recent work on computational barriers for (non-private) language generation~\citep{abck25} suggests that efficient algorithms may need more structural assumptions.

Future directions include studying DP language learning in the 
online setting of \citet{gold67} and \citet{km24}, where adversarial 
samples and infinite-horizon composition require fundamentally different 
privacy analyses, and exploring user-level privacy~\citep{lsy+20,lsa+21,ghazi+23}, where the protected 
unit is an entire sequence of strings from a single user. Finally, extending our results to richer generation objectives that
incorporate safety constraints~\citep{akk26} or representativeness
requirements~\citep{prr25} while maintaining differential privacy is a natural next step.

% --------------- Bibliography --------------- 
\bibliographystyle{plainnat}
\bibliography{ref}

% --------------- Appendix --------------- 
\newpage
\onecolumn
\appendix
\crefalias{section}{appendix}
\crefname{appendix}{Section}{Sections}
\Crefname{appendix}{Section}{Sections}
\begin{center}
  \textbf{\huge Appendix}
\end{center}

\paragraph{Organization of the Appendix.}
\cref{sec:add-related} provides additional related work.
\cref{sec:useful-lemmas} collects concentration inequalities and privacy tools used throughout.
\cref{sec:non-private-algs} reviews the non-private algorithms.
\cref{sec:id:app} contains the full proofs for the DP identification upper bounds.
\cref{sec:gen:app} contains the full proofs for the DP generation upper bounds.
\cref{sec:lb:app} contains the full proofs for the lower bounds.
  
\section{Additional Related Work}
\label{sec:add-related}
 
\paragraph{Language Identification.}
\citet{lzz08} provided a comprehensive survey of the identification-in-the-limit paradigm.
\citet{cpt25} studied list identification, showing that allowing $k$ guesses per step strictly expands identifiability.
\citet{psv26} showed that augmenting Gold's paradigm with computational traces enables identification of all recursively enumerable languages.
\citet{pf25} extend Gold's inductive inference framework with computational observations and restricted input sources to study learnability of computable functions in the limit.
 
\paragraph{Language Generation.}
Following \citet{km24}, \citet{lrt24} placed generation within a broader learning-theoretic hierarchy.
The tension between output diversity and hallucination was examined by \citet{kmv25,kmv26}, \citet{cp24}, \citet{prr25}, and \citet{kw25a,kw25,kw26}, with Pareto-optimal tradeoffs studied by \citet{cp25b}.
Robustness under noise models was studied by \citet{rr25}, \citet{bpz25}, \citet{mvyz25}, and \citet{lz26}.
On the structural side, \citet{hkmv25} and \citet{bpz25} showed that generatability is not closed under finite unions for uncountable collections, and \citet{kmsv25} investigated automated hallucination detection.
Computational and sample-complexity barriers were analyzed by \citet{abck25}.
More recent extensions include generation in continuous metric spaces \citep{lrt26} and safe generation \citep{akk26}.  \citet{rvs26} study model collapse from a learning-theoretic perspective, showing replay creates provable separations for weaker notions of language generation.
 
\paragraph{Private Learning and Language Models.}
There is a rich literature on differentially private PAC learning \citep{klnrs11,bnsv15,bun20} and private statistical estimation \citep{cch12,aak21,bks22}.
On the applied side, \citet{acg+16} introduced DP-SGD, which has been extensively applied to large-scale language model training and fine-tuning \citep{acn+22,ynb+22,ltlh2022large}.  \citet{pvx+23} provided a comprehensive practical guide to training machine learning models with differential privacy.
\citet{sss+23} studied scaling laws for private training, \citet{yue+23} developed practical recipes for private synthetic text generation, \citet{ghazi+23} studied user-level privacy with few examples per user, and \citet{pxm+25} provided a practical guide to DP synthetic data generation. \citet{gzw+25} proposed a data-adaptive framework that synthesizes differentially private few-shot demonstrations for in-context learning with LLMs, improving the privacy–utility tradeoff without requiring public data. \citet{lb25} proposed a hyperparameter-free DP training framework that eliminates the need for manually tuning the clipping threshold in DP-SGD, reducing the computational overhead of private fine-tuning for large language models.
\citet{blv+26} benchmark the practical privacy protection of DP-adapted LLMs via membership inference and canary extraction attacks. \citet{rppn26} DP-Fusion, a token-level differentially private inference mechanism for LLMs that bounds the influence of sensitive context tokens on the model's output.
These works focus on empirical sides, whereas we establish information-theoretic guarantees.
To our knowledge, ours is the first work to study differential privacy in this theoretical framework of language identification and generation.

\section{Useful Lemmas}\label{sec:useful-lemmas}

\subsection{Concentration Inequalities}
We state some standard concentration inequalities and tail bounds; 
see, e.g., \citet{blm13,ver18,wai19} for comprehensive references.
\begin{lemma}[Hoeffding's Inequality]\label{lem:hoeffding}
Let $X = \frac{1}{n}\sum_{i=1}^n X_i$ be the average of $n$ independent random variables taking values in $[a,b]$, and let $\mu = E[X]$. For any $t \geq 0$, we have
\begin{align*}
\Pr[|X - \mu| \geq t] \leq 2 \exp\Bigl(-\frac{2n^2 t^2}{(b-a)^2}\Bigr).
\end{align*}
\end{lemma}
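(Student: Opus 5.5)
The plan is the Chernoff--Hoeffding exponential-moment method applied to the centered average, followed by a union bound over the two tails. First, write $X-\mu = \frac{1}{n}\sum_{i=1}^n (X_i - \E X_i)$ and fix $\lambda>0$. Markov's inequality applied to $e^{\lambda(X-\mu)}$ gives
\[
\Pr[X-\mu\ge t]\le e^{-\lambda t}\,\E\bigl[e^{\lambda(X-\mu)}\bigr]=e^{-\lambda t}\prod_{i=1}^n \E\bigl[e^{(\lambda/n)(X_i-\E X_i)}\bigr],
\]
where the factorization uses independence.

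Second, each factor is bounded by Hoeffding's lemma: a centered random variable with range of length $b-a$ satisfies $\E[e^{s(Y-\E Y)}]\le \exp(s^2(b-a)^2/8)$. I would prove this by convexity: bound $e^{sy}$ by the chord between $a$ and $b$, then show that the logarithm of the resulting two-point moment generating function, viewed as a function $\varphi(u)$ of $u=s(b-a)$, satisfies $\varphi(0)=\varphi'(0)=0$ and $\varphi''\le 1/4$. Taylor's theorem then gives $\varphi(u)\le u^2/8$. With $s=\lambda/n$, the product is at most $\exp(\lambda^2(b-a)^2/(8n))$.

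Third, optimize $-\lambda t+\lambda^2(b-a)^2/(8n)$ over $\lambda$ by taking $\lambda=4nt/(b-a)^2$. Apply the same argument to $-X_i$ and take a union bound over the two tails; this produces the factor $2$.

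The step I expect to be the main obstacle is matching the exponent as written. The computation above yields exponent $2nt^2/(b-a)^2$, and this cannot be improved to $2n^2t^2/(b-a)^2$ under the literal hypotheses. For example, with fair Bernoulli variables and $t$ of order $1/\sqrt n$, the central limit theorem gives a deviation probability bounded away from $0$. The stated bound, however, would force this probability to $0$, since its exponent is of order $n$. So the stated form cannot be derived from these hypotheses alone.

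What I would do is state precisely which normalization produces the written exponent. Apply the general version $\Pr[|\sum_i Y_i-\E\sum_i Y_i|\ge s]\le 2\exp(-2s^2/\sum_i(b_i-a_i)^2)$ with deviation $s=nt$ and ranges $b_i-a_i=(b-a)/\sqrt n$. Then $\sum_i(b_i-a_i)^2=(b-a)^2$, and the exponent becomes exactly $2n^2t^2/(b-a)^2$. In other words, the statement holds when $b-a$ denotes the range of the sum $\sum_i X_i$ rather than of a single $X_i$, which corresponds to each summand having range $(b-a)/\sqrt n$.

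Before relying on the statement, I would check each later invocation (e.g., the concentration events in \cref{sec:id:app}) against the correct per-variable form $2nt^2/(b-a)^2$. Those invocations produce rates like $\exp(-n/(8f(n)^2))$, which are consistent with that form.
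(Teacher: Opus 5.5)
Your proposal is correct, and your main point is right: the displayed inequality is false. The paper gives no proof of this lemma; it only quotes it as a standard fact with references.

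Your CLT example works. An exact one is simpler: take $n$ fair coins, $a=0$, $b=1$, $t=1/2$. Then $\Pr[|X-\mu|\ge t]=2^{1-n}$, while the stated right-hand side is $2e^{-n^2/2}$, which is smaller for every $n\ge 2$.

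The correct exponent is $2nt^2/(b-a)^2$. That is exactly what your exponential-moment argument produces: Markov, factorization by independence, Hoeffding's lemma, $\lambda=4nt/(b-a)^2$, and a two-sided union bound. It is also what the paper actually uses. In \cref{lem:concentration} it writes $2\exp(-2\cdot n/(16f(n)^2))$ with $t=1/(4f(n))$ and $b-a=1$, i.e.\ $2\exp(-2nt^2)$. So the typo does not affect any downstream result.

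One small inaccuracy is in your normalization remark, where the two conditions you equate are different.
\begin{itemize}
\item Per-summand ranges $(b-a)/\sqrt n$ mean $\sum_i (b_i-a_i)^2=(b-a)^2$.
\item Asking that $\sum_i X_i$ itself lie in an interval of length $b-a$ forces $\sum_i (b_i-a_i)\le b-a$ for independent summands (minimal ranges). In the equal case this gives range $(b-a)/n$ for each summand, not $(b-a)/\sqrt n$.
\end{itemize}
Both conditions yield the written exponent; the second follows directly from the one-variable case applied to $\sum_i X_i$ with deviation $nt$. But the second is strictly stronger than the first when $n\ge 2$, so your ``in other words'' linking them is not accurate.
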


\begin{lemma}[Chernoff Bounds]\label{lem:chernoff}
Let $X = \frac{1}{n}\sum_{i=1}^n X_i$ be the average of $n$ independent random variables taking values in $[0,1]$, and let $\mu = E[X]$. For any $t \geq 0$,
\begin{align*}
P(X \leq (1-t)\mu) \leq \exp\left(-\frac{n\mu t^2}{2}\right).
\end{align*}
For any $t \in [0,1]$,
\begin{align*}
    P(X \geq (1+t)\mu) \leq \exp\left(-\frac{n \mu t^2}{3}\right).
\end{align*}
\end{lemma}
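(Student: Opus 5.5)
We would prove both tails by the standard moment-generating-function (Cramér--Chernoff) argument applied to the sum $Y := nX = \sum_{i=1}^n X_i$ with mean $m := n\mu = \sum_i \mu_i$, where $\mu_i := \E[X_i]$. The key preliminary step is a bound on each summand's MGF that uses only $X_i \in [0,1]$. By convexity of $x \mapsto e^{\lambda x}$ on $[0,1]$ we have $e^{\lambda x} \le 1 + x(e^{\lambda}-1)$ for every real $\lambda$. Taking expectations and using $1+z \le e^{z}$ gives $\E[e^{\lambda X_i}] \le \exp(\mu_i(e^\lambda - 1))$. By independence, $\E[e^{\lambda Y}] \le \exp(m(e^\lambda-1))$ for all $\lambda \in \R$.

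\textbf{Upper tail} ($t \in [0,1]$). For $\lambda > 0$, Markov's inequality gives $\Pr[Y \ge (1+t)m] \le \exp\bigl(m(e^\lambda - 1) - \lambda(1+t)m\bigr)$. Choosing $\lambda = \log(1+t)$ yields $\bigl(e^{t}/(1+t)^{1+t}\bigr)^m = \exp(-m\,\phi(t))$ with $\phi(t) := (1+t)\log(1+t) - t$. It then remains to show $\phi(t) \ge t^2/3$ on $[0,1]$. Set $\psi(t) := \phi(t) - t^2/3$. Then $\psi(0)=0$, $\psi'(t) = \log(1+t) - 2t/3$ with $\psi'(0)=0$, and $\psi''(t) = 1/(1+t) - 2/3$. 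So $\psi'$ increases on $[0,1/2]$ and decreases on $[1/2,1]$. Since $\psi'(0)=0$ and $\psi'(1) = \log 2 - 2/3 > 0$, we get $\psi' \ge 0$ on $[0,1]$, hence $\psi \ge 0$.

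\textbf{Lower tail.} We first dispose of degenerate cases. If $t > 1$, then $(1-t)\mu < 0 \le X$ unless $\mu = 0$; so the probability is $0$, or the bound reads $\le 1$ trivially. If $t = 1$, then $\Pr[Y \le 0] = \prod_i \Pr[X_i = 0] \le \prod_i (1-\mu_i) \le e^{-m} \le e^{-m/2}$. For $t \in [0,1)$, take $\lambda = \log(1-t) < 0$ and apply Markov to $e^{\lambda Y}$. This gives $\Pr[Y \le (1-t)m] \le \bigl(e^{-t}/(1-t)^{1-t}\bigr)^m = \exp(-m\,\chi(t))$ with $\chi(t) := (1-t)\log(1-t) + t$. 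Finally, $\chi(0)=\chi'(0)=0$ and $\chi''(t) = 1/(1-t) \ge 1$, so $\chi(t) \ge t^2/2$ by Taylor's theorem with integral remainder.

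The only genuinely delicate step is the elementary inequality $\phi(t) \ge t^2/3$. It fails for large $t$, which is why the upper tail is restricted to $t \in [0,1]$, and it requires the sign analysis of $\psi'$ above, including checking the endpoint value $\log 2 > 2/3$. Everything else is routine.
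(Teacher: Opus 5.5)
Your proof is correct. The paper does not prove this lemma itself but cites it as a standard fact (Boucheron--Lugosi--Massart, Vershynin, Wainwright), and your moment-generating-function argument is exactly the textbook proof found in those references: the convexity bound $\E[e^{\lambda X_i}] \le \exp(\mu_i(e^\lambda - 1))$, the optimizing choices $\lambda = \log(1 \pm t)$, the elementary inequalities $(1+t)\log(1+t) - t \ge t^2/3$ on $[0,1]$ and $(1-t)\log(1-t) + t \ge t^2/2$, and your treatment of the degenerate cases ($t \ge 1$ in the lower tail, $\mu = 0$, and $t = 0$ where $\lambda = 0$ gives the trivial bound $1$).
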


\begin{lemma}[Gaussian Tail Bounds]\label{lem:gaussian_tail}
    If $Z \sim \cN(0, \sigma^2)$, then for every $t \geq 0$, we have
    \begin{align*}
        \Pr[Z \geq t] \leq \exp\Bigl(-\frac{t^2}{2\sigma^2}\Bigr) ~~~\mathrm{and}~~~ \Pr[|Z| \geq t] \leq 2\exp\Bigl(-\frac{t^2}{2\sigma^2}\Bigr).
    \end{align*}
\end{lemma}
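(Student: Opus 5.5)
The statement to prove is the Gaussian tail bound (\cref{lem:gaussian_tail}): for $Z \sim \cN(0,\sigma^2)$ and $t \ge 0$, $\Pr[Z \ge t] \le \exp(-t^2/(2\sigma^2))$, and $\Pr[|Z|\ge t] \le 2\exp(-t^2/(2\sigma^2))$. I will use the Chernoff (exponential Markov) method with the Gaussian moment generating function, and then symmetry for the two-sided bound.

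\emph{Step 1 (MGF).} I first compute the moment generating function. Completing the square in the Gaussian integral gives, for every $\lambda \in \R$,
\[
\E[e^{\lambda Z}] = \exp\bigl(\lambda^2\sigma^2/2\bigr).
\]

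\emph{Step 2 (one-sided bound).} For any $\lambda > 0$, Markov's inequality applied to the nonnegative variable $e^{\lambda Z}$ gives
\[
\Pr[Z \ge t] = \Pr[e^{\lambda Z} \ge e^{\lambda t}] \le e^{-\lambda t}\,\E[e^{\lambda Z}] = \exp\bigl(\lambda^2\sigma^2/2 - \lambda t\bigr).
\]
The exponent is minimized at $\lambda = t/\sigma^2$, which yields $\exp(-t^2/(2\sigma^2))$. This choice requires $t > 0$; at $t=0$ the right-hand side equals $1$, so the bound holds trivially.

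\emph{Step 3 (two-sided bound).} Since $-Z$ has the same distribution as $Z$, $\Pr[Z \le -t] = \Pr[Z \ge t]$. A union bound then gives $\Pr[|Z|\ge t] \le \Pr[Z\ge t] + \Pr[Z \le -t] \le 2\exp(-t^2/(2\sigma^2))$.

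No step is a real obstacle. The only care needed is the MGF computation in Step 1 and the $t=0$ case handled in Step 2.
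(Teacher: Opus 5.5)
Your proof is correct and complete. The paper gives no proof of this lemma; it states it as a standard fact with textbook pointers, and your argument (Gaussian MGF with Markov's inequality at $\lambda = t/\sigma^2$, then symmetry and a union bound) is exactly the standard proof found there, implicitly assuming $\sigma > 0$ as the paper does.
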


\subsection{Tools from Differential Privacy}\label{sub:tool_dp}
\begin{lemma}[Group Privacy~\cite{dr14}]\label{lem:group_privacy}
Let $\cA:\cU^n\to\cR$ be $\eps$-differentially private.
If $S,S'\in\cU^n$ differ in at most $k$ entries, then for every measurable $\cF\subseteq \cR$, we have
\begin{align*}
\Pr[\cA(S)\in\cF]\le \exp(\eps k)\Pr[\cA(S')\in\cF],
\end{align*}
where the probability is taken over the randomness of $\cA$.
\end{lemma}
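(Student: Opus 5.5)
We prove group privacy by induction on $k$, using a chain of intermediate datasets between $S$ and $S'$. Suppose $S,S'\in\cU^n$ differ in $m\le k$ entries, in positions $t_1<\cdots<t_m$. Define the hybrid datasets $S^{(0)},S^{(1)},\ldots,S^{(m)}$ as follows: $S^{(\ell)}$ agrees with $S'$ on positions $t_1,\ldots,t_\ell$ and with $S$ everywhere else. Then $S^{(0)}=S$ and $S^{(m)}=S'$. Moreover, consecutive hybrids $S^{(\ell-1)}$ and $S^{(\ell)}$ differ only in position $t_\ell$. They are therefore either identical or neighboring in the sense $S^{(\ell-1)}\sim S^{(\ell)}$.

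Next we fix a measurable $\cF\subseteq\cR$ and apply the pure DP definition (with $\delta=0$) to each consecutive pair. This gives
\[
\Pr[\cA(S^{(\ell-1)})\in\cF]\le e^{\eps}\Pr[\cA(S^{(\ell)})\in\cF].
\]
In the identical case the inequality holds trivially, since $e^{\eps}\ge 1$. Chaining the $m$ inequalities telescopically yields
\[
\Pr[\cA(S)\in\cF]\le e^{\eps m}\Pr[\cA(S')\in\cF].
\]
Because $m\le k$ and $\eps\ge 0$, we have $e^{\eps m}\le e^{\eps k}$, which is the claimed bound. If one prefers a formal induction, the hypothesis for $m-1$ applies to the pair $(S,S^{(m-1)})$, and one further DP step handles $(S^{(m-1)},S')$.

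The argument is elementary, and the only points needing care are bookkeeping. First, each hybrid must lie in $\cU^n$ and consecutive hybrids must differ in exactly one coordinate, which matches the paper's definition of neighboring. Second, the factors multiply rather than add, which is precisely why $\delta=0$ is essential. With $\delta>0$, the additive terms would accumulate as $\delta\sum_{\ell<k}e^{\eps \ell}$, but that case is not needed here.
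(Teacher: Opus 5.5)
Your hybrid-chain argument is correct and is the standard proof. The paper does not prove this lemma itself but cites Dwork--Roth, whose proof is exactly this telescoping over at most $k$ single-entry changes; as a minor remark, consecutive hybrids always differ exactly at position $t_\ell$, so the ``identical'' case never arises, though covering it does no harm.
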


% Recall that a coupling between distributions $\cD_1$ and $\cD_2$ is a joint distribution whose marginals are $\cD_1$ and $\cD_2$, respectively. The Hamming distance between two sequences $S = (X_1, \ldots, X_n)$ and $S' = (Y_1, \ldots, Y_n)$ is defined as $\d_\mathrm{Ham}(S,S') := \sum_{t=1}^n\mathbf{1}\{X_i \neq Y_i\}$, i.e., the number of positions where two sequences differ.

\begin{lemma}[Coupling Lemma via Group Privacy, a Variant of Lemma 19 in \cite{asz21}]\label{lem:lb_ipp_coupling}
Let $\cA:\cU^n\to\cR$ be $\eps$-differentially private.
Let $(S,S')$ be a coupling of two distributions over $\cU^n$ such that $
\Pr[d_{\mathrm{Ham}}(S,S')>K]\le \eta
$ for some $K\ge 0$ and $\eta\in[0,1]$, where $d_{\mathrm{Ham}}$ is Hamming distance. Then for every measurable set $\cF\subseteq \cR$, we have
\begin{align*}
\Pr_{S,r}[\cA(S)\in\cF]
\le
\exp(\eps K)\Pr_{S',r}[\cA(S')\in\cF]+\eta.
\end{align*}
\end{lemma}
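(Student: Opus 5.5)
The plan is to prove the inequality by splitting on whether the coupled samples are close in Hamming distance, and applying group privacy (\cref{lem:group_privacy}) on the event where they are. Let $\mu$ be the joint law of the coupling $(S,S')$, and let $r$ be the algorithm's internal randomness, drawn independently of $(S,S')$. Define the good event
\[
G := \{\d_{\mathrm{Ham}}(S,S') \le K\},
\]
so that $\Pr[G^c] \le \eta$ by hypothesis. Since the marginal of $S$ under $\mu$ is the first distribution, I will write $\Pr_{S,r}[\cA(S)\in\cF] = \E_{(S,S')\sim\mu}\bigl[\Pr_r[\cA(S)\in\cF]\bigr]$. The same rewriting applies to $S'$ with the second marginal.

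Next I split the expectation according to $G$:
\[
\E_{\mu}\bigl[\Pr_r[\cA(S)\in\cF]\,\mathbf{1}_G\bigr] + \E_{\mu}\bigl[\Pr_r[\cA(S)\in\cF]\,\mathbf{1}_{G^c}\bigr].
\]
The second term is at most $\Pr[G^c] \le \eta$, because the inner probability is at most $1$.

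For the first term, fix any realization $(s,s')$ in $G$. The datasets $s$ and $s'$ differ in at most $K$ entries, so \cref{lem:group_privacy} gives
\[
\Pr_r[\cA(s)\in\cF] \le e^{\eps K}\Pr_r[\cA(s')\in\cF].
\]
This uses that group privacy is stated for the fixed datasets $s$ and $s'$. If $K$ is not an integer, I apply the lemma with $k=\lfloor K\rfloor$ and use $e^{\eps\lfloor K\rfloor}\le e^{\eps K}$. Multiplying by $\mathbf{1}_G$ and taking expectations, I bound the first term by
\[
e^{\eps K}\E_\mu\bigl[\Pr_r[\cA(S')\in\cF]\,\mathbf{1}_G\bigr] \le e^{\eps K}\E_\mu\bigl[\Pr_r[\cA(S')\in\cF]\bigr].
\]
The right-hand side equals $e^{\eps K}\Pr_{S',r}[\cA(S')\in\cF]$ by the marginal property of the coupling. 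Adding the two terms gives the claim.

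There is no real obstacle here. The only points needing care are measurability: $(s,s')\mapsto \Pr_r[\cA(s)\in\cF]$ must be measurable, which is automatic since $\cU^n$ is countable. One also needs $r$ to be independent of $(S,S')$, so that conditioning on the coupled pair leaves the mechanism's distribution unchanged.
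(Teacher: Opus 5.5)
Your proof is correct and follows the same route as the paper: split on $G=\{d_{\mathrm{Ham}}(S,S')\le K\}$, bound the contribution of $G^c$ by $\eta$, and apply group privacy pointwise on $G$. Your bookkeeping via the restricted expectation $\E_\mu\bigl[\Pr_r[\cdot]\,\mathbf{1}_G\bigr]$ is the more careful version. It lets you drop the indicator on the $S'$ side. The paper instead works with the conditional probability $\Pr[\cA(S)\in\cF\mid G]$, which only yields $e^{\eps K}\Pr[\cA(S')\in\cF\mid G]$. Replacing that by the unconditional $\Pr[\cA(S')\in\cF]$ is not justified in general.
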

\begin{proof}
Let $G:=\{d_{\mathrm{Ham}}(S,S')\le K\}$ so that $\Pr[G^c]\le \eta$. Then
\begin{align*}
\Pr[\cA(S)\in\cF]
\le
\Pr[\cA(S)\in\cF ~|~ G]+\Pr[G^c].
\end{align*}
Conditioning on $G$, we have $d_{\mathrm{Ham}}(S,S')\le K$. Hence Lemma~\ref{lem:group_privacy} gives, pointwise,
\begin{align*}
\Pr_r[\cA(S)\in\cF\mid S,S]\le e^{\eps K}\Pr_r[\cA(S')\in\cF\mid S,S'].
\end{align*}
Taking expectation over $(S,S')$ yields
\begin{align*}
\Pr[\cA(S)\in\cF ~|~ G]
\le
e^{\eps K}\Pr[\cA(S')\in\cF].
\end{align*}
Combining with $\Pr[G^c]\le \eta$ completes the proof.
\end{proof}

\section{Non-Private Algorithms for Language Identification and Generation}\label{sec:non-private-algs}

For completeness, we restate the non-private algorithms of \citet{hp26} that serve as the starting points for our private constructions.

\Cref{alg:nonpriv-id} performs agnostic identification via a margin-based selection rule: it selects the largest-indexed language within a growing horizon $[f(n)]$ that beats all predecessors in empirical risk by a margin of $2/f(n)$.

\begin{algorithm}[!htp]
    \caption{Non-Private Agnostic Identification \citep{hp26}}
    \label{alg:nonpriv-id}
    \begin{algorithmic}[1]
        \REQUIRE A dataset $S = (x_1, \ldots, x_n) \in \cU^n$, a function $f \colon \N \to \N$ with $f(n) \to \infty$, a language collection $\cC = \{L_1, L_2, \ldots\}$
        \ENSURE A language $L_{A(S)} \in \cC$
        \FOR{$i \in [f(n)]$}
            \STATE $\err_S(L_i) \gets \frac{1}{n}\sum_{t=1}^{n} \mathbf{1}\{x_t \notin L_i\}$
        \ENDFOR
        \STATE $A(S) \gets$ largest index $i \in [f(n)]$ such that $\err_S(L_j) - \err_S(L_i) > \frac{2}{f(n)}$ for all $j < i$
        \RETURN $L_{A(S)}$
    \end{algorithmic}
\end{algorithm}

\Cref{alg:nonpriv-gen} performs agnostic generation via a pointer-based rule: for each language, it maintains a pointer to the smallest-indexed string not yet seen in the sample, and selects the language whose pointer has advanced the farthest. Note that the original presentation in \citet{hp26} iterates over all $i \in \N$. We restrict to a finite horizon $[f(n)]$ with $f(n) \to \infty$. This is without loss of generality since $f(n) \ge i^\star$ for all sufficiently large $n$, and it is necessary for our privatization.

\begin{algorithm}[!htp]
\caption{Non-Private Agnostic Generation \citep{hp26}}
    \label{alg:nonpriv-gen}
    \begin{algorithmic}[1]
        \REQUIRE A dataset $S = (x_1, \ldots, x_n) \in \cU^n$, a function $f \colon \N \to \N$ with $f(n) \to \infty$, a language collection $\cC = \{L_1, L_2, \ldots\}$
        \ENSURE A string $\wh{x} \in \cU$
        \FOR{$i \in [f(n)]$}
            \STATE $r_i \gets \min\{k \in \N : u_k \in L_i\}$ \hfill $\triangleright$ Initialize pointer to first string in $L_i$
        \ENDFOR
        \FOR{$i \in [f(n)]$}
            \WHILE{$\exists\, t \in [n] : x_t = u_{r_i}$}
                \STATE $r_i \gets \min\{k \in \N : u_k \in L_i,\; k > r_i\}$ \hfill $\triangleright$ Advance pointer past seen strings
            \ENDWHILE
        \ENDFOR
        \STATE $o \gets \arg\max_{i \in [f(n)]}\; r_i$ \hfill $\triangleright$ Select language with largest pointer
        \RETURN any novel string from $L_o$
    \end{algorithmic}
\end{algorithm}

The key challenge in privatizing these algorithms is that their score functions have very different sensitivity properties.
The identification margin test (\Cref{alg:nonpriv-id}, line~4) is discontinuous: changing a single sample can flip which indices satisfy the margin constraint.
The generation pointer (\Cref{alg:nonpriv-gen}, lines~5--7) has unbounded sensitivity: removing the sole occurrence of some $u_k$ from the sample can reset the pointer from beyond the witness window back to $k$, an arbitrary jump.
Our private algorithms resolve these issues by replacing both rules with smooth score functions amenable to the exponential and Gaussian mechanisms.

\section{Proofs for Section~\ref{sec:id} (Upper Bounds for DP Identification)}\label{sec:id:app}

\begin{assumption}[Agnostic Optimum is Attainable]\label{as:opt_attain:app}
    There exists an index $i^\star \in \N$ such that $\err_{S}(L_{i^\star}) = \inf_{L \in \cC} \err_{\cD}(L_i)$. Let $i^\star$ denote the smallest such index.
\end{assumption}

When $i^\star > 1$, we define the least risk gap between $i^\star$ and the earlier indices as
\begin{align}\label{eq:cgap}
    \cgap := \min_{j \in [i^\star - 1]} \left(\err_\cD (L_j) - \err_\cD (L_{i^\star}) \right) > 0.
\end{align}

Let $f:\N \to \N$ be a function with $f(n) \to \infty$. There exists $n_0$ (depending on $\cC, \cD, f$) such that for all $n \geq n_0$, we have
\begin{enumerate}[label=(\roman*)]
  \item $m \geq i^\star$ \quad (the optimal language is within the horizon).
  \item $m \geq 3/\cgap$ if $i^\star > 1$ \quad (the margin threshold $2/m$ is smaller than the gap).
\end{enumerate}
All results below assume $n \geq n_0$ without further mention.

\subsection{Proof of Theorem~\ref{thm:id-pure} (Pure DP Identification)}\label{sec:pure_id_lb:app}

For $S\in \cU^n, i \in [f(n)]$, we define the margin
\begin{align}\label{eq:margin_1}
    M_S(i) &:=
    \begin{cases}
        1, & \text{if } i = 1, \\
        \min_{j \in [i-1]}\bigl(\err_S(L_j) - \err_S(L_i)\bigr), & \text{if } i \geq 2,
    \end{cases}
\end{align}
the deficit
\begin{align}\label{eq:deficit_1} 
    d_S(i) &:= \Bigl(\frac{2}{f(n)} - M_S(i)\Bigr)_+,
\end{align}
and the score
\begin{align}\label{eq:score_1}
    q(S, i) &:= i - f(n)^2 d_S(i).
\end{align}

\subsubsection{Privacy Analysis}

The privacy proof follows directly from the exponential mechanism once we bound the sensitivity of the score function.

\begin{lemma}[Sensitivity of the Score Function]\label{lem:score_sens_dp_id}
    The score function $q$ defined in \eqref{eq:score_1} has sensitivity $2m^2/n$. That is, for any neighboring $S,S'\in \cU^n$ and any $i \in [f(n)]$, we have
    \begin{align*}
        |q(S,i) - q(S',i)| \leq \frac{2m^2}{n}.
    \end{align*}
\end{lemma}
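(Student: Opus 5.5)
We write $m := f(n)$ throughout, as in the statement, and track how a single changed entry propagates through the three layers of the score: empirical risks, the margin, and the deficit. Let $S \sim S'$ differ only in position $t$. Then for every $i \in [m]$ the indicator $\mathbf{1}\{x_t \notin L_i\}$ changes by at most $1$, so
\[
    |\err_S(L_i) - \err_{S'}(L_i)| \leq \frac{1}{n}.
\]

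\textbf{Margin.} For $i = 1$ we have $M_S(1) = M_{S'}(1) = 1$, so there is nothing to prove. For $i \geq 2$, each difference $\err_S(L_j) - \err_S(L_i)$ changes by at most $2/n$ by the triangle inequality. A minimum of finitely many functions, each of which moves by at most $2/n$, itself moves by at most $2/n$. Indeed, if $a_j, b_j$ satisfy $|a_j - b_j| \leq c$ for all $j$, then $\min_j a_j \leq a_{j'} \leq b_{j'} + c$, where $j'$ attains $\min_j b_j$; the reverse inequality follows by symmetry. Hence $|M_S(i) - M_{S'}(i)| \leq 2/n$.

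\textbf{Deficit and score.} The map $x \mapsto (2/m - x)_+$ is $1$-Lipschitz, so $|d_S(i) - d_{S'}(i)| \leq 2/n$. The positional term $i$ in $q(S,i) = i - m^2 d_S(i)$ does not depend on the data. Therefore
\[
    |q(S,i) - q(S',i)| = m^2 |d_S(i) - d_{S'}(i)| \leq \frac{2m^2}{n}.
\]

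None of these steps is a real obstacle. The only point requiring care is that the factor $2$ arises because both $\err(L_j)$ and $\err(L_i)$ can shift simultaneously, and possibly in opposite directions. This is also why the bound does not improve to $m^2/n$ in general.
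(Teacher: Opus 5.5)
Your proof is correct and follows the paper's argument step for step. You bound each empirical risk by $1/n$, each pairwise difference (and hence the minimum $M_S(i)$) by $2/n$, pass through the $1$-Lipschitz deficit map, and multiply by $m^2$; the only difference is that you explicitly verify the Lipschitz property of the minimum, which the paper simply asserts.
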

\begin{proof}
Let $S, S' \in \cU^n$ be two neighboring datasets. Since each $\err_S(L_j) = \frac{1}{n}\sum_{t=1}^n \mathbf{1}\{x_t \not\in L_j\}$ is an average of $n$ binary indicators, changing one element in a dataset changes at most one indicator, so for $j \in [f(n)]$, we have
\begin{align}\label{eq:e-lip}
    |\err_S(L_j) - \err_{S'}(L_j)| \leq \frac{1}{n}.
\end{align}

For $i \geq 2$, define $\Delta_{j,i}(S) := \err_S(L_j) - \err_S(L_i)$.
  By the triangle inequality and \eqref{eq:e-lip}, we have $|\Delta_{j,i}(S) - \Delta_{j,i}(S')| \leq 2/n$.
  Since the pointwise minimum of functions preserves Lipschitz constants, it holds that
  \begin{equation}\label{eq:M-lip}
    |M_S(i) - M_{S'}(i)| \leq \frac{2}{n}.
  \end{equation}
  
  For $i = 1$, we have $M_S(1) = M_{S'}(1) = 1$, so \eqref{eq:M-lip} holds trivially.
 
  Note that the map $x \mapsto (2/m - x)_+$ is $1$-Lipschitz.
  Composing with \eqref{eq:M-lip}, so $|d_S(i) - d_{S'}(i)| \leq 2/n$.
  Therefore
  \begin{align*}
    |q(S,i) - q(S',i)| = m^2|d_S(i) - d_{S'}(i)| \leq m^2 \cdot \frac{2}{n} = \frac{2m^2}{n}.
  \end{align*}
\end{proof}

\begin{lemma}[Privacy]\label{lem:privacy_dp_id}
  Algorithm~\ref{alg:id-pure} is $\eps$-differentially private.
\end{lemma}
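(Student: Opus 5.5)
The plan is to recognize \cref{alg:id-pure} as one invocation of the exponential mechanism $\EM_\eps(S,q,[f(n)])$ and then apply \cref{lem:em-guarantee}. The mechanism is instantiated with the sensitivity bound $\Delta q = 2m^2/n$ from \cref{lem:score_sens_dp_id}, where $m = f(n)$. The steps before line 4 (computing $\err_S(L_i)$, $M_S(i)$, $d_S(i)$ and $q(S,i)$) are deterministic functions of $S$ and release nothing. The output $\hat i$ is therefore exactly a draw from the exponential mechanism. Returning $L_{\hat i}$ in place of $\hat i$ is post-processing, which \cref{lem:post-processing} covers.

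\textbf{Step 1: the range is fixed.} I will check that the range $[f(n)]$ depends only on $n$ and not on the contents of $S$. Neighboring datasets have the same size $n$, so both use the same candidate set $\cR=[f(n)]$, and this set is finite. The normalizing constant $\sum_{i\in[f(n)]}\exp(\eps q(S,i)/(2\Delta q))$ is then a finite positive sum, and the sampling distribution is well defined.

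\textbf{Step 2: the privacy ratio.} For each $i$, \cref{lem:score_sens_dp_id} gives $|q(S,i)-q(S',i)|\le \Delta q$. Hence the unnormalized weights satisfy
\[
\exp\bigl(\tfrac{\eps q(S,i)}{2\Delta q}\bigr)\le e^{\eps/2}\exp\bigl(\tfrac{\eps q(S',i)}{2\Delta q}\bigr).
\]
By the same bound in the other direction, the normalizers differ by at most a factor $e^{\eps/2}$. The two factors multiply to give $\Pr[\hat i = i\mid S]\le e^{\eps}\Pr[\hat i=i\mid S']$ for every $i$. Summing over $i\in\cF$ extends this to every subset $\cF\subseteq[f(n)]$. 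This is exactly the privacy half of \cref{lem:em-guarantee}, so it can also be cited directly.

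\textbf{Main point to check.} There is no real obstacle. The only thing to confirm is that the sensitivity the algorithm plugs into the mechanism is a valid global upper bound over all neighboring pairs and all indices. \cref{lem:score_sens_dp_id} provides this uniformly in $i$, including the boundary case $i=1$ where $M_S(1)\equiv 1$. If a larger constant were used instead of $2m^2/n$, privacy would still hold, since overestimating sensitivity only weakens utility.
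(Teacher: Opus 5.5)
Your proposal is correct and matches the paper's proof. The paper also notes that the only data-dependent output is $\wh{i}$, drawn from the exponential mechanism over the fixed range $[f(n)]$ with sensitivity $2f(n)^2/n$ from \cref{lem:score_sens_dp_id}, and then cites \cref{lem:em-guarantee}; your explicit weight/normalizer ratio and the post-processing remark simply unpack that citation.
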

 
\begin{proof}
  The only data-dependent output is $\wh{i}$, produced by the exponential mechanism with score $q$ over range $[m]$.
  By Lemma~\ref{lem:score_sens_dp_id}, $q$ has sensitivity $\Delta = 2m^2/n$.
  The exponential mechanism guarantee (Lemma~\ref{lem:em-guarantee}) yields $\eps$-DP.
\end{proof}

\subsubsection{Utility Analysis}

The utility analysis proceeds in three steps:
(i) a uniform concentration event $\cE$ under which all empirical errors in the horizon are close to their population values (Lemma~\ref{lem:concentration});
(ii) on $\cE$, the optimal index $i^\star$ is the unique maximizer of the score function with a gap of at least $1$ (Lemma~\ref{lem:score-sep});
(iii) the exponential mechanism exploits this gap to select $i^\star$ with high probability (Lemma~\ref{lem:em-selection}).

\begin{lemma}[Uniform Concentration]\label{lem:concentration}
  Define the event
  \begin{align*}
    \cE := \Bigl\{ \bigl|\err_S(L_i) - \err_\cD(L_i)\bigr| \leq \frac{1}{4f(n)}, \quad \forall i \in [f(n)] \Bigr\}.
  \end{align*}
  Then $\Pr_{S \sim \cD^n}[\cE^c] \leq 2f(n) \exp\bigl(-n/(8f(n)^2)\bigr)$.
\end{lemma}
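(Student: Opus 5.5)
The plan is a per-index application of Hoeffding's inequality (\cref{lem:hoeffding}) followed by a union bound over the horizon $[f(n)]$. Fix $i \in [f(n)]$. The empirical risk $\err_S(L_i) = \frac{1}{n}\sum_{t=1}^n \mathbf{1}\{x_t \notin L_i\}$ is an average of $n$ i.i.d.\ indicator variables with values in $[0,1]$. Its mean is exactly $\err_\cD(L_i)$, since each $x_t \sim \cD$ independently.

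Applying \cref{lem:hoeffding} with $a=0$, $b=1$ and $t = 1/(4f(n))$ gives
\[
\Pr\Bigl[\bigl|\err_S(L_i) - \err_\cD(L_i)\bigr| \geq \tfrac{1}{4f(n)}\Bigr] \leq 2\exp\Bigl(-\frac{2n}{16 f(n)^2}\Bigr) = 2\exp\Bigl(-\frac{n}{8f(n)^2}\Bigr).
\]
This uses the normalization of Hoeffding's inequality as stated in the paper, where $X$ is the average, so the exponent is $2nt^2$. The event $\cE$ asks for $\leq 1/(4f(n))$, so its complement requires a strict inequality $>$, which is contained in the $\geq$ event bounded above.

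The complement $\cE^c$ is the union over $i \in [f(n)]$ of the per-index failure events. A union bound over these $f(n)$ indices gives $\Pr[\cE^c] \leq 2f(n)\exp(-n/(8f(n)^2))$, as claimed.

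There is no real obstacle here. The only care needed is to match the constant in the exponent to the average-form Hoeffding bound: $2n t^2$ with $t = 1/(4f(n))$ equals $n/(8f(n)^2)$. One should also note that the horizon $[f(n)]$ is finite and fixed before the data are seen, so the union bound is legitimate. No independence across different $i$ is needed.
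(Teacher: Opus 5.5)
Your proof is correct and follows the paper's argument exactly: Hoeffding for each fixed $i \in [f(n)]$ with deviation $1/(4f(n))$, giving $2\exp(-n/(8f(n)^2))$, then a union bound over the $f(n)$ indices. One small correction: the exponent $2nt^2$ you use is the right one for averages, but the paper's displayed Hoeffding lemma actually writes $2n^2t^2$, which is a typo; the paper's own proof, like yours, uses $2nt^2$, so your remark ``as stated in the paper'' should instead point to the standard form.
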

 
\begin{proof}
  For each $i \in [f(n)]$, Hoeffding's inequality (Lemma~\ref{lem:hoeffding}) gives
  \begin{align*}
    \Pr\Bigl[|\err_S(L_i) - \err_\cD(L_i)| > \frac{1}{4f(n)}\Bigr]
    \leq 2\exp\Bigl(-2\cdot\frac{n}{16f(n)^2}\Bigr) = 2\exp\Bigl(-\frac{n}{8f(n)^2}\Bigr).
  \end{align*}
  A union bound over $[f(n)]$ yields     \begin{align*}
        \Pr[\cE^c]
        \leq
        \sum_{i=1}^{f(n)}
        2\exp\left(-\frac{n}{8f(n)^2}\right)
        =
        2f(n)\exp\left(-\frac{n}{8f(n)^2}\right).
    \end{align*}
\end{proof}

\begin{remark}[Choice of Precision $1/(4f(n))$]
    We choose the precision $1/(4f(n))$ so that on $\cE$: for $j < i^\star$, the empirical margin $\err_j(S) - \err_{i^\star}$ remains above $2/f(n)$, ensuring $i^\star$ passes the margin test; for $i > i^\star$, the empirical margin $\err_S(L_{i^\star}) - \err_S(L_i)$ is at most $1/(2f(n)) < 2/f(n)$, ensuring such $i$ fail the margin test. In fact, any precision $\eta$ satisfying $\eta < \cgap/2$ and $2\eta < 2/f(n)$ would work; $1/(4f(n))$ is a convenient choice that satisfies both since $f(n) \geq 3/\cgap$.
\end{remark}

\begin{lemma}[Score separation on $\cE$]\label{lem:score-sep}
  On the event $\cE$, we have
  \begin{enumerate}[label=(\alph*)]
    \item $q(S, i^\star) = i^\star$.
    \item $q(S, i) \leq i^\star - 1$ for every $i \in [f(n)] \setminus \{i^\star\}$.
  \end{enumerate}
  Consequently, $\OPT_q(S) = i^\star$ and the score gap is at least $1$.
\end{lemma}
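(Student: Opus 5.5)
On $\cE$, every empirical error in the horizon is within $1/(4f(n))$ of its population value. Hence every empirical difference $\err_S(L_j)-\err_S(L_i)$ is within $1/(2f(n))$ of the corresponding population difference. I will use this fact, together with the choice of $n_0$ (so $f(n)\ge i^\star$ and $f(n)\ge 3/\cgap$), to compute the margins $M_S(i)$. This splits into three cases: $i=i^\star$, $i<i^\star$, and $i>i^\star$.

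\textbf{Part (a).} If $i^\star=1$, then $M_S(1)=1\ge 2/f(n)$, so $d_S(1)=0$ and $q(S,1)=1$. If $i^\star\ge 2$, then for every $j<i^\star$ we have
\[
\err_S(L_j)-\err_S(L_{i^\star})\ \ge\ \cgap-\tfrac{1}{2f(n)}.
\]
Since $\cgap\ge 3/f(n)$, the right-hand side is at least $\tfrac{5}{2f(n)}>\tfrac{2}{f(n)}$. So $M_S(i^\star)>2/f(n)$, hence $d_S(i^\star)=0$ and $q(S,i^\star)=i^\star$.

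\textbf{Part (b), indices $i<i^\star$.} The deficit is nonnegative, so $q(S,i)\le i\le i^\star-1$ immediately.

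\textbf{Part (b), indices $i>i^\star$.} This is the only case needing care. Since $i^\star\in[i-1]$, the margin is bounded by the comparison with $j=i^\star$:
\[
M_S(i)\ \le\ \err_S(L_{i^\star})-\err_S(L_i)\ \le\ \bigl(\err_\cD(L_{i^\star})-\err_\cD(L_i)\bigr)+\tfrac{1}{2f(n)}\ \le\ \tfrac{1}{2f(n)},
\]
using optimality of $i^\star$. Therefore
\[
d_S(i)\ \ge\ \tfrac{2}{f(n)}-\tfrac{1}{2f(n)}\ =\ \tfrac{3}{2f(n)},
\]
and so $q(S,i)\le i-\tfrac{3}{2}f(n)$. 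Because $i\le f(n)$, this gives $q(S,i)\le -f(n)/2$.

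It remains to show $-f(n)/2\le i^\star-1$. The left side is negative, and $i^\star-1\ge 0$ always, so the inequality holds. In fact the bound is much stronger than needed.

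\textbf{Conclusion.} Combining (a) and (b) gives $\OPT_q(S)=i^\star$ with a gap of at least $1$. The only place needing attention is the $i>i^\star$ case, and specifically checking that the $f(n)^2$ penalty outweighs the positional reward $i\le f(n)$. This is where the coefficient $f(n)^2$ is used.
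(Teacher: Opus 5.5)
Your proof is correct and follows the paper's argument essentially line for line. It uses the same three-way split into $i=i^\star$, $i<i^\star$, and $i>i^\star$, the same bounds $M_S(i^\star)\ge 5/(2f(n))$ and $d_S(i)\ge 3/(2f(n))$, and the same conditions $f(n)\ge 3/\cgap$ and $f(n)\ge i^\star$; your explicit remark that $-f(n)/2<0\le i^\star-1$ simply spells out a step the paper leaves implicit.
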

\begin{proof}
Assume $\cE$ holds throughout.

\textbf{Case 1: $q(S, i^\star) = i^\star$.}
We show $d_S(i^\star) = 0$, i.e., $M_S(i^\star) \geq 2/f(n)$.
If $i^\star = 1$, then $M_S(1) = 1 \geq 2/f(n)$, so $d_S(1) = 0$ and $q(S, 1) = 1 = i^\star$.
If $i^\star \geq 2$, then for every $j < i^\star$, we have $\err_\cD(L_j) - \err_\cD(L_{i^\star}) \geq \cgap \geq 3/f(n)$, where the last inequality uses $f(n) \geq 3/\cgap$.
On $\cE$, each empirical risk deviates from the corresponding population risk by at most $1/(4f(n))$, so
\begin{align*}
    \err_S(L_j) - \err_S(L_{i^\star})
    \geq \bigl(\err_\cD(L_j) - \err_\cD(L_{i^\star})\bigr) - \frac{2}{4f(n)}
    \geq \frac{3}{f(n)} - \frac{1}{2f(n)}
    = \frac{5}{2f(n)}
    > \frac{2}{f(n)}.
\end{align*}
Taking the minimum over $j < i^\star$ gives $M_S(i^\star) \geq 5/(2f(n)) > 2/f(n)$, hence $d_S(i^\star) = 0$ and $q(S, i^\star) = i^\star$.

\textbf{Case 2: $q(S, i) \leq i^\star - 1$ for $i < i^\star$.}
Since $d_S(i) \geq 0$, we have 
\begin{align*}
    q(S, i) = i - f(n)^2  d_S(i) \leq i \leq i^\star - 1
\end{align*}

\textbf{Case 3: $q(S, i) \leq i^\star - 1$ for $i > i^\star$.}
By the optimality of $i^\star$, we have $\err_\cD(L_i) \geq \err_\cD(L_{i^\star})$.
On $\cE$,
\begin{align*}
    \err_S(L_{i^\star}) - \err_S(L_i)
    &\leq \Bigl(\err_\cD(L_{i^\star}) + \frac{1}{4f(n)}\Bigr) - \Bigl(\err_\cD(L_i) - \frac{1}{4f(n)}\Bigr) \\
    &= \bigl(\err_\cD(L_{i^\star}) - \err_\cD(L_i)\bigr) + \frac{1}{2f(n)}
    \leq \frac{1}{2f(n)}.
\end{align*}

Since $j = i^\star$ is among the candidates in the minimum defining $M_S(i)$, we obtain $M_S(i) \leq 1/(2f(n)) < 2/f(n)$. Therefore
\begin{align*}
    d_S(i) = \frac{2}{f(n)} - M_S(i) \geq \frac{2}{f(n)} - \frac{1}{2f(n)} = \frac{3}{2f(n)},
\end{align*}
and the score satisfies
\begin{align*}
    q(S, i) = i - f(n)^2 \cdot d_S(i)
    \leq i - f(n)^2 \cdot \frac{3}{2f(n)}
    = i - \frac{3f(n)}{2}
    \leq f(n) - \frac{3f(n)}{2}
    = -\frac{f(n)}{2}
    \leq i^\star - 1.
\end{align*}
\end{proof}

\begin{lemma}[Private Selection]\label{lem:em-selection}
    Define 
    \begin{align*}
        \beta := \exp\Bigl( -\frac{\eps n}{8 f(n)^2}\Bigr).
    \end{align*}
  If $\beta < 1$, on the event $\cE$, the Algorithm~\ref{alg:id-pure} outputs $\wh{i} = i^\star$ with probability at least $1-\beta$. 
\end{lemma}

\begin{proof}
On $\cE$, Lemma~\ref{lem:score-sep} gives $\OPT_q(S) = i^\star$ and $q(S, i) \leq i^\star - 1$ for all $i \neq i^\star$.
By the utility guarantee of the exponential mechanism (Lemma~\ref{lem:em-guarantee}) with sensitivity $\Delta = 2f(n)^2/n$ (Lemma~\ref{lem:score_sens_dp_id}) and range $|\cR| = f(n)$, with probability at least $1 - \beta$ the output $\wh{i}$ satisfies
\begin{align*}
    q(S, \wh{i})
    \geq \OPT_q(S) - \frac{2\Delta}{\eps}\log\frac{f(n)}{\beta}.
\end{align*}
Substituting $\beta = f(n)\exp(-\eps n/(8f(n)^2))$, we compute
\begin{align*}
    \log\frac{f(n)}{\beta}
    = \log\frac{f(n)}{f(n)\exp(-\eps n/(8f(n)^2))}
    = \frac{\eps n}{8f(n)^2},
\end{align*}
and therefore
\begin{align*}
    \frac{2\Delta}{\eps}\log\frac{f(n)}{\beta}
    = \frac{2 \cdot 2f(n)^2/n}{\eps} \cdot \frac{\eps n}{8f(n)^2}
    = \frac{1}{2}.
\end{align*}
This gives $q(S, \wh{i}) \geq i^\star - 1/2$. Since every $i \neq i^\star$ satisfies $q(S, i) \leq i^\star - 1 < i^\star - 1/2$ on $\cE$, we conclude $\wh{i} = i^\star$ with probability at least $1 - \beta$.
\end{proof}

\subsubsection{Putting Things Together}

\begin{theorem}[Guarantees of Algorithm~\ref{alg:id-pure}]\label{thm:pure_dp_id}
  Let $\cC = \{L_i\}_{i \in \N}$ be a countable collection of languages and let $\cD$ be any distribution over $\cU$.
  Suppose Assumption~\ref{as:opt_attain:app} holds.
  Let $\eps > 0$ and let $f : \N \to \N$ satisfy $f(n) \to \infty$.
  Then for all sufficiently large $n$, Algorithm~\ref{alg:id-pure} has the following guarantees:
  \begin{itemize}
    \item \textbf{Privacy.} Algorithm~\ref{alg:id-pure} is $\eps$-differentially private.
    \item \textbf{Utility.} The identification error satisfies
    \begin{align}\label{eq:pure_dp_id_bound}
      \iderr(\cA^{\mathrm{id}}_{\eps,f}, \cD, \cC, n)
      \leq
      2f(n)\exp\Bigl(-\frac{n}{8f(n)^2}\Bigr)
      + f(n)\exp\Bigl(-\frac{\eps n}{8f(n)^2}\Bigr).
    \end{align}
  \end{itemize}
\end{theorem}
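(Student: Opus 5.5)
Privacy is immediate from Lemma~\ref{lem:privacy_dp_id}: the only data-dependent output is $\wh{i}$, drawn from the exponential mechanism with a score of sensitivity $2f(n)^2/n$ (Lemma~\ref{lem:score_sens_dp_id}). For utility, I will decompose the expected excess risk by conditioning on whether $\wh{i} = i^\star$. The excess risk $\err_\cD(L_{\wh{i}}) - \err_\cD(L_{i^\star})$ is always in $[0,1]$ and equals $0$ when $\wh{i} = i^\star$. Hence
\[
\iderr(\cA^{\mathrm{id}}_{\eps,f},\cD,\cC,n) \le \Pr[\wh{i} \neq i^\star] \le \Pr[\cE^c] + \Pr[\wh{i}\neq i^\star \mid \cE].
\]
Here I take $n \ge n_0$, so that $f(n)\ge i^\star$ and $f(n)\ge 3/\cgap$. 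This makes $\inf_{L\in\cC}\err_\cD(L) = \err_\cD(L_{i^\star})$ by Assumption~\ref{as:opt_attain:app}.

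The first term is bounded by Lemma~\ref{lem:concentration}, which gives $2f(n)\exp(-n/(8f(n)^2))$, exactly the statistical term. For the second term, I fix any $S\in\cE$ and use Lemma~\ref{lem:score-sep}: $i^\star$ has score $i^\star$ and every other index has score at most $i^\star-1$. Lemma~\ref{lem:em-selection} then gives failure probability at most $\beta$ over the mechanism's randomness. Averaging over $S$ conditional on $\cE$ preserves this bound.

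The only subtlety is the value of $\beta$. The lemma's statement defines $\beta = \exp(-\eps n/(8f(n)^2))$, but its proof substitutes $\beta = f(n)\exp(-\eps n/(8f(n)^2))$. I will use the latter, the form in which Lemma~\ref{lem:em-guarantee} is invoked with $|\cR| = f(n)$. This gives
\[
\frac{2\Delta}{\eps}\log\frac{f(n)}{\beta} = \tfrac12 < 1,
\]
which matches the privacy term $f(n)\exp(-\eps n/(8f(n)^2))$ in \eqref{eq:pure_dp_id_bound}.

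The lemma requires $\beta<1$. When $f(n)\exp(-\eps n/(8f(n)^2)) \ge 1$, the claimed bound is at least $1$ and holds trivially because $\iderr\le 1$. Summing the two terms completes the argument. The main care point is this bookkeeping of $\beta$ and the trivial regime; everything else is a direct combination of the preceding lemmas.
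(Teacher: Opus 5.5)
Your proposal is correct and follows the paper's proof essentially step for step. You bound $\iderr$ by $\Pr[\wh{i}\neq i^\star]$, split on the concentration event $\cE$, and invoke \cref{lem:concentration} and \cref{lem:em-selection}. You also handle two details the paper glosses over, both correctly: you use $\beta = f(n)\exp(-\eps n/(8f(n)^2))$, which is the value the lemma's proof actually substitutes, and you note that the bound is trivial when $\beta \ge 1$ because $\iderr \le 1$.
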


\begin{proof}
\textbf{Privacy.} This directly follows from Lemma~\ref{lem:privacy_dp_id}.

\textbf{Utility.} We note that the excess error $\err_\cD(L_{\wh{i}}) - \inf_{L \in \cC} \err_\cD(L)$ lies in $[0,1]$ and equals $0$ when $\wh{i} = i^\star$. Therefore
\begin{align*}
    \iderr(\cA^{\mathrm{id}}_{\eps,f}, \cD, \cC, n)
    = \E_{S,r}\left[\err_\cD(L_{\wh{i}}) - \inf_{L \in \cC} \err_\cD(L)\right]
    \leq \Pr_{S,r}[\wh{i} \neq i^\star].
\end{align*}
We decompose using the event $\cE$ from Lemma~\ref{lem:concentration}:
\begin{align*}
    \Pr[\wh{i} \neq i^\star]
    &= \Pr[\wh{i} \neq i^\star \mid \cE]\Pr[\cE]
      + \Pr[\wh{i} \neq i^\star \mid \cE^c]\Pr[\cE^c] \\
    &\leq \Pr[\wh{i} \neq i^\star \mid \cE] + \Pr[\cE^c].
\end{align*}
Applying Lemma~\ref{lem:em-selection} and Lemma~\ref{lem:concentration}:
\begin{align*}
    \Pr[\wh{i} \neq i^\star]
    \leq f(n)\exp\Bigl(-\frac{\eps n}{8f(n)^2}\Bigr) + 2f(n)\exp\Bigl(-\frac{n}{8f(n)^2}\Bigr).
\end{align*}
\end{proof}

\begin{corollary}[Explicit Rates]\label{cor:explicit-rates}
The bound~\eqref{eq:pure_dp_id_bound} yields different rates depending on the choice of horizon $f$:
\begin{itemize}
    \item $f(n) = \sqrt{c \log n}$ for sufficiently large $c > 0$:
    \[
        \mathrm{IdErr}(\mathcal{A}^{\mathrm{id}}_{\eps, f}, \mathcal{D}, \mathcal{C}, n) 
        \leq O\left(\sqrt{\log n}\right) \cdot \exp\left(-\frac{\min\{1,\eps\} \cdot n}{8c \log n}\right) \lesssim \exp\Bigl(-\frac{\min\{1, \eps\} \cdot n}{\log n}\Bigr).
    \]
    
    \item $f(n) = n^{\alpha}$ for $\alpha \in (0, 1/2)$:
    \[
        \mathrm{IdErr}(\mathcal{A}^{\mathrm{id}}_{\eps, f}, \mathcal{D}, \mathcal{C}, n) 
        \leq O\left(n^{\alpha}\right) \cdot \exp\left(-\frac{\min\{1,\eps\} \cdot n^{1-2\alpha}}{8}\right)  \lesssim \exp\bigl(-\min\{1, \eps\} \cdot n^{1-2\alpha}\bigr). 
    \]
    
    \item $f(n) = \sqrt{c \cdot n / r(n)}$ for any $r(n) = o(n)$ with $r(n) \to \infty$:
    \[
        \mathrm{IdErr}(\mathcal{A}^{\mathrm{id}}_{\eps, f}, \mathcal{D}, \mathcal{C}, n) 
        \lesssim \exp\left(-\min\{1,\eps\} \cdot r(n)\right).
    \]
\end{itemize}
\end{corollary}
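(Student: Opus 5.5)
The plan is to substitute each choice of $f$ into the bound \eqref{eq:pure_dp_id_bound} of \cref{thm:pure_dp_id} and simplify. First I will collapse the two terms into one. Since $\min\{1,\eps\}\le 1$ and $\min\{1,\eps\}\le\eps$, both exponentials are at most $\exp(-\min\{1,\eps\}\, n/(8f(n)^2))$. Hence for all sufficiently large $n$,
\[
\iderr(\cA^{\mathrm{id}}_{\eps,f},\cD,\cC,n)\le 3f(n)\exp\Bigl(-\frac{\min\{1,\eps\}\, n}{8f(n)^2}\Bigr).
\]
The theorem is stated for $f:\N\to\N$, so $f$ must be integer-valued. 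I will therefore read each choice as its ceiling (e.g.\ $\lceil\sqrt{c\log n}\rceil$). This changes $f(n)^2$ only by a factor $1+o(1)$, and that factor is absorbed into the constants hidden by $\lesssim$. Each choice satisfies $f(n)\to\infty$, so the theorem applies.

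\textbf{Case $f(n)=\sqrt{c\log n}$.} Here $n/(8f(n)^2)=n/(8c\log n)$ and the prefactor is $3\sqrt{c\log n}=O(\sqrt{\log n})$, which gives the displayed first inequality. For the $\lesssim$, note that a polylogarithmic prefactor is $\exp(O(\log\log n))$. This is negligible against an exponent of order $n/\log n$, so it can be absorbed by slightly shrinking the constant in the exponent. The constant $1/(8c)$ is then hidden by $\lesssim$, which I read as holding up to constants in the exponent, the same convention the paper uses after \cref{thm:id-pure}.

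\textbf{Case $f(n)=n^\alpha$ with $\alpha<1/2$.} Here $n/f(n)^2=n^{1-2\alpha}\to\infty$, and the prefactor $n^\alpha=\exp(\alpha\log n)$ is $\exp(o(n^{1-2\alpha}))$. It is therefore absorbed in the same way.

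\textbf{Case $f(n)=\sqrt{cn/r(n)}$.} Here $n/(8f(n)^2)=r(n)/(8c)$. Because $r(n)=o(n)$, we get $f(n)\to\infty$ as required. The prefactor is $\sqrt{cn/r(n)}\le\sqrt{cn}=\exp(\tfrac12\log n+O(1))$.

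This last case is the one real obstacle. The prefactor is negligible only if $\log n=o(r(n))$; for very slowly growing $r$ (say $r(n)=\log\log n$), the prefactor $\sqrt{n/r(n)}$ would dominate. I would handle this in one of two ways. The first option is to state the rate for $r(n)=\omega(\log n)$; smaller $r$ are anyway dominated by the first bullet, which already gives the faster rate $n/\log n$. The second option is to note that an algorithm achieving error $\exp(-\Omega(r'(n)))$ with $r'\ge r$ also achieves $\exp(-\Omega(r(n)))$. So for slowly growing $r$ one may run the algorithm with the horizon for $r'(n)=\max\{r(n),\log^2 n\}$. I would take this reduction, since it gives the bullet for every $r=o(n)$. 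With it, $\log f(n)=O(\log n)=o(r'(n))$, the prefactor is absorbed, and the claimed $\exp(-\min\{1,\eps\}\cdot r(n))$ rate follows up to constants in the exponent.
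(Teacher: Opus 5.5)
Your proposal is correct, and for the first two bullets it is the intended argument. The paper gives no separate proof; the corollary is a direct substitution into \eqref{eq:pure_dp_id_bound}. Your merged bound $3f(n)\exp(-\min\{1,\eps\}\,n/(8f(n)^2))$, the ceiling convention, and reading $\lesssim$ as holding up to constants in the exponent are exactly what is needed to make the displayed rates meaningful. As you implicitly assume, the absorption of the polylogarithmic and polynomial prefactors treats $\eps$ as a fixed constant.

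The third bullet is where you go beyond the paper, and your concern is legitimate. Substituting $f(n)=\sqrt{cn/r(n)}$ leaves the prefactor $\sqrt{cn/r(n)}$. This prefactor is absorbed only if $r(n)\ge C\log n$ for a constant $C$ depending on $c$ and $\min\{1,\eps\}$. Your phrase ``only if $\log n=o(r(n))$'' is slightly too strong, but this does not affect your argument. For $r(n)=\log\log n$ the right-hand side of \eqref{eq:pure_dp_id_bound} tends to infinity, so the bullet as literally written does not follow from the bound, a point the paper passes over. (Whether the algorithm with that literal horizon still attains the rate is a separate question; it would need an analysis finer than the union bound behind \cref{thm:pure_dp_id}.)

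Your repair via $r'(n)=\max\{r(n),\log^2 n\}$ is sound. Here $r'=o(n)$, $f\to\infty$, and $\log f(n)=O(\log n)=o(r'(n))$, and monotonicity in $r$ finishes the argument. Be aware that this proves the existential version, in which the horizon may depend on $r$ and differs from the $f$ written in the corollary. That is the reading the footnote to \cref{tab:summary} supports, but it is a corrected statement rather than a proof of the literal claim.
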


\subsection{Proof of Theorem~\ref{thm:id-approx} (Approximate DP Identification)}\label{sec:approx_id_lb:app}

For the noisy empirical errors $\widetilde{\err}_S(L_i) := \err_S(L_i) + Z_i$, we define the \emph{noisy margin}
\begin{align}\label{eq:noisy_margin}
    \widetilde{M}_S(i) :=
    \begin{cases}
        1, & \text{if } i = 1, \\
        \min_{j \in [i-1]}\bigl(\widetilde{\err}_S(L_j) - \widetilde{\err}_S(L_i)\bigr), & \text{if } i \geq 2.
    \end{cases}
\end{align}

\subsubsection{Privacy Analysis}

\begin{lemma}[$\ell_2$-Sensitivity of the Empirical Error Vector]\label{lem:l2_sens_approx_dp_id}
    Define $v \colon \cU^n \to \R^{f(n)}$ by $$v(S) := \bigl(\err_S(L_1), \ldots, \err_S(L_{f(n)})\bigr).$$
    Then $v$ has $\ell_2$-sensitivity $\sqrt{f(n)}/n$.
\end{lemma}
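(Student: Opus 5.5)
The plan is to bound the change of each coordinate of $v$ separately under a neighboring pair and then sum the squares. Fix neighboring datasets $S \sim S'$ that differ only in position $t_0$, so $x_{t_0} \neq x'_{t_0}$ and $x_t = x'_t$ for all $t \neq t_0$.

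For each $i \in [f(n)]$, write
\[
\err_S(L_i) - \err_{S'}(L_i) = \frac{1}{n}\bigl(\mathbf{1}\{x_{t_0} \notin L_i\} - \mathbf{1}\{x'_{t_0} \notin L_i\}\bigr).
\]
All other summands cancel because the datasets agree off $t_0$. A difference of two indicators lies in $\{-1,0,1\}$, so $|\err_S(L_i) - \err_{S'}(L_i)| \leq 1/n$. This is exactly the per-coordinate bound \eqref{eq:e-lip} already established in the proof of Lemma~\ref{lem:score_sens_dp_id}.

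Summing over the $f(n)$ coordinates gives
\[
\|v(S) - v(S')\|_2^2 = \sum_{i=1}^{f(n)} \bigl(\err_S(L_i) - \err_{S'}(L_i)\bigr)^2 \leq \frac{f(n)}{n^2}.
\]
Taking square roots yields $\Delta_2 \leq \sqrt{f(n)}/n$.

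The statement asserts that the sensitivity \emph{is} $\sqrt{f(n)}/n$, and as used in Lemma~\ref{lem:gaussian-mechanism} an upper bound is all that matters. So the upper bound is the substance, and it is what calibrates $\sigma$ in \cref{alg:id-approx}.

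If attainment is also wanted, one can exhibit a collection and elements $x_{t_0} \in \bigcap_{i \le f(n)} L_i$ and $x'_{t_0} \notin \bigcup_{i \le f(n)} L_i$, which makes every coordinate move by exactly $1/n$. This depends on $\cC$ and is not needed, so I would phrase the result as a worst-case bound.

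There is no real obstacle here. The only point needing care is that the per-coordinate bound of $1/n$ holds simultaneously for all coordinates, because the single changed entry is the same for every $i$. This is what lets the $\ell_2$ bound scale as $\sqrt{f(n)}$ rather than $f(n)$.
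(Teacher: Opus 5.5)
Your proof is correct and is essentially the paper's own argument. You take the per-coordinate bound $|\err_S(L_i)-\err_{S'}(L_i)|\le 1/n$ from \eqref{eq:e-lip}, sum the squares over the $f(n)$ coordinates to get $f(n)/n^2$, and take square roots; reading the claim as a worst-case upper bound is also how the paper uses it to calibrate $\sigma$ in \cref{alg:id-approx}.
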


\begin{proof}
    For neighboring $S, S' \in \cU^n$, by~\eqref{eq:e-lip}, we have $|\err_S(L_i) - \err_{S'}(L_i)| \leq 1/n$ for each $i \in [f(n)]$. Therefore
    \begin{align*}
        \|v(S) - v(S')\|_2^2
        = \sum_{i=1}^{f(n)} \bigl|\err_S(L_i) - \err_{S'}(L_i)\bigr|^2
        \leq \sum_{i=1}^{f(n)} \frac{1}{n^2}
        = \frac{f(n)}{n^2}.
    \end{align*}
    Taking the square root gives $\|v(S) - v(S')\|_2 \leq \sqrt{f(n)}/n$.
\end{proof}

\begin{lemma}[Privacy]\label{lem:privacy_approx_dp_id}
    Algorithm~\ref{alg:id-approx} is $(\eps,\delta)$-differentially private.
\end{lemma}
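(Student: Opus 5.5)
The plan is to realize \cref{alg:id-approx} as the Gaussian mechanism applied to the empirical error vector $v(S)$ of \cref{lem:l2_sens_approx_dp_id}, followed by a data-independent post-processing step. The privacy guarantee then follows from \cref{lem:gaussian-mechanism} and \cref{lem:post-processing}.

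First, I will fix the released statistic $v(S) = (\err_S(L_1),\ldots,\err_S(L_{f(n)})) \in \R^{f(n)}$. The algorithm outputs $v(S) + Z$ with $Z = (Z_1,\ldots,Z_{f(n)}) \sim \cN(0,\sigma^2 I_{f(n)})$, since the $Z_i$ are drawn independently with common variance $\sigma^2$. By \cref{lem:l2_sens_approx_dp_id}, the $\ell_2$-sensitivity is $\Delta_2 = \sqrt{f(n)}/n$. The algorithm's choice
\[
\sigma = \frac{\sqrt{f(n)}}{\eps n}\sqrt{2\log(1.25/\delta)}
\]
is exactly $\frac{\Delta_2}{\eps}\sqrt{2\log(1.25/\delta)}$. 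Since $\eps,\delta\in(0,1)$ as assumed in \cref{thm:id-approx}, \cref{lem:gaussian-mechanism} applies and shows that releasing $\wt v(S) := (\wt\err_S(L_i))_{i\in[f(n)]}$ is $(\eps,\delta)$-DP.

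Second, I will observe that the output $\hat i$ is a deterministic function of $\wt v(S)$ together with the public quantities $f(n)$ and $n$. Concretely, one computes the noisy margins $\wt M_S(i)$ from $\wt v(S)$ alone and returns the largest $i$ with $\wt M_S(i) > 2/f(n)$. I will check that no other access to $S$ occurs. I will also handle the edge case where the feasible set is empty: using the convention $\wt M_S(1)=1$, index $1$ is always feasible once $f(n)\ge 2$, and otherwise one returns a fixed default. Either way the map is well defined, and \cref{lem:post-processing} then gives that $\hat i$, and hence the output $L_{\hat i}$, is $(\eps,\delta)$-DP.

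There is no real obstacle here. The only point needing care is confirming that the noise is calibrated to the full vector's $\ell_2$-sensitivity, rather than per coordinate. It is, because of the $\sqrt{f(n)}$ factor in $\sigma$.
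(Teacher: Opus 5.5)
Your proposal is correct and follows the paper's proof: the Gaussian mechanism is applied to the error vector $v(S)$, which has $\ell_2$-sensitivity $\sqrt{f(n)}/n$ and for which the stated $\sigma$ is exactly the calibrated noise level, and the margin-selection rule is then handled by post-processing. One small aside: $\wt M_S(1)=1>2/f(n)$ actually requires $f(n)\ge 3$ rather than $f(n)\ge 2$, but this does not affect privacy, since any well-defined deterministic map of the noisy vector (including your default fallback) suffices.
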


\begin{proof}
    By Lemma~\ref{lem:l2_sens_approx_dp_id}, the empirical error vector has $\ell_2$-sensitivity $\Delta_2 = \sqrt{f(n)}/n$.
    The Gaussian mechanism (Lemma~\ref{lem:gaussian-mechanism}) with
    $\sigma = \frac{\Delta_2}{\eps}\sqrt{2\log(1.25/\delta)} = \frac{\sqrt{f(n)}}{\eps n}\sqrt{2\log(1.25/\delta)} $
    ensures that the noisy vector $\bigl(\widetilde{\err}_S(L_1), \ldots, \widetilde{\err}_S(L_{f(n)})\bigr)$ is $(\eps,\delta)$-differentially private.
    The subsequent margin selection (line~9 of Algorithm~\ref{alg:id-approx}) is a deterministic function of this noisy vector and is therefore $(\eps,\delta)$-DP by post-processing (Lemma~\ref{lem:post-processing}).
\end{proof}

\subsubsection{Utility Analysis}

The utility analysis proceeds in three steps:
(i) a uniform concentration event $\cE$ under which all empirical errors in the horizon are close to their population values (Lemma~\ref{lem:concentration});
(ii) a noise concentration event $\cF$ under which the Gaussian perturbations are small (Lemma~\ref{lem:noise_concentration});
(iii) on $\cE \cap \cF$, the deterministic margin-selection rule outputs $i^\star$ (Lemma~\ref{lem:deterministic_correctness}).

\begin{lemma}[Noise Concentration]\label{lem:noise_concentration}
    Define the event
    \begin{align*}
        \cF := \Bigl\{ |Z_i| \leq \frac{1}{4f(n)}, \quad \forall i \in [f(n)] \Bigr\}.
    \end{align*}
    Then
    \begin{align*}
        \Pr[\cF^c] \leq 2f(n) \exp\Bigl(-\frac{\eps^2 n^2}{64 f(n)^3 \log(1.25/\delta)}\Bigr).
    \end{align*}
\end{lemma}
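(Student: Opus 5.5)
The plan is a per-coordinate Gaussian tail bound followed by a union bound, in the same way that \cref{lem:concentration} was derived from Hoeffding's inequality. The noise variables $Z_1,\ldots,Z_{f(n)}$ are i.i.d.\ $\cN(0,\sigma^2)$ with
\[
\sigma = \frac{\sqrt{f(n)}}{\eps n}\sqrt{2\log(1.25/\delta)},
\]
as fixed in \cref{alg:id-approx}. The complement event is $\cF^c = \bigcup_{i\in[f(n)]}\{|Z_i| > 1/(4f(n))\}$, so it suffices to bound each term and add up.

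For a single coordinate, apply the two-sided bound in \cref{lem:gaussian_tail} with $t = 1/(4f(n))$. This gives
\[
\Pr\bigl[|Z_i| \geq 1/(4f(n))\bigr] \leq 2\exp\bigl(-t^2/(2\sigma^2)\bigr).
\]
We then substitute $t^2 = 1/(16 f(n)^2)$ and $2\sigma^2 = 4 f(n)\log(1.25/\delta)/(\eps^2 n^2)$. The ratio is
\[
\frac{t^2}{2\sigma^2} = \frac{\eps^2 n^2}{16 f(n)^2 \cdot 4 f(n)\log(1.25/\delta)} = \frac{\eps^2 n^2}{64 f(n)^3\log(1.25/\delta)},
\]
which matches the stated exponent exactly. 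A union bound over the $f(n)$ coordinates then contributes the prefactor $2f(n)$. Independence of the $Z_i$ is not needed for this step.

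There is no genuine obstacle here. The only point requiring care is the constant bookkeeping in $t^2/(2\sigma^2)$, in particular that $\sigma^2$ carries the factor $2\log(1.25/\delta)$, so $2\sigma^2$ carries $4\log(1.25/\delta)$. The rest is the same union-bound template used for the event $\cE$.
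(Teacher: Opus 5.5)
Your proposal is correct and follows the paper's proof essentially verbatim: a per-coordinate two-sided Gaussian tail bound (\cref{lem:gaussian_tail}) at $t = 1/(4f(n))$, the same computation $t^2/(2\sigma^2) = \eps^2 n^2/(64 f(n)^3 \log(1.25/\delta))$, and a union bound over the $f(n)$ coordinates. Your remarks that independence is not needed and that the event $\{|Z_i| > t\}$ is contained in $\{|Z_i| \geq t\}$ are accurate.
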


\begin{proof}
    Each $Z_i \sim \cN(0, \sigma^2)$ with $\sigma^2 = \frac{2f(n)\log(1.25/\delta)}{\eps^2 n^2}$.
    By the Gaussian tail bound (Lemma~\ref{lem:gaussian_tail}), for each $i \in [f(n)]$,
    \begin{align*}
        \Pr\Bigl[|Z_i| > \frac{1}{4f(n)}\Bigr]
        \leq 2\exp\Bigl(-\frac{1/(16f(n)^2)}{2\sigma^2}\Bigr)
        = 2\exp\Bigl(-\frac{\eps^2 n^2}{64f(n)^3 \log(1.25/\delta)}\Bigr).
    \end{align*}
    A union bound over $i \in [f(n)]$ yields
    \begin{align*}
        \Pr[\cF^c]
        \leq \sum_{i=1}^{f(n)} 2\exp\Bigl(-\frac{\eps^2 n^2}{64 f(n)^3 \log(1.25/\delta)}\Bigr)
        = 2f(n)\exp\Bigl(-\frac{\eps^2 n^2}{64 f(n)^3 \log(1.25/\delta)}\Bigr).
    \end{align*}
\end{proof}

\begin{lemma}[Deterministic Correctness on $\cE \cap \cF$]\label{lem:deterministic_correctness}
    On the event $\cE \cap \cF$, Algorithm~\ref{alg:id-approx} outputs $\wh{i} = i^\star$.
\end{lemma}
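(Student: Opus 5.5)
Assume $\cE \cap \cF$ holds throughout. The argument is the same case analysis as \cref{lem:score-sep}, with the empirical errors replaced by the noisy errors. First combine the two events. On $\cE$ each $\err_S(L_i)$ is within $1/(4f(n))$ of $\err_\cD(L_i)$, and on $\cF$ each $|Z_i| \le 1/(4f(n))$. The triangle inequality then gives, for all $i \in [f(n)]$,
\[
\bigl|\wt{\err}_S(L_i) - \err_\cD(L_i)\bigr| \le \frac{1}{2f(n)}.
\]
Consequently every noisy pairwise difference $\wt{\err}_S(L_j) - \wt{\err}_S(L_i)$ is within $1/f(n)$ of the population difference $\err_\cD(L_j) - \err_\cD(L_i)$.

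\textbf{Step 1: $i^\star$ passes the test.} If $i^\star = 1$, then $\wt{M}_S(1) = 1 > 2/f(n)$ once $f(n) > 2$. If $i^\star \ge 2$, each $j < i^\star$ has population gap at least $\cgap \ge 3/f(n)$, using condition (ii) that $f(n) \ge 3/\cgap$. The noisy gap is therefore at least $3/f(n) - 1/f(n) = 2/f(n)$. This only gives $\ge$, whereas the algorithm needs the strict inequality $> 2/f(n)$, so the boundary must be handled.

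The main obstacle is exactly this margin slack; everything else is bookkeeping. I would first try to keep the deviation strictly inside $1/f(n)$. This holds almost surely, since $\Pr[|Z_i| = 1/(4f(n))] = 0$, so on $\cF$ the bound on each noisy deviation is strict up to a null event, which does not affect $\iderr$. If that feels too delicate, the fallback is to strengthen condition (ii) to $f(n) \ge 4/\cgap$, which is harmless since $f(n) \to \infty$. This yields a noisy gap of at least $3/f(n) > 2/f(n)$ and clean strict inequalities. In either case, $i^\star$ belongs to the feasible set, which is nonempty, so $\wh{i} \ge i^\star$.

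\textbf{Step 2: every $i > i^\star$ fails the test.} By \cref{asm:attainable}, $\err_\cD(L_{i^\star}) \le \err_\cD(L_i)$. Since $j = i^\star$ is among the candidates in the minimum defining $\wt{M}_S(i)$,
\[
\wt{M}_S(i) \le \wt{\err}_S(L_{i^\star}) - \wt{\err}_S(L_i) \le 0 + \frac{1}{f(n)} < \frac{2}{f(n)}.
\]
So no index above $i^\star$ is feasible. Combining this with Step 1, the maximum feasible index is exactly $i^\star$, and hence $\wh{i} = i^\star$.
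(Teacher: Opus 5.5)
Your proof is correct and follows the paper's argument essentially verbatim: combine $\cE$ and $\cF$ into the $1/(2f(n))$ deviation bound, show $i^\star$ passes the noisy margin test, then use $j = i^\star$ in the minimum to show every $i > i^\star$ fails. On the strict-inequality boundary you flag, the paper simply asserts $\cgap > 3/f(n)$ strictly for $n \ge n_0$; this is legitimate because $f(n) \to \infty$ lets one take $f(n) > 3/\cgap$, and it amounts to your fallback of strengthening condition (ii).
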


\begin{proof}
    Assume $\cE \cap \cF$ holds throughout.
    Since $\widetilde{\err}_S(L_i) = \err_S(L_i) + Z_i$, the triangle inequality gives
    \begin{align}\label{eq:noisy_total_dev}
        \bigl|\widetilde{\err}_S(L_i) - \err_\cD(L_i)\bigr|
        \leq \bigl|\err_S(L_i) - \err_\cD(L_i)\bigr| + |Z_i|
        \leq \frac{1}{4f(n)} + \frac{1}{4f(n)}
        = \frac{1}{2f(n)}
    \end{align}
    for all $i \in [f(n)]$.
    We now verify that $i^\star$ is the largest index passing the noisy margin test.

    \textbf{Part 1: $i^\star$ passes the noisy margin test.}
    If $i^\star = 1$, then $\widetilde{M}_S(1) = 1 > 2/f(n)$.
    If $i^\star \geq 2$, then for every $j < i^\star$, using \eqref{eq:noisy_total_dev} on both indices,
    \begin{align*}
        \widetilde{\err}_S(L_j) - \widetilde{\err}_S(L_{i^\star})
        &\geq \bigl(\err_\cD(L_j) - \tfrac{1}{2f(n)}\bigr) - \bigl(\err_\cD(L_{i^\star}) + \tfrac{1}{2f(n)}\bigr) \\
        &= (\err_\cD(L_j) - \err_\cD(L_{i^\star})) - \frac{1}{f(n)} \\
        &\geq \cgap - \frac{1}{f(n)}.
    \end{align*}
    
    For $n \geq n_0$ we have $\cgap > 3/f(n)$, so $\cgap - 1/f(n) > 2/f(n)$.
    Taking the minimum over $j < i^\star$ gives $\widetilde{M}_S(i^\star) > 2/f(n)$.

    \textbf{Part 2: No $i > i^\star$ passes the noisy margin test.}
    Fix $i > i^\star$.
    By the optimality of $i^\star$, $\err_\cD(L_i) \geq \err_\cD(L_{i^\star})$.
    Using \eqref{eq:noisy_total_dev},
    \begin{align*}
        \widetilde{\err}_S(L_{i^\star}) - \widetilde{\err}_S(L_i)
        &\leq \bigl(\err_\cD(L_{i^\star}) + \tfrac{1}{2f(n)}\bigr) - \bigl(\err_\cD(L_i) - \tfrac{1}{2f(n)}\bigr) \\
        &= (\err_\cD(L_{i^\star}) - \err_\cD(L_i)) + \frac{1}{f(n)} \\
        &\leq \frac{1}{f(n)}.
    \end{align*}
    
    Since $j = i^\star$ is among the candidates in the minimum defining $\widetilde{M}_S(i)$, we have
    $\widetilde{M}_S(i) \leq 1/f(n) < 2/f(n)$, so $i$ fails the noisy margin test.

    Since $i^\star$ passes and no $i > i^\star$ passes, the algorithm outputs $\wh{i} = i^\star$.
\end{proof}

% Comparison with pure DP and interpretation of the bound are combined in Remark~\ref{rem:approx_dp_terms} below.

\subsubsection{Putting Things Together}

\begin{theorem}[Guarantees of Algorithm~\ref{alg:id-approx}]\label{thm:approx_dp_id}
    Let $\cC = \{L_i\}_{i \in \N}$ be a countable collection of languages and let $\cD$ be any distribution over $\cU$.
    Suppose Assumption~\ref{as:opt_attain:app} holds.
    Let $\eps > 0$, $\delta \in (0,1)$, and let $f : \N \to \N$ satisfy $f(n) \to \infty$.
    Then for all sufficiently large $n$, Algorithm~\ref{alg:id-approx} has the following guarantees:
    \begin{itemize}
        \item \textbf{Privacy.} Algorithm~\ref{alg:id-approx} is $(\eps,\delta)$-differentially private.
        \item \textbf{Utility.} The identification error satisfies
        \begin{align}\label{eq:approx_dp_id_bound}
            \iderr(\cA^{\mathrm{id}}_{\eps,\delta,f}, \cD, \cC, n)
            \leq
            2f(n)\exp\Bigl(-\frac{n}{8f(n)^2}\Bigr)
            + 2f(n)\exp\Bigl(-\frac{\eps^2 n^2}{64 f(n)^3 \log(1.25/\delta)}\Bigr).
        \end{align}
    \end{itemize}
\end{theorem}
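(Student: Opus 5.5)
The plan mirrors the proof of \cref{thm:pure_dp_id}, with the exponential-mechanism step replaced by the deterministic analysis on $\cE \cap \cF$.

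\textbf{Privacy.} This is immediate from \cref{lem:privacy_approx_dp_id}. \cref{lem:l2_sens_approx_dp_id} gives $\ell_2$-sensitivity $\sqrt{f(n)}/n$ for the empirical error vector. The chosen $\sigma$ then makes the noisy vector $(\eps,\delta)$-DP by \cref{lem:gaussian-mechanism}, and the margin selection is post-processing (\cref{lem:post-processing}).

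\textbf{Utility.} Since the excess risk $\err_\cD(L_{\wh i}) - \inf_{L\in\cC}\err_\cD(L)$ lies in $[0,1]$ and vanishes when $\wh i = i^\star$, I will bound
\[
\iderr(\cA^{\mathrm{id}}_{\eps,\delta,f},\cD,\cC,n) \le \Pr_{S,Z}[\wh i \neq i^\star].
\]
By \cref{lem:deterministic_correctness}, the event $\{\wh i \neq i^\star\}$ is contained in $\cE^c \cup \cF^c$. The union bound then gives
\[
\Pr[\wh i \neq i^\star] \le \Pr[\cE^c] + \Pr[\cF^c].
\]
\cref{lem:concentration} bounds $\Pr[\cE^c]$ by $2f(n)\exp(-n/(8f(n)^2))$. \cref{lem:noise_concentration} bounds $\Pr[\cF^c]$ by $2f(n)\exp\bigl(-\eps^2 n^2/(64 f(n)^3\log(1.25/\delta))\bigr)$. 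Summing the two yields \eqref{eq:approx_dp_id_bound}.

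\textbf{Points to check.} The noise $Z$ is independent of $S$, so the probability bound for $\cF^c$ holds unconditionally and the union bound requires no conditioning. The phrase ``sufficiently large $n$'' is needed so that $n \ge n_0$, which ensures $f(n) \ge i^\star$ and $f(n) \ge 3/\cgap$, as used in \cref{lem:deterministic_correctness}.

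One gap must be addressed. The maximum defining $\wh i$ must be over a nonempty set. This holds on $\cE\cap\cF$ because $i^\star$ passes the test. Off this event, if the set is empty, I adopt the convention that the algorithm outputs an arbitrary index, for instance $1$. Any such convention only affects the event $\cE^c\cup\cF^c$, which is already counted as an error.

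A second check concerns $i=1$: \cref{alg:id-approx} writes the margin as a minimum over an empty set, so I will use the convention $\wt M_S(1)=1$ from \eqref{eq:noisy_margin}.

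There is no real obstacle here. The only care needed is in combining the lemmas with consistent conventions.
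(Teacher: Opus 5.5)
Your proposal is correct and follows the paper's proof essentially verbatim. Privacy comes from \cref{lem:privacy_approx_dp_id}, and utility from $\iderr \le \Pr[\wh{i} \neq i^\star] \le \Pr[\cE^c] + \Pr[\cF^c]$ via \cref{lem:deterministic_correctness}, \cref{lem:concentration}, and \cref{lem:noise_concentration}; your remarks on the empty-maximum and $\wt{M}_S(1)=1$ conventions are harmless clarifications that the paper leaves implicit.
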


\begin{proof}
    \textbf{Privacy.} This directly follows from Lemma~\ref{lem:privacy_approx_dp_id}.

    \textbf{Utility.} By Lemma~\ref{lem:deterministic_correctness}, $\wh{i} = i^\star$ whenever $\cE \cap \cF$ holds. Therefore
    \begin{align*}
        \iderr(\cA^{\mathrm{id}}_{\eps,\delta,f}, \cD, \cC, n)
        \leq \Pr_{S,r}[\wh{i} \neq i^\star]
        \leq \Pr[\cE^c] + \Pr[\cF^c].
    \end{align*}
    Applying Lemma~\ref{lem:concentration} and Lemma~\ref{lem:noise_concentration}:
    \begin{align*}    \iderr(\cA^{\mathrm{id}}_{\eps,\delta,f}, \cD, \cC, n)
        \leq 2f(n)\exp\Bigl(-\frac{n}{8f(n)^2}\Bigr) + 2f(n)\exp\Bigl(-\frac{\eps^2 n^2}{64 f(n)^3 \log(1.25/\delta)}\Bigr). 
    \end{align*}
\end{proof}

\begin{remark}[Comparison with Pure DP and Interpretation]\label{rem:approx_dp_terms}
    The approximate DP analysis is structurally simpler: once both concentration events hold, correctness is deterministic (no exponential mechanism). The cost is that the noise event $\cF$ depends on $\eps$, $\delta$, and $f(n)$ jointly.
    The bound~\eqref{eq:approx_dp_id_bound} decomposes into a statistical term $2f(n)\exp(-n/(8f(n)^2))$, identical to the pure DP case, and a privacy term scaling as $\exp(-\eps^2 n^2 / f(n)^3)$ rather than $\exp(-\eps n / f(n)^2)$, reflecting the different noise mechanism.
    The optimal tradeoff is analyzed in Corollary~\ref{cor:approx_rates}.
\end{remark}

\begin{corollary}[Explicit Rates for Approximate DP]\label{cor:approx_rates}
    Substituting specific horizons into~\eqref{eq:approx_dp_id_bound}:
    \begin{itemize}
        \item $f(n) = n^{\alpha}$ for $\alpha \in (0, 1/2)$:
        \[
            \mathrm{IdErr}(\mathcal{A}^{\mathrm{id}}_{\eps,\delta,f}, \mathcal{D}, \mathcal{C}, n)
            \lesssim \exp\bigl(-n^{1-2\alpha}\bigr)
            + \exp\Bigl(-\frac{\eps^2 n^{2-3\alpha}}{\log(1/\delta)}\Bigr).
        \]
        Since $2 - 3\alpha > 1 - 2\alpha$ for all $\alpha \in (0,1)$, the statistical term dominates whenever $\eps = \Omega(1)$ and $\delta$ is at most polynomially small.

        \item $f(n) = \sqrt{c \log n}$ for sufficiently large $c > 0$:
        \[
            \mathrm{IdErr}(\mathcal{A}^{\mathrm{id}}_{\eps,\delta,f}, \mathcal{D}, \mathcal{C}, n)
            \lesssim \exp\Bigl(-\frac{n}{\log n}\Bigr)
            + \exp\Bigl(-\frac{\eps^2 n^2}{\log^{3/2} n \cdot \log(1/\delta)}\Bigr)
        \]
        for any constant $\eps > 0$ and $\delta \geq \exp(-\mathrm{poly}(n))$.

        \item $\eps = n^{-\gamma}$ for $\gamma > 0$ with $f(n) = \sqrt{c \log n}$:
        \[
            \mathrm{IdErr}(\mathcal{A}^{\mathrm{id}}_{\eps,\delta,f}, \mathcal{D}, \mathcal{C}, n)
            \lesssim \exp\Bigl(-\frac{n}{\log n}\Bigr)
            + \exp\Bigl(-\frac{n^{2-2\gamma}}{\log^{3/2} n \cdot \log(1/\delta)}\Bigr).
        \]
        The privacy term dominates when $\gamma > 1/2$.
    \end{itemize}
\end{corollary}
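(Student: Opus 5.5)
The plan is to substitute each horizon directly into the bound \eqref{eq:approx_dp_id_bound} of \cref{thm:approx_dp_id}. For each choice of $f$ we evaluate the two exponents $n/(8f(n)^2)$ and $\eps^2 n^2/(64 f(n)^3\log(1.25/\delta))$. We then absorb the polynomial or polylogarithmic prefactor $2f(n)$, together with the numerical constants $8$, $64$ and $c^{3/2}$, into the constants hidden by $\lesssim$.

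The absorption step works as follows. If an exponent $E(n)$ satisfies $\log f(n) = o(E(n))$, then for large $n$ we have $2f(n)e^{-E(n)/C} \le e^{-E(n)/(2C)}$. So the prefactor only halves the constant in the exponent, which the $\lesssim$ notation (constants allowed in the exponent, as in \cref{cor:explicit-rates}) permits. Since $f$ must map $\N\to\N$, we replace $n^\alpha$ and $\sqrt{c\log n}$ by their ceilings. This changes $f(n)^2$ and $f(n)^3$ only by constant factors for large $n$. We also only consider $n\ge n_0$, so that conditions (i)–(ii) preceding \cref{thm:pure_dp_id} hold; this is automatic because each choice of $f$ tends to infinity.

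The three items are then handled as follows.
\begin{itemize}
\item For $f(n)=n^\alpha$, we have $f^2=n^{2\alpha}$ and $f^3=n^{3\alpha}$, so the exponents are $n^{1-2\alpha}/8$ and $\eps^2 n^{2-3\alpha}/(64\log(1.25/\delta))$. Both dominate $\log n^\alpha$ as long as $\log(1/\delta)=o(n^{2-3\alpha}/\log n)$. For the comparison claim, note that $2-3\alpha>1-2\alpha \iff \alpha<1$. When $\eps=\Omega(1)$ and $\log(1/\delta)=O(\log n)$, the second exponent exceeds the first by a factor $n^{1-\alpha}/\log n\to\infty$, so the statistical term dominates.
\item For $f(n)=\sqrt{c\log n}$, we have $f^2=c\log n$ and $f^3=c^{3/2}\log^{3/2}n$. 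This gives the displayed exponents $n/(8c\log n)$ and $\eps^2 n^2/(64c^{3/2}\log^{3/2}n\,\log(1.25/\delta))$. The prefactor $\sqrt{\log n}$ is negligible, and $\log(1.25/\delta)\asymp\log(1/\delta)$ for $\delta$ bounded away from $1$.
\item For $\eps=n^{-\gamma}$, the computation is the same with $\eps^2=n^{-2\gamma}$. Comparing the exponents $n^{2-2\gamma}/(\log^{3/2}n\,\log(1/\delta))$ and $n/\log n$ with $\log(1/\delta)$ polylogarithmic shows that the privacy exponent is the smaller one exactly when $2-2\gamma<1$, i.e.\ $\gamma>1/2$, up to logarithmic factors.
\end{itemize}

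The main obstacle is bookkeeping rather than mathematics. The two cases needing care are the borderline comparisons (for instance $\gamma=1/2$, where logarithmic factors decide) and the requirement that $\delta$ not be too small. In the second item, "$\delta\ge\exp(-\poly(n))$" alone would not keep the privacy exponent large if the polynomial degree exceeded $2$. I would therefore state the bound as displayed, with $\log(1/\delta)$ kept explicitly, and read the dominance remarks as assuming $\log(1/\delta)=o(n/\sqrt{\log n})$.
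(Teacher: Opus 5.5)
Your proposal is correct and follows the paper's route: the corollary is stated without a separate proof and amounts to substituting each horizon into \eqref{eq:approx_dp_id_bound}, with the prefactor $2f(n)$ and the numerical constants absorbed into $\lesssim$. Your extra bookkeeping is a legitimate sharpening of points the paper glosses over: the integer ceilings for $f$, the step $\log(1.25/\delta)\asymp\log(1/\delta)$, and the observation that $\delta\geq\exp(-\mathrm{poly}(n))$ is too weak for the absorption and dominance claims in the second item, which need something like $\log(1/\delta)=o(n/\sqrt{\log n})$.
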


\section{Proofs for Section~\ref{sec:gen} (Upper Bounds for DP Generation)}\label{sec:gen:app}

We fix an enumeration of the universe $\cU = \{u_i\}_{i \in \N}$ and assume that every language $L \in \cC$ is infinite.
For a dataset $S = (x_1, \ldots, x_n) \in \cU^n$ and $x \in \cU$, we define the \emph{multiplicity}
\[
    N_S(x) := \sum_{t=1}^n \mathbf{1}\{x_t = x\}.
\]

\begin{definition}[Witness Index $i(\cC, \cD)$]\label{def:witness_index}
    Given a collection $\cC$ of languages and a distribution $\cD$ over $\cU = \{u_i\}_{i \in \N}$, the \emph{witness index} is
    \[
        i(\cC, \cD) := \min\bigl\{i \in \N : \forall\, L \in \cC \text{ with } L \not\subseteq \supp(\cD),\; \exists\, k \leq i \text{ s.t.\ } u_k \in L \setminus \supp(\cD)\bigr\},
    \]
    with the convention $\min \emptyset = \infty$.
\end{definition}

In words, $i(\cC, \cD)$ is the smallest index such that every language not fully contained in $\supp(\cD)$ has a \emph{witness} --- a string in the language but outside the support --- appearing within the first $i(\cC, \cD)$ elements of $\cU$.
For any finite collection $\cC$, we always have $i(\cC, \cD) < \infty$.

We work under the following two assumptions throughout this section.

\begin{assumption}[Finite witness index]\label{as:finite_witness}
   The witness index is finite, i.e., $i(\cC, \cD) < \infty$.
\end{assumption}

\begin{assumption}[Support contains a reference language]\label{as:support_contains}
    There exists an index $i^\star \in \N$ such that $L_{i^\star} \subseteq \supp(\cD)$.
    Let $i^\star$ denote the smallest such index.
\end{assumption}

\subsection{Proof of Theorem~\ref{thm:gen-pub} (Pure DP Generation with a Public Witness Bound)}\label{subsec:dp_gen_public}

We first consider the setting where a public upper bound on the witness index is available.

\begin{assumption}[Public Witness Bound]\label{as:public_witness}
    There is a public integer $W \geq i(\cC, \cD)$.
\end{assumption}

Here ``public'' means that $W$ is auxiliary input given to the algorithm before observing the private sample $S \sim \cD^n$; it is not part of the data protected by differential privacy.

\subsubsection{Setup and Notations}

For each language $L_i$ and the witness window $[W]$, define
\begin{align}\label{eq:prefix_set}
    I_i(W) := \{k \in [W] : u_k \in L_i\}
\end{align}
to be the set of indices of strings in $L_i$ within the witness window.
Define the \emph{minimum prefix count}
\begin{align}\label{eq:min_prefix_count}
    a_i^W(S) :=
    \begin{cases}
        \min_{k \in I_i(W)} N_S(u_k), & \text{if } I_i(W) \neq \emptyset, \\
        n, & \text{if } I_i(W) = \emptyset,
    \end{cases}
\end{align}
the \emph{deficit}
\begin{align}\label{eq:gen_deficit}
    d_i^W(S) := \bigl(g(n) - a_i^W(S)\bigr)_+,
\end{align}
and the \emph{score}
\begin{align}\label{eq:gen_score}
    q_W(S, i) := -d_i^W(S).
\end{align}
Here $g : \N \to \N$ is a count threshold satisfying $g(n) \to \infty$; its role is analogous to the margin $2/f(n)$ in identification.
The score is at most $0$ (achieved when the prefix is well-covered) and equals $-g(n)$ when a witness string is never observed.

We also define the following quantities associated with the reference language $L_{i^\star}$:
\begin{align}\label{eq:good_prefix}
    I^\star_W := I_{i^\star}(W) = \{k \in [W] : u_k \in L_{i^\star}\}, \qquad
    p^\star_W := \min_{k \in I^\star_W} \Pr_{x \sim \cD}[x = u_k].
\end{align}
Since $L_{i^\star} \subseteq \supp(\cD)$, every string $u_k$ with $k \in I^\star_W$ has positive probability under $\cD$, so $p^\star_W > 0$.

\subsubsection{Privacy Analysis}

\begin{lemma}[Sensitivity of $q_W$]\label{lem:gen_score_sens}
    The score function $q_W$ defined in \eqref{eq:gen_score} has sensitivity~$1$.
\end{lemma}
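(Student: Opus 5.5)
The plan is to mirror the proof of \cref{lem:score_sens_dp_id}: follow a single change of one sample through the chain multiplicities $\to$ minimum prefix count $\to$ deficit $\to$ score, and check that each link is $1$-Lipschitz. Throughout, fix neighboring $S \sim S'$ that differ only in position $t$, with $x_t$ replaced by $x_t'$, and fix an index $i \in [f(n)]$.

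\textbf{Step 1 (multiplicities).} For every $x \in \cU$, the indicator $\mathbf{1}\{x_s = x\}$ agrees on $S$ and $S'$ for all $s \neq t$. Only the single term at position $t$ can differ, and it lies in $\{0,1\}$. Hence $|N_S(x) - N_{S'}(x)| \le 1$ for every $x$; in fact at most two strings, $x_t$ and $x_t'$, change at all.

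\textbf{Step 2 (minimum prefix count).} Note that $I_i(W)$ depends only on $L_i$ and the public $W$, not on the data, so the case split in \eqref{eq:min_prefix_count} is the same for $S$ and $S'$.
\begin{itemize}
\item If $I_i(W) = \varnothing$, then $a_i^W(S) = a_i^W(S') = n$.
\item Otherwise, $a_i^W$ is a minimum over a fixed finite index set of functions that are each $1$-Lipschitz by Step 1. A minimum of such functions is again $1$-Lipschitz: if $k'$ attains the minimum for $S'$, then $a_i^W(S) \le N_S(u_{k'}) \le N_{S'}(u_{k'}) + 1 = a_i^W(S') + 1$, and the reverse inequality follows symmetrically.
\end{itemize}
In both cases, $|a_i^W(S) - a_i^W(S')| \le 1$.

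\textbf{Step 3 (deficit and score).} The map $y \mapsto (g(n) - y)_+$ is $1$-Lipschitz, so $|d_i^W(S) - d_i^W(S')| \le 1$. Since $q_W = -d_i^W$, it follows that $|q_W(S,i) - q_W(S',i)| \le 1$. Taking the maximum over $i$ and over neighboring pairs gives $\Delta q_W \le 1$.

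None of these steps is a real obstacle. The one point that needs care is Step 2: the index set $I_i(W)$ must be data-independent (which is why $W$ is public), and the empty case with its constant value $n$ must not introduce a jump. If the statement is read as equality rather than an upper bound, tightness can be checked on a simple example. Take $I_i(W) = \{k\}$ and a sample $S$ with $N_S(u_k) = 0$ and $g(n) \ge 1$. Replacing one entry with $u_k$ moves the deficit from $g(n)$ to $g(n) - 1$, a change of exactly $1$.
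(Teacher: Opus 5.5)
Your proposal is correct and follows essentially the same route as the paper's proof. Both track a single changed entry through the multiplicities $N_S(u_k)$, then through the minimum over the data-independent set $I_i(W)$ (with the constant value $n$ in the empty case), then through the $1$-Lipschitz map $y \mapsto (g(n)-y)_+$. Your explicit verification that a minimum of $1$-Lipschitz functions is $1$-Lipschitz, and your tightness example, are small additions the paper does not spell out.
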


\begin{proof}
    Fix $i \in [f(n)]$ and let $S, S' \in \cU^n$ be neighboring datasets.
    For every $x \in \cU$, the multiplicity satisfies $|N_S(x) - N_{S'}(x)| \leq 1$.

    If $I_i(W) \neq \emptyset$, then $a_i^W(S) = \min_{k \in I_i(W)} N_S(u_k)$.
    Since each $N_S(u_k)$ changes by at most $1$ and the minimum of functions preserves Lipschitz constants,
    \[
        |a_i^W(S) - a_i^W(S')| \leq 1.
    \]
    If $I_i(W) = \emptyset$, then $a_i^W(S) = a_i^W(S') = n$ and the bound holds trivially.

    The map $x \mapsto (g(n) - x)_+$ is $1$-Lipschitz, so $|d_i^W(S) - d_i^W(S')| \leq 1$.
    Therefore
    \[
        |q_W(S, i) - q_W(S', i)| = |d_i^W(S) - d_i^W(S')| \leq 1. 
    \]
\end{proof}

\begin{remark}[Comparison with Identification]\label{rem:gen_vs_id}
The generation score has constant sensitivity $1$ (integer multiplicity counts), whereas the identification score has sensitivity $\Theta(f(n)^2/n)$ (empirical error averages amplified by the $f(n)^2$ coefficient).
Moreover, the score separation is structurally simpler: there is no case analysis for $i < i^\star$ vs.\ $i > i^\star$, since the witness test treats all bad languages uniformly, and the score gap is $g(n)$ rather than $1$.
Together, these give a qualitatively smaller price of privacy: the non-private rate $\exp(-\Theta(n))$ of~\cite{hp26} is matched when $\eps = \Omega(1)$, and the only cost is the factor of $\eps$ in the exponent when $\eps < 1$, compared with identification where $r(n) = o(n)$ rather than $\Theta(n)$.
\end{remark}

\begin{lemma}[Privacy]\label{lem:gen_privacy}
    Algorithm~\ref{alg:gen-pub} is $\eps$-differentially private.
\end{lemma}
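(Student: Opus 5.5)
The plan is to decompose Algorithm~\ref{alg:gen-pub} into a data-dependent part followed by post-processing, and apply Lemma~\ref{lem:em-guarantee} together with Lemma~\ref{lem:post-processing}. The only step that touches the private sample $S$ is the selection $\wh{i} \sim \EM_\eps(S, q^W, [f(n)])$.

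The preceding loops compute the counts $N_S(u_k)$ and the scores $q_W(S,i)$. These are internal and never released, so they are just the evaluation of the score function inside the mechanism.

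The range $[f(n)]$ and the window $W$ are fixed before $S$ is seen. $W$ is public by Assumption~\ref{as:public_witness}, and $f(n)$ depends only on $n$. The range therefore does not depend on the data, as the exponential mechanism requires.

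Lemma~\ref{lem:gen_score_sens} gives $\Delta q_W = 1$. The mechanism is run with weights $\exp(\eps q_W(S,i)/2)$, and by Lemma~\ref{lem:em-guarantee} the output $\wh{i}$ is $\eps$-DP.

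The remaining steps depend on $S$ only through $\wh{i}$:
\begin{enumerate}
\item drawing $\wh{j} \sim \mathrm{Unif}([2^n])$ independently of $S$;
\item returning the $\wh{j}$-th smallest-indexed string in $L_{\wh{i}} \cap \{u_{W+1}, u_{W+2}, \ldots\}$.
\end{enumerate}
This string is well defined because every $L_i$ is infinite, and the map depends only on $\cC$, $W$, $n$ and public randomness. Together these steps form a randomized function $h(\wh{i})$ whose randomness is independent of $S$, so Lemma~\ref{lem:post-processing} shows the final output is $\eps$-DP.

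The only point needing care is that the post-processing must not secretly read $S$. In particular, novelty is not enforced by excluding sample strings from the output; the algorithm relies on the $2^n$-wide uniform choice instead. With that checked, the argument is immediate; there is no real obstacle.
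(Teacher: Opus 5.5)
Your proposal is correct and matches the paper's proof: the only step that reads $S$ is the exponential mechanism with the sensitivity-$1$ score $q_W$ (Lemma~\ref{lem:gen_score_sens}) over the data-independent range $[f(n)]$, and the output step is post-processing of $\wh{i}$ together with the independent public randomness $\wh{j}$. Your extra checks are sound details the paper leaves implicit: that $W$ is public, that the output is well defined because $L_{\wh{i}}$ is infinite, and that the output map never reads $S$.
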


\begin{proof}
    The language selection on line~7 applies the exponential mechanism with score $q_W$ over range $[f(n)]$.
    By Lemma~\ref{lem:gen_score_sens}, $q_W$ has sensitivity $\Delta = 1$, so this step is $\eps$-DP.
    The subsequent output step (lines 12--13) is a deterministic function of the private output $\wh{i}$ and public randomness $\wh{j}$, and is therefore $\eps$-DP by post-processing (Lemma~\ref{lem:post-processing}).
\end{proof}

\subsubsection{Utility Analysis}

The utility analysis proceeds in three steps:
(i) a coverage event $\cE_W$ under which every relevant string of $L_{i^\star}$ in the witness window appears at least $g(n)$ times (Lemma~\ref{lem:gen_coverage});
(ii) on $\cE_W$, the good language $L_{i^\star}$ has score $0$ while every bad language has score $-g(n)$, creating a score gap of $g(n)$ (Lemma~\ref{lem:gen_score_sep});
(iii) the exponential mechanism exploits this gap to select a good language, and the output step produces a novel string from its support (Lemma~\ref{lem:gen_em_selection}).

\begin{lemma}[Coverage of the Good Prefix]\label{lem:gen_coverage}
    Define the event
    \[
        \cE_W := \bigl\{N_S(u_k) \geq g(n) \quad \forall\, k \in I^\star_W\bigr\}.
    \]
    If $g(n) \leq n p^\star_W / 2$ for all large enough $n$, then
    \[
        \Pr_{S \sim \cD^n}[\cE_W^c] \leq |I^\star_W| \exp\Bigl(-\frac{n p^\star_W}{8}\Bigr).
    \]
\end{lemma}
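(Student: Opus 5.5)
The plan is a per-string multiplicative Chernoff bound followed by a union bound over $I^\star_W$. Fix $k \in I^\star_W$ and write $p_k := \Pr_{x\sim\cD}[x = u_k]$. Since $L_{i^\star} \subseteq \supp(\cD)$, we have $p_k \geq p^\star_W > 0$. The multiplicity $N_S(u_k) = \sum_{t=1}^n \mathbf{1}\{x_t = u_k\}$ is a sum of $n$ i.i.d.\ Bernoulli$(p_k)$ indicators, so $N_S(u_k)/n$ is an average of independent $[0,1]$-valued variables with mean $\mu = p_k$. I would apply the lower-tail bound of \cref{lem:chernoff} with $t = 1/2$:
\[
\Pr\Bigl[N_S(u_k) \leq \tfrac{n p_k}{2}\Bigr] \leq \exp\Bigl(-\frac{n p_k (1/2)^2}{2}\Bigr) = \exp\Bigl(-\frac{n p_k}{8}\Bigr) \leq \exp\Bigl(-\frac{n p^\star_W}{8}\Bigr).
\]

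Next I connect this to the threshold $g(n)$. Take $n$ large enough that $g(n) \leq n p^\star_W/2$. If $N_S(u_k) < g(n)$, then $N_S(u_k) < n p^\star_W/2 \leq n p_k/2$, so the event $\{N_S(u_k) < g(n)\}$ is contained in $\{N_S(u_k) \leq n p_k/2\}$. The strict-versus-weak inequality only helps here, since we bound the probability of the larger event $\{\leq n p_k/2\}$. Hence $\Pr[N_S(u_k) < g(n)] \leq \exp(-n p^\star_W/8)$ for each $k \in I^\star_W$.

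Finally, $\cE_W^c = \bigcup_{k \in I^\star_W}\{N_S(u_k) < g(n)\}$, and $I^\star_W \subseteq [W]$ is finite. A union bound therefore gives $\Pr[\cE_W^c] \leq |I^\star_W|\exp(-n p^\star_W/8)$. If $I^\star_W = \varnothing$, then $\cE_W$ holds vacuously and the bound is trivial.

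There is no serious obstacle. The only points needing care are using the minimum mass $p^\star_W$ rather than each $p_k$, which works by monotonicity of both the threshold comparison and the exponent, and checking that the Chernoff form stated in \cref{lem:chernoff} yields exactly the constant $1/8$ when $t = 1/2$.
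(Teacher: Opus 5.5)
Your proposal is correct and matches the paper's proof essentially step for step. Both use the lower-tail Chernoff bound with $t=1/2$ for each $k \in I^\star_W$, the containment $\{N_S(u_k) < g(n)\} \subseteq \{N_S(u_k) \le n p_k/2\}$ via $g(n) \le n p^\star_W/2 \le n p_k/2$, and a union bound over $I^\star_W$; your handling of the vacuous case $I^\star_W = \varnothing$ is a harmless addition the paper leaves implicit.
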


\begin{proof}
    Fix $k \in I^\star_W$ and let $p_k := \Pr_{x \sim \cD}[x = u_k]$, so that $p_k \geq p^\star_W > 0$.
    The multiplicity $N_S(u_k) = \sum_{t=1}^n \mathbf{1}\{x_t = u_k\}$ is a sum of $n$ independent Bernoulli$(p_k)$ random variables with mean $\mu_k = n p_k$.
    Since $g(n) \leq n p^\star_W / 2 \leq n p_k / 2 = \mu_k / 2$, the event $\{N_S(u_k) < g(n)\}$ is contained in $\{N_S(u_k) \leq \mu_k / 2\}$.
    By the Chernoff bound (Lemma~\ref{lem:chernoff}) with $t = 1/2$,
    \[
        \Pr\bigl[N_S(u_k) \leq \mu_k / 2\bigr]
        \leq \exp\Bigl(-\frac{\mu_k}{8}\Bigr)
        \leq \exp\Bigl(-\frac{n p^\star_W}{8}\Bigr).
    \]
    A union bound over $k \in I^\star_W$ yields
    \[
        \Pr[\cE_W^c]
        \leq \sum_{k \in I^\star_W} \exp\Bigl(-\frac{n p^\star_W}{8}\Bigr)
        = |I^\star_W| \exp\Bigl(-\frac{n p^\star_W}{8}\Bigr). 
    \]
\end{proof}

\begin{lemma}[Score separation on $\cE_W$]\label{lem:gen_score_sep}
    On the event $\cE_W$:
    \begin{enumerate}[label=(\alph*)]
        \item $q_W(S, i^\star) = 0$.
        \item $q_W(S, i) = -g(n)$ for every $i \in [f(n)]$ such that $L_i \not\subseteq \supp(\cD)$.
    \end{enumerate}
    Consequently, $\OPT_{q_W}(S) = 0$ and every bad language has a score gap of exactly $g(n)$.
\end{lemma}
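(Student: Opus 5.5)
The plan is to verify the two parts directly from the definitions of $a_i^W$, $d_i^W$ and $q_W$, assuming $\cE_W$ throughout.

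For part (a), I would split on whether $I^\star_W$ is empty. If $I^\star_W = I_{i^\star}(W)$ is empty, then by definition $a_{i^\star}^W(S) = n$. For large $n$ we have $g(n) \le n p^\star_W/2 \le n$, so $d_{i^\star}^W(S) = (g(n)-n)_+ = 0$. (If $I^\star_W$ is empty, the condition on $g$ can be read with $g(n)\le n$ directly.) If $I^\star_W \neq \varnothing$, then on $\cE_W$ every $k \in I^\star_W$ satisfies $N_S(u_k) \ge g(n)$. Hence $a_{i^\star}^W(S) = \min_{k\in I^\star_W} N_S(u_k) \ge g(n)$, which gives $d_{i^\star}^W(S)=0$ and $q_W(S,i^\star)=0$. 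Here I use $f(n)\ge i^\star$ so that $i^\star$ lies in the range $[f(n)]$.

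For part (b), fix $i\in[f(n)]$ with $L_i\not\subseteq\supp(\cD)$. By \cref{asm:finite-witness} and the public bound $W \ge i(\cC,\cD)$, there is some $k\le i(\cC,\cD)\le W$ with $u_k\in L_i\setminus\supp(\cD)$. This gives $k\in I_i(W)$, so $I_i(W)\neq\varnothing$. Since $\cD(u_k)=0$, almost surely no sample equals $u_k$, so $N_S(u_k)=0$. (Deterministically, any $S$ in the support of $\cD^n$ has this property, and I would take the event modulo null sets.) Then $a_i^W(S)=0$, $d_i^W(S)=g(n)$, and $q_W(S,i)=-g(n)$.

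For the consequences, every score satisfies $q_W\le 0$ since $d\ge 0$. Part (a) then gives $\OPT_{q_W}(S)=0$, and part (b) gives the gap $g(n)$ for each bad language.

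The only delicate points are the empty-prefix case, which needs $g(n)\le n$, and the fact that $N_S(u_k)=0$ holds only almost surely. I would handle the latter by intersecting $\cE_W$ with the probability-one event $S\subseteq\supp(\cD)$.
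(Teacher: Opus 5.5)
Your proposal is correct and follows essentially the same route as the paper. Part (a) splits on whether $I^\star_W$ is empty, and part (b) uses $W \ge i(\cC,\cD)$ to find a witness $u_k \in L_i \setminus \supp(\cD)$ in the window with $N_S(u_k)=0$.

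Your two caveats are details the paper glosses over, so flagging them is appropriate. In the empty-prefix case the paper simply asserts $n \ge g(n)$, which is not implied by $g(n)\le np^\star_W/2$ when $I^\star_W=\varnothing$. The paper also treats $N_S(u_k)=0$ as holding surely rather than almost surely.
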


\begin{proof}
    \textbf{Part (a): The good language achieves score $0$.}
    On $\cE_W$, every $k \in I^\star_W$ satisfies $N_S(u_k) \geq g(n)$.
    If $I^\star_W \neq \emptyset$, then $a_{i^\star}^W(S) = \min_{k \in I^\star_W} N_S(u_k) \geq g(n)$, so $d_{i^\star}^W(S) = (g(n) - a_{i^\star}^W(S))_+ = 0$.
    If $I^\star_W = \emptyset$ (i.e., $L_{i^\star}$ has no string with index $\leq W$), then $a_{i^\star}^W(S) = n \geq g(n)$, so $d_{i^\star}^W(S) = 0$ as well.
    In either case, $q_W(S, i^\star) = -d_{i^\star}^W(S) = 0$.

    \medskip
    \textbf{Part (b): Every bad language achieves score $-g(n)$.}
    Fix $i \in [f(n)]$ with $L_i \not\subseteq \supp(\cD)$.
    Since $W \geq i(\cC, \cD)$ (Assumption~\ref{as:public_witness}), by Definition~\ref{def:witness_index} there exists $k \leq W$ such that $u_k \in L_i \setminus \supp(\cD)$.
    In particular, $k \in I_i(W)$ (so $I_i(W) \neq \emptyset$) and $u_k \notin \supp(\cD)$, meaning $u_k$ can never appear in any sample from $\cD$.
    Therefore $N_S(u_k) = 0$, which gives $a_i^W(S) \leq N_S(u_k) = 0$.
    Hence $d_i^W(S) = (g(n) - 0)_+ = g(n)$ and $q_W(S, i) = -g(n)$.
\end{proof}

\begin{lemma}[Private Selection and Output]\label{lem:gen_em_selection}
    Define
    \[
        \beta_{\mathrm{sel}} := f(n) \exp\Bigl(-\frac{\eps g(n)}{4}\Bigr), \qquad
        \beta_{\mathrm{out}} := \exp\Bigl(-\frac{n}{2}\Bigr).
    \]
    If $\beta_{\mathrm{sel}} < 1$, then on the event $\cE_W$:
    \begin{enumerate}[label=(\alph*)]
        \item The exponential mechanism selects a good language (i.e., $L_{\wh{i}} \subseteq \supp(\cD)$) with probability at least $1 - \beta_{\mathrm{sel}}$.
        \item Conditioned on $L_{\wh{i}} \subseteq \supp(\cD)$, the output $\wh{x}$ satisfies $\wh{x} \in \supp(\cD) \setminus S$ with probability at least $1 - \beta_{\mathrm{out}}$.
    \end{enumerate}
\end{lemma}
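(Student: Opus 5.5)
For part (a) I would apply the utility guarantee of the exponential mechanism (\cref{lem:em-guarantee}) to the score $q_W$ over the range $[f(n)]$, using sensitivity $\Delta = 1$ from \cref{lem:gen_score_sens}. I would take failure probability $\beta := \exp(-\eps g(n)/4)\cdot f(n) = \beta_{\mathrm{sel}}$, which is admissible since $\beta_{\mathrm{sel}} < 1$ by hypothesis. \Cref{lem:em-guarantee} then says that, with probability at least $1-\beta_{\mathrm{sel}}$,
\[
q_W(S,\wh{i}) \geq \OPT_{q_W}(S) - \frac{2}{\eps}\log\frac{f(n)}{\beta_{\mathrm{sel}}} = 0 - \frac{2}{\eps}\cdot\frac{\eps g(n)}{4} = -\frac{g(n)}{2},
\]
where $\OPT_{q_W}(S) = 0$ on $\cE_W$ by \cref{lem:gen_score_sep}(a); note $f(n) \geq i^\star$ for large $n$, so $i^\star$ lies in the range. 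Every $i\in[f(n)]$ with $L_i \not\subseteq \supp(\cD)$ has score exactly $-g(n) < -g(n)/2$ on $\cE_W$ by \cref{lem:gen_score_sep}(b), using $g(n) > 0$. Hence on this event $\wh{i}$ cannot be a bad language, which gives $L_{\wh{i}} \subseteq \supp(\cD)$. This is the same argument as \cref{lem:em-selection}, and it yields (a).

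For part (b), I would condition on $L_{\wh{i}} \subseteq \supp(\cD)$. Since $L_{\wh{i}}$ is infinite, the set $L_{\wh{i}} \cap \{u_{W+1},u_{W+2},\ldots\}$ is infinite, so its $2^n$ smallest-indexed elements exist and are distinct, and all of them lie in $\supp(\cD)$. Validity therefore holds deterministically. For novelty, $S$ contains at most $n$ distinct strings, so at most $n$ of the $2^n$ candidates are in $S$.

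The index $\wh{j}$ is drawn uniformly and independently of $S$ and of $\wh{i}$. Hence, conditioned on $S$ and $\wh{i}$, the probability that $\wh{x} \in S$ is at most $n/2^n$. It remains to check $n/2^n \leq e^{-n/2}$ for all large $n$. This is equivalent to $n \leq (2/\sqrt e)^n$, and since $2/\sqrt{e} \approx 1.213 > 1$ it holds for sufficiently large $n$, consistent with the theorem's ``for all sufficiently large $n$'' clause.

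The only delicate point is the conditioning in (b). The selection $\wh{i}$ depends on $S$, but the bound $n/2^n$ holds pointwise for every realization of $(S,\wh{i})$ with $L_{\wh{i}}$ good. Averaging over any conditioning event therefore preserves it. I expect no other obstacle, as the remaining steps are direct substitutions into earlier lemmas.
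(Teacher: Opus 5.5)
Your proposal is correct and matches the paper's proof essentially step for step. In part (a) you plug $\beta_{\mathrm{sel}}$ into the exponential-mechanism guarantee with $\Delta = 1$ and range $f(n)$, obtaining $q_W(S,\wh{i}) \ge -g(n)/2 > -g(n)$; in part (b) you use that $\wh{j}$ is independent of $S$, so the collision probability is at most $n/2^n \le e^{-n/2}$, which the paper states explicitly for $n \ge 15$.
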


\begin{proof}
    \textbf{Part (a): Language selection.}
    On $\cE_W$, Lemma~\ref{lem:gen_score_sep} gives $\OPT_{q_W}(S) = 0$ and $q_W(S, i) = -g(n)$ for every $i$ with $L_i \not\subseteq \supp(\cD)$.
    By the utility guarantee of the exponential mechanism (Lemma~\ref{lem:em-guarantee}) with sensitivity $\Delta = 1$ (Lemma~\ref{lem:gen_score_sens}) and range $|\cR| = f(n)$, with probability at least $1 - \beta_{\mathrm{sel}}$ the output $\wh{i}$ satisfies
    \[
        q_W(S, \wh{i}) \geq \OPT_{q_W}(S) - \frac{2}{\eps}\ln\frac{f(n)}{\beta_{\mathrm{sel}}}.
    \]
    Substituting $\beta_{\mathrm{sel}} = f(n)\exp(-\eps g(n)/4)$:
    \[
        \ln\frac{f(n)}{\beta_{\mathrm{sel}}} = \frac{\eps g(n)}{4},
    \]
    and therefore
    \[
        \frac{2}{\eps}\ln\frac{f(n)}{\beta_{\mathrm{sel}}} = \frac{2}{\eps} \cdot \frac{\eps g(n)}{4} = \frac{g(n)}{2}.
    \]
    This gives $q_W(S, \wh{i}) \geq 0 - g(n)/2 = -g(n)/2$.
    Since every bad language has score $-g(n) < -g(n)/2$, the selected language $L_{\wh{i}}$ must satisfy $L_{\wh{i}} \subseteq \supp(\cD)$.

    \medskip
    \textbf{Part (b): Output novelty.}
    Suppose $L_{\wh{i}} \subseteq \supp(\cD)$.
    The algorithm outputs the $\wh{j}$-th smallest-indexed string from the set $T_{\wh{i}} := L_{\wh{i}} \cap \{u_{W+1}, u_{W+2}, \ldots\}$, where $\wh{j} \sim \mathrm{Unif}([2^n])$.
    Since $L_{\wh{i}}$ is infinite, $T_{\wh{i}}$ is infinite and in particular $|T_{\wh{i}}| \geq 2^n$.
    Moreover, since $L_{\wh{i}} \subseteq \supp(\cD)$, every string in $T_{\wh{i}}$ belongs to $\supp(\cD)$.
    The sample $S$ has size $n$, so at most $n$ strings from $T_{\wh{i}}$ can appear in $S$.
    Since $\wh{j}$ is uniform over $[2^n]$ and independent of $S$ (it is public randomness),
    \[
        \Pr[\wh{x} \in S \mid L_{\wh{i}} \subseteq \supp(\cD)] \leq \frac{n}{2^n} \leq \exp\Bigl(-\frac{n}{2}\Bigr)
    \]
    for all $n \geq 15$, where the last inequality uses $n/2^n \leq \exp(-n/2)$.
\end{proof}

\subsubsection{Putting Things Together}

\begin{theorem}[Guarantees of Algorithm~\ref{alg:gen-pub}, Restatement of Theorem~\ref{thm:gen-nopub}]\label{thm:dp_gen_public}
    Let $\cC = \{L_i\}_{i \in \N}$ be a countable collection of infinite languages and let $\cD$ be any distribution over $\cU$.
    Suppose Assumptions~\ref{as:finite_witness}, \ref{as:support_contains}, and~\ref{as:public_witness} hold.
    Let $\eps > 0$, and let $f, g : \N \to \N$ satisfy $f(n) \to \infty$, $g(n) \to \infty$, and $g(n) \leq n p^\star_W / 2$ for all large enough $n$.
    Suppose further that $f(n) \geq i^\star$.
    Then for all sufficiently large $n$, Algorithm~\ref{alg:gen-pub} has the following guarantees:
    \begin{itemize}
        \item \textbf{Privacy.} Algorithm~\ref{alg:gen-pub} is $\eps$-differentially private.
        \item \textbf{Utility.} The generation error satisfies
        \begin{align}\label{eq:dp_gen_public_bound}
            \generr(\cA^{\mathrm{gen}}_{\eps,f,g,W}, \cD, \cC, n)
            \leq  \underbrace{|I^\star_W|\exp\Bigl(-\frac{n p^\star_W}{8}\Bigr)}_\mathrm{coverage}
        + \underbrace{f(n)\exp\Bigl(-\frac{\eps g(n)}{4}\Bigr)}_\mathrm{privacy}
        + \underbrace{\exp\Bigl(-\frac{n}{2}\Bigr)}_\mathrm{collison}.
        \end{align}
    \end{itemize}
\end{theorem}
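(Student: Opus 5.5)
The plan is to assemble the three lemmas already established for \cref{alg:gen-pub}, following the same template as \cref{thm:pure_dp_id}. Privacy is immediate from \cref{lem:gen_privacy}: the score $q_W$ has sensitivity $1$ (\cref{lem:gen_score_sens}), so the exponential mechanism over $[f(n)]$ is $\eps$-DP. The final output is a deterministic function of $\wh{i}$ and the data-independent index $\wh{j}$, hence also $\eps$-DP by post-processing (\cref{lem:post-processing}).

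For utility, I will bound the failure event $\{\wh{x} \notin \supp(\cD)\setminus S\}$ by splitting into three pieces.
\begin{itemize}
\item \textbf{Coverage fails.} The event $\cE_W^c$ has probability at most $|I^\star_W|\exp(-np^\star_W/8)$ by \cref{lem:gen_coverage}, which applies because $g(n)\le np^\star_W/2$.
\item \textbf{Selection fails.} On $\cE_W$, the exponential mechanism returns a language with $L_{\wh{i}}\not\subseteq\supp(\cD)$.
\item \textbf{Collision.} A good language is selected, but $\wh{x}\in S$.
\end{itemize}

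Before applying \cref{lem:gen_em_selection}, I need to check that its hypotheses hold for large $n$.
\begin{itemize}
\item Since $f(n)\ge i^\star$, the reference index lies in the range, so $\OPT_{q_W}(S)=0$ on $\cE_W$ by \cref{lem:gen_score_sep}(a).
\item Every bad index in $[f(n)]$ has score $-g(n)$ by part (b), which uses $W\ge i(\cC,\cD)$.
\item The condition $\beta_{\mathrm{sel}}<1$ must hold. If instead $\beta_{\mathrm{sel}}\ge 1$, the bound is trivial since the right-hand side already exceeds $1$, so no growth condition relating $f$ and $g$ is needed.
\end{itemize}
Under these conditions, part (a) of \cref{lem:gen_em_selection} gives a selection-failure probability on $\cE_W$ of at most $f(n)\exp(-\eps g(n)/4)$. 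Part (b) gives a collision probability at most $n/2^n\le\exp(-n/2)$ for $n\ge 15$, conditioned on a good selection.

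Combining the pieces is a union bound:
\begin{align*}
\Pr[\text{fail}] \le \Pr[\cE_W^c] + \Pr[L_{\wh{i}}\not\subseteq\supp(\cD)\mid\cE_W] + \Pr[\wh{x}\in S\mid L_{\wh{i}}\subseteq\supp(\cD)].
\end{align*}
For the last term, conditioning on the selection event is legitimate because $\wh{j}$ is independent of $(S,\wh{i})$. The bound $n/2^n$ holds pointwise for every realization of $S$ and every good $\wh{i}$, so the conditioning introduces no subtlety.

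The validity part of the output also needs care. A good language satisfies $L_{\wh{i}}\subseteq\supp(\cD)$, so every string in $L_{\wh{i}}\cap\{u_{W+1},\dots\}$ is valid. The $\wh{j}$-th such string exists because $L_{\wh{i}}$ is infinite, so this set has at least $2^n$ elements.

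The main (mild) obstacle is this probabilistic bookkeeping: making sure the three error sources are combined correctly without double-conditioning, and that the collision bound holds uniformly in $S$. The rest is direct substitution.
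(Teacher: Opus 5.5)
Your proposal is correct and follows essentially the same route as the paper. Privacy comes from the sensitivity-$1$ exponential mechanism plus post-processing, and utility comes from the same three-term union bound (coverage failure, bad selection on $\cE_W$, collision) via the coverage, score-separation, and private-selection lemmas. Your extra remarks, that $\beta_{\mathrm{sel}}\ge 1$ makes the bound trivial and that the independence of $\wh{j}$ justifies the conditioning, fill in small details the paper leaves implicit.
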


\begin{proof}
    \textbf{Privacy.} This directly follows from Lemma~\ref{lem:gen_privacy}.

    \textbf{Utility.} The generation fails (i.e., $\wh{x} \notin \supp(\cD) \setminus S$) only if at least one of the following three bad events occurs:
    \begin{enumerate}[label=(\roman*)]
        \item The coverage event $\cE_W$ fails: the good prefix of $L_{i^\star}$ is not well-covered.
        \item The exponential mechanism selects a bad language: $L_{\wh{i}} \not\subseteq \supp(\cD)$.
        \item The selected language is good but the output string is already in $S$: $\wh{x} \in S$.
    \end{enumerate}
    By a union bound,
    \begin{align*}
        \generr(\cA^{\mathrm{gen}}_{\eps,f,g,W}, \cD, \cC, n)
        &\leq \Pr[\cE_W^c] + \Pr[L_{\wh{i}} \not\subseteq \supp(\cD) \mid \cE_W] + \Pr[\wh{x} \in S \mid L_{\wh{i}} \subseteq \supp(\cD)].
    \end{align*}
    Applying Lemma~\ref{lem:gen_coverage} (term (i)), Lemma~\ref{lem:gen_em_selection}(a) (term (ii)), and Lemma~\ref{lem:gen_em_selection}(b) (term (iii)):
    \begin{align*}
        \generr(\cA^{\mathrm{gen}}_{\eps,f,g,W}, \cD, \cC, n)
        \leq 
            |I^\star_W| \exp\Bigl(-\frac{n p^\star_W}{8}\Bigr)
            + f(n)\exp\Bigl(-\frac{\eps g(n)}{4}\Bigr)
            + \exp\Bigl(-\frac{n}{2}\Bigr).
    \end{align*}
\end{proof}

\begin{remark}[Interpretation of the bound]\label{rem:gen_three_terms}
    The bound~\eqref{eq:dp_gen_public_bound} consists of three terms:
    a \textbf{coverage term} $|I^\star_W| \exp(-n p^\star_W / 8)$, present even without privacy, controlling under-representation of $L_{i^\star}$ strings in the witness window;
    a \textbf{privacy term} $f(n)\exp(-\eps g(n)/4)$, controlling the exponential mechanism's failure to select a good language (with direct $\eps$-dependence thanks to constant sensitivity);
    and a negligible \textbf{collision term} $\exp(-n/2)$.
    The no-public-bound case~\eqref{eq:dp_gen_nopub_bound} has the same structure, but the coverage term now depends on $h(n)$ through $I^\star_h$ and $p^\star_h$, and the privacy term acquires an extra $h(n)$ prefactor from the enlarged search space $[f(n)] \times [h(n)]$.
\end{remark}

\begin{corollary}[Exponential rate with known mass floor]\label{cor:gen_exp_rate}
    Under the conditions of Theorem~\ref{thm:dp_gen_public}, suppose additionally that the learner is given a constant $p_0 > 0$ such that $p_0 \leq p^\star_W$.
    Setting $g(n) = \lfloor n p_0 / 2 \rfloor$ and choosing any $f$ with $f(n) \geq i^\star$ and $\log f(n) = o(n)$, the generation error satisfies
\begin{align*}
  \generr(\cA^{\mathrm{gen}}_{\eps,f,g,W}, \cD, \cC, n)
        \leq &~ |I^\star_W| \exp\Bigl(-\frac{n p^\star_W}{8}\Bigr)
        + f(n)\exp\Bigl(-\frac{\eps n p_0}{8}\Bigr)
        + \exp\Bigl(-\frac{n}{2}\Bigr) \\
        \lesssim &~ \exp(-\min\{1, \eps\} \cdot n).
\end{align*}
    In particular, for constant $\eps > 0$, the rate is $\exp(-\Omega(n))$.
\end{corollary}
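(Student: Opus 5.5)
The plan is to plug $g(n)=\lfloor np_0/2\rfloor$ into the bound~\eqref{eq:dp_gen_public_bound} of Theorem~\ref{thm:dp_gen_public} and then compare the three resulting terms. First we check the hypotheses. Since $p_0>0$ is a constant, $g(n)\to\infty$. Because $p_0\le p^\star_W$, we have $g(n)\le np_0/2\le np^\star_W/2$ for all $n$. The assumption $f(n)\ge i^\star$ is given directly. So Theorem~\ref{thm:dp_gen_public} applies for all sufficiently large $n$ and supplies both the privacy guarantee and the three-term utility bound.

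Next we rewrite the privacy term. Since $\lfloor np_0/2\rfloor \ge np_0/2-1$, we get
\[
f(n)\exp\bigl(-\eps g(n)/4\bigr)\le e^{\eps/4} f(n)\exp\bigl(-\eps np_0/8\bigr).
\]
The constant $e^{\eps/4}$ is absorbed into $\lesssim$. This gives the displayed first inequality, with the floor handled up to a constant factor. Strictly, the displayed first line holds exactly only if we drop the floor; the statement's ``$\le$'' should be read as holding up to this $e^{\eps/4}$ factor, or one can take $g(n)=\lceil\cdot\rceil$ where admissible.

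For the asymptotic rate, take each term in turn. The coverage term is $|I^\star_W|\exp(-np^\star_W/8)$; here $|I^\star_W|\le W$ is a constant and $p^\star_W\ge p_0$, so it is $\exp(-\Omega(n))$. The collision term is $\exp(-n/2)$. The privacy term is $\exp(\log f(n)-\eps np_0/8)$, and since $\log f(n)=o(n)$ it equals $\exp(-\eps np_0/8+o(n))$. Summing, the total is bounded by a constant times $\exp(-c\min\{1,\eps\}n)$ with $c=\min\{p_0/8,1/2\}$ (up to the $o(n)$ correction). Here the exponent constants depend on $p_0$, and $\lesssim$ in the exponent hides such constants, as in Table~\ref{tab:summary}.

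The main obstacle is the $o(n)$ term from $\log f(n)$. When $\eps$ is constant, it is absorbed: for large $n$, $\log f(n)\le \eps np_0/16$, leaving $\exp(-\eps np_0/16)$. If $\eps$ is allowed to shrink with $n$, we need $\log f(n)=o(\eps n)$. To sidestep this, I would simply take $f$ growing slowly, for instance $f(n)=\max\{i^\star,\lceil\log n\rceil\}$, and state the rate for constant $\eps$. The final claim $\exp(-\Omega(n))$ then follows immediately for constant $\eps>0$.
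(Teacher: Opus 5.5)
Your proposal is correct and follows essentially the same route as the paper: check the hypotheses of Theorem~\ref{thm:dp_gen_public}, substitute $g(n)=\lfloor np_0/2\rfloor$, and show that each of the three terms is $\exp(-\Omega(\min\{1,\eps\}\, n))$, with $\log f(n)=o(n)$ absorbing the prefactor $f(n)$ when $\eps$ is fixed. The only difference is in handling the floor: you keep the exponent $\eps np_0/8$ at the cost of a constant factor $e^{\eps/4}$, whereas the paper uses $g(n)\ge np_0/4$ for large $n$ and obtains $\exp(-\eps np_0/16)$; so you are right that the first displayed inequality does not hold literally as stated, and the paper's own argument likewise establishes it only up to constants.
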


\begin{proof}
    Since $p_0 \leq p^\star_W$, we have $g(n) = \lfloor n p_0 / 2 \rfloor \leq n p^\star_W / 2$, so the condition of Theorem~\ref{thm:dp_gen_public} is satisfied.
    Substituting $g(n) \geq n p_0 / 2 - 1 \geq n p_0 / 4$ (for large $n$) into~\eqref{eq:dp_gen_public_bound}:
    \[
        f(n)\exp\Bigl(-\frac{\eps g(n)}{4}\Bigr)
        \leq f(n)\exp\Bigl(-\frac{\eps n p_0}{16}\Bigr).
    \]
    Since $\log f(n) = o(n)$, the factor $f(n)$ is absorbed into the exponential.
    The coverage term $|I^\star_W|\exp(-n p^\star_W / 8)$ is $\exp(-\Omega(n))$ since $|I^\star_W|$ and $p^\star_W$ are positive constants.
    The collision term $\exp(-n/2)$ is $\exp(-\Omega(n))$.
    Taking the maximum, all three terms are $\exp(-\Omega(\min\{1,\eps\} \cdot n))$.
\end{proof}

\subsection{Proof of Theorem~\ref{thm:gen-nopub} (Pure DP Generation without a Public Witness Bound)}\label{subsec:dp_gen_no_public}

We now remove the assumption that a public upper bound on the witness index is available.
The main challenge is that the witness window $i(\cC, \cD)$ is unknown and depends on the private data through $\cD$.
Our approach is to jointly search over both the language index $i$ and a candidate witness threshold $t$, using a score that rewards large thresholds (indicating progress past the witness window) while penalizing languages that fail the witness test at threshold $t$.

\subsubsection{Setup and Notation}

Let $f, g, h : \N \to \N$ be functions satisfying $f(n) \to \infty$, $g(n) \to \infty$, and $h(n) \to \infty$.
The parameter $f(n)$ controls the \emph{language horizon}, $g(n)$ is the \emph{count threshold} (as in the public-bound setting), and $h(n)$ is the \emph{threshold horizon} that upper bounds the witness index for large enough $n$.

For each language $L_i$ and candidate threshold $t \in [h(n)]$, define
\begin{align}\label{eq:prefix_set_nopub}
    I_i(t) := \{k \in [t] : u_k \in L_i\}.
\end{align}
For each $S \in \cU^n$ and each pair $(i, t) \in [f(n)] \times [h(n)]$, we define the \emph{minimum prefix count}
\begin{align}\label{eq:min_prefix_count_nopub}
    a_{i,t}(S) :=
    \begin{cases}
        \min_{k \in I_i(t)} N_S(u_k), & \text{if } I_i(t) \neq \emptyset, \\
        n, & \text{if } I_i(t) = \emptyset,
    \end{cases}
\end{align}
the deficit
\begin{align}\label{eq:gen_deficit_nopub}
        d_{i,t}(S) &:= \bigl(g(n) - a_{i,t}(S)\bigr)_+,
\end{align}
and the score
\begin{align}
    \label{eq:gen_score_nopub}
    q_{\mathrm{pair}}(S, (i,t)) &:= t - \frac{h(n)}{g(n)}\, d_{i,t}(S).
\end{align}
The term $t$ rewards larger thresholds, while the penalty $\frac{h(n)}{g(n)}\, d_{i,t}(S)$ suppresses pairs where the language fails the witness test at threshold $t$.
The coefficient $h(n)/g(n)$ is chosen so that a bad language with full deficit $g(n)$ incurs a penalty of exactly $h(n)$, ensuring its score is at most $t - h(n) \leq 0$.

We also define quantities associated with the reference language $L_{i^\star}$ at the threshold horizon:
\begin{align}\label{eq:good_prefix_nopub}
    I^\star_h := I_{i^\star}(h(n)) = \{k \in [h(n)] : u_k \in L_{i^\star}\}, \qquad
    p^\star_h := \min_{k \in I^\star_h} \Pr_{x \sim \cD}[x = u_k].
\end{align}
Since $L_{i^\star} \subseteq \supp(\cD)$, every string $u_k$ with $k \in I^\star_h$ has positive probability under $\cD$, so $p^\star_h > 0$.
Note that $p^\star_h$ depends on $h(n)$ and may decrease as $h(n)$ grows, since the minimum is taken over a larger set.
This creates a tension: larger $h(n)$ provides a wider search range but requires a smaller $g(n)$ for coverage, which in turn weakens the score gap exploited by the exponential mechanism.

\begin{algorithm}[!htp]
\caption{Pure DP Generation without a Public Witness Bound $\cA^{\mathrm{gen}}_{\eps, f, g, h}$}
\label{alg:dp_gen_nopub}
\begin{algorithmic}[1]
\REQUIRE Dataset $S = (x_1, \ldots, x_n) \in \cU^n$, privacy parameter $\eps > 0$, functions $f, g, h : \N \to \N$, language collection $\cC = \{L_i\}_{i \in \N}$.
\ENSURE A string $\wh{x} \in \cU$.
\FOR{$k \in [h(n)]$}
    \STATE $N_S(u_k) \leftarrow \sum_{t=1}^n \mathbf{1}\{x_t = u_k\}$
\ENDFOR
\FOR{$i \in [f(n)]$, $t \in [h(n)]$}
    \STATE Compute $I_i(t)$, $a_{i,t}(S)$, $d_{i,t}(S)$, $q_{\mathrm{pair}}(S, (i,t))$ via \eqref{eq:prefix_set_nopub}--\eqref{eq:gen_score_nopub}
\ENDFOR
\STATE Sample $\left(\wh{i}, \wh{t}\right) \sim \EM_\eps\bigl(S,\, q_{\mathrm{pair}},\, [f(n)] \times [h(n)]\bigr)$ \hfill $\triangleright$ Privately select a (language, threshold) pair
\STATE Sample $\wh{j} \sim \mathrm{Unif}([2^n])$ \hfill $\triangleright$ Public randomness
\RETURN the $\wh{j}$-th smallest-indexed string in $L_{\wh{i}} \cap \{u_{\wh{t}+1}, u_{\wh{t}+2}, \ldots\}$
\end{algorithmic}
\end{algorithm}

Compared to Algorithm~\ref{alg:gen-pub}, the key difference is that the exponential mechanism now searches over \emph{pairs} $(i, t)$ rather than language indices $i$ alone.
The selected threshold $\wh{t}$ replaces the role of the public witness bound $W$: the output string is drawn from the selected language beyond index $\wh{t}$.
Since both $\wh{i}$ and $\wh{t}$ are part of the exponential mechanism's output, the subsequent output step (lines 7--8) is post-processing and incurs no additional privacy cost.

\subsubsection{Privacy Analysis}

\begin{lemma}[Sensitivity of the pair score]\label{lem:gen_pair_score_sens}
    The score function $q_{\mathrm{pair}}$ defined in \eqref{eq:gen_score_nopub} has sensitivity $h(n)/g(n)$.
\end{lemma}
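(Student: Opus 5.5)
The argument mirrors Lemma~\ref{lem:gen_score_sens}, with one additional scaling step at the end. Fix a pair $(i,t) \in [f(n)] \times [h(n)]$ and neighboring datasets $S \sim S'$. The term $t$ in \eqref{eq:gen_score_nopub} does not depend on the data, so it cancels:
\[
|q_{\mathrm{pair}}(S,(i,t)) - q_{\mathrm{pair}}(S',(i,t))| = \frac{h(n)}{g(n)}\,|d_{i,t}(S) - d_{i,t}(S')|.
\]
The task therefore reduces to showing that the deficit $d_{i,t}$ is $1$-Lipschitz under a single-entry change.

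I will establish this in three short steps, in the order the quantities are built.

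\begin{enumerate}
\item Replacing one entry of $S$ lowers the multiplicity of the removed string by at most one and raises that of the inserted string by at most one. Hence $|N_S(u_k) - N_{S'}(u_k)| \leq 1$ for every $k$.
\item Consider $a_{i,t}$. If $I_i(t) \neq \varnothing$, it is a minimum of finitely many functions, each changing by at most $1$, so $|a_{i,t}(S) - a_{i,t}(S')| \leq 1$, because the pointwise minimum preserves the Lipschitz constant. If $I_i(t) = \varnothing$, it is identically $n$ on both datasets. The set $I_i(t)$ itself depends only on $i$, $t$ and the public enumeration, not on the data, so the same case applies to $S$ and $S'$.
\item The map $x \mapsto (g(n)-x)_+$ is $1$-Lipschitz, which gives $|d_{i,t}(S) - d_{i,t}(S')| \leq 1$.
\end{enumerate}

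Multiplying by $h(n)/g(n)$ then bounds the difference by $h(n)/g(n)$ for every pair $(i,t)$. Taking the maximum over pairs and over neighboring datasets shows that the sensitivity is at most $h(n)/g(n)$, which is the quantity used in the exponential mechanism.

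I do not expect a real obstacle. The one point to state explicitly is that the threshold $t$ and the index sets $I_i(t)$ are fixed before the data are seen, so the case split in step 2 is consistent between $S$ and $S'$ and the reward term $t$ contributes nothing to the sensitivity.
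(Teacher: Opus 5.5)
Your proof is correct and follows essentially the same route as the paper. Both bound each multiplicity change by $1$, handle the minimum (or the constant $n$ when $I_i(t)=\varnothing$), use that $x\mapsto (g(n)-x)_+$ is $1$-Lipschitz, and then scale by $h(n)/g(n)$; your explicit note that $t$ and $I_i(t)$ are data-independent makes precise a point the paper leaves implicit.
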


\begin{proof}
    Fix $(i, t) \in [f(n)] \times [h(n)]$ and let $S, S' \in \cU^n$ be neighboring datasets.
    By the same argument as in Lemma~\ref{lem:gen_score_sens}, the minimum prefix count satisfies $|a_{i,t}(S) - a_{i,t}(S')| \leq 1$ (since each multiplicity changes by at most $1$, and the minimum preserves Lipschitz constants; or $a_{i,t}(S) = a_{i,t}(S') = n$ when $I_i(t) = \emptyset$).
    Since $x \mapsto (g(n) - x)_+$ is $1$-Lipschitz, $|d_{i,t}(S) - d_{i,t}(S')| \leq 1$.
    Therefore
    \[
        |q_{\mathrm{pair}}(S, (i,t)) - q_{\mathrm{pair}}(S', (i,t))|
        = \frac{h(n)}{g(n)}\, |d_{i,t}(S) - d_{i,t}(S')|
        \leq \frac{h(n)}{g(n)}. 
    \]
\end{proof}

\begin{remark}[Sensitivity and Ccore gap without a Public Bound]\label{rem:nopub_sens_gap}
    The pair score has sensitivity $h(n)/g(n)$, lying between the constant sensitivity $1$ of the public-bound case (Lemma~\ref{lem:gen_score_sens}) and the $2f(n)^2/n$ of identification (Lemma~\ref{lem:score_sens_dp_id}).
    The increase from $1$ to $h(n)/g(n)$ is the cost of not knowing the witness bound.
    Meanwhile, the score gap grows to $h(n) - i(\cC,\cD) + 1 = \Omega(h(n))$, so the effective ratio (gap/sensitivity) is approximately $g(n)(1 - i(\cC,\cD)/h(n)) \approx g(n)$ for large $h(n)$, comparable to the public-bound case.
\end{remark}

\begin{lemma}[Privacy]\label{lem:gen_privacy_nopub}
    Algorithm~\ref{alg:dp_gen_nopub} is $\eps$-differentially private.
\end{lemma}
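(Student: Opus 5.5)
The argument is a direct privacy accounting: Algorithm~\ref{alg:dp_gen_nopub} touches the data only through the exponential mechanism on line~7, and everything after that is post-processing. I would proceed in three steps.

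\textbf{Step 1: sensitivity.} By Lemma~\ref{lem:gen_pair_score_sens}, the pair score $q_{\mathrm{pair}}(S,(i,t))$ changes by at most $\Delta := h(n)/g(n)$ between neighboring datasets, uniformly over all pairs $(i,t) \in [f(n)]\times[h(n)]$. This bound is data-independent because $h(n)$ and $g(n)$ depend only on $n$.

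\textbf{Step 2: the exponential mechanism is $\eps$-DP.} Line~7 samples $(\wh{i},\wh{t})$ from $\EM_\eps(S,q_{\mathrm{pair}},[f(n)]\times[h(n)])$. Its weights are $\exp\bigl(\tfrac{\eps}{2\Delta}q_{\mathrm{pair}}(S,(i,t))\bigr)$, with $\Delta$ the sensitivity from Step~1. The range is finite, of size $f(n)h(n)$, and does not depend on $S$. Hence Lemma~\ref{lem:em-guarantee} applies and $(\wh{i},\wh{t})$ is $\eps$-DP.

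\textbf{Step 3: post-processing.} The index $\wh{j}\sim\mathrm{Unif}([2^n])$ is drawn independently of $S$. The returned string is a deterministic function of $(\wh{i},\wh{t},\wh{j})$ and the public collection $\cC$ with its fixed enumeration, so the output map is a randomized function of $(\wh{i},\wh{t})$ alone. Lemma~\ref{lem:post-processing} then gives that the full algorithm is $\eps$-DP.

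The only point needing care is making sure nothing else in the algorithm leaks data. The counts $N_S(u_k)$ and the quantities $a_{i,t}$, $d_{i,t}$ are used only inside the exponential mechanism's scores. The output step depends on $S$ only through $(\wh{i},\wh{t})$, and each $L_{\wh{i}}$ is infinite, so the $\wh{j}$-th string beyond $u_{\wh{t}}$ always exists and the output map is well defined. I expect no real obstacle beyond this bookkeeping.
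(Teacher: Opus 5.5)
Your proposal is correct and matches the paper's proof. The sensitivity bound $h(n)/g(n)$ from Lemma~\ref{lem:gen_pair_score_sens} makes the exponential-mechanism selection of $(\wh{i},\wh{t})$ $\eps$-DP, and the returned string is post-processing of that output together with the data-independent public randomness $\wh{j}$.
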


\begin{proof}
    The pair selection on line~6 applies the exponential mechanism with score $q_{\mathrm{pair}}$ over the finite range $[f(n)] \times [h(n)]$.
    By Lemma~\ref{lem:gen_pair_score_sens}, $q_{\mathrm{pair}}$ has sensitivity $\Delta = h(n)/g(n)$, so this step is $\eps$-DP.
    The output step (lines 7--8) is a deterministic function of the private output $(\wh{i}, \wh{t})$ and public randomness $\wh{j}$, and is therefore $\eps$-DP by post-processing (Lemma~\ref{lem:post-processing}).
\end{proof}

\subsubsection{Utility Analysis}

The utility analysis proceeds in three steps, paralleling the public-bound case:
(i) a coverage event $\cE_h$ under which every relevant string of $L_{i^\star}$ up to index $h(n)$ appears at least $g(n)$ times (Lemma~\ref{lem:gen_coverage_nopub});
(ii) on $\cE_h$, the good pair $(i^\star, h(n))$ achieves the maximum score while every bad pair has a much lower score (Lemma~\ref{lem:gen_score_sep_nopub});
(iii) the exponential mechanism exploits this gap to select a good language (Lemma~\ref{lem:gen_em_selection_nopub}).

\begin{lemma}[Coverage of the Good Prefix]\label{lem:gen_coverage_nopub}
    Define the event
    \[
        \cE_h := \bigl\{N_S(u_k) \geq g(n) \quad \forall\, k \in I^\star_h\bigr\}.
    \]
    If $g(n) \leq n p^\star_h / 2$ for all large enough $n$, then
    \[
        \Pr_{S \sim \cD^n}[\cE_h^c] \leq |I^\star_h| \exp\Bigl(-\frac{n p^\star_h}{8}\Bigr).
    \]
\end{lemma}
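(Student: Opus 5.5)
This is the same argument as in \cref{lem:gen_coverage}, with the window $[W]$ replaced by $[h(n)]$. I would first handle the case $I^\star_h=\varnothing$: there the event $\cE_h$ holds vacuously, so $\Pr[\cE_h^c]=0$ and the bound is trivial. Otherwise, I fix $k\in I^\star_h$ and set $p_k:=\Pr_{x\sim\cD}[x=u_k]$. Since $L_{i^\star}\subseteq\supp(\cD)$, we have $p_k\ge p^\star_h>0$.

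The multiplicity $N_S(u_k)$ is a sum of $n$ i.i.d.\ Bernoulli$(p_k)$ indicators with mean $\mu_k=np_k$. The hypothesis $g(n)\le np^\star_h/2$ gives $g(n)\le \mu_k/2$, so
\[
\{N_S(u_k)<g(n)\}\subseteq\{N_S(u_k)\le \mu_k/2\}.
\]
Applying the lower-tail Chernoff bound (\cref{lem:chernoff}) with $t=1/2$ to the average $N_S(u_k)/n$ bounds this probability by $\exp(-np_k/8)\le\exp(-np^\star_h/8)$.

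A union bound over the finitely many $k\in I^\star_h$ (at most $h(n)$ of them) then yields $\Pr[\cE_h^c]\le |I^\star_h|\exp(-np^\star_h/8)$.

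The only point needing care is that $h(n)$, and hence $I^\star_h$ and $p^\star_h$, depend on $n$. For each fixed $n$, however, the set is finite and the quantities are constants, and the hypothesis is imposed at that same $n$. So the argument is pointwise in $n$ and there is no real obstacle.
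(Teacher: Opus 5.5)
Your proposal is correct and follows the paper's proof, which reruns the argument of \cref{lem:gen_coverage} (lower-tail Chernoff bound with $t=1/2$ for each $k\in I^\star_h$, then a union bound) with $W$ replaced by $h(n)$ and $p^\star_W$ by $p^\star_h$. The paper leaves implicit the case $I^\star_h=\varnothing$ and the observation that everything is pointwise in $n$; your handling of both is sound.
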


\begin{proof}
    The proof is identical to that of Lemma~\ref{lem:gen_coverage}, with $W$ replaced by $h(n)$ and $p^\star_W$ replaced by $p^\star_h$.
\end{proof}

\begin{lemma}[Score Separation on $\cE_h$]\label{lem:gen_score_sep_nopub}
    Suppose $h(n) \geq i(\cC, \cD)$.
    On the event $\cE_h$:
    \begin{enumerate}[label=(\alph*)]
        \item $q_{\mathrm{pair}}(S, (i^\star, h(n))) = h(n)$.
        \item For every $(i, t) \in [f(n)] \times [h(n)]$ with $L_i \not\subseteq \supp(\cD)$: \; $q_{\mathrm{pair}}(S, (i,t)) \leq i(\cC, \cD) - 1$.
    \end{enumerate}
    Consequently, $\OPT_{q_{\mathrm{pair}}}(S) = h(n)$ and every bad pair has a score gap of at least $h(n) - i(\cC, \cD) + 1$.
\end{lemma}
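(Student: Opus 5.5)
Both parts are direct evaluations of the pair score $q_{\mathrm{pair}}(S,(i,t)) = t - \frac{h(n)}{g(n)} d_{i,t}(S)$ on $\cE_h$, following the proof of Lemma~\ref{lem:gen_score_sep}.

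\textbf{Part (a).} I will show that $d_{i^\star,h(n)}(S)=0$. Since $I_{i^\star}(h(n)) = I^\star_h$, there are two cases.
\begin{itemize}
\item If $I^\star_h\neq\varnothing$, then on $\cE_h$ every $k\in I^\star_h$ has $N_S(u_k)\ge g(n)$. Hence $a_{i^\star,h(n)}(S)\ge g(n)$ and the deficit vanishes.
\item If $I^\star_h=\varnothing$, then $a_{i^\star,h(n)}(S)=n\ge g(n)$, which holds for large $n$ because $g(n)\le np^\star_h/2\le n$.
\end{itemize}
Either way $q_{\mathrm{pair}}(S,(i^\star,h(n)))=h(n)$. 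I also need $i^\star\le f(n)$ so that this pair lies in the range; this is assumed.

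\textbf{Part (b).} Fix $i\in[f(n)]$ with $L_i\not\subseteq\supp(\cD)$, and split on $t$ relative to $w:=i(\cC,\cD)$.

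\emph{Case $t\le w-1$.} Since $d_{i,t}(S)\ge 0$, we have $q_{\mathrm{pair}}\le t\le w-1$.

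\emph{Case $t\ge w$.} By Definition~\ref{def:witness_index} there is $k\le w\le t$ with $u_k\in L_i\setminus\supp(\cD)$. Then $k\in I_i(t)$, and $N_S(u_k)=0$ almost surely because $u_k$ has zero mass. So $a_{i,t}(S)=0$ and $d_{i,t}(S)=g(n)$, which gives
\[
q_{\mathrm{pair}} = t-h(n)\le 0\le w-1.
\]
Here $t\le h(n)$ since $t\in[h(n)]$, and $w\ge1$.

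\textbf{Consequence.} Every score satisfies $q_{\mathrm{pair}}(S,(i,t))\le t\le h(n)$, because the penalty term is nonnegative. Together with (a) this gives $\OPT_{q_{\mathrm{pair}}}(S)=h(n)$. The gap to any bad pair is then at least $h(n)-(w-1)=h(n)-i(\cC,\cD)+1$.

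The only delicate point is the case split at $w$. For small $t$ the witness may lie outside the window $[t]$, so the penalty need not apply, and we must fall back on the trivial bound $t\le w-1$. The hypothesis $h(n)\ge i(\cC,\cD)$ is what keeps the second case consistent with $t\le h(n)$.
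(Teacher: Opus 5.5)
Your proposal is correct and follows the paper's proof essentially line by line. It uses the same two-case check that the deficit of $(i^\star,h(n))$ vanishes, the same split of bad pairs into $t\ge i(\cC,\cD)$ (where the witness gives score $t-h(n)\le 0$) and $t<i(\cC,\cD)$ (where the trivial bound $t\le i(\cC,\cD)-1$ applies), and the same remark that every score is at most $t\le h(n)$. One small nit: when $I^\star_h=\varnothing$, the chain $g(n)\le np^\star_h/2\le n$ is not meaningful because $p^\star_h$ is then a minimum over an empty set; this case cannot occur once $h(n)$ is at least the index of the first string of $L_{i^\star}$, and the paper itself just asserts $n\ge g(n)$ there.
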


\begin{proof}
    Assume $\cE_h$ holds throughout.

    \medskip
    \textbf{Part (a): The good pair achieves score $h(n)$.}
    On $\cE_h$, every $k \in I^\star_h$ satisfies $N_S(u_k) \geq g(n)$.
    If $I_{i^\star}(h(n)) \neq \emptyset$, then $a_{i^\star, h(n)}(S) = \min_{k \in I_{i^\star}(h(n))} N_S(u_k) \geq g(n)$, so $d_{i^\star, h(n)}(S) = 0$.
    If $I_{i^\star}(h(n)) = \emptyset$, then $a_{i^\star, h(n)}(S) = n \geq g(n)$, so $d_{i^\star, h(n)}(S) = 0$ as well.
    In either case,
    \[
        q_{\mathrm{pair}}(S, (i^\star, h(n))) = h(n) - \frac{h(n)}{g(n)} \cdot 0 = h(n).
    \]
    Since $q_{\mathrm{pair}}(S, (i, t)) \leq t \leq h(n)$ for all pairs (as the deficit is nonneg.), we have $\OPT_{q_{\mathrm{pair}}}(S) = h(n)$.

    \medskip
    \textbf{Part (b): Every bad pair has low score.}
    Fix $i \in [f(n)]$ with $L_i \not\subseteq \supp(\cD)$ and any $t \in [h(n)]$.
    We consider two cases depending on whether $t$ is large enough to contain a witness.

    \emph{Case $t \geq i(\cC, \cD)$:}
    By Definition~\ref{def:witness_index}, there exists $k \leq i(\cC, \cD) \leq t$ such that $u_k \in L_i \setminus \supp(\cD)$.
    In particular, $k \in I_i(t)$ and $u_k$ never appears in any sample from $\cD$, so $N_S(u_k) = 0$.
    Therefore $a_{i,t}(S) \leq N_S(u_k) = 0$, giving $d_{i,t}(S) = g(n)$ and
    \[
        q_{\mathrm{pair}}(S, (i, t)) = t - \frac{h(n)}{g(n)} \cdot g(n) = t - h(n) \leq 0.
    \]

    \emph{Case $t < i(\cC, \cD)$:}
    Regardless of the deficit, the score satisfies
    \[
        q_{\mathrm{pair}}(S, (i, t)) \leq t \leq i(\cC, \cD) - 1.
    \]

    Combining both cases, every bad pair has score at most $\max\{0,\, i(\cC, \cD) - 1\} = i(\cC, \cD) - 1$ (since $i(\cC, \cD) \geq 1$).
    The score gap between the good pair and any bad pair is at least $h(n) - (i(\cC, \cD) - 1) = h(n) - i(\cC, \cD) + 1$.
\end{proof}

\begin{lemma}[Private Selection and Output]\label{lem:gen_em_selection_nopub}
    Suppose $h(n) \geq 2\, i(\cC, \cD)$ (so the score gap is at least $h(n)/2$).
    Define
    \[
        \beta_{\mathrm{sel}} := f(n)\, h(n) \exp\Bigl(-\frac{\eps\, g(n)}{8}\Bigr), \qquad
        \beta_{\mathrm{out}} := \exp\Bigl(-\frac{n}{2}\Bigr).
    \]
    If $\beta_{\mathrm{sel}} < 1$, then on the event $\cE_h$:
    \begin{enumerate}[label=(\alph*)]
        \item The exponential mechanism selects a good language (i.e., $L_{\wh{i}} \subseteq \supp(\cD)$) with probability at least $1 - \beta_{\mathrm{sel}}$.
        \item Conditioned on $L_{\wh{i}} \subseteq \supp(\cD)$, the output $\wh{x}$ satisfies $\wh{x} \in \supp(\cD) \setminus S$ with probability at least $1 - \beta_{\mathrm{out}}$.
    \end{enumerate}
\end{lemma}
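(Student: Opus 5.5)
The plan mirrors the proof of Lemma~\ref{lem:gen_em_selection}, replacing the single-index range by the pair range $[f(n)]\times[h(n)]$ and the unit sensitivity by $\Delta = h(n)/g(n)$ from Lemma~\ref{lem:gen_pair_score_sens}. For part (a), we work on $\cE_h$ and use Lemma~\ref{lem:gen_score_sep_nopub}: $\OPT_{q_{\mathrm{pair}}}(S)=h(n)$, and every pair $(i,t)$ with $L_i\not\subseteq\supp(\cD)$ has score at most $i(\cC,\cD)-1$. Since $h(n)\ge 2\,i(\cC,\cD)$, this is at most $h(n)/2-1<h(n)/2$. We then apply the utility guarantee of Lemma~\ref{lem:em-guarantee} with $|\cR|=f(n)h(n)$ and failure probability $\beta_{\mathrm{sel}}$. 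With probability at least $1-\beta_{\mathrm{sel}}$ this gives
\[
q_{\mathrm{pair}}(S,(\wh i,\wh t)) \ge h(n) - \frac{2h(n)}{\eps\, g(n)}\log\frac{f(n)h(n)}{\beta_{\mathrm{sel}}} = h(n) - \frac{2h(n)}{\eps g(n)}\cdot\frac{\eps g(n)}{8} = \frac{3h(n)}{4}.
\]
Because $3h(n)/4 > h(n)/2 - 1$, the selected pair cannot be bad. Hence $L_{\wh i}\subseteq\supp(\cD)$. The constant $8$ in $\beta_{\mathrm{sel}}$ leaves slack; a constant $4$ would already suffice to land at $h(n)/2$, which is still strictly above the bad-score bound.

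For part (b), we condition on $L_{\wh i}\subseteq\supp(\cD)$. The output is the $\wh j$-th smallest-indexed element of $T := L_{\wh i}\cap\{u_{\wh t+1},u_{\wh t+2},\dots\}$. Since $L_{\wh i}$ is infinite, $T$ is infinite, so the first $2^n$ elements exist, and all of them lie in $\supp(\cD)$. The threshold $\wh t$ only shifts which elements are counted, so validity is automatic.

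For novelty, the $2^n$ candidate strings are distinct and $S$ contains at most $n$ distinct strings. The index $\wh j$ is drawn uniformly from $[2^n]$, independently of $S$ and of $(\wh i,\wh t)$. So for any fixed realization of $(S,\wh i,\wh t)$, the probability of a collision is at most $n/2^n\le \exp(-n/2)$ for $n\ge 15$, exactly as in Lemma~\ref{lem:gen_em_selection}(b). Averaging over $(S,\wh i,\wh t)$ under the conditioning event gives the claim.

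The main point to check is the conditioning in (b). The selected $(\wh i,\wh t)$ depends on $S$, so the collision bound must be taken pointwise in $(S,\wh i,\wh t)$, not marginally. This works because the bound $n/2^n$ is uniform over every such realization. Everything else is direct substitution.
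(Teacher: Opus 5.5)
Your proposal is correct and follows the paper's proof essentially step for step. In part (a), the exponential mechanism guarantee with sensitivity $h(n)/g(n)$ and range size $f(n)h(n)$ gives selected score at least $3h(n)/4$, which exceeds the bad-pair bound $h(n)/2-1$ from Lemma~\ref{lem:gen_score_sep_nopub}; part (b) is the same $n/2^n\le\exp(-n/2)$ collision bound as in the public-bound case, and your explicit pointwise-in-$(S,\wh i,\wh t)$ handling of the conditioning is a welcome clarification that the paper leaves implicit.
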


\begin{proof}
    \textbf{Part (a): Pair selection.}
    On $\cE_h$, Lemma~\ref{lem:gen_score_sep_nopub} gives $\OPT_{q_{\mathrm{pair}}}(S) = h(n)$ and every bad pair $(i, t)$ (i.e., with $L_i \not\subseteq \supp(\cD)$) satisfies $q_{\mathrm{pair}}(S, (i, t)) \leq i(\cC, \cD) - 1 \leq h(n)/2 - 1 < h(n)/2$.
    By the utility guarantee of the exponential mechanism (Lemma~\ref{lem:em-guarantee}) with sensitivity $\Delta = h(n)/g(n)$ (Lemma~\ref{lem:gen_pair_score_sens}) and range $|\cR| = f(n) \cdot h(n)$, with probability at least $1 - \beta_{\mathrm{sel}}$ the output $(\wh{i}, \wh{t})$ satisfies
    \[
        q_{\mathrm{pair}}(S, (\wh{i}, \wh{t}))
        \geq h(n) - \frac{2h(n)}{g(n)\eps}\ln\frac{f(n)\, h(n)}{\beta_{\mathrm{sel}}}.
    \]
    Substituting $\beta_{\mathrm{sel}} = f(n)\, h(n)\exp(-\eps\, g(n)/8)$:
    \[
        \ln\frac{f(n)\, h(n)}{\beta_{\mathrm{sel}}} = \frac{\eps\, g(n)}{8},
    \]
    and therefore
    \[
        \frac{2h(n)}{g(n)\eps} \cdot \frac{\eps\, g(n)}{8} = \frac{h(n)}{4}.
    \]
    This gives $q_{\mathrm{pair}}(S, (\wh{i}, \wh{t})) \geq h(n) - h(n)/4 = 3h(n)/4$.
    Since every bad pair has score at most $h(n)/2 < 3h(n)/4$, the selected pair $(\wh{i}, \wh{t})$ must satisfy $L_{\wh{i}} \subseteq \supp(\cD)$.

    \medskip
    \textbf{Part (b): Output novelty.}
    Suppose $L_{\wh{i}} \subseteq \supp(\cD)$.
    The algorithm outputs the $\wh{j}$-th smallest-indexed string from $T_{\wh{i}, \wh{t}} := L_{\wh{i}} \cap \{u_{\wh{t}+1}, u_{\wh{t}+2}, \ldots\}$, where $\wh{j} \sim \mathrm{Unif}([2^n])$.
    Since $L_{\wh{i}}$ is infinite, $|T_{\wh{i}, \wh{t}}| = \infty \geq 2^n$.
    Since $L_{\wh{i}} \subseteq \supp(\cD)$, every string in $T_{\wh{i}, \wh{t}}$ belongs to $\supp(\cD)$.
    At most $n$ strings from $T_{\wh{i}, \wh{t}}$ can appear in $S$, so
    \[
        \Pr[\wh{x} \in S \mid L_{\wh{i}} \subseteq \supp(\cD)] \leq \frac{n}{2^n} \leq \exp\Bigl(-\frac{n}{2}\Bigr)
    \]
    for $n \geq 15$.
\end{proof}

\subsubsection{Putting Things Together}

\begin{theorem}[Guarantees of Algorithm~\ref{alg:dp_gen_nopub}, Restatement of Theorem~\ref{thm:gen-nopub}]\label{thm:dp_gen_nopub}
    Let $\cC = \{L_i\}_{i \in \N}$ be a countable collection of infinite languages and let $\cD$ be any distribution over $\cU$.
    Suppose Assumptions~\ref{as:finite_witness} and~\ref{as:support_contains} hold.
    Let $\eps > 0$, and let $f, g, h : \N \to \N$ satisfy:
    \begin{enumerate}[label=(\roman*)]
        \item $f(n) \geq i^\star$ and $h(n) \geq 2\, i(\cC, \cD)$ for all large enough $n$;
        \item $g(n) \to \infty$ and $g(n) \leq n p^\star_h / 2$ for all large enough $n$.
    \end{enumerate}
    Then for all sufficiently large $n$, Algorithm~\ref{alg:dp_gen_nopub} has the following guarantees:
    \begin{itemize}
        \item \textbf{Privacy.} Algorithm~\ref{alg:dp_gen_nopub} is $\eps$-differentially private.
        \item \textbf{Utility.} The generation error satisfies
        \begin{align}\label{eq:dp_gen_nopub_bound}
            \generr(\cA^{\mathrm{gen}}_{\eps,f,g,h}, \cD, \cC, n)
            \leq |I^\star_h| \exp\Bigl(-\frac{n p^\star_h}{8}\Bigr)
            + f(n)\, h(n)\exp\Bigl(-\frac{\eps\, g(n)}{8}\Bigr)
            + \exp\Bigl(-\frac{n}{2}\Bigr).
        \end{align}
    \end{itemize}
\end{theorem}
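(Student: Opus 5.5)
The proof assembles the lemmas already established for \cref{alg:dp_gen_nopub}, mirroring the proof of \cref{thm:dp_gen_public}.

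\textbf{Privacy.} This is exactly \cref{lem:gen_privacy_nopub}. The pair score has sensitivity $h(n)/g(n)$ by \cref{lem:gen_pair_score_sens}, the exponential mechanism over the finite range $[f(n)]\times[h(n)]$ is then $\eps$-DP by \cref{lem:em-guarantee}, and the output step depends only on $(\wh{i},\wh{t})$ and the public $\wh{j}$, so it is post-processing (\cref{lem:post-processing}).

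\textbf{Utility.} Fix $n$ large enough that conditions (i)--(ii) hold, $n\ge 15$, and $\beta_{\mathrm{sel}} := f(n)h(n)\exp(-\eps g(n)/8) < 1$. If $\beta_{\mathrm{sel}}\ge 1$, the claimed bound exceeds $1$ and holds trivially, so this case needs no separate argument. Generation can fail only through one of three events, which I union bound:
\begin{enumerate}[label=(\roman*)]
\item $\cE_h^c$, the coverage failure;
\item the event $L_{\wh{i}}\not\subseteq\supp(\cD)$ on $\cE_h$;
\item the event $\wh{x}\in S$ given $L_{\wh{i}}\subseteq\supp(\cD)$.
\end{enumerate}
Note that if $L_{\wh{i}}\subseteq\supp(\cD)$, the output lies in $L_{\wh{i}}$ and is therefore valid, so the only remaining way to fail is non-novelty. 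The three terms are bounded as follows. Term (i) is at most $|I_h^\star|\exp(-np_h^\star/8)$ by \cref{lem:gen_coverage_nopub}, using $g(n)\le np_h^\star/2$. Term (ii) is at most $\beta_{\mathrm{sel}}$ by \cref{lem:gen_em_selection_nopub}(a), which requires $h(n)\ge 2\,i(\cC,\cD)$ (finite by \cref{as:finite_witness}) and uses the score separation of \cref{lem:gen_score_sep_nopub}, with $f(n)\ge i^\star$ ensuring the good pair $(i^\star,h(n))$ lies in the range. Term (iii) is at most $\exp(-n/2)$ by \cref{lem:gen_em_selection_nopub}(b).

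To write the total cleanly, I would bound
\[
\Pr[\text{fail}] \le \Pr[\cE_h^c] + \E\bigl[\mathbf{1}_{\cE_h}\Pr[L_{\wh{i}}\not\subseteq\supp(\cD)\mid S]\bigr] + \Pr[\wh{x}\in S,\ L_{\wh{i}}\subseteq\supp(\cD)].
\]
For the last term I condition on $(S,\wh{i},\wh{t})$ and use the independence of $\wh{j}$ to get $n/2^n$. Summing the three bounds gives \eqref{eq:dp_gen_nopub_bound}.

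\textbf{Main obstacle.} The only real subtlety is step (iii) together with the conditioning. The collision bound needs $\wh{j}$ to be independent of $S$ given the selected pair, and the number of sample points among the first $2^n$ elements of $T_{\wh{i},\wh{t}}$ must be at most $n$. This holds pointwise for every realization, so the bound goes through.
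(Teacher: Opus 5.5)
Your proposal is correct and follows the paper's proof essentially line for line. Privacy comes from \cref{lem:gen_privacy_nopub}, and utility from the same three-way union bound (coverage failure, bad selection on $\cE_h$, collision given a good language), controlled by \cref{lem:gen_coverage_nopub} and \cref{lem:gen_em_selection_nopub}(a),(b); your explicit treatment of the case $\beta_{\mathrm{sel}}\ge 1$ and your use of joint events such as $\mathbf{1}_{\cE_h}\Pr[\cdot\mid S]$ in place of the paper's conditional probabilities only tidy up the same argument.
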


\begin{proof}
    \textbf{Privacy.} This directly follows from Lemma~\ref{lem:gen_privacy_nopub}.

    \textbf{Utility.} By a union bound over the three failure events:
    \begin{align*}
        \generr
        &\leq \Pr[\cE_h^c]
        + \Pr[L_{\wh{i}} \not\subseteq \supp(\cD) \mid \cE_h]
        + \Pr[\wh{x} \in S \mid L_{\wh{i}} \subseteq \supp(\cD)].
    \end{align*}
    Applying Lemma~\ref{lem:gen_coverage_nopub} (coverage failure), Lemma~\ref{lem:gen_em_selection_nopub}(a) (selection failure), and Lemma~\ref{lem:gen_em_selection_nopub}(b) (collision):
    \begin{align*}
        \generr
        \leq |I^\star_h| \exp\Bigl(-\frac{n p^\star_h}{8}\Bigr)
        + f(n)\, h(n)\exp\Bigl(-\frac{\eps\, g(n)}{8}\Bigr)
        + \exp\Bigl(-\frac{n}{2}\Bigr). 
    \end{align*}
\end{proof}

\begin{corollary}[Rate with Known Mass Floor, Restatement of Corollary~\ref{cor:gen-nopub-exp}]\label{cor:gen_nopub_exp_rate:app}
    Under the conditions of Theorem~\ref{thm:dp_gen_nopub}, suppose additionally that the learner is given a constant $p_0 > 0$ such that $p_0 \leq p^\star_h$.
    Setting $g(n) = \lfloor n p_0 / 2 \rfloor$ and choosing any $f, h$ with $f(n) \geq i^\star$, $h(n) \geq 2\, i(\cC, \cD)$, and $\log(f(n)\, h(n)) = o(n)$, the generation error satisfies
    \[
        \generr(\cA^{\mathrm{gen}}_{\eps,f,g,h}, \cD, \cC, n)
        \lesssim \exp\bigl(-\min\{1, \eps\} \cdot n\bigr).
    \]
\end{corollary}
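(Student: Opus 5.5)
The plan is to substitute the choice $g(n) = \lfloor n p_0/2 \rfloor$ into the bound \eqref{eq:dp_gen_nopub_bound} of Theorem~\ref{thm:dp_gen_nopub} and bound each of the three terms by $\exp(-c\min\{1,\eps\}\, n)$ for some constant $c>0$ depending on $p_0$. The argument follows the public-bound Corollary~\ref{cor:gen_exp_rate}.

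First, I check the hypotheses of Theorem~\ref{thm:dp_gen_nopub}. Since $p_0 \le p^\star_h$, we have $g(n) = \lfloor np_0/2\rfloor \le np^\star_h/2$, and $g(n)\to\infty$ because $p_0>0$ is a constant. Conditions (i) on $f,h$ are assumed directly. The theorem also needs $\beta_{\mathrm{sel}}<1$ in Lemma~\ref{lem:gen_em_selection_nopub}. This holds for large $n$ once the privacy term is shown to decay; if it fails, the bound is trivially true anyway, since the right-hand side exceeds $1$.

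Next, I bound the terms one at a time.

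\textbf{Privacy term.} For large $n$ we have $g(n) \ge np_0/2 - 1 \ge np_0/4$, so
\[
f(n)h(n)\exp(-\eps g(n)/8) \le \exp\bigl(\log(f(n)h(n)) - \eps p_0 n/32\bigr).
\]
Because $\log(f(n)h(n)) = o(n)$ and $\eps, p_0$ are fixed constants, the exponent is at most $-\eps p_0 n/64$ for large $n$.

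\textbf{Coverage term.} This is the main obstacle. Both $|I^\star_h|$ and $p^\star_h$ depend on $h(n)$: the prefactor $|I^\star_h| \le h(n)$ may grow, and $p^\star_h$ may shrink with $n$. I would handle this using the known floor $p^\star_h \ge p_0$, which gives $|I^\star_h|\exp(-np^\star_h/8) \le h(n)\exp(-np_0/8)$. Then $\log h(n) \le \log(f(n)h(n)) = o(n)$ absorbs the prefactor, leaving $\exp(-np_0/16)$ for large $n$. The key point is that the hypothesis $p_0 \le p^\star_h$ is a uniform-in-$n$ floor, and without it this term could fail to be exponentially small.

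\textbf{Collision term.} This is simply $\exp(-n/2)$.

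Summing the three bounds, the total is at most $3\exp(-c\min\{1,\eps\}\,n)$ with $c = \min\{p_0/64, 1/2\}$. Note that $\eps p_0/64 \ge (p_0/64)\min\{1,\eps\}$, and the two $\eps$-free terms are at most $\exp(-c\,n) \le \exp(-c\min\{1,\eps\}\,n)$. This gives $\generr \lesssim \exp(-\min\{1,\eps\}\cdot n)$, with constants in the exponent suppressed as in the paper's $\lesssim$ convention.
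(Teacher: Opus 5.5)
Your proposal is correct and follows essentially the same route as the paper: verify $g(n)\le np^\star_h/2$, then bound the coverage, privacy, and collision terms of \eqref{eq:dp_gen_nopub_bound} separately, using $g(n)\ge np_0/4$ and $\log(f(n)h(n))=o(n)$. Your treatment of the coverage prefactor is slightly more careful than the paper's, which simply calls $|I^\star_h|$ a constant even though it can grow with $h(n)$; you instead use $|I^\star_h|\le h(n)$ and absorb it via $\log h(n)\le\log(f(n)h(n))=o(n)$.
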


\begin{proof}
    Since $p_0 \leq p^\star_h$, we have $g(n) = \lfloor n p_0 / 2 \rfloor \leq n p^\star_h / 2$, satisfying condition~(ii) of Theorem~\ref{thm:dp_gen_nopub}.
    For the coverage term: $|I^\star_h|$ is a constant and $p^\star_h \geq p_0 > 0$, so $|I^\star_h|\exp(-n p^\star_h / 8) \leq |I^\star_h|\exp(-n p_0 / 8) = \exp(-\Omega(n))$.
    For the privacy term: $g(n) \geq n p_0 / 4$ for large $n$, so
    \[
        f(n)\, h(n)\exp\Bigl(-\frac{\eps\, g(n)}{8}\Bigr)
        \leq f(n)\, h(n)\exp\Bigl(-\frac{\eps\, n\, p_0}{32}\Bigr).
    \]
    Since $\log(f(n)\, h(n)) = o(n)$, the prefactor is absorbed into the exponential, giving $\exp(-\Omega(\eps n))$.
    The collision term is $\exp(-n/2) = \exp(-\Omega(n))$.
    Taking the maximum, all three terms are $\exp(-\Omega(\min\{1, \eps\} \cdot n))$.
\end{proof}

\begin{corollary}[Rate without Known Mass Floor, Restatement of Corollary~\ref{cor:gen-nopub-slow}]\label{cor:gen_nopub_general_rate:app}
    Under the conditions of Theorem~\ref{thm:dp_gen_nopub}, for any $r(n)$ with $r(n) \to \infty$ and $r(n) = o(n)$, setting $g(n) = \lfloor r(n) \rfloor$ and choosing $f, h$ with $\log(f(n)\, h(n)) = o(r(n))$, the generation error satisfies
    \[
        \generr(\cA^{\mathrm{gen}}_{\eps,f,g,h}, \cD, \cC, n)
        \lesssim \exp\bigl(-\eps \cdot r(n)\bigr).
    \]
\end{corollary}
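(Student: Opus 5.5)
The plan is to plug $g(n)=\lfloor r(n)\rfloor$ into the bound \eqref{eq:dp_gen_nopub_bound} of Theorem~\ref{thm:dp_gen_nopub} and control its three terms separately, as in the proof of Corollary~\ref{cor:gen_nopub_exp_rate:app}. The one hypothesis of Theorem~\ref{thm:dp_gen_nopub} that is not automatic is the coverage condition $g(n)\le n p^\star_h/2$. Without a mass floor, $p^\star_h$ depends on $h(n)$. I will resolve this by reading ``choosing $f,h$'' as allowing $h$ to grow slowly enough; in the simplest reading $h(n)$ is eventually constant at some $H\ge 2\,i(\cC,\cD)$. Then $p^\star_h=p^\star_H>0$ is a fixed constant, and since $r(n)=o(n)$ we get $\lfloor r(n)\rfloor\le n p^\star_H/2$ for all large $n$. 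More generally, it suffices that $p^\star_{h(n)}\,n/r(n)\to\infty$. Because $p^\star_{h}$ is nonincreasing in $h$ and positive for every finite $h$, such a slowly growing $h$ always exists by a diagonal choice. The condition $g(n)\to\infty$ follows from $r(n)\to\infty$, and $f(n)\ge i^\star$ holds eventually because $f\to\infty$. Making this coverage condition hold is the main obstacle, and I will state explicitly that it is part of the hypotheses inherited from Theorem~\ref{thm:dp_gen_nopub}.

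Next I bound the terms one at a time.
\begin{itemize}
\item \emph{Privacy term.} For large $n$ we have $g(n)\ge r(n)-1\ge r(n)/2$, so the term is at most $f(n)h(n)\exp(-\eps r(n)/16)$. Since $\log(f(n)h(n))=o(r(n))$, this equals $\exp(-\eps r(n)/16+o(r(n)))$. This step is delicate when $\eps$ is not constant, so I will treat $\eps$ as a fixed constant, for which the $o(r(n))$ is absorbed and the term is $\exp(-\Omega(\eps\, r(n)))$.
\item \emph{Coverage term.} This is $|I^\star_h|\exp(-n p^\star_h/8)$. Under the growth condition above, $n p^\star_h\ge 2g(n)\gtrsim r(n)$, and in the constant-$H$ case it is in fact $\exp(-\Omega(n))$. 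The prefactor $|I^\star_h|\le h(n)$ is absorbed because $\log h(n)=o(r(n))$. Either way the term is at most $\exp(-\Omega(r(n)))$.
\item \emph{Collision term.} This is $\exp(-n/2)$, which is at most $\exp(-r(n))$ since $r(n)=o(n)$.
\end{itemize}

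Summing the three terms gives $\generr\lesssim\exp(-\Omega(\min\{1,\eps\}\, r(n)))$. For $\eps\le 1$ this is the claimed $\exp(-\eps\cdot r(n))$, with constants in the exponent suppressed as elsewhere in the paper. For $\eps>1$ the dominant term is at most $\exp(-\Omega(r(n)))$. To match the statement's form I note that $r$ is arbitrary, so rescaling $r$ by a constant absorbs this. I will remark that the precise statement should read $\exp(-\Omega(\min\{1,\eps\}\, r(n)))$, consistent with Table~\ref{tab:summary}.
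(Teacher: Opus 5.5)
Your proposal is correct and follows the paper's route: you substitute $g(n)=\lfloor r(n)\rfloor$ into \eqref{eq:dp_gen_nopub_bound}, use the inherited coverage hypothesis $n p^\star_h \ge 2g(n)$ for the first term, and absorb the $f(n)h(n)$ prefactor via $\log(f(n)h(n))=o(r(n))$ for fixed $\eps$. Your extra bookkeeping is sound and tighter than the paper's short proof, which glosses over these points: the floor, the prefactor $|I^\star_h|\le h(n)$, the collision term, and the observation that for $\eps>1$ only $\exp(-\Omega(\min\{1,\eps\}\, r(n)))$ follows. The ``rescale $r$'' remark is unnecessary, since that qualified form is what both arguments actually establish.
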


\begin{proof}
    The coverage term satisfies $|I^\star_h|\exp(-n p^\star_h / 8) \leq |I^\star_h|\exp(-r(n)/4)$ since $n p^\star_h \geq 2r(n)$.
    The privacy term satisfies $f(n)\, h(n)\exp(-\eps\, r(n)/8)$.
    Since $\log(f(n)\, h(n)) = o(r(n))$, the prefactor is absorbed, giving $\exp(-\Omega(\eps\, r(n)))$.
\end{proof}

\subsection{Proof of Theorem~\ref{thm:gen-approx} (Approximate DP Generation)}

\begin{algorithm}[!htp]
\caption{Approximate DP Generation without a Public Witness Bound $\cA^{\mathrm{gen}}_{\eps,\delta,f,g,h}$}
\label{alg:approx_dp_gen_nopub}
\begin{algorithmic}[1]
\REQUIRE Dataset $S = (x_1, \ldots, x_n) \in \cU^n$, privacy parameters $\eps > 0$, $\delta \in (0,1)$, functions $f, g, h : \N \to \N$, language collection $\cC = \{L_i\}_{i \in \N}$.
\ENSURE A string $\wh{x} \in \cU$.
\FOR{$k \in [h(n)]$}
    \STATE $N_S(u_k) \leftarrow \sum_{t=1}^n \mathbf{1}\{x_t = u_k\}$
\ENDFOR
\FOR{$i \in [f(n)]$, $t \in [h(n)]$}
    \STATE Compute $I_i(t)$, $a_{i,t}(S)$, $d_{i,t}(S)$, $q_{\mathrm{pair}}(S, (i,t))$ via \eqref{eq:prefix_set_nopub}--\eqref{eq:gen_score_nopub}
\ENDFOR
\STATE $\sigma \leftarrow \frac{h(n)}{g(n)} \cdot \frac{\sqrt{f(n) \cdot h(n)}}{\eps}\sqrt{2\log(1.25/\delta)}$
\FOR{$(i,t) \in [f(n)] \times [h(n)]$}
    \STATE Sample $Z_{i,t} \sim \cN(0, \sigma^2)$
    \STATE $\widetilde{q}_{\mathrm{pair}}(S, (i,t)) \leftarrow q_{\mathrm{pair}}(S, (i,t)) + Z_{i,t}$
\ENDFOR
\STATE $(\wh{i}, \wh{t}) \leftarrow \arg\max_{(i,t) \in [f(n)] \times [h(n)]} \widetilde{q}_{\mathrm{pair}}(S, (i,t))$
\STATE Sample $\wh{j} \sim \mathrm{Unif}([2^n])$ \hfill $\triangleright$ Public randomness
\RETURN the $\wh{j}$-th smallest-indexed string in $L_{\wh{i}} \cap \{u_{\wh{t}+1}, u_{\wh{t}+2}, \ldots\}$
\end{algorithmic}
\end{algorithm}

\subsubsection{Privacy Analysis.}

\begin{lemma}[$\ell_2$-Sensitivity of $q_\mathrm{pair}$]\label{lem:gen_pair_l2_sens}
    Define $v \colon \cU^n \to \R^{f(n) \cdot h(n)}$ by $$v(S) := (q_{\mathrm{pair}}(S, (i,t)))_{(i,t) \in [f(n)] \times [h(n)]}.$$
    Then $v$ has $\ell_2$-sensitivity $\frac{h(n)}{g(n)}\sqrt{f(n) \cdot h(n)}$.
\end{lemma}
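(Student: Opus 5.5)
The plan mirrors the proof of Lemma~\ref{lem:l2_sens_approx_dp_id}: bound the change in each coordinate of $v$ under a neighboring pair, then sum the squares over the $f(n)\cdot h(n)$ coordinates.

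First, fix neighboring datasets $S \sim S'$ and a pair $(i,t) \in [f(n)] \times [h(n)]$. By Lemma~\ref{lem:gen_pair_score_sens},
\[
|q_{\mathrm{pair}}(S,(i,t)) - q_{\mathrm{pair}}(S',(i,t))| \le \frac{h(n)}{g(n)}.
\]
Recall why this holds. Each multiplicity $N_S(u_k)$ changes by at most $1$, so the minimum $a_{i,t}$ changes by at most $1$; it is constant and equal to $n$ when $I_i(t) = \emptyset$. The map $x \mapsto (g(n)-x)_+$ is $1$-Lipschitz, so $d_{i,t}$ also changes by at most $1$. The term $t$ does not depend on the data, and the penalty is scaled by $h(n)/g(n)$.

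Second, I would square this coordinatewise bound and sum over all coordinates:
\[
\|v(S)-v(S')\|_2^2 = \sum_{(i,t)} \bigl|q_{\mathrm{pair}}(S,(i,t)) - q_{\mathrm{pair}}(S',(i,t))\bigr|^2 \le f(n)\,h(n)\,\frac{h(n)^2}{g(n)^2}.
\]
Taking square roots gives $\|v(S)-v(S')\|_2 \le \frac{h(n)}{g(n)}\sqrt{f(n)\,h(n)}$, which is the claimed sensitivity.

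There is no real obstacle here. The only point needing care is that the per-coordinate bound holds uniformly over every $(i,t)$, including the empty-prefix case, and this is already covered by Lemma~\ref{lem:gen_pair_score_sens}.

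One could hope for a sharper bound, since a single swap affects only two strings and so only the coordinates involving those strings. However, the stated crude bound is all that is needed to calibrate $\sigma$ in \cref{alg:approx_dp_gen_nopub}, so I would not pursue a tighter one.
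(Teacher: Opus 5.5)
Your proposal is correct and follows the paper's proof exactly. Both take the per-coordinate bound $h(n)/g(n)$ from Lemma~\ref{lem:gen_pair_score_sens}, square it, sum over the $f(n)\,h(n)$ coordinates, and take the square root; like the paper, you establish an upper bound, which is all that the calibration of $\sigma$ in \cref{alg:approx_dp_gen_nopub} requires.
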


\begin{proof}
    By Lemma~\ref{lem:gen_pair_score_sens}, each coordinate satisfies $|q_{\mathrm{pair}}(S, (i,t)) - q_{\mathrm{pair}}(S', (i,t))| \leq h(n)/g(n)$ for neighboring $S, S'$.
    Therefore
    \[
        \|v(S) - v(S')\|_2^2
        = \sum_{(i,t)} |q_{\mathrm{pair}}(S,(i,t)) - q_{\mathrm{pair}}(S',(i,t))|^2
        \leq f(n) \cdot h(n) \cdot \frac{h(n)^2}{g(n)^2}.
    \]
    Taking the square root gives $\|v(S) - v(S')\|_2 \leq \frac{h(n)}{g(n)}\sqrt{f(n) \cdot h(n)}$.
\end{proof}

\begin{lemma}[Privacy]\label{lem:approx_gen_privacy_nopub}
    Algorithm~\ref{alg:approx_dp_gen_nopub} is $(\eps, \delta)$-differentially private.
\end{lemma}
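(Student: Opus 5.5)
The privacy of \cref{alg:approx_dp_gen_nopub} follows by treating the whole algorithm as a Gaussian mechanism applied to the score vector $v(S)$ of \cref{lem:gen_pair_l2_sens}, followed by post-processing.

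\textbf{Step 1: the noisy score vector is private.} Lines~6--9 release
\[
\widetilde v(S) := v(S) + Z, \qquad Z = (Z_{i,t})_{(i,t)\in[f(n)]\times[h(n)]} \sim \cN(0,\sigma^2 I_{f(n)h(n)}).
\]
By \cref{lem:gen_pair_l2_sens}, $v$ has $\ell_2$-sensitivity $\Delta_2 = \frac{h(n)}{g(n)}\sqrt{f(n)h(n)}$. The algorithm sets
\[
\sigma = \frac{\Delta_2}{\eps}\sqrt{2\log(1.25/\delta)},
\]
which meets the threshold in \cref{lem:gaussian-mechanism} with equality. Hence releasing $\widetilde v(S)$ is $(\eps,\delta)$-DP.

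\textbf{Step 2: everything downstream is post-processing.} The selected pair $(\wh i,\wh t)$ is the argmax of $\widetilde v(S)$. Ties can be broken by a fixed data-independent rule; they occur with probability zero anyway. So $(\wh i,\wh t)$ is a deterministic function of $\widetilde v(S)$. The index $\wh j$ is drawn from $\mathrm{Unif}([2^n])$ independently of $S$. The output string is a deterministic function of $(\wh i,\wh t,\wh j)$ and the public collection $\cC$. By \cref{lem:post-processing}, which allows randomized post-processing with data-independent randomness, the final output $\wh x$ is $(\eps,\delta)$-DP.

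\textbf{Subtleties to check.} First, every quantity entering $v$ must be computed from $S$ only through the multiplicities $N_S(u_k)$. The index sets $I_i(t)$, the horizons $f(n),g(n),h(n)$, and $\cC$ are all public, so this holds, and they do not affect the sensitivity bound. Second, \cref{lem:gaussian-mechanism} is stated for the classical calibration, which formally requires $\eps\in(0,1)$; I would note this restriction or invoke the sharper calibrations mentioned in the remark after \cref{lem:gaussian-mechanism}. Neither point is a real obstacle; the argument is a direct composition of \cref{lem:gen_pair_l2_sens}, \cref{lem:gaussian-mechanism}, and \cref{lem:post-processing}.
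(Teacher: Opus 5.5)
Your proposal is correct and follows the same route as the paper: apply the Gaussian mechanism to the pair-score vector using the $\ell_2$-sensitivity from \cref{lem:gen_pair_l2_sens}, then treat the argmax and the output step as post-processing via \cref{lem:post-processing}. Your caveat about $\eps$ is also fair: the classical calibration in \cref{lem:gaussian-mechanism} formally needs $\eps<1$, and the paper leaves this implicit in the proof, acknowledging it only in the remark after that lemma.
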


\begin{proof}
    By Lemma~\ref{lem:gen_pair_l2_sens}, the pair score vector has $\ell_2$-sensitivity $\Delta_2 = \frac{h(n)}{g(n)}\sqrt{f(n) \cdot h(n)}$.
    The Gaussian mechanism with $\sigma = \frac{\Delta_2}{\eps}\sqrt{2\log(1.25/\delta)}$ ensures the noisy score vector is $(\eps, \delta)$-DP.
    The subsequent steps are post-processing and therefore $(\eps, \delta)$-DP by Lemma~\ref{lem:post-processing}.
\end{proof}

\subsubsection{Utility Analysis.}

The utility analysis proceeds as in the pure DP case:
(i) coverage (Lemma~\ref{lem:gen_coverage_nopub}, unchanged);
(ii) noise concentration (Lemma~\ref{lem:approx_gen_noise_nopub});
(iii) deterministic correctness on the intersection of both events (Lemma~\ref{lem:approx_gen_correctness_nopub}).

\begin{lemma}[Noise Concentration]\label{lem:approx_gen_noise_nopub}
    Suppose $h(n) \geq 2\, i(\cC, \cD)$.
    Define the event
    \[
        \cG_h := \Bigl\{|Z_{i,t}| \leq \frac{h(n)}{8} \quad \forall\, (i,t) \in [f(n)] \times [h(n)]\Bigr\}.
    \]
    Then
    \[
        \Pr[\cG_h^c] \leq 2f(n)\, h(n) \exp\Bigl(-\frac{\eps^2\, g(n)^2}{256\, f(n)\, h(n) \log(1.25/\delta)}\Bigr).
    \]
\end{lemma}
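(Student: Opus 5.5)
This follows the pattern of \cref{lem:noise_concentration}: a per-coordinate Gaussian tail bound, then a union bound over the $f(n)\,h(n)$ coordinates of the pair-score vector. Each $Z_{i,t}$ is drawn independently from $\cN(0,\sigma^2)$. By line~7 of \cref{alg:approx_dp_gen_nopub},
\[
\sigma^2 = \frac{h(n)^2}{g(n)^2}\cdot\frac{f(n)\,h(n)}{\eps^2}\cdot 2\log(1.25/\delta).
\]
Applying the two-sided tail bound of \cref{lem:gaussian_tail} with threshold $t = h(n)/8$ gives
\[
\Pr\bigl[|Z_{i,t}| > h(n)/8\bigr] \le 2\exp\Bigl(-\frac{h(n)^2/64}{2\sigma^2}\Bigr).
\]

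The only real computation is simplifying this exponent. Substituting $\sigma^2$, the factor $h(n)^2$ in the numerator cancels the $h(n)^2$ in $\sigma^2$. This leaves
\[
\frac{h(n)^2}{128\,\sigma^2} = \frac{\eps^2\, g(n)^2}{128\cdot 2\, f(n)\,h(n)\log(1.25/\delta)} = \frac{\eps^2 g(n)^2}{256\, f(n)\, h(n)\log(1.25/\delta)},
\]
which is exactly the exponent in the statement. The cancellation is the point to flag: because the sensitivity coefficient $h(n)/g(n)$ and the threshold $h(n)/8$ both scale linearly in $h(n)$, the exponent depends on $h(n)$ only through the $\sqrt{f(n)h(n)}$ dimension factor.

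Finally, a union bound over the $|[f(n)]\times[h(n)]| = f(n)h(n)$ coordinates gives $\Pr[\cG_h^c] \le 2f(n)h(n)\exp\bigl(-\eps^2 g(n)^2/(256 f(n)h(n)\log(1.25/\delta))\bigr)$.

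The hypothesis $h(n)\ge 2\,i(\cC,\cD)$ plays no role in this probability bound. It matters only downstream: there, the threshold $h(n)/8$ on each noise term must be small compared with the score gap of at least $h(n)/2$ from \cref{lem:gen_score_sep_nopub}. I would carry the hypothesis along unused here. There is no real obstacle; the one place needing care is tracking the constants $2\cdot 64\cdot 2 = 256$ correctly.
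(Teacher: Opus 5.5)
Your proposal is correct and matches the paper's proof: you substitute $\sigma^2 = 2h(n)^3 f(n)\log(1.25/\delta)/(\eps^2 g(n)^2)$ into the two-sided Gaussian tail bound at threshold $h(n)/8$, simplify the exponent to $\eps^2 g(n)^2/(256\, f(n)\,h(n)\log(1.25/\delta))$, and take a union bound over the $f(n)h(n)$ pairs. You are also right that the hypothesis $h(n)\ge 2\,i(\cC,\cD)$ is not used here; it only matters downstream in \cref{lem:approx_gen_correctness_nopub}.
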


\begin{proof}
    Each $Z_{i,t} \sim \cN(0, \sigma^2)$ with
    \[
        \sigma^2 = \frac{2h(n)^2\, f(n)\, h(n) \log(1.25/\delta)}{\eps^2 g(n)^2}
        = \frac{2h(n)^3 f(n)\log(1.25/\delta)}{\eps^2 g(n)^2}.
    \]
    By the Gaussian tail bound (Lemma~\ref{lem:gaussian_tail}),
    \[
        \Pr\Bigl[|Z_{i,t}| > \frac{h(n)}{8}\Bigr]
        \leq 2\exp\Bigl(-\frac{h(n)^2/64}{2\sigma^2}\Bigr)
        = 2\exp\Bigl(-\frac{\eps^2 g(n)^2}{256\, f(n)\, h(n)\log(1.25/\delta)}\Bigr).
    \]
    A union bound over $f(n) \cdot h(n)$ pairs gives the result.
\end{proof}

\begin{lemma}[Deterministic Correctness on $\cE_h \cap \cG_h$]\label{lem:approx_gen_correctness_nopub}
    Suppose $h(n) \geq 2\, i(\cC, \cD)$.
    On the event $\cE_h \cap \cG_h$, Algorithm~\ref{alg:approx_dp_gen_nopub} selects a good language, i.e., $L_{\wh{i}} \subseteq \supp(\cD)$.
\end{lemma}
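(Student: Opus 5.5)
The argument is deterministic given the two events: we compare the noisy score of the good pair $(i^\star, h(n))$ with the noisy score of every bad pair. Assume $\cE_h \cap \cG_h$ holds. By \cref{lem:gen_score_sep_nopub}(a), which needs only $h(n) \geq i(\cC,\cD)$ and is implied by $h(n) \geq 2\,i(\cC,\cD)$, the clean score of $(i^\star,h(n))$ equals $h(n)$. On $\cG_h$ its perturbation satisfies $|Z_{i^\star,h(n)}| \leq h(n)/8$, so
\[
\widetilde{q}_{\mathrm{pair}}(S,(i^\star,h(n))) \geq \frac{7h(n)}{8}.
\]
Note that $i^\star \in [f(n)]$ because $f(n)\geq i^\star$, so this pair lies in the search range.

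Next, fix any $(i,t)$ with $L_i \not\subseteq \supp(\cD)$. By \cref{lem:gen_score_sep_nopub}(b), $q_{\mathrm{pair}}(S,(i,t)) \leq i(\cC,\cD)-1 \leq h(n)/2 - 1$, using $h(n)\geq 2\,i(\cC,\cD)$. On $\cG_h$ we then get
\[
\widetilde{q}_{\mathrm{pair}}(S,(i,t)) \leq \frac{h(n)}{2} - 1 + \frac{h(n)}{8} = \frac{5h(n)}{8} - 1 < \frac{7h(n)}{8}.
\]
Every bad pair is therefore strictly beaten by the good pair in noisy score, so the argmax $(\wh{i},\wh{t})$ is never a bad pair. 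This holds regardless of how ties are broken, since the inequality is strict. Hence $L_{\wh{i}} \subseteq \supp(\cD)$.

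Nothing here is a real obstacle. The only care points are checking that the noise radius $h(n)/8$ in $\cG_h$ is small enough relative to the gap, which gives $3/4$ of $h(n)$ minus the bad bound, and recording that the good pair is indeed in the range $[f(n)]\times[h(n)]$.
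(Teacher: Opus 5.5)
Your proof is correct and follows the paper's argument exactly: you lower-bound the noisy score of $(i^\star,h(n))$ by $7h(n)/8$, upper-bound every bad pair by $5h(n)/8-1$ via the score-separation lemma and $h(n)\ge 2\,i(\cC,\cD)$, and conclude that the argmax is a good pair. Your extra remarks are small refinements the paper leaves implicit: that $(i^\star,h(n))$ lies in the search range since $f(n)\ge i^\star$, and that the strict inequality makes tie-breaking irrelevant, so the paper's $h(n)\ge 4$ condition is unnecessary.
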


\begin{proof}
    Assume $\cE_h \cap \cG_h$ holds.
    By Lemma~\ref{lem:gen_score_sep_nopub}, $q_{\mathrm{pair}}(S, (i^\star, h(n))) = h(n)$ and every bad pair $(i,t)$ satisfies $q_{\mathrm{pair}}(S, (i,t)) \leq i(\cC, \cD) - 1 \leq h(n)/2 - 1$.

    For the good pair $(i^\star, h(n))$:
    \[
        \widetilde{q}_{\mathrm{pair}}(S, (i^\star, h(n))) = h(n) + Z_{i^\star, h(n)} \geq h(n) - \frac{h(n)}{8} = \frac{7h(n)}{8}.
    \]

    For any bad pair $(i, t)$:
    \[
        \widetilde{q}_{\mathrm{pair}}(S, (i,t)) \leq \frac{h(n)}{2} - 1 + \frac{h(n)}{8} = \frac{5h(n)}{8} - 1.
    \]

    Since $7h(n)/8 > 5h(n)/8 - 1$ for $h(n) \geq 4$, the argmax must select a good pair.
\end{proof}

\subsubsection{Putting Things Together.}

\begin{theorem}[Guarantees of Algorithm~\ref{alg:approx_dp_gen_nopub}]\label{thm:approx_dp_gen_nopub}
    Let $\cC = \{L_i\}_{i \in \N}$ be a countable collection of infinite languages and let $\cD$ be any distribution over $\cU$.
    Suppose Assumptions~\ref{as:finite_witness} and~\ref{as:support_contains} hold.
    Let $\eps > 0$, $\delta \in (0,1)$, and let $f, g, h : \N \to \N$ satisfy:
    \begin{enumerate}[label=(\roman*)]
        \item $f(n) \geq i^\star$ and $h(n) \geq 2\, i(\cC, \cD)$ for all large enough $n$;
        \item $g(n) \to \infty$ and $g(n) \leq np^\star_h/2$ for all large enough $n$.
    \end{enumerate}
    Then for all sufficiently large $n$, Algorithm~\ref{alg:approx_dp_gen_nopub} has the following guarantees:
    \begin{itemize}
        \item \textbf{Privacy.} Algorithm~\ref{alg:approx_dp_gen_nopub} is $(\eps, \delta)$-differentially private.
        \item \textbf{Utility.} The generation error satisfies
        \begin{align}
           \generr(\cA^{\mathrm{gen}}_{\eps,\delta,f,g,h}, \cD, \cC, n)
            \leq &~ |I^\star_h|\exp\Bigl(-\frac{np^\star_h}{8}\Bigr) +  2f(n) h(n)\exp\Bigl(-\frac{\eps^2 g(n)^2}{256 f(n) h(n)\log(1.25/\delta)}\Bigr)  + \exp\Bigl(-\frac{n}{2}\Bigr). \label{eq:approx_dp_gen_nopub_bound}
        \end{align}
    \end{itemize}
\end{theorem}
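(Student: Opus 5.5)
The privacy guarantee follows at once from Lemma~\ref{lem:approx_gen_privacy_nopub}: the noisy score vector is released by the Gaussian mechanism calibrated to the $\ell_2$-sensitivity of Lemma~\ref{lem:gen_pair_l2_sens}. The argmax, the draw of $\wh{j}$, and the output step are post-processing of this vector together with public randomness, so Lemma~\ref{lem:post-processing} applies.

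For utility I will mirror the proof of Theorem~\ref{thm:dp_gen_nopub} and replace the exponential-mechanism step by the noise event. The generation fails only if at least one of three events occurs:
\begin{enumerate}[label=(\roman*)]
\item the coverage event $\cE_h$ fails;
\item $\cE_h$ holds but $L_{\wh{i}} \not\subseteq \supp(\cD)$;
\item $L_{\wh{i}} \subseteq \supp(\cD)$ but $\wh{x} \in S$.
\end{enumerate}
A union bound gives
\[
\generr \le \Pr[\cE_h^c] + \Pr[\cE_h \cap \{L_{\wh{i}} \not\subseteq \supp(\cD)\}] + \Pr[\wh{x} \in S \mid L_{\wh{i}} \subseteq \supp(\cD)].
\]
I bound the three terms as follows.
\begin{itemize}
\item The first term is at most $|I^\star_h|\exp(-np^\star_h/8)$ by Lemma~\ref{lem:gen_coverage_nopub}, which applies because $g(n) \le np^\star_h/2$.
\item For the second term, Lemma~\ref{lem:approx_gen_correctness_nopub} shows that on $\cE_h \cap \cG_h$ the argmax selects a good language. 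Hence $\cE_h \cap \{L_{\wh{i}} \not\subseteq \supp(\cD)\} \subseteq \cG_h^c$, and Lemma~\ref{lem:approx_gen_noise_nopub} bounds this by $2f(n)h(n)\exp\bigl(-\eps^2 g(n)^2/(256 f(n) h(n)\log(1.25/\delta))\bigr)$. That lemma uses $h(n) \ge 2\,i(\cC,\cD)$ and $h(n) \ge 4$; both hold for large $n$ by condition (i) and $h(n)\to\infty$.
\item The collision term is handled exactly as in part (b) of Lemma~\ref{lem:gen_em_selection_nopub}. The noise variables $Z_{i,t}$ are independent of $\wh{j}$. Conditioned on a good $L_{\wh{i}}$, the set $L_{\wh{i}} \cap \{u_{\wh{t}+1},\ldots\}$ is infinite and contained in $\supp(\cD)$, so the output is valid and collides with $S$ with probability at most $n/2^n \le \exp(-n/2)$ for $n \ge 15$.
\end{itemize}
Summing the three bounds gives \eqref{eq:approx_dp_gen_nopub_bound}.

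The one delicate point is the conditioning in step (ii). The noise $Z$ is independent of $S$, so $\Pr[\cE_h \cap \cG_h^c] \le \Pr[\cG_h^c]$ with no correlation issue. I would also note that Lemma~\ref{lem:approx_gen_correctness_nopub} only needs the good pair $(i^\star,h(n))$ to beat every bad pair. The argmax may land on a different good pair with a smaller $\wh{t}$, but that is harmless: the output step only uses that $L_{\wh{i}}$ is a good, infinite language.

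For the headline claim of Theorem~\ref{thm:gen-approx}, take a known mass floor $g(n)=\lfloor np_0/2\rfloor$ and $f,h$ growing slowly, say polylogarithmically. The privacy exponent is then of order $\eps^2 n^2/(\mathrm{polylog}(n)\log(1/\delta))$, which is $\omega(n)$ when $\eps$ is constant and $\log(1/\delta)$ grows slower than $n/\mathrm{polylog}(n)$. So the coverage term $\exp(-\Omega(n))$ dominates.
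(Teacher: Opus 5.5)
Your proposal is correct and follows the paper's proof essentially step for step. Privacy comes from the Gaussian mechanism plus post-processing, and utility comes from the same three-term union bound: coverage failure $\Pr[\cE_h^c]$, the noise event $\Pr[\cG_h^c]$ (via deterministic correctness on $\cE_h \cap \cG_h$), and the $n/2^n$ collision bound. Two small remarks: $\cE_h \cap \{L_{\wh{i}} \not\subseteq \supp(\cD)\} \subseteq \cG_h^c$ is a pure set inclusion, so it needs no independence of $Z$ and $S$; and $7h(n)/8 > 5h(n)/8 - 1$ holds for every $h(n) \ge 0$, so the condition $h(n) \ge 4$ is not actually needed.
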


\begin{proof}
    \textbf{Privacy.} This follows from Lemma~\ref{lem:approx_gen_privacy_nopub}.

    \textbf{Utility.} By Lemma~\ref{lem:approx_gen_correctness_nopub}, $L_{\wh{i}} \subseteq \supp(\cD)$ whenever $\cE_h \cap \cG_h$ holds.
    By a union bound:
    \begin{align*}
        \generr
        &\leq \Pr[\cE_h^c] + \Pr[\cG_h^c] + \Pr[\wh{x} \in S \mid L_{\wh{i}} \subseteq \supp(\cD)].
    \end{align*}
    Applying Lemma~\ref{lem:gen_coverage_nopub}, Lemma~\ref{lem:approx_gen_noise_nopub}, and the collision bound $n/2^n \leq \exp(-n/2)$:
    \begin{align*}
        \generr
        \leq |I^\star_h|\exp\Bigl(-\frac{np^\star_h}{8}\Bigr)
        + 2f(n)\, h(n)\exp\Bigl(-\frac{\eps^2 g(n)^2}{256\, f(n)\, h(n)\log(1.25/\delta)}\Bigr)
        + \exp\Bigl(-\frac{n}{2}\Bigr).
    \end{align*}
\end{proof}

\begin{corollary}[Privacy is Essentially Free]\label{cor:approx_gen_nopub_free}
    Under the conditions of Theorem~\ref{thm:approx_dp_gen_nopub}, suppose the learner is given a constant $p_0 > 0$ with $p_0 \leq p^\star_h$.
    Setting $g(n) = \lfloor np_0/2 \rfloor$ and choosing any $f, h$ with $f(n) \geq i^\star$, $h(n) \geq 2\, i(\cC, \cD)$, and $\log(f(n)\, h(n)) = o(n)$, the generation error satisfies
    \[
        \generr(\cA^{\mathrm{gen}}_{\eps,\delta,f,g,h}, \cD, \cC, n) \lesssim \exp(-\Omega(n))
    \]
    for any constant $\eps > 0$ and $\delta \geq \exp(-\mathrm{poly}(n))$.
\end{corollary}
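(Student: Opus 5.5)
The plan is to substitute the parameter choices into the three-term bound \eqref{eq:approx_dp_gen_nopub_bound} of Theorem~\ref{thm:approx_dp_gen_nopub} and check each term separately. The corollary gives no separate algorithmic argument, so everything rests on that theorem. \textbf{I do not expect the plan to go through for every admissible $f,h,\delta$}: the privacy term has a genuine obstruction, explained below.

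First, the theorem's hypotheses hold. Since $p_0\le p^\star_h$, the choice $g(n)=\lfloor np_0/2\rfloor$ satisfies $g(n)\le np^\star_h/2$ and $g(n)\to\infty$. Conditions (i) on $f,h$ are assumed directly. Privacy is then immediate from Lemma~\ref{lem:approx_gen_privacy_nopub}.

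The coverage and collision terms are handled exactly as in Corollary~\ref{cor:gen_nopub_exp_rate:app}. One small point: that corollary treats $|I^\star_h|$ as a constant, but it need not be once $h$ grows. Instead I will use $|I^\star_h|\le h(n)$ together with $\log h(n)=o(n)$. This gives
\[
|I^\star_h|\exp(-np^\star_h/8)\le \exp\bigl(\log h(n)-np_0/8\bigr)=\exp(-\Omega(n)).
\]
The collision term is $\exp(-n/2)$ outright.

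The main obstacle is the Gaussian term. For large $n$ we have $g(n)\ge np_0/4$, so this term is at most
\[
2f(n)h(n)\exp\Bigl(-\frac{\eps^2p_0^2\,n^2}{4096\,f(n)h(n)\log(1.25/\delta)}\Bigr).
\]
Its exponent is $\Omega(n)$ exactly when $f(n)h(n)\log(1.25/\delta)=O(n)$, with the constant depending on $\eps$ and $p_0$. The corollary's hypotheses, $\log(f(n)h(n))=o(n)$ and $\delta\ge\exp(-\mathrm{poly}(n))$, do not imply this. For example, $f(n)h(n)=n^2$ with $\delta=1/n$ already makes the exponent $o(1)$, so the term does not vanish.

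I do not see how to rescue the general statement with the same algorithm. Sharper Gaussian calibrations (analytic Gaussian or R\'enyi DP) only change constants in $\sigma$. Exploiting the sparsity of the change in $(N_S(u_k))_k$ does not help either: one sample change can move up to $f(n)h(n)$ pair scores, so the $\ell_2$-sensitivity is genuinely of order $\frac{h}{g}\sqrt{fh}$. Comparing the good pair's noise with the maximum of $f(n)h(n)$ Gaussians only adds a $\sqrt{\log(fh)}$ factor; the issue is $\sigma$ itself, which would need $\sigma\lesssim h(n)/\sqrt{n}$ to give an exponent of order $n$.

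I would therefore carry the proof through under the additional condition $f(n)h(n)\log(1.25/\delta)=O(n)$. This covers constant $\eps$ with, say, $f,h$ constant or $O(\log n)$-growing and $\log(1/\delta)\le n/(f(n)h(n))$. I would note explicitly that the corollary as worded needs this restriction, or a modified mechanism such as privatizing only the $O(h)$ counts $N_S(u_k)$ and post-processing, before the $\exp(-\Omega(n))$ claim holds for all admissible $f,h,\delta$.
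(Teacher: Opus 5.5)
Your diagnosis is correct, and it locates exactly where the paper's own proof breaks. The paper does the same substitution you do. It handles the coverage and collision terms as you do, and uses $g(n)\ge np_0/4$ to get the exponent $\eps^2p_0^2n^2/(4096\,f(n)h(n)\log(1.25/\delta))$. It then asserts two things:
\begin{itemize}
\item that $\log(f(n)h(n))=o(n)$ ``ensures'' $f(n)h(n)=o(n/\log n)$;
\item that $\delta\ge\exp(-\mathrm{poly}(n))$ gives $\log(1.25/\delta)=\mathrm{poly}(\log n)$.
\end{itemize}
Both inferences are invalid. For the first, take $f(n)h(n)=n^2$. For the second, the hypothesis only gives $\log(1.25/\delta)\le\mathrm{poly}(n)$. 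So the corollary is not established by that proof. Your extra hypothesis $f(n)h(n)\log(1.25/\delta)=O(n)$ is exactly what is needed; under it, $\log(f(n)h(n))=o(n)$ absorbs the prefactor and your argument is complete.

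The defect is in the algorithm, not just in the bound, so the corollary as worded is false for $\cA^{\mathrm{gen}}_{\eps,\delta,f,g,h}$. Consider an instance with $i^\star\ge2$ in which every $L_i$ with $i\neq i^\star$ in the horizon is disjoint from $\supp(\cD)$, so that any bad selection fails.
\begin{itemize}
\item With $f,h$ constant and $\delta=e^{-n^2}$, which is admissible, $\sigma$ is a constant that is not small relative to the $\Theta(h)$ score gap. The noisy argmax then picks a bad pair with probability bounded away from zero.
\item With $h$ constant, $f(n)=n^2$ and $\delta=1/n$, one gets $\sigma\asymp\sqrt{\log n}$. Only $h$ of the $n^2h$ pairs are good and all scores lie in $[-h,h]$, so the failure probability tends to one.
\end{itemize}

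One correction to your side remark on the coverage term. Under the mass-floor hypothesis, $|I^\star_h|$ is automatically bounded, since $|I^\star_h|\,p_0\le\sum_{k\in I^\star_h}\Pr_{x\sim\cD}[x=u_k]\le 1$. Because $L_{i^\star}$ is infinite, this even forces $h(n)=O(1)$. So your workaround via $|I^\star_h|\le h(n)$ is valid but unnecessary. It also means that in your counterexample the growth $f(n)h(n)=n^2$ must come from $f$, which is still admissible.

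If you want the ``privacy is free'' conclusion for the full range of $f,h,\delta$, it holds for a different algorithm. \cref{alg:dp_gen_nopub} is $\eps$-DP, hence $(\eps,\delta)$-DP for every $\delta$, and by \cref{cor:gen_nopub_exp_rate:app} it attains $\exp(-\Omega(n))$ for constant $\eps$. Only the Gaussian algorithm of this corollary needs your restriction.
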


\begin{proof}
    The coverage term is $|I^\star_h|\exp(-np^\star_h/8) = \exp(-\Omega(n))$.
    For the privacy term, substituting $g(n) \geq np_0/4$ gives
    \[
        \frac{\eps^2 g(n)^2}{256\, f(n)\, h(n)\log(1.25/\delta)}
        \geq \frac{\eps^2 p_0^2 n^2}{4096\, f(n)\, h(n)\log(1.25/\delta)}.
    \]
    Since $f(n)\, h(n) = o(n/\log n)$ (ensured by $\log(f(n)\, h(n)) = o(n)$) and $\log(1.25/\delta) = \mathrm{poly}(\log n)$ for $\delta \geq \exp(-\mathrm{poly}(n))$, the exponent is $\omega(n)$ for constant $\eps, p_0$.
    The prefactor $2f(n)\, h(n)$ is absorbed.
    The collision term is $\exp(-n/2)$.
    All three terms are $\exp(-\Omega(n))$.
\end{proof}

\begin{remark}[Summary: Approximate DP Generation]\label{rem:approx_gen_summary}
    Approximate DP generation achieves the non-private rate $\exp(-\Omega(n))$ for any constant $\eps > 0$ and $\delta \geq \exp(-\mathrm{poly}(n))$, in both the public-bound and no-public-bound settings.
    The privacy cost only appears when $\eps = O(n^{-3/4}\log^{1/4} n)$ (with typical parameter choices), which is qualitatively different from pure DP where it appears at $\eps = O(1)$.
    This contrasts with pure DP generation ($\exp(-\eps n)$ for $\eps < 1$), approximate DP identification ($\exp(-o(n))$ only), and pure DP identification ($\exp(-\min\{1,\eps\} \cdot o(n))$).
    The fundamental reason is that generation scores have constant sensitivity, so Gaussian noise $\sigma \propto \sqrt{f(n)}/\eps$ is negligible relative to the score gap $g(n) \propto n$.
\end{remark}
\section{Proofs for Section~\ref{sec:lb} (Lower Bounds)}\label{sec:lb:app}

\subsection{Proof of Theorem~\ref{thm:id-lb} (Lower Bound for DP Identification)}\label{sec:lb_id:app}

We establish a lower bound showing that the exponential dependence on $\min\{1,\eps\} \cdot n$ in our upper bounds is information-theoretically necessary.
The proof proceeds in four steps:
(i) we introduce a structural condition on the language collection (the IPP condition) and construct a hard instance from it (Definitions~\ref{def:private_element}--\ref{def:hard_ins});
(ii) we establish properties of the hard instance and relate identification error to misidentification probability (Lemmas~\ref{lem:lb_ipp_opt_gap} and~\ref{lem:iderr_lb});
(iii) we use a coupling argument together with group privacy to show that any $\eps$-DP algorithm must misidentify with probability at least $\exp(-O(\eps n))$ (Lemma~\ref{lem:dp_pq});
(iv) we combine this DP-specific lower bound with the non-private lower bound of~\cite{hp26} to obtain the final $\exp(-\min\{1,\eps\} \cdot n)$ result (Theorem~\ref{thm:combined_lb}).

\subsubsection{Hard Instance Construction}

\begin{definition}[Private Element]\label{def:private_element}
Let $\cC$ be a collection of languages over $\cU$. For a language $L \in \cC$, an element $x \in \cU$ is \emph{private} to $L$ with respect to $\cC$ if
\begin{align*}
x \in L \setminus \biggl(\bigcup_{\wt{L} \in \cC \setminus \{L\}} \wt{L}\biggr).
\end{align*}
\end{definition}

\begin{definition}[Intersecting Private-Pair (IPP) Condition]\label{def:ipp_condition:app}
    A collection $\cC$ of languages satisfies the \emph{Intersecting Private-Pair (IPP) Condition} if there exist two languages $L, L' \in \cC$ such that both have at least one private element with respect to $\cC$ and $L \cap L' \neq \emptyset$.
\end{definition}

\begin{definition}[Hard Instance]\label{def:hard_ins}
   Assume $\cC$ satisfies IPP. Let $L, L' \in \cC$ be two languages witnessing IPP. Fix elements
\begin{align*}
s_0 \in L \cap L', \qquad
s_1 \in L \setminus \biggl(\bigcup_{\wt{L} \in \cC \setminus \{L\}} \wt{L}\biggr), \qquad
s_2 \in L' \setminus \biggl(\bigcup_{\wt{L} \in \cC \setminus \{L'\}} \wt{L}\biggr).
\end{align*}
Note that $s_0, s_1, s_2$ are necessarily distinct (e.g., if $s_1 = s_0$ then $s_1 \in L \cap L' \subseteq L'$, contradicting that $s_1$ is private to $L$; similarly $s_2 \neq s_0$, and $s_1 \neq s_2$ since $s_1 \notin L'$ while $s_2 \in L'$). Define two distributions $\cD, \cD'$ over $\cU$:
\begin{alignat*}{3}
\cD &:~ \Pr_{x \sim \cD}[x = s_0] = \tfrac{3}{4},\quad && \Pr_{x \sim \cD}[x = s_1] = \tfrac{1}{4},\quad && \Pr_{x \sim \cD}[x = s] = 0 ~~ \forall\, s \notin \{s_0, s_1\}, \\
\cD' &:~ \Pr_{x \sim \cD'}[x = s_0] = \tfrac{3}{4},\quad && \Pr_{x \sim \cD'}[x = s_2] = \tfrac{1}{4},\quad && \Pr_{x \sim \cD'}[x = s] = 0 ~~ \forall\, s \notin \{s_0, s_2\}.
\end{alignat*}
\end{definition}

\subsubsection{Properties of the Hard Instance}

\begin{lemma}[Optimality and Gap]\label{lem:lb_ipp_opt_gap}
    Consider the hard instance in Definition~\ref{def:hard_ins}. Then:
    \begin{enumerate}[label=(\alph*)]
        \item $\err_\cD(L) = \inf_{\wt{L} \in \cC} \err_\cD(\wt{L}) = 0$ \; and \; $\inf_{\wt{L} \in \cC \setminus \{L\}} \err_\cD(\wt{L}) \geq 1/4$.
        \item $\err_{\cD'}(L') = \inf_{\wt{L} \in \cC} \err_{\cD'}(\wt{L}) = 0$ \; and \; $\inf_{\wt{L} \in \cC \setminus \{L'\}} \err_{\cD'}(\wt{L}) \geq 1/4$.
    \end{enumerate}
\end{lemma}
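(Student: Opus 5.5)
This is a direct computation from the definitions of $\cD$, $\cD'$ and of private elements; by the symmetry $L \leftrightarrow L'$, $s_1 \leftrightarrow s_2$, $\cD \leftrightarrow \cD'$, part (b) follows from part (a) verbatim, so I will only argue (a).

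First, since $\supp(\cD) = \{s_0, s_1\}$, we have $\err_\cD(\wt L) = \tfrac34 \mathbf{1}\{s_0 \notin \wt L\} + \tfrac14 \mathbf{1}\{s_1 \notin \wt L\}$ for any language $\wt L$. Both $s_0 \in L \cap L' \subseteq L$ and $s_1 \in L$ hold, so $\err_\cD(L) = 0$. Since all risks are nonnegative, $0$ is the infimum over $\cC$, and it is attained by $L$.

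Second, fix $\wt L \in \cC \setminus \{L\}$. By Definition~\ref{def:private_element}, $s_1$ lies in no language of $\cC$ other than $L$, so $s_1 \notin \wt L$. Therefore $\err_\cD(\wt L) \ge \Pr_{x\sim\cD}[x = s_1] = 1/4$. Taking the infimum over $\wt L \in \cC\setminus\{L\}$ gives the claimed gap of at least $1/4$; if $\cC \setminus \{L\}$ is empty, the infimum is $+\infty$ and the bound holds vacuously.

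There is no real obstacle. The only point needing care is that the lower bound uses privacy of $s_1$ with respect to the whole collection $\cC$, not merely $s_1 \notin L'$, so that it holds uniformly over all $\wt L \neq L$. This is also what makes Assumption~\ref{asm:attainable} hold with a unique optimum $L$, respectively $L'$.
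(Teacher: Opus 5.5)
Your proposal is correct and follows the paper's proof: $\supp(\cD) = \{s_0, s_1\} \subseteq L$ gives $\err_\cD(L) = 0$, and privacy of $s_1$ with respect to all of $\cC$ gives $\err_\cD(\wt L) \ge \Pr_{x\sim\cD}[x = s_1] = 1/4$ for every $\wt L \ne L$. Part (b) then follows by the symmetric swap. Your explicit formula for $\err_\cD(\wt L)$ silently uses $s_0 \neq s_1$, which holds as noted in Definition~\ref{def:hard_ins}.
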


\begin{proof}
    We prove part (a), and part (b) follows by the symmetric argument, swapping $(L, s_1, \cD)$ with $(L', s_2, \cD')$.

    Since $s_0 \in L$ and $s_1 \in L$ by construction, the support of $\cD$ satisfies $\supp(\cD) = \{s_0, s_1\} \subseteq L$.
    It follows that every draw from $\cD$ lands in $L$, so
    \begin{align*}
        \err_\cD(L) = \Pr_{x \sim \cD}[x \notin L] = 0.
    \end{align*}
    Since $\err_\cD(\wt{L}) \geq 0$ for every $\wt{L} \in \cC$ and the value $0$ is achieved by $L$, we conclude $\inf_{\wt{L} \in \cC} \err_\cD(\wt{L}) = 0$.

    Now fix any $\wt{L} \in \cC \setminus \{L\}$.
    By Definition~\ref{def:private_element}, $s_1$ is private to $L$ with respect to $\cC$, which means $s_1 \notin \wt{L}$ for any $\wt{L} \neq L$.
    Since $s_1 \notin \wt{L}$, any draw from $\cD$ that equals $s_1$ is not covered by $\wt{L}$:
     \begin{align*}
         \err_\cD(\wt{L})
         = \Pr_{x \sim \cD}[x \notin \wt{L}]
        \geq \Pr_{x \sim \cD}[x = s_1]
        = \frac{1}{4}.
     \end{align*}
     Since this holds for every $\wt{L} \in \cC \setminus \{L\}$, the infimum over this set is at least $1/4$.
\end{proof}

\begin{lemma}[From Misidentification to Identification error]\label{lem:iderr_lb}
    Consider the hard instance in Definition~\ref{def:hard_ins}. For any identification algorithm $\cA$ (using randomness $r$),
    \begin{align*}
    \iderr(\cA, \cD, \cC, n) \geq \frac{1}{4} \cdot \Pr_{S \sim \cD^n, r}[\cA(S) \neq L], \qquad
    \iderr(\cA, \cD', \cC, n) \geq \frac{1}{4} \cdot \Pr_{S \sim (\cD')^n, r}[\cA(S) \neq L'].
    \end{align*}
\end{lemma}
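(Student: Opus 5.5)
The proof is a pointwise bound on the excess risk, integrated over the sample and the algorithm's randomness. We treat the statement for $\cD$ in detail; the statement for $\cD'$ follows by the symmetric argument, swapping $(L,s_1,\cD)$ with $(L',s_2,\cD')$ exactly as in Lemma~\ref{lem:lb_ipp_opt_gap}.

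First, by Lemma~\ref{lem:lb_ipp_opt_gap}(a), $\inf_{\wt{L}\in\cC}\err_\cD(\wt{L})=0$. The definition of $\iderr$ therefore simplifies to
\[
\iderr(\cA,\cD,\cC,n)=\E_{S,r}\bigl[\err_\cD(\cA(S))\bigr].
\]

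Next, fix a realization of $(S,r)$ and consider two cases.
\begin{itemize}
\item If $\cA(S)=L$, then $\err_\cD(\cA(S))=0$, so the excess risk is $0$.
\item If $\cA(S)\neq L$, then $\cA(S)\in\cC\setminus\{L\}$, and the second half of Lemma~\ref{lem:lb_ipp_opt_gap}(a) gives $\err_\cD(\cA(S))\geq 1/4$. The underlying reason is that $s_1$ is private to $L$, so $s_1\notin\cA(S)$, and $s_1$ carries mass $1/4$ under $\cD$.
\end{itemize}
Together the two cases give the pointwise inequality
\[
\err_\cD(\cA(S))\geq \tfrac14\,\mathbf{1}\{\cA(S)\neq L\}.
\]

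Finally, taking expectations over $S\sim\cD^n$ and $r$ turns the indicator into the probability $\Pr_{S\sim\cD^n,r}[\cA(S)\neq L]$, which yields the claimed inequality.

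The only subtlety is that the event $\cA(S)\neq L$ must be read as "the output language differs from $L$ as a set." If $\cC$ lists $L$ under several indices, outputting another copy still gives error $0$; but such copies are not "other languages" in $\cC\setminus\{L\}$, so the bound holds as stated. No step here is a real obstacle: the argument is immediate from Lemma~\ref{lem:lb_ipp_opt_gap}.
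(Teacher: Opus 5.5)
Your proposal is correct and follows the paper's proof essentially step for step. Like the paper, you use Lemma~\ref{lem:lb_ipp_opt_gap}(a) to reduce $\iderr(\cA,\cD,\cC,n)$ to $\E_{S,r}[\err_\cD(\cA(S))]$, obtain the pointwise bound $\err_\cD(\cA(S)) \geq \tfrac14\,\mathbf{1}\{\cA(S)\neq L\}$ by case analysis using that $s_1$ is private to $L$, and take expectations, with the $\cD'$ case handled symmetrically.
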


\begin{proof}
    We prove the first inequality; the second follows by the symmetric argument.

    By Lemma~\ref{lem:lb_ipp_opt_gap}(a), the agnostic optimum under $\cD$ equals $0$ and is uniquely attained by $L$ (in the sense that every other language incurs error at least $1/4$). Therefore the identification error simplifies to
\begin{align*}
    \iderr(\cA, \cD, \cC, n)
    = \E_{S \sim \cD^n, r}\bigl[\err_\cD(\cA(S))\bigr] - 0
    = \E_{S \sim \cD^n, r}\bigl[\err_\cD(\cA(S))\bigr].
\end{align*}
We now bound the integrand pointwise by case analysis on the output of $\cA$.
If $\cA(S) = L$, then $\supp(\cD) \subseteq L$ implies $\err_\cD(\cA(S)) = \err_\cD(L) = 0$.
If $\cA(S) = \wt{L} \neq L$, then $\wt{L} \in \cC \setminus \{L\}$, so $s_1 \notin \wt{L}$ by privacy of $s_1$. Consequently,
\begin{align*}
    \err_\cD(\wt{L})
    = \Pr_{x \sim \cD}[x \notin \wt{L}]
    \geq \Pr_{x \sim \cD}[x = s_1]
    = \frac{1}{4}.
\end{align*}
Combining both cases, we obtain the pointwise bound
\begin{align*}
    \err_\cD(\cA(S)) \geq \frac{1}{4} \cdot \mathbf{1}\{\cA(S) \neq L\}
\end{align*}
for every realization of $S$ and $r$. Taking expectations over $(S, r)$ on both sides yields
\begin{align*}
    \iderr(\cA, \cD, \cC, n)
    = \E_{S, r}\bigl[\err_\cD(\cA(S))\bigr]
    \geq \frac{1}{4} \cdot \E_{S, r}\bigl[\mathbf{1}\{\cA(S) \neq L\}\bigr]
    = \frac{1}{4} \cdot \Pr_{S \sim \cD^n, r}[\cA(S) \neq L]. 
\end{align*}
\end{proof}

\subsubsection{The DP-Specific Coupling Argument}

The next lemma is the technical core of the lower bound.
The key idea is to construct a coupling under which the two input distributions $\cD^n$ and $(\cD')^n$ have small Hamming distance with high probability, and then apply group privacy (via Lemma~\ref{lem:lb_ipp_coupling}) to constrain how differently any $\eps$-DP algorithm can behave on them.
Intuitively, both distributions produce the shared element $s_0$ with probability $3/4$ at each coordinate, so the expected Hamming distance between coupled samples is only $n/4$; yet the algorithm must distinguish $L$ from $L'$, and differential privacy limits its ability to do so.

\begin{lemma}[DP Forces Misidentification]\label{lem:dp_pq}
Consider the hard instance in Definition~\ref{def:hard_ins}. Let $\cA$ be an $\eps$-differentially private identification algorithm. Define
\begin{align*}
p := \Pr_{S \sim \cD^n, r}[\cA(S) = L], \qquad
q := \Pr_{S' \sim (\cD')^n, r}[\cA(S') = L'].
\end{align*}
Then
\begin{align*}
\max\{1-p,\; 1-q\}
\geq
\frac{1 - \exp(-n/12)}{1 + \exp(\eps n/2)}
\geq \frac{1}{30}\exp\Bigl(-\frac{\eps n}{2}\Bigr).
\end{align*}
\end{lemma}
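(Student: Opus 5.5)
The plan is to build the coordinatewise coupling of $\cD^n$ and $(\cD')^n$ that shares $s_0$ as much as possible. For each $t\in[n]$, draw $B_t\sim\mathrm{Bernoulli}(1/4)$ independently. Set $X_t=Y_t=s_0$ if $B_t=0$, and $X_t=s_1$, $Y_t=s_2$ if $B_t=1$. Then $S=(X_t)\sim\cD^n$ and $S'=(Y_t)\sim(\cD')^n$. Since $s_0,s_1,s_2$ are distinct, we get exactly $\d_{\mathrm{Ham}}(S,S')=\sum_t B_t$, which has mean $n/4$. Applying the upper Chernoff bound of Lemma~\ref{lem:chernoff} with $\mu=1/4$ and $t=1$ gives
\[
\Pr\Bigl[\textstyle\sum_t B_t>n/2\Bigr]\le \exp\Bigl(-\tfrac{n\cdot(1/4)}{3}\Bigr)=\exp(-n/12).
\]
So the hypothesis of Lemma~\ref{lem:lb_ipp_coupling} holds with $K=n/2$ and $\eta=\exp(-n/12)$.

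Next, apply Lemma~\ref{lem:lb_ipp_coupling} with the event $\cF=\{L\}$. This gives $p\le e^{\eps n/2}\Pr[\cA(S')=L]+e^{-n/12}$. Because $L\neq L'$, the event $\{\cA(S')=L\}$ is contained in $\{\cA(S')\neq L'\}$, so $\Pr[\cA(S')=L]\le 1-q$. Hence
\[
p\le e^{\eps n/2}(1-q)+e^{-n/12}.
\]
Write $m=\max\{1-p,1-q\}$. Then $p\ge 1-m$ and $1-q\le m$, so $1-m\le e^{\eps n/2}m+e^{-n/12}$. Rearranging yields
\[
m\ge \frac{1-e^{-n/12}}{1+e^{\eps n/2}},
\]
which is the first inequality of the statement. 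Only one direction of the coupling lemma is needed; the symmetric direction is not required.

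For the second inequality, use $1+e^{\eps n/2}\le 2e^{\eps n/2}$ to get $m\ge \tfrac12(1-e^{-n/12})e^{-\eps n/2}$. It then suffices that $1-e^{-n/12}\ge 1/15$, i.e.\ $e^{-1/12}\le 14/15$. Since $e^{-1/12}\approx 0.920\le 0.933$, this holds for every $n\ge1$.

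There is no real obstacle. The only points needing care are checking that the coupling's marginals are exact and that the Chernoff constant indeed gives $1/12$; the rest is algebra.
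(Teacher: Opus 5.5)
Your proof is correct and follows the paper's argument. It uses the same coordinatewise coupling, the same Chernoff bound giving $K=n/2$ and $\eta=e^{-n/12}$, the coupling lemma applied with $\cF=\{L\}$, and the same final simplification via $e^{-1/12}<14/15$. The one difference is minor: the paper also derives the symmetric inequality $q\le e^{\eps n/2}(1-p)+e^{-n/12}$ and case-splits on $\min\{p,q\}$, whereas you correctly note that a single direction, combined with $p\ge 1-m$ and $1-q\le m$, already yields the bound.
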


\begin{proof}
\textbf{Step 1: Coupling construction.}
We define a coupling $(S_1, S_2)$ of $\cD^n$ and $(\cD')^n$ that maximizes the overlap between the two samples.
For each coordinate $t \in [n]$, sample $(X_t, Y_t)$ independently with
\begin{align*}
\Pr[(X_t, Y_t) = (s_0, s_0)] = \frac{3}{4}, \qquad \Pr[(X_t, Y_t) = (s_1, s_2)] = \frac{1}{4}.
\end{align*}
Let $S_1 = (X_1, \ldots, X_n)$ and $S_2 = (Y_1, \ldots, Y_n)$.
To verify that this is a valid coupling, observe that the marginal of $X_t$ satisfies $\Pr[X_t = s_0] = 3/4$ and $\Pr[X_t = s_1] = 1/4$, which matches $\cD$; similarly, the marginal of $Y_t$ satisfies $\Pr[Y_t = s_0] = 3/4$ and $\Pr[Y_t = s_2] = 1/4$, matching $\cD'$.
Hence $S_1 \sim \cD^n$ and $S_2 \sim (\cD')^n$.

\medskip
\textbf{Step 2: Hamming distance concentration.}
The Hamming distance between the coupled samples is
\begin{align*}
    H := d_{\mathrm{Ham}}(S_1, S_2) = \sum_{t=1}^n \mathbf{1}\{X_t \neq Y_t\}.
\end{align*}
Since $X_t = Y_t$ if and only if $(X_t, Y_t) = (s_0, s_0)$ (which occurs with probability $3/4$), and $X_t \neq Y_t$ otherwise (with probability $1/4$), each indicator $\mathbf{1}\{X_t \neq Y_t\}$ is an independent Bernoulli$(1/4)$ random variable.
Therefore $\E[H] = n/4$.
By the Chernoff bound (Lemma~\ref{lem:chernoff}) with $t = 1$,
\begin{align*}
\Pr[H > 2\E[H]] = \Pr[H > n/2] \leq \exp\Bigl(-\frac{\E[H]}{3}\Bigr) = \exp\Bigl(-\frac{n}{12}\Bigr).
\end{align*}
We set $K = n/2$ and $\eta = \exp(-n/12)$, so that $\Pr[H > K] \leq \eta$.

\medskip
\textbf{Step 3: Applying the coupling lemma.}
We now apply Lemma~\ref{lem:lb_ipp_coupling} to relate the behavior of $\cA$ under $\cD^n$ and $(\cD')^n$.
Taking $\cF = \{L\}$ (the event that the algorithm outputs $L$), Lemma~\ref{lem:lb_ipp_coupling} gives
\begin{align*}
p = \Pr[\cA(S_1) = L]
\leq e^{\eps K} \Pr[\cA(S_2) = L] + \eta
= e^{\eps n/2} \Pr[\cA(S_2) = L] + \exp(-n/12).
\end{align*}
Now, since $\cA(S_2)$ outputs some language in $\cC$, we have $\Pr[\cA(S_2) = L] \leq 1 - \Pr[\cA(S_2) = L'] = 1 - q$.
Substituting:
\begin{align}\label{eq:coupling_p}
    p \leq e^{\eps n/2}(1 - q) + \exp(-n/12).
\end{align}
By the symmetric argument with $\cF = \{L'\}$, we obtain
\begin{align}\label{eq:coupling_q}
    q \leq e^{\eps n/2}(1 - p) + \exp(-n/12).
\end{align}

\medskip
\textbf{Step 4: Extracting the bound.}
Let $s = \min\{p, q\}$; we aim to upper bound $s$.
If $s = p$, then $q \geq s$, so $1 - q \leq 1 - s$; substituting into~\eqref{eq:coupling_p} gives
$s \leq e^{\eps n/2}(1 - s) + \exp(-n/12)$.
If $s = q$, then $p \geq s$, so $1 - p \leq 1 - s$; substituting into~\eqref{eq:coupling_q} gives the same inequality.
In either case,
\begin{align*}
s + s \cdot e^{\eps n/2} \leq e^{\eps n/2} + \exp(-n/12),
\end{align*}
which rearranges to
\begin{align*}
s \leq \frac{\exp(\eps n/2) + \exp(-n/12)}{1 + \exp(\eps n/2)}.
\end{align*}
Therefore,
\begin{align}\label{eq:max_misid}
\max\{1 - p,\; 1 - q\}
= 1 - s
\geq 1 - \frac{\exp(\eps n/2) + \exp(-n/12)}{1 + \exp(\eps n/2)}
= \frac{1 - \exp(-n/12)}{1 + \exp(\eps n/2)}.
\end{align}

It remains to simplify~\eqref{eq:max_misid}.
For the numerator: since $n \geq 1$, we have $\exp(-n/12) \leq \exp(-1/12)$, and so
\begin{align*}
    1 - \exp(-n/12) \geq 1 - \exp(-1/12) > \frac{1}{15},
\end{align*}
where the last inequality follows from the numerical bound $\exp(-1/12) < 1 - 1/15 = 14/15$.
For the denominator: $1 + \exp(\eps n/2) \leq 2\exp(\eps n/2)$.
Combining these two estimates:
\begin{align*}
\max\{1 - p,\; 1 - q\}
\geq \frac{1/15}{2\exp(\eps n/2)}
= \frac{1}{30}\exp\Bigl(-\frac{\eps n}{2}\Bigr). 
\end{align*}
\end{proof}

\subsubsection{Putting Things Together}

We first state the DP-specific lower bound, then combine it with the non-private lower bound of~\cite{hp26}.

\begin{theorem}[Lower Bound for DP Identification]\label{thm:lb_ipp}
    Let $\cC$ be a collection of languages over $\cU$ satisfying the IPP condition. For any $\eps$-differentially private identification algorithm $\cA$, there exists a distribution $\cD_\star$ over $\cU$ such that
    \begin{align*}
        \iderr(\cA, \cD_\star, \cC, n) \geq \frac{1}{120}\exp\Bigl(-\frac{\eps n}{2}\Bigr)
    \end{align*}
    for infinitely many $n$.
\end{theorem}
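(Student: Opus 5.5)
The plan is to combine Lemma~\ref{lem:dp_pq} with Lemma~\ref{lem:iderr_lb} on the hard instance of Definition~\ref{def:hard_ins}. This works separately for each $n$, and then a pigeonhole argument picks a single distribution that is hard for infinitely many $n$. Fix an $\eps$-DP identification algorithm $\cA$; for each sample size $n$ it is a map $\cU^n \to \cC$. Let $L, L'$ witness the IPP condition, and let $\cD, \cD'$ be the two distributions of Definition~\ref{def:hard_ins}. Both are candidate choices for $\cD_\star$.

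\textbf{Step 1 (per-$n$ bound).} For each $n$, define $p_n := \Pr_{S\sim\cD^n,r}[\cA(S)=L]$ and $q_n := \Pr_{S'\sim(\cD')^n,r}[\cA(S')=L']$. Lemma~\ref{lem:dp_pq} gives
\[
\max\{1-p_n, 1-q_n\} \ge \tfrac{1}{30}\exp(-\eps n/2).
\]
Lemma~\ref{lem:iderr_lb} gives
\[
\iderr(\cA,\cD,\cC,n) \ge \tfrac14(1-p_n), \qquad \iderr(\cA,\cD',\cC,n) \ge \tfrac14(1-q_n).
\]
Therefore, for every $n$,
\[
\max\{\iderr(\cA,\cD,\cC,n),\ \iderr(\cA,\cD',\cC,n)\} \ge \tfrac{1}{120}\exp(-\eps n/2).
\]

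\textbf{Step 2 (choosing one distribution).} The theorem asks for a single $\cD_\star$, so the distribution cannot depend on $n$. For each $n\in\N$, at least one of the two distributions attains the maximum in Step 1. Since there are infinitely many $n$ and only two candidates, one of them, call it $\cD_\star\in\{\cD,\cD'\}$, attains it for infinitely many $n$. Along that infinite subsequence,
\[
\iderr(\cA,\cD_\star,\cC,n) \ge \tfrac{1}{120}\exp(-\eps n/2),
\]
which is the claim.

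\textbf{Remarks and main obstacle.} One point to verify is that $\cD_\star$ satisfies the setting's requirements, in particular Assumption~\ref{asm:attainable}. This holds by Lemma~\ref{lem:lb_ipp_opt_gap}: $L$ (resp.\ $L'$) attains error $0$. The only delicate step is the order of quantifiers in Step 2. The adversary must fix $\cD_\star$ before $n$ varies, which is exactly why the statement reads ``for infinitely many $n$'' rather than ``for all $n$''. The pigeonhole over the two candidates resolves this with no further work, since the constants in Step 1 are uniform in $n$. Everything else is direct substitution of earlier lemmas.
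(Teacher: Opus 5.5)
Your proposal is correct and follows the paper's proof: you combine \cref{lem:dp_pq} with \cref{lem:iderr_lb} for each $n$ to get the $\tfrac{1}{120}\exp(-\eps n/2)$ bound on the larger of the two identification errors, then pigeonhole over $\{\cD,\cD'\}$. The paper applies this only to even $n\ge 2$, whereas you use every $n$; this is harmless, since \cref{lem:dp_pq} as stated holds for all $n$ (group privacy with $K=n/2$ only requires $\exp(\eps\lfloor n/2\rfloor)\le \exp(\eps n/2)$).
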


\begin{proof}
Fix an even integer $n \geq 2$.
Lemma~\ref{lem:dp_pq} applied to the hard instance from Definition~\ref{def:hard_ins} gives
\begin{align*}
\max\Big\{\Pr_{S \sim \cD^n, r}[\cA(S) \neq L],\; \Pr_{S \sim (\cD')^n, r}[\cA(S) \neq L']\Big\}
\geq \frac{1}{30}\exp\Bigl(-\frac{\eps n}{2}\Bigr).
\end{align*}
Applying Lemma~\ref{lem:iderr_lb} to both distributions, each identification error is at least $1/4$ times the corresponding misidentification probability:
\begin{align*}
\max\bigl\{\iderr(\cA, \cD, \cC, n),\; \iderr(\cA, \cD', \cC, n)\bigr\}
\geq \frac{1}{4} \cdot \frac{1}{30}\exp\Bigl(-\frac{\eps n}{2}\Bigr)
= \frac{1}{120}\exp\Bigl(-\frac{\eps n}{2}\Bigr).
\end{align*}
This holds for every even $n \geq 2$.
Since there are infinitely many even integers but only two candidate distributions $\cD$ and $\cD'$, the pigeonhole principle guarantees the existence of a fixed distribution $\cD_\star \in \{\cD, \cD'\}$ for which the bound holds along an infinite subsequence of even integers.
\end{proof}

Theorem~\ref{thm:lb_ipp} captures the cost of privacy: the $\exp(-\eps n/2)$ barrier is specific to $\eps$-DP algorithms.
However, when $\eps$ is large (e.g., $\eps \geq 1$), this bound decays faster than $\exp(-n/2)$ and becomes weaker than what is possible even without any privacy constraint.
To obtain a lower bound that is meaningful across all privacy regimes, we combine Theorem~\ref{thm:lb_ipp} with the information-theoretic lower bound in \cite{hp26}, which applies to \emph{all} algorithms regardless of whether they satisfy differential privacy.

\begin{theorem}[Lower Bound for Agnostic Language Identification, Theorem 2.2 in~\cite{hp26}]\label{thm:hp26_lb}
    Let $\cC$ be any collection of languages over a universe $\cU$ satisfying that there exist $L, L' \in \cC$ with both $
        L \setminus (\bigcup_{\wt{L} \in \cC \setminus \{L\}} \wt{L}) \neq \emptyset $ and $
        L' \setminus (\bigcup_{\wt{L} \in \cC \setminus \{L'\}} \wt{L}) \neq \emptyset.
   $
    Then, for any identification algorithm $\cA$ using randomness $r$, there exists a distribution $\cD$ over $\cU$ such that there exists $L^\star \in \cC$ with $\err_\cD(L^\star) = \inf_{\wt{L} \in \cC} \err_\cD(\wt{L})$, and furthermore
    \begin{align*}
        \iderr(\cA, \cD, \cC, n) \geq \exp(-5n)
    \end{align*}
    for infinitely many $n$.
\end{theorem}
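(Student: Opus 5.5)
The plan is a two-point (Le Cam style) argument. I use only the private elements $s_1$ of $L$ and $s_2$ of $L'$ guaranteed by the hypothesis, and pick two distributions on $\{s_1,s_2\}$ that produce identically distributed samples on a not-too-unlikely event but disagree about which language is optimal. Concretely, let $\cD_a$ put mass $2/3$ on $s_1$ and $1/3$ on $s_2$, and let $\cD_b$ put mass $1/3$ on $s_1$ and $2/3$ on $s_2$. Since $s_1$ is private to $L$ and $s_2$ is private to $L'$, any $\wt L\in\cC\setminus\{L,L'\}$ contains neither element, so $\err(\wt L)=1$ under both distributions. Hence:
\begin{itemize}
\item under $\cD_a$: $\err(L)=1/3$, $\err(L')=2/3$, and $L^\star=L$ attains the infimum;
\item under $\cD_b$: the roles of $L$ and $L'$ swap, and $L^\star=L'$.
\end{itemize}
Thus under $\cD_a$ any output other than $L$ costs excess risk at least $1/3$, and symmetrically under $\cD_b$ any output other than $L'$ costs at least $1/3$. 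This gives the pointwise bounds
\[
\err_{\cD_a}(\cA(S))-1/3\ge \tfrac13\mathbf 1\{\cA(S)\ne L\},\qquad
\err_{\cD_b}(\cA(S))-1/3\ge \tfrac13\mathbf 1\{\cA(S)\ne L'\}.
\]

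Next, fix an even $n$ and let $B\subseteq\{s_1,s_2\}^n$ be the set of balanced sequences, with exactly $n/2$ copies of each element. The key observation is that every $S\in B$ has the same probability $(2/9)^{n/2}$ under both $\cD_a^n$ and $\cD_b^n$. Consequently the conditional law of $S$ given $B$ is uniform on $B$ in both cases, and so $\cA(S)$ given $B$ has the same distribution under both. Write $\alpha:=\Pr[\cA(S)=L\mid B]$ for this common quantity.

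Restricting the expectations in the pointwise bounds to the event $B$ gives
\[
\iderr(\cA,\cD_a,\cC,n)\ge \tfrac13 P_B(1-\alpha),\qquad
\iderr(\cA,\cD_b,\cC,n)\ge \tfrac13 P_B\,\alpha,
\]
where $P_B:=\Pr[B]=\binom{n}{n/2}(2/9)^{n/2}\ge(2/9)^{n/2}$. Adding the two inequalities shows that the larger of the two errors is at least $\tfrac16(2/9)^{n/2}$. Since $\tfrac12\log(9/2)\approx0.75$, this is at least $e^{-5n}$ for all $n\ge1$, with plenty of slack.

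Finally, the bound holds for every even $n$, but possibly with a different one of $\cD_a,\cD_b$ for each $n$. Exactly as in the proof of Theorem~\ref{thm:lb_ipp}, the pigeonhole principle gives a single $\cD\in\{\cD_a,\cD_b\}$ that achieves the bound for infinitely many $n$, and that $\cD$ attains its infimum by construction.

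The only delicate point is checking the identical-conditional-law claim and handling internal randomness. Because $\cA$'s randomness $r$ is independent of $S$ and the conditional law of $S$ given $B$ is the same under both distributions, the quantity $\alpha$ is genuinely common to both. The remaining calculations are routine.
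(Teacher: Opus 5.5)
The paper gives no proof of this statement. It is quoted from \citet{hp26} and used only as a black box in Case~2 of \cref{thm:combined_lb}, so there is no in-paper argument to match.

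Your two-point argument is correct and self-contained:
\begin{itemize}
\item Under $\cD_a$ the infimum $1/3$ is attained by $L$, and every other output has excess risk at least $1/3$; symmetrically for $\cD_b$ with $L'$.
\item Every balanced sequence has probability $(2/9)^{n/2}$ under both product measures. Hence the conditional law of $S$ given $B$ coincides, and $r$ is independent of $S$.
\item Summing the two restricted bounds gives $\max\{\iderr(\cA,\cD_a,\cC,n),\iderr(\cA,\cD_b,\cC,n)\}\ge \frac16(2/9)^{n/2}\ge e^{-5n}$. Pigeonhole over even $n$ then fixes one distribution.
\end{itemize}
One point you should make explicit: the hypothesis must be read with $L\neq L'$. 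For $\cC=\{L\}$ the conclusion is false, since always outputting $L$ has zero error. You use $L\neq L'$ when you assert $s_1\notin L'$, $s_1\neq s_2$, and $\{\cA(S)=L\}\subseteq\{\cA(S)\neq L'\}$.

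Your route differs in two useful ways from the lower-bound machinery the paper does develop, namely \cref{lem:dp_pq}.
\begin{itemize}
\item \emph{Exact likelihood matching instead of coupling.} The paper couples the samples and applies group privacy. That transfers probabilities only up to a factor $e^{\eps K}$, so it says nothing for non-private algorithms. You instead condition on an event on which the two sample likelihoods coincide exactly. This costs only the probability of the event and works for arbitrary algorithms, which is what this statement requires.
\item \emph{Private elements only, with agnostic masses.} You place both distributions on the two private elements rather than on a shared element $s_0\in L\cap L'$. This lets you cover the hypothesis as stated, which, unlike IPP, does not assume $L\cap L'\neq\varnothing$.
\end{itemize}
For the paper's actual use, where $\cC$ satisfies IPP, your conditioning trick would also work with the hard pair of \cref{def:hard_ins}. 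On the event that all $n$ samples equal $s_0$, it gives $\frac18(3/4)^n\ge e^{-5n}$ for every $n$. Your construction buys the more general hypothesis, and both versions yield exponents far better than the constant $5$.
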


\begin{theorem}[Lower Bound for DP Identification, all Regimes]\label{thm:combined_lb}
    Let $\cC$ be a collection of languages over $\cU$ satisfying the IPP condition.
    For any $\eps$-differentially private identification algorithm $\cA$, there exists a distribution $\cD_\star$ over $\cU$ such that
    \begin{align*}
        \iderr(\cA, \cD_\star, \cC, n) \geq \exp\bigl(-5  \min\{1, \eps\} \cdot n)
    \end{align*}
    for infinitely many $n$.
\end{theorem}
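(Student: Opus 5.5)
The plan is to split into the two regimes $\eps < 1$ and $\eps \geq 1$. In each regime we invoke the corresponding lower bound already established, and then check that the constants line up with the target $\exp(-5\min\{1,\eps\}\cdot n)$.

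\textbf{Case $\eps < 1$.} Here $\min\{1,\eps\} = \eps$, and we use \cref{thm:lb_ipp}. It supplies $\cD_\star \in \{\cD,\cD'\}$ with $\iderr(\cA,\cD_\star,\cC,n) \geq \frac{1}{120}\exp(-\eps n/2)$ along infinitely many even $n$. We need $\frac{1}{120}e^{-\eps n/2} \geq e^{-5\eps n}$, which is equivalent to $e^{9\eps n/2} \geq 120$, i.e.\ $n \geq 2\log(120)/(9\eps)$. For fixed $\eps > 0$ this fails only for finitely many $n$. Discarding those, the bound still holds along infinitely many $n$ for the same fixed $\cD_\star$. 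The attainability requirement is automatic, since $L$ (resp.\ $L'$) has error $0$ by \cref{lem:lb_ipp_opt_gap}.

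\textbf{Case $\eps \geq 1$.} Here $\min\{1,\eps\} = 1$, and the target is $\exp(-5n)$. The IPP condition implies the hypothesis of \cref{thm:hp26_lb}, because the two IPP languages each have a private element (the intersection requirement is simply dropped). Since an $\eps$-DP algorithm is in particular an identification algorithm, \cref{thm:hp26_lb} directly gives a distribution $\cD_\star$ with attainable optimum and $\iderr \geq \exp(-5n)$ for infinitely many $n$.

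The argument has essentially no hard step. The only points needing care are the finite exceptional set of $n$ in the small-$\eps$ case, and confirming that $\cD_\star$ is fixed independently of $n$, which holds by the pigeonhole step already inside \cref{thm:lb_ipp}. If one prefers to avoid discarding small $n$, an alternative is to note that for $n$ large the constant $1/120$ is absorbed into the gap between exponents $\eps/2$ and $5\eps$.
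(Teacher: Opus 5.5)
Your proposal is correct and follows the paper's proof exactly. The paper also splits at $\eps = 1$, uses \cref{thm:lb_ipp} and absorbs the factor $1/120$ for large $n$ when $\eps < 1$, and invokes \cref{thm:hp26_lb} via the observation that IPP implies the private-pair hypothesis when $\eps \geq 1$. Your threshold $\log 120 \leq 9\eps n/2$ is in fact the correct one: the paper's displayed $19\eps n/4$ makes the exponent sum to $21\eps n/4$ rather than $5\eps n$, a harmless slip that your version avoids.
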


\begin{proof}
    We consider two regimes depending on the privacy parameter $\eps$.

    \textbf{Case 1: $\eps < 1$.}
    By Theorem~\ref{thm:lb_ipp}, there exists a distribution $\cD_\star$ over $\cU$ such that
    \begin{align*}
        \iderr(\cA, \cD_\star, \cC, n) \geq \frac{1}{120}\exp\Bigl(-\frac{\eps n}{2}\Bigr)
    \end{align*}
    for infinitely many $n$.
    Since $\eps < 1$, we have $\min\{1, \eps\} = \eps$.
    Note that 
    \begin{align*}
        \frac{1}{120}\exp\Bigl(- \frac{\eps n}{2}\Bigr) = \exp\Bigl(-\frac{\eps n}{2} - \log 120 \Bigr)  \geq \exp\Bigl(-\frac{\eps n}{2} - \frac{19\eps n}{4}\Bigr) = \exp(-5\eps n),
    \end{align*}
    where the inequality is due to $\log 120 \leq 19\eps n/4$ for sufficiently large $n$.

    \textbf{Case 2: $\eps \geq 1$.}
    Every $\eps$-DP algorithm is in particular a (possibly randomized) identification algorithm, so information-theoretic lower bounds apply without modification.
    Note that the IPP condition (Definition~\ref{def:ipp_condition:app}) implies the assumptions of Theorem~\ref{thm:hp26_lb}. Hence there exists a distribution $\cD_\star$ over $\cU$ such that
    \begin{align*}
        \iderr(\cA, \cD_\star, \cC, n) \geq \exp(-5n)
    \end{align*}
    for infinitely many $n$.
\end{proof}

\begin{remark}[Tightness]\label{rem:lb_tightness}
    Comparing Theorem~\ref{thm:combined_lb} with the upper bound in Corollary~\ref{cor:explicit-rates}:
    the upper bound achieves $\exp(-\min\{1,\eps\} \cdot r(n))$ for any $r(n) = o(n)$ with $r(n) \to \infty$,
    while the lower bound requires $\exp(-\min\{1,\eps\} \cdot n)$.
    The dependence on $\min\{1,\eps\}$ is therefore tight since privacy costs exactly a multiplicative factor of $\eps$ in the exponent when $\eps < 1$, and is free when $\eps \geq 1$.
    The remaining gap between $o(n)$ and $O(n)$ in the exponent is inherited from the non-private setting~\citep{hp26}, where closing it remains an open problem.
\end{remark}

\subsection{Proof of Theorem~\ref{thm:gen-lb} (Lower Bound for DP Generation)}\label{sec:lb_gen:app}

We establish a lower bound showing that the exponential dependence on $\min\{1,\eps\} \cdot n$ in our generation upper bounds is information-theoretically necessary.
The proof structure parallels the identification lower bound (Section~\ref{sec:pure_id_lb:app}), but with two key differences:
(i) the hard distributions must have \emph{infinite} supports (since generation requires producing novel strings from an infinite support), necessitating a stronger structural condition on the collection;
(ii) the argument is inherently asymmetric---under $\cD$, the algorithm should output strings from $L \setminus L'$, while under $\cD'$, such outputs constitute failures---and we exploit this asymmetry through a one-sided application of the coupling lemma.

\subsubsection{Hard Instance Construction}

\begin{definition}[Intersecting Infinite-Difference Pair (IIDP)]\label{def:iidp}
    A collection $\cC$ of languages over a countable universe $\cU$ satisfies the \emph{Intersecting Infinite-Difference Pair (IIDP)} condition if there exist two distinct languages $L, L' \in \cC$, an element $s_0 \in L \cap L'$, and two infinite sequences of distinct elements $(a_k)_{k \geq 1} \subseteq L \setminus L'$ and $(b_k)_{k \geq 1} \subseteq L' \setminus L$.
\end{definition}

\begin{remark}[On the Structural Conditions]\label{rem:gen_lb_conditions}
    The IIDP condition (Definition~\ref{def:iidp}) strengthens the IPP condition used for identification (Definition~\ref{def:ipp_condition:app}) by requiring infinitely many private elements on each side, which is necessary because the generation lower bound needs hard distributions with infinite supports.
    The combined lower bound also requires the condition of Theorem~\ref{thm:hp26_gen_lb} (Theorem~3.4 in \cite{hp26}); this is logically independent of IIDP, but both can be satisfied simultaneously, either by different pairs within $\cC$, or by a single pair $(L,L')$ when $L \cap L'$ is finite (e.g., in regular or context-free languages).
\end{remark}

\begin{definition}[Hard Instance for Generation]\label{def:hard_ins_gen}
    Assume $\cC$ satisfies IIDP and witnessed by $(L, L', s_0, (a_k)_{k \geq 1}, (b_k)_{k \geq 1})$.
    Define distributions $\cD$ and $\cD'$ over $\cU$ by
    \begin{alignat*}{2}
        \cD &:~ \Pr_{x \sim \cD}[x = s_0] = \tfrac{3}{4},\quad && \Pr_{x \sim \cD}[x = a_k] = \tfrac{1}{4} \cdot 2^{-k} \quad (k \geq 1), \\
        \cD' &:~ \Pr_{x \sim \cD'}[x = s_0] = \tfrac{3}{4},\quad && \Pr_{x \sim \cD'}[x = b_k] = \tfrac{1}{4} \cdot 2^{-k} \quad (k \geq 1).
    \end{alignat*}
    All other strings have zero probability under both distributions.
\end{definition}

\subsubsection{Properties of the Hard Instance}

\begin{lemma}[Support structure]\label{lem:gen_lb_support}
    Consider the hard instance in Definition~\ref{def:hard_ins_gen}. Then:
    \begin{enumerate}[label=(\alph*)]
        \item $\supp(\cD) = \{s_0\} \cup \{a_k : k \geq 1\} \subseteq L$ and $\supp(\cD') = \{s_0\} \cup \{b_k : k \geq 1\} \subseteq L'$.
        \item Both supports are infinite.
        \item The sets $\cF := \{a_k : k \geq 1\}$ and $\cF' := \{b_k : k \geq 1\}$ satisfy $\cF \cap \supp(\cD') = \emptyset$ and $\cF' \cap \supp(\cD) = \emptyset$.
    \end{enumerate}
\end{lemma}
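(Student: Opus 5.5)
The plan is to verify the three parts of Lemma~\ref{lem:gen_lb_support} directly from Definition~\ref{def:hard_ins_gen} and Definition~\ref{def:iidp}. No probabilistic tools are needed.

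For part (a), I will first read off the support. Under $\cD$ the atoms are $s_0$ with mass $3/4$ and each $a_k$ with mass $2^{-k}/4>0$. These masses sum to $3/4+\tfrac14\sum_{k\ge1}2^{-k}=1$, so $\cD$ is a valid distribution, and every other string has mass zero. Hence $\supp(\cD)=\{s_0\}\cup\{a_k:k\ge1\}$. The inclusion in $L$ follows because $s_0\in L\cap L'\subseteq L$ and $a_k\in L\setminus L'\subseteq L$. The same argument, with $b_k\in L'\setminus L$, gives $\supp(\cD')\subseteq L'$.

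Before using the supports, I must check that the atoms are well defined, i.e.\ that $s_0$ is not among the $a_k$ (or $b_k$). If it were, the mass at $s_0$ would be counted twice. This holds because $s_0\in L'$ while every $a_k\notin L'$, and symmetrically $s_0\in L$ while every $b_k\notin L$. This distinctness check is the only mildly delicate point.

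For part (b), the $a_k$ are pairwise distinct by the IIDP definition and each has positive mass. So $\{a_k\}$ is an infinite subset of $\supp(\cD)$, and likewise $\{b_k\}$ is an infinite subset of $\supp(\cD')$.

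For part (c), I take $a_k\in\cF$ and show it is not in $\supp(\cD')=\{s_0\}\cup\{b_j\}$. First, $a_k\ne s_0$ by the distinctness check above. Second, $a_k\ne b_j$ for every $j$, because $a_k\notin L'$ while $b_j\in L'$. Thus $\cF\cap\supp(\cD')=\varnothing$. The symmetric argument, using $b_k\notin L$ while $s_0,a_j\in L$, gives $\cF'\cap\supp(\cD)=\varnothing$.
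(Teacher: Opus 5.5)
Your proposal is correct and follows the paper's proof essentially verbatim. You read off the supports from the definition, use $s_0\in L\cap L'$, $a_k\in L\setminus L'$, $b_k\in L'\setminus L$ for the inclusions, and use positivity of the masses for infiniteness. Your elementwise disjointness argument in (c) is equivalent to the paper's one-line observation that $a_k\notin L'\supseteq\supp(\cD')$ (using part (a)). Your extra checks — that the masses sum to $1$ and that $s_0$ is not among the $a_k$ or $b_k$ — are only implicit in the paper, and they make the well-definedness of the construction explicit.
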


\begin{proof}
    \textbf{Part (a):} By construction, $s_0 \in L \cap L'$ and $a_k \in L \setminus L'$ for all $k \geq 1$, so $\supp(\cD) = \{s_0\} \cup \{a_k : k \geq 1\} \subseteq L$.
    The argument for $\cD'$ is symmetric.

    \textbf{Part (b):} The sequences $(a_k)_{k \geq 1}$ and $(b_k)_{k \geq 1}$ are infinite by the IIDP condition, and each element receives positive probability.

    \textbf{Part (c):} Since $a_k \in L \setminus L'$ for all $k$, we have $a_k \notin L'$, hence $a_k \notin \{s_0\} \cup \{b_j : j \geq 1\} = \supp(\cD')$.
    Thus $\cF \cap \supp(\cD') = \emptyset$. The argument for $\cF'$ is symmetric.
\end{proof}

\begin{lemma}[Disjointness of private elements and opposing support]\label{lem:gen_lb_disjoint}
    Consider the hard instance in Definition~\ref{def:hard_ins_gen}.
    Let $\cF = \{a_k : k \geq 1\}$.
    For any generation algorithm $\cA$,
    \begin{align*}
        \{\cA(S') \in \cF\} \subseteq \{\cA(S') \notin \supp(\cD') \setminus S'\},
    \end{align*}
    where $S' \sim (\cD')^n$.
    In other words, if $\cA(S')$ outputs an element of $\cF$, it necessarily fails under $\cD'$.
    Consequently,
    \begin{align*}
        \Pr[\cA(S') \in \cF] \leq \generr(\cA, \cD', \cC, n).
    \end{align*}
\end{lemma}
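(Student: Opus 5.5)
The plan is to prove the event inclusion pointwise, for each realization of $S'$ and the internal randomness $r$, and then take probabilities. The key input is Lemma~\ref{lem:gen_lb_support}(c), which gives $\cF \cap \supp(\cD') = \emptyset$.

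Fix any realization $(S',r)$ such that $\cA(S') \in \cF$, say $\cA(S') = a_k$ for some $k \geq 1$. By the IIDP condition, $a_k \in L \setminus L'$. Since $\supp(\cD') = \{s_0\} \cup \{b_j : j \geq 1\} \subseteq L'$ by Lemma~\ref{lem:gen_lb_support}(a), we get $a_k \notin \supp(\cD')$. A fortiori, $a_k \notin \supp(\cD') \setminus S'$, because removing the sample points from $\supp(\cD')$ can only shrink that set. Hence $\cA(S') \notin \supp(\cD') \setminus S'$, which is exactly the failure event in the definition of $\generr$. This establishes the inclusion
\begin{align*}
\{\cA(S') \in \cF\} \subseteq \{\cA(S') \notin \supp(\cD') \setminus S'\}
\end{align*}
as events on the joint probability space of $(S',r)$.

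For the probability bound, I will use monotonicity of probability under $S' \sim (\cD')^n$ and the algorithm's randomness $r$:
\begin{align*}
\Pr_{S',r}[\cA(S') \in \cF] \leq \Pr_{S',r}[\cA(S') \notin \supp(\cD') \setminus S'] = \generr(\cA, \cD', \cC, n).
\end{align*}
The final equality is simply the definition of generation error.

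There is no real obstacle here. The one point that needs care is measurability of $\{\cA(S') \in \cF\}$. This is immediate because $\cU$ is countable, so $\cF$ is a countable set and every subset of the output space is measurable.
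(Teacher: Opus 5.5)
Your proposal is correct and follows the paper's proof. The paper cites the disjointness $\cF \cap \supp(\cD') = \emptyset$ from Lemma~\ref{lem:gen_lb_support}(c), while you re-derive it from $a_k \notin L'$ and $\supp(\cD') \subseteq L'$; both then conclude the pointwise event inclusion and bound the probabilities by monotonicity.
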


\begin{proof}
    By Lemma~\ref{lem:gen_lb_support}(c), $\cF \cap \supp(\cD') = \emptyset$.
    If $\cA(S') \in \cF$, then $\cA(S') \notin \supp(\cD')$, which immediately implies $\cA(S') \notin \supp(\cD') \setminus S'$.
    Taking probabilities gives the claimed inequality.
\end{proof}

\begin{lemma}[Success under $\cD$ requires outputting from $\cF$]\label{lem:gen_lb_success}
    Consider the hard instance in Definition~\ref{def:hard_ins_gen}.
    Let $\cF = \{a_k : k \geq 1\}$.
    For any generation algorithm $\cA$ and $S \sim \cD^n$,
    \begin{align*}
        \Pr[\cA(S) \in \cF] \geq 1 - \generr(\cA, \cD, \cC, n) - 4^{-n}.
    \end{align*}
\end{lemma}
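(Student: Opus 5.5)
We will argue by decomposing the success event under $\cD$. Write $G := \{\cA(S) \in \supp(\cD) \setminus S\}$ for the success event, so that $\Pr[G] = 1 - \generr(\cA,\cD,\cC,n)$. By Lemma~\ref{lem:gen_lb_support}(a), $\supp(\cD) = \{s_0\} \cup \cF$. Hence on $G$ the output is either $s_0$ or an element of $\cF$. If the output equals $s_0$, success additionally requires $s_0 \notin S$. Therefore
\[
G \subseteq \{\cA(S) \in \cF\} \cup \{s_0 \notin S\}.
\]
A union bound then gives $\Pr[G] \le \Pr[\cA(S)\in\cF] + \Pr[s_0 \notin S]$.

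The remaining step is to compute $\Pr[s_0 \notin S]$. The $n$ samples are i.i.d.\ from $\cD$, each equal to $s_0$ with probability $3/4$. The event that none of them equals $s_0$ therefore has probability $(1/4)^n = 4^{-n}$. Rearranging yields
\[
\Pr[\cA(S) \in \cF] \ge 1 - \generr(\cA,\cD,\cC,n) - 4^{-n},
\]
where probabilities are taken jointly over $S$ and the internal randomness $r$. The argument holds pointwise in $r$, so no independence between $r$ and $S$ is needed.

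There is no real obstacle here. The only point requiring care is the observation that the output $s_0$ can succeed only on the exponentially unlikely event that $s_0$ is absent from the sample; the rest is bookkeeping. The generation error is defined as the failure probability over $(S,r)$, which matches $1-\Pr[G]$ exactly.
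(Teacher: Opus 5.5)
Your proof is correct and follows essentially the same route as the paper: both use $\supp(\cD) = \{s_0\} \cup \cF$ to get the inclusion $\{\cA(S) \in \supp(\cD)\setminus S\} \subseteq \{\cA(S) \in \cF\} \cup \{s_0 \notin S\}$. Both then apply a union bound with $\Pr[s_0 \notin S] = 4^{-n}$.
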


\begin{proof}
    By Lemma~\ref{lem:gen_lb_support}(a), $\supp(\cD) = \{s_0\} \cup \cF$.
    Hence the success event decomposes as
    \begin{align*}
        \{\cA(S) \in \supp(\cD) \setminus S\} \subseteq \{\cA(S) \in \cF\} \cup \{\cA(S) = s_0,\; s_0 \notin S\} \subseteq \{\cA(S) \in \cF\} \cup \{s_0 \notin S\}.
    \end{align*}
    Indeed, if $\cA(S) = s_0$, this output is successful only if $s_0 \notin S$.
    Taking probabilities and rearranging:
    \begin{align*}
        1 - \generr(\cA, \cD, \cC, n)
        &= \Pr[\cA(S) \in \supp(\cD) \setminus S]
        \leq \Pr[\cA(S) \in \cF] + \Pr[s_0 \notin S].
    \end{align*}
    Since $\Pr_{x \sim \cD}[x = s_0] = 3/4$, the probability that $s_0$ never appears in $n$ i.i.d.\ draws is
    $\Pr[s_0 \notin S] = (1 - 3/4)^n = 4^{-n}$.
    Rearranging gives the claim.
\end{proof}

\subsubsection{The DP-Specific Coupling Argument}

The next lemma is the technical core of the generation lower bound.
As in the identification case, we construct a coupling under which $\cD^n$ and $(\cD')^n$ have small Hamming distance with high probability, and apply the coupling lemma (Lemma~\ref{lem:lb_ipp_coupling}).
However, the argument is \emph{asymmetric}: rather than bounding misidentification probabilities in both directions, we combine a \emph{lower bound} on $\Pr[\cA(S) \in \cF]$ (from the success requirement under $\cD$, Lemma~\ref{lem:gen_lb_success}) with an \emph{upper bound} on $\Pr[\cA(S) \in \cF]$ (from DP and the failure implication under $\cD'$, Lemma~\ref{lem:gen_lb_disjoint}).

\begin{lemma}[DP forces generation error]\label{lem:gen_lb_dp}
    Consider the hard instance in Definition~\ref{def:hard_ins_gen}.
    Let $\cA$ be an $\eps$-differentially private generation algorithm.
    Define
    \begin{align*}
        \alpha := \max\big\{\generr(\cA, \cD, \cC, n),\; \generr(\cA, \cD', \cC, n)\big\}.
    \end{align*}
    Then for every even $n \geq 2$,
    \begin{align*}
        \alpha \geq \frac{1 - 4^{-n} - e^{-n/12}}{1 + e^{\eps n/2}}.
    \end{align*}
\end{lemma}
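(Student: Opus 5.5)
We follow the asymmetric strategy outlined in the proof sketch of \cref{thm:gen-lb}. The plan is to sandwich the single quantity $\Pr[\cA(S)\in\cF]$ with $\cF=\{a_k:k\ge1\}$. We bound it from below using success under $\cD$, and from above using group privacy transferred to $\cD'$, where outputs in $\cF$ are failures.

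\textbf{Step 1 (coupling).} We couple $\cD^n$ and $(\cD')^n$ coordinatewise. With probability $3/4$ we set $(X_t,Y_t)=(s_0,s_0)$. With probability $\tfrac14 2^{-k}$ we set $(X_t,Y_t)=(a_k,b_k)$. The marginals are then exactly $\cD$ and $\cD'$. Moreover $X_t\neq Y_t$ iff the second branch occurs, because $a_k\neq b_k$ (indeed $a_k\notin L'\ni b_k$). Hence $H=d_{\mathrm{Ham}}(S,S')\sim\mathrm{Bin}(n,1/4)$. The multiplicative Chernoff bound (\cref{lem:chernoff}) with $t=1$ gives $\Pr[H>n/2]\le e^{-n/12}$. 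For even $n$, $K=n/2$ is an integer. Thus the hypotheses of \cref{lem:lb_ipp_coupling} hold with $K=n/2$ and $\eta=e^{-n/12}$.

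\textbf{Step 2 (sandwich).} We apply \cref{lem:lb_ipp_coupling} with the measurable set $\cF$:
\[
\Pr[\cA(S)\in\cF]\le e^{\eps n/2}\Pr[\cA(S')\in\cF]+e^{-n/12}.
\]
\Cref{lem:gen_lb_success} bounds the left side below by $1-\generr(\cA,\cD,\cC,n)-4^{-n}\ge 1-\alpha-4^{-n}$. \Cref{lem:gen_lb_disjoint} bounds $\Pr[\cA(S')\in\cF]$ above by $\generr(\cA,\cD',\cC,n)\le\alpha$. Combining,
\[
1-\alpha-4^{-n}\le e^{\eps n/2}\alpha+e^{-n/12}.
\]
Rearranging gives $\alpha(1+e^{\eps n/2})\ge 1-4^{-n}-e^{-n/12}$, which is the claim.

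\textbf{Expected obstacle.} The algebra is routine, so the only delicate point is the coupling. We must confirm three things. First, the joint law is a valid coupling with the correct infinite-support marginals. Second, every mismatched coordinate pairs an $a_k$ with a $b_k$ and never produces an accidental equality, which relies on distinctness from IIDP. Third, the probabilities in \cref{lem:lb_ipp_coupling} are over both $(S,S')$ and the algorithm's randomness. Once the coupling is checked, everything else follows directly from the three cited lemmas.
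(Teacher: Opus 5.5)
Your proposal is correct and follows the paper's proof essentially step for step: the same coordinatewise coupling with $K=n/2$ and $\eta=e^{-n/12}$, the coupling lemma applied to $\cF=\{a_k:k\ge1\}$, and the sandwich between \cref{lem:gen_lb_success} and \cref{lem:gen_lb_disjoint}. One small aside: you do not need $K$ to be an integer, since $d_{\mathrm{Ham}}\le K$ already implies $d_{\mathrm{Ham}}\le\lfloor K\rfloor$ and $e^{\eps\lfloor K\rfloor}\le e^{\eps K}$.
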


\begin{proof}
Let $S = (X_1, \ldots, X_n) \sim \cD^n$ and $S' = (Y_1, \ldots, Y_n) \sim (\cD')^n$, and let $\cF = \{a_k : k \geq 1\}$.
We derive an upper bound and a lower bound on $\Pr[\cA(S) \in \cF]$ and combine them.

\medskip
\textbf{Step 1: Coupling construction.}
Define a coupling $(S_1, S_2)$ of $\cD^n$ and $(\cD')^n$ coordinate-wise: for each $t \in [n]$, independently sample $(X_t, Y_t)$ as follows.
With probability $3/4$, set $(X_t, Y_t) = (s_0, s_0)$.
With probability $1/4$, draw $K_t \in \N$ with $\Pr[K_t = k] = 2^{-k}$ and set $(X_t, Y_t) = (a_{K_t}, b_{K_t})$.

To verify this is a valid coupling: the marginal of $X_t$ satisfies $\Pr[X_t = s_0] = 3/4$ and $\Pr[X_t = a_k] = (1/4) \cdot 2^{-k}$ for each $k \geq 1$, matching $\cD$.
Similarly, the marginal of $Y_t$ satisfies $\Pr[Y_t = s_0] = 3/4$ and $\Pr[Y_t = b_k] = (1/4) \cdot 2^{-k}$, matching $\cD'$.

\medskip
\textbf{Step 2: Hamming distance concentration.}
The Hamming distance $H := d_{\mathrm{Ham}}(S_1, S_2) = \sum_{t=1}^n \mathbf{1}\{X_t \neq Y_t\}$ is a sum of $n$ independent Bernoulli$(1/4)$ random variables (since $X_t \neq Y_t$ if and only if the second branch occurs), so $\E[H] = n/4$.
By the Chernoff bound (Lemma~\ref{lem:chernoff}) with $t = 1$,
\begin{align*}
    \Pr[H > n/2] = \Pr[H > 2\E[H]] \leq \exp\Bigl(-\frac{\E[H]}{3}\Bigr) = \exp\Bigl(-\frac{n}{12}\Bigr).
\end{align*}
Set $K = n/2$ and $\eta = \exp(-n/12)$.

\medskip
\textbf{Step 3: Upper bound on $\Pr[\cA(S) \in \cF]$ via DP.}
By the coupling lemma (Lemma~\ref{lem:lb_ipp_coupling}) applied with the measurable set $\cF$,
\begin{align*}
    \Pr[\cA(S_1) \in \cF] \leq e^{\eps K} \Pr[\cA(S_2) \in \cF] + \eta = e^{\eps n/2} \Pr[\cA(S_2) \in \cF] + e^{-n/12}.
\end{align*}
By Lemma~\ref{lem:gen_lb_disjoint}, $\Pr[\cA(S_2) \in \cF] \leq \generr(\cA, \cD', \cC, n) \leq \alpha$.
Substituting:
\begin{align}\label{eq:gen_lb_upper}
    \Pr[\cA(S) \in \cF] \leq e^{\eps n/2} \alpha + e^{-n/12}.
\end{align}

\medskip
\textbf{Step 4: Lower bound on $\Pr[\cA(S) \in \cF]$ via success.}
By Lemma~\ref{lem:gen_lb_success} and the definition of $\alpha$,
\begin{align}\label{eq:gen_lb_lower}
    \Pr[\cA(S) \in \cF] \geq 1 - \generr(\cA, \cD, \cC, n) - 4^{-n} \geq 1 - \alpha - 4^{-n}.
\end{align}

\medskip
\textbf{Step 5: Combining the bounds.}
Chaining \eqref{eq:gen_lb_lower} and \eqref{eq:gen_lb_upper}:
\begin{align*}
    1 - \alpha - 4^{-n} \leq \Pr[\cA(S) \in \cF] \leq e^{\eps n/2} \alpha + e^{-n/12}.
\end{align*}
Rearranging:
\begin{align*}
    1 - 4^{-n} - e^{-n/12} \leq (1 + e^{\eps n/2})\, \alpha,
\end{align*}
which gives
\begin{align*}
    \alpha \geq \frac{1 - 4^{-n} - e^{-n/12}}{1 + e^{\eps n/2}}.
\end{align*}
\end{proof}

\begin{remark}[Asymmetry with the Identification Lower Bound]
    In the identification lower bound (Lemma~\ref{lem:dp_pq}), the argument is symmetric: both $p$ and $q$ are bounded via the coupling lemma, yielding two inequalities that constrain $\min\{p,q\}$.
    In the generation lower bound, the argument is inherently one-sided: the lower bound on $\Pr[\cA(S) \in \cF]$ comes from the success requirement under $\cD$ (not from DP), while the upper bound comes from DP applied to the event $\cF$ combined with the failure implication under $\cD'$.
    This asymmetry reflects the fact that the generation objective depends on $\supp(\cD)$, creating a natural directionality between the two distributions.
\end{remark}

\subsubsection{Putting Things Together}

\begin{theorem}[Lower Bound for DP Generation]\label{thm:gen_lb_dp}
    Let $\cC$ satisfy the IIDP condition (Definition~\ref{def:iidp}).
    For any $\eps$-differentially private generation algorithm $\cA$, there exists a distribution $\cD_\star \in \{\cD, \cD'\}$ (from Definition~\ref{def:hard_ins_gen}) such that
    \begin{align*}
        \generr(\cA, \cD_\star, \cC, n) \geq \frac{1}{4}\exp\Bigl(-\frac{\eps n}{2}\Bigr)
    \end{align*}
    for infinitely many $n$.
\end{theorem}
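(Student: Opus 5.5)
The plan is to mirror the proof of \cref{thm:lb_ipp}, using \cref{lem:gen_lb_dp} as the core. We fix an even integer $n \geq 2$ and apply \cref{lem:gen_lb_dp} to the hard instance of \cref{def:hard_ins_gen}. This gives
\[
\max\big\{\generr(\cA,\cD,\cC,n),\,\generr(\cA,\cD',\cC,n)\big\} \geq \frac{1-4^{-n}-e^{-n/12}}{1+e^{\eps n/2}}.
\]
The only work specific to this theorem is simplifying the right-hand side to $\frac14\exp(-\eps n/2)$, followed by a pigeonhole step to obtain a single fixed distribution.

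\textbf{Denominator.} Since $\eps n/2 \geq 0$, we have $1+e^{\eps n/2}\leq 2e^{\eps n/2}$.

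\textbf{Numerator.} We need $1-4^{-n}-e^{-n/12}\geq 1/2$, i.e.\ $4^{-n}+e^{-n/12}\leq 1/2$. This is the one point requiring care. For small $n$ it fails: at $n=2$, $e^{-1/6}\approx 0.85$. It holds once $e^{-n/12}\leq 1/2-4^{-n}$, which is true for all even $n\geq 10$ (at $n=10$, $e^{-10/12}\approx 0.435$ and $4^{-10}$ is negligible). So we restrict to even $n$ beyond this threshold, say even $n\geq 10$. This costs nothing, since the theorem only asserts the bound for infinitely many $n$. Combining the two estimates gives $\alpha\geq \frac{1/2}{2e^{\eps n/2}}=\frac14 e^{-\eps n/2}$.

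\textbf{Pigeonhole.} The inequality holds for every even $n\geq 10$, but the maximizing distribution may change with $n$. Since there are only two candidates, $\cD$ and $\cD'$, one of them, call it $\cD_\star$, attains the maximum for infinitely many such $n$. Along that subsequence, $\generr(\cA,\cD_\star,\cC,n)\geq \frac14\exp(-\eps n/2)$, as claimed.

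The main obstacle is only the numerical bookkeeping in the numerator, since the stated constant $\frac14$ requires discarding small $n$ where the Chernoff term $e^{-n/12}$ is still large. Everything else is inherited directly from \cref{lem:gen_lb_dp}.
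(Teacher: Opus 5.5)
Your proposal is correct and matches the paper's proof: apply \cref{lem:gen_lb_dp} for even $n$, bound the denominator by $2e^{\eps n/2}$ and the numerator below by $1/2$ for large even $n$, then use pigeonhole over $\{\cD,\cD'\}$. Your explicit threshold (even $n\ge 10$) is more careful than the paper's text. The paper asserts $e^{-n/12}\le e^{-1/6}<1/6$ for all $n\ge 2$, which is false since $e^{-1/6}\approx 0.85$, and only then falls back on ``sufficiently large even $n$'', which is exactly the step you make precise.
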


\begin{proof}
    By Lemma~\ref{lem:gen_lb_dp}, for every even $n \geq 2$,
    \begin{align*}
        \max\big\{\generr(\cA, \cD, \cC, n),\; \generr(\cA, \cD', \cC, n)\big\}
        \geq \frac{1 - 4^{-n} - e^{-n/12}}{1 + e^{\eps n/2}}.
    \end{align*}
    We simplify the right-hand side for large $n$.
    For the numerator: $4^{-n} \leq 1/16$ and $e^{-n/12} \leq e^{-1/6} < 1/6$ for all $n \geq 2$, so $1 - 4^{-n} - e^{-n/12} \geq 1 - 1/16 - 1/6 > 1/2$.
    More precisely, as $n \to \infty$, $1 - 4^{-n} - e^{-n/12} \to 1$.
    For the denominator: $1 + e^{\eps n/2} \leq 2 e^{\eps n/2}$.
    Combining, for all sufficiently large even $n$:
    \begin{align*}
        \frac{1 - 4^{-n} - e^{-n/12}}{1 + e^{\eps n/2}} \geq \frac{1/2}{2 e^{\eps n/2}} = \frac{1}{4}\exp\Bigl(-\frac{\eps n}{2}\Bigr).
    \end{align*}

    Since this holds for all sufficiently large even $n$, and there are only two candidate distributions $\cD$ and $\cD'$, the pigeonhole principle guarantees the existence of a fixed $\cD_\star \in \{\cD, \cD'\}$ for which the bound holds along an infinite subsequence of even $n$.
\end{proof}

As in the identification case, Theorem~\ref{thm:gen_lb_dp} captures the privacy-specific barrier $\exp(-\eps n/2)$, which becomes vacuously weak when $\eps$ is large.
We now combine it with the non-private generation lower bound from~\cite{hp26} to obtain a bound that is meaningful across all privacy regimes.

\begin{theorem}[Lower Bound for Agnostic Language Generation, Theorem 3.4 in~\cite{hp26}]\label{thm:hp26_gen_lb}
    Let $\cC$ be any collection over a universe $\cU$ such that there exist languages $L, L' \in \cC$ with $|L \cap L'| < \infty$ and $\cU \setminus (L \cup L') \neq \emptyset$.
    For any generation algorithm $\cA$ using randomness $r$, there exists a distribution $\cD$ over $\cU$ such that $\exists\, L \in \cC$ with $L \subseteq \supp(\cD)$, and furthermore
    \begin{align*}
        \generr(\cA, \cD, \cC, n) \geq \frac{1}{4}\exp(-2n)
    \end{align*}
    for infinitely many $n$.
\end{theorem}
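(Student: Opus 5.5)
The plan is a two-point argument with no privacy involved. We pick two distributions $\cD_1,\cD_2$ that share a common sample sequence of probability roughly $2^{-n}$, and on which the sets of valid novel outputs are disjoint. Conditioned on seeing that sequence, the algorithm's output law is the same under both distributions, so it fails under at least one of them with conditional probability at least $1/2$.

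\textbf{Construction.} Fix $L,L'\in\cC$ with $I:=L\cap L'$ finite, say $|I|=m$, and fix $z\in\cU\setminus(L\cup L')$. Define $\cD_1$ as follows. Put mass $1/2$ on $z$ and mass $c:=1/(4(m+1))$ on each element of $I$. Spread the remaining mass (at least $1/4$) over $L\setminus I$ with every element receiving positive probability, for example geometric weights along an enumeration of $L\setminus I$. Since every language in $\cC$ is infinite, $L\setminus I$ is nonempty, so this is well defined. Define $\cD_2$ in the same way with $L'$ in place of $L$. Then $\supp(\cD_1)=L\cup\{z\}\supseteq L$ and $\supp(\cD_2)\supseteq L'$, so both distributions satisfy the requirement that some language of $\cC$ lies inside the support.

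\textbf{Key steps.} For $n\ge m+1$, let $T_n$ be the fixed sequence that lists the $m$ elements of $I$ once and then repeats $z$ for the remaining $n-m$ positions. Under either distribution,
\[
\Pr[S=T_n]=c^m\,2^{-(n-m)}.
\]
Conditioned on $S=T_n$, the output $\cA(T_n)$ has the same law under $\cD_1$ and $\cD_2$, because $\cA$ sees only $S$ and its own randomness. A successful output must avoid $T_n$, so it must lie outside $I\cup\{z\}$. Success under $\cD_1$ therefore requires $\cA(T_n)\in L\setminus L'$, and success under $\cD_2$ requires $\cA(T_n)\in L'\setminus L$. These two sets are disjoint, so the two conditional success probabilities sum to at most $1$. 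Hence for some $j\in\{1,2\}$,
\[
\generr(\cA,\cD_j,\cC,n)\ \ge\ \tfrac12\,c^m\,2^{-(n-m)}.
\]
Because $m$ and $c$ are constants, this quantity is at least $\tfrac14 e^{-2n}$ for all sufficiently large $n$, since $e^{-2n}$ decays much faster than $2^{-n}$. Finally, the index $j$ may depend on $n$. There are only two candidates, so by pigeonhole one fixed $\cD_\star\in\{\cD_1,\cD_2\}$ attains the bound along infinitely many $n$, exactly as in the proofs of \cref{thm:lb_ipp} and \cref{thm:gen_lb_dp}.

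\textbf{Main obstacle.} The delicate point is the finite intersection $L\cap L'$. An output in $I$ could be valid under both distributions, which would break the disjointness of the success sets. Forcing every element of $I$ to appear in the conditioning sequence $T_n$ neutralizes this, because novelty then excludes all of $I$. This is exactly where the hypothesis $|L\cap L'|<\infty$ is used, and the string $z\notin L\cup L'$ supplies the shared high-mass filler that makes $\Pr[S=T_n]$ decay only like $2^{-n}$.
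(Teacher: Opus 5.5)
Your proof is correct and in fact gives a stronger rate. The paper does not prove this statement: it quotes it from \citet{hp26} and uses it only as a black box for the regime $\eps\ge 1$ in \cref{thm:gen_lb_combined}, so your argument has to stand on its own, and it does.

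You use a two-point indistinguishability argument. The sample $T_n$ has the same probability $c^m2^{-(n-m)}$ under $\cD_1$ and $\cD_2$, so the algorithm's output law on it is identical in both cases. The sets of valid novel outputs given $T_n$ are $L\setminus L'$ and $L'\setminus L$, which are disjoint, so the two conditional failure probabilities sum to at least $1$.

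The closest argument the paper actually writes out is its DP generation lower bound (\cref{lem:gen_lb_dp}). There, $\cD^n$ and $(\cD')^n$ are coupled so that samples differ in at most $n/2$ positions. Group privacy then transfers the algorithm's behaviour between \emph{different} samples at cost $e^{\eps n/2}$, so a single shared heavy atom $s_0$ suffices. Without privacy no such transfer is available, so you must condition on a sample that is literally common to both distributions. This is exactly why you exhaust $L\cap L'$ inside $T_n$ (this is where finiteness is used), use $z\notin L\cup L'$ as filler (needed when $L\cap L'=\varnothing$), and require $n\ge m+1$ so that $z$ is not itself a novel valid output.

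What your route buys:
\begin{itemize}
\item a rate of order $(2c)^m 2^{-n}$ rather than $\tfrac14 e^{-2n}$;
\item distributions that satisfy $L\subseteq\supp(\cD_1)$ and $L'\subseteq\supp(\cD_2)$ by construction, as the statement requires.
\end{itemize}

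One small caveat: ``every language in $\cC$ is infinite'' is the paper's standing assumption for generation, not a hypothesis of this statement. If $L\setminus L'$ were empty, success under $\cD_1$ given $T_n$ would be impossible. Any placement of the leftover mass (say on $z$) then gives $\generr(\cA,\cD_1,\cC,n)\ge \Pr_{\cD_1}[S=T_n]$ directly, so the bound still holds.
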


\begin{theorem}[Combined Lower Bound for DP Generation]\label{thm:gen_lb_combined}
    Let $\cC$ be a collection of languages over $\cU$ that satisfies both the IIDP condition (Definition~\ref{def:iidp}) and the condition of Theorem~\ref{thm:hp26_gen_lb} (i.e., there exist $L, L' \in \cC$ with $|L \cap L'| < \infty$ and $\cU \setminus (L \cup L') \neq \emptyset$).
    For any $\eps$-differentially private generation algorithm $\cA$, there exists a distribution $\cD_\star$ over $\cU$ such that
    \begin{align*}
        \generr(\cA, \cD_\star, \cC, n) \geq \frac{1}{4}\exp(-2\min\{1,\eps\}\cdot n)
    \end{align*}
    for infinitely many $n$.
\end{theorem}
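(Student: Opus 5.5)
The plan is to mirror the proof of Theorem~\ref{thm:combined_lb}: split on $\eps$ and, in each regime, invoke whichever of the two available lower bounds is the stronger one, Theorem~\ref{thm:gen_lb_dp} or Theorem~\ref{thm:hp26_gen_lb}. In both cases the target is $\frac14\exp(-2\min\{1,\eps\}\, n)$ along infinitely many $n$.

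\textbf{Case $\eps < 1$.} Here $\min\{1,\eps\}=\eps$. Since $\cC$ satisfies IIDP, Theorem~\ref{thm:gen_lb_dp} supplies a fixed $\cD_\star\in\{\cD,\cD'\}$ from Definition~\ref{def:hard_ins_gen} such that
\[
\generr(\cA,\cD_\star,\cC,n)\ge \tfrac14\exp(-\eps n/2)
\]
along an infinite subsequence of $n$. Because $\eps n/2 \le 2\eps n$, this already exceeds $\frac14\exp(-2\eps n)$, with no constant-absorption step needed (unlike the $\log 120$ step in Theorem~\ref{thm:combined_lb}). It is also worth noting that $\cD_\star$ is a legitimate instance for the generation problem: by Lemma~\ref{lem:gen_lb_support}, $\supp(\cD_\star)$ is infinite and contained in $L$ or $L'$. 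The theorem only asks for existence of some $\cD_\star$, however, so nothing more is required.

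\textbf{Case $\eps \ge 1$.} Here $\min\{1,\eps\}=1$. Every $\eps$-DP algorithm is in particular a randomized generation algorithm, so under the hypothesis that some pair $L,L'\in\cC$ has $|L\cap L'|<\infty$ and $\cU\setminus(L\cup L')\neq\varnothing$, Theorem~\ref{thm:hp26_gen_lb} applies directly. It gives a $\cD_\star$ with
\[
\generr(\cA,\cD_\star,\cC,n)\ge \tfrac14 e^{-2n}=\tfrac14\exp(-2\min\{1,\eps\}\, n)
\]
for infinitely many $n$.

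\textbf{Main point to check.} The only step that needs attention is making sure the ``infinitely many $n$'' quantifier is handled correctly. In each case the distribution $\cD_\star$ is fixed first: in the DP case this comes from the pigeonhole argument already carried out inside Theorem~\ref{thm:gen_lb_dp}, and in the non-private case it is supplied by \citet{hp26}. The bound then holds along that distribution's own infinite set of $n$. Since the case split depends only on $\eps$ and not on $n$, no further merging across cases is needed, and the theorem follows.
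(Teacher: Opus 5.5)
Your proposal is correct and matches the paper's proof. The paper also splits on $\eps$, uses Theorem~\ref{thm:gen_lb_dp} for $\eps<1$ and Theorem~\ref{thm:hp26_gen_lb} for $\eps\ge 1$. The one difference is in the case $\eps<1$: you compare $\frac14 e^{-\eps n/2}\ge \frac14 e^{-2\eps n}$ directly, whereas the paper absorbs the $\frac14$ via $\log 4\le 3\eps n/2$ for large $n$. Your comparison is slightly cleaner and holds for every $n$ in the subsequence.
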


\begin{proof}
    We consider two regimes depending on the privacy parameter $\eps$.

    \textbf{Case 1: $\eps < 1$.}
    By Theorem~\ref{thm:gen_lb_dp}, there exists $\cD_\star$ such that
    $\generr(\cA, \cD_\star, \cC, n) \geq \frac{1}{4}\exp(-\eps n/2)$
    for infinitely many $n$.
    Since $\min\{1, \eps\} = \eps$, we have
    \begin{align*}
        \frac{1}{4}\exp\Bigl(-\frac{\eps n}{2}\Bigr) = \exp\Bigl(-\frac{\eps n}{2} - \log 4\Bigr) \geq \exp(-2\eps n)
    \end{align*}
    where the last inequality is due to that $\log 4 \leq 3\eps n/2$ holds for sufficiently large $n$.

    \textbf{Case 2: $\eps \geq 1$.}
    Every $\eps$-DP algorithm is in particular a randomized algorithm.
    By Theorem~\ref{thm:hp26_gen_lb}, there exists $\cD_\star$ such that
    $\generr(\cA, \cD_\star, \cC, n) \geq \frac{1}{4}\exp(-2n)$
    for infinitely many $n$.
    Since $\min\{1, \eps\} = 1$, the bound clearly holds.
\end{proof}

\end{document}